\documentclass[final, 12pt]{colt2019} 


\usepackage{times}

\coltauthor{\Name{Jeongyeol Kwon}\thanks{These two authors equally contributed to this work.} \Email{kwonchungli@utexas.edu} \\
    \addr The University of Texas at Austin \\
  \Name{Wei Qian}\footnotemark[1] \Email{wq34@cornell.edu} \\
    \addr Cornell University \\
  \Name{Constantine Caramanis} \Email{constantine@utexas.edu} \\
  \addr The University of Texas at Austin \\
  \Name{Yudong Chen} \Email{yudong.chen@cornell.edu} \\
  \addr Cornell University \\
  \Name{Damek Davis} \Email{dsd95@cornell.edu} \\
  \addr Cornell University
}


\usepackage[utf8]{inputenc} 
\usepackage[T1]{fontenc}    
\usepackage{url}            
\usepackage{booktabs}       
\usepackage{amsfonts}       
\usepackage{nicefrac}       
\usepackage{microtype}      

\usepackage[english]{babel}

\usepackage{amsmath}
\usepackage{amssymb}
\usepackage{graphicx}
\usepackage{bm}
\usepackage[colorinlistoftodos]{todonotes}

\usepackage[title]{appendix}
\usepackage{comment}
\usepackage{amsfonts}

\newtheorem{lem}{Lemma}

\usepackage{thm-restate}
\newtheorem*{remark*}{Remark}

\usepackage{dsfont,subcaption,float}
\newif\ifdraft
\draftfalse 

\newcommand{\red}[1]{\textcolor{red}{#1}}

\newcommand{\magenta}[1]{\textcolor{magenta}{#1}}
\newcommand{\orange}[1]{\textcolor{orange}{#1}}
\newcommand{\wqcomment}[1]{\ifdraft {\bf{{\magenta{{Wei --- #1}}}}}\else\fi}

\newcommand{\yccomment}[1]{\ifdraft {\bf{{\red{{Yudong --- #1}}}}}\else\fi}
\newcommand{\jycomment}[1]{\ifdraft {\bf{{\orange{{Jeongyeol --- #1}}}}}\else\fi}

\newcommand{\bbeta}{\bm{\beta}}		
\newcommand{\betastar}{\bbeta^*}	

\title{Global Convergence of the EM Algorithm for Mixtures of Two Component Linear Regression}

\begin{document}
\maketitle 
\begin{abstract}
The Expectation-Maximization algorithm is perhaps the most broadly used algorithm for inference of latent variable problems. A theoretical understanding of its performance, however, largely remains lacking. Recent results established that EM enjoys global convergence for Gaussian Mixture Models. For Mixed Linear Regression, however, only local convergence results have been established, and those only for the high SNR regime. We show here that EM converges for mixed linear regression with two components (it is known that it may fail to converge for three or more), and moreover that this convergence holds for random initialization. Our analysis reveals that EM exhibits very different behavior in Mixed Linear Regression from its counterpart in Gaussian Mixture Models, and hence our proofs require the development of several new ideas.%
\footnote{This paper results from a merger of work from two groups who work on the problem at the same time.}
\end{abstract}

\section{Introduction}
The expectation-maximization (EM) algorithm is a general-purpose technique for computing the maximum likelihood solution for problems with missing data, often modeled as latent variables ~\citep{dempster1977maximum, wu1983convergence}. In general, maximizing the likelihood in the presence of missing data is an intractable problem due to the non-convexity of the log-likelihood function. EM is an iterative procedure that computes successively tighter lower bounds of the log-likelihood function. 
Despite its simplicity and its widespread use in practice, relatively little is understood about the theoretical properties of EM. Recent results have demonstrated that in the high SNR regime (and under additional regularity assumptions), EM converges locally (e.g., \citealt{yi2015regularized, balakrishnan_statistical_2017, klusowski2019estimating, yi2014alternating, yi2016solving}). For the special case of Gaussian Mixture Models (GMM) with two components, very recent work \citep{daskalakis2017ten} has shown that a two-phase version of EM converges from random initialization. As far as we know, no comparable global convergence result is known for Mixed Linear Regression (MLR), despite the empirical success of EM in this problem \citep{jordan1994hierarchical, de1989mixtures}.

The lack of global convergence guarantees for EM under MLR is not simply an oversight. Rather, as we show later, MLR exhibits very different behavior from GMM, even on the population (infinite sample) level. Existing techniques used to analyze EM under GMM---often based on $\ell_2$ distance contraction---are fundamentally insufficient for establishing global convergence of EM for MLR.

In this work, we show for the first time that EM for MLR with two components converges globally without the need for any special initialization. Moreover, our proof reveals (a bound on) the rate of convergence of EM as a function of how far it is from the true parameter. Locally, we improve upon past results, as these not only required an initialization step and a high SNR assumption, but also failed to provide a tight final error bound that correctly captures the improvement achieved by EM over the initial solution. We explain connections to prior art in more details in Section~\ref{ssec:relatedwork}.

\subsection{Basic Setup and the EM Algorithm}
\label{ssec:setup}
Mixed linear regression (MLR) models the setting where different subsets of the response variables are generated by different regressors. In the case of two components, which we consider here, the data $(\bm{x}_i,y_i) \in \mathbb{R}^d \times \mathbb{R}$ are generated by a mixture of two linear models with unknown regressors $\pm \betastar \in \mathbb{R}^d$:
\begin{equation}
\label{eq:mlr_setup}
	y_i = z_i \bm{\beta}^* \bm{x}_i + e_i,	\qquad i = 1,...,n,
\end{equation}
where $z_i \in \{\pm 1\}$ are the hidden/latent variables, which play the role of labels denoting whether a data point $(\bm{x}_i,y_i)$ is generated by  $+\bm{\beta}^*$ or $-\bm{\beta}^*$. Finding the true parameter $\bm{\beta}^*$ is known to be NP-hard in general \citep{yi2014alternating} even without noise. Accordingly, a common assumption in the literature stipulates that the covariates and noise terms, $\bm{x}_i$ and $e_i$, are sampled independently from Gaussian distributions, that is, $\bm{x}_i \sim \mathcal{N}(0, I_d)$ and $e_i \sim \mathcal{N}(0, \sigma^2)$, where $\sigma$ is known. We assume, moreover, that the hidden variables $z_i$ take values $\pm1$ with equal probability and are independent of everything else.


At each iteration, the EM algorithm performs two steps: the E-step that computes the expectation of the log likelihood function conditioned on the current estimate of $\betastar$, and the M-step that maximizes this expectation.
For MLR, when we plug in the likelihood of the assumed Gaussian distribution and replace the expectation with an empirical average over observed data $\{\bm{x}_i,y_i\}$, the $M$-step becomes the familiar (weighted) least squared loss minimization problem. In this case, the sample-based EM update with the current estimator $\bm{\beta}$ has the following closed form expression (for a derivation see \cite{balakrishnan_statistical_2017, klusowski2019estimating}):
\begin{equation}
\label{eq:EM-CF}
\mbox{(EM)} \qquad	\tilde{\bm{\beta}}' = \Big( \frac{1}{n} \sum_{i=1}^n \bm{x}_i \bm{x}_i^{\top} \Big)^{-1} \bigg( \frac{1}{n} \sum_{i=1}^n \tanh \Big( \frac{\langle \bm{\beta}, \bm{x}_i \rangle}{\sigma^2} y_i \Big) y_i\bm{x}_i \bigg).
\end{equation}
In the setting where the covariates $\{\bm{x}_i\}$ have identity covariance (or have been normalized to so), it is also interesting to consider the following simplified version of EM (we call it  ``Easy-EM'') that replaces the matrix $\frac{1}{n} \sum_{i=1}^n \bm{x}_i \bm{x}_i^\top$ with its expectation:
\begin{equation}
\label{eq:EEM-CF}
\mbox{(Easy-EM)} \qquad	\tilde{\bm{\beta}}'' =  \frac{1}{n} \sum_{i=1}^n \tanh \Big( \frac{\langle \bm{\beta}, \bm{x}_i \rangle}{\sigma^2} y_i \Big) y_i\bm{x}_i.
\end{equation}

The contribution of this work is to analyze these two iterations in the finite-sample setting, and thereby to provide guarantees for their convergence from a random initialization. 

\subsection{Related Work and Main Contributions}
\label{ssec:relatedwork}

As mentioned above, our knowledge of when EM converges to a true solution is still limited. In general, it is known that the EM algorithm may settle in a bad local optimum unless it starts from a well initialized point \citep{wu1983convergence}. Recent progress on the theoretical understanding of EM has been made in \cite{balakrishnan_statistical_2017}, which proposed a novel framework to analyze the EM algorithm. Motivated by this work, there have been some positive results for two related problems: GMMs and MLRs \citep{daskalakis2017ten, xu2016global, yi2015regularized, balakrishnan_statistical_2017, klusowski2019estimating, yi2014alternating, yi2016solving}.  

For GMM with two components, \cite{daskalakis2017ten, xu2016global} provide a global analysis of EM for the mixture of two Gaussians and deliver results that guarantee convergence of EM for this specific problem from a random initialization. For GMM with more components, however, it is known that EM does not converge globally \citep{jin2016local}.

For MLR with two components, only the local convergence of EM has been recently established: it is known that the EM algorithm does converge to the global optimum if we start from a point sufficiently close to the true parameter \citep{yi2015regularized, balakrishnan_statistical_2017, yi2014alternating, yi2016solving}. A better local contraction region was suggested in \cite{klusowski2019estimating}, where the convergence is guaranteed inside a region where the angle formed by the initialization with the true parameter is small. Still, all known results remain inherently local for MLR, and in particular, are not satisfied by a random initialization, even when a norm bound on the true parameter is known.

MLR is an interesting problem by itself, for which many algorithms have been proposed. 
The work in \cite{chen2014convex} developed a lifted convex formulation approach that achieves tight minimax error rates. A good initialization strategy for EM based on Stein's second-order lemma was proposed in \cite{yi2014alternating}, though this seems to rely on the noiseless setting which they study. The above two papers have focused on the mixture of two components case. Recent work has extended the focus to more components. Work in \cite{li2018learning, zhong2016mixed} develops gradient descent based algorithms. In parallel, the work in \cite{yi2016solving, chaganty2013spectral, sedghi2016provable} utilizes tensor decomposition of third order moments.

The question of whether EM converges from a random initialization for MLR with two components still remains open. Our main contribution is to resolve this question affirmatively. \\

{\bf Main Contributions}. We prove the global convergence of the EM algorithm, i.e., it converges with probability one from a random initialization. We first establish this
result for the infinite sample limit, i.e., the population EM, by analyzing its trajector along the landscape of the likelihood function. We then couple the finite-sample EM with the population EM, thereby providing a finite sample analysis. This coupling idea is inspired by \cite{balakrishnan_statistical_2017}, but our strategy and most of the technical details differ. In particular, we control not only the $\ell_2$ distance between the population and finite-sample EM, but also, crucially, the \emph{angle} between them. As we comment on in greater details below, this coupling is strong enough to guarantee convergence even when the current EM iterate is far from the desired solution; moreover, it yields near-optimal sample complexity bounds that improve upon the results in \cite{balakrishnan_statistical_2017}, particularly in terms of the dependence on the signal-to-noise ratio.

\subsection{A Roadmap and Proof Outline}
\label{ssec:roadmap}
We provide a brief outline of the main steps of the paper.

\paragraph{Analysis of the population EM:} 
 
\begin{itemize}
    \item {\bf Landscape}. As mentioned, previous work on analyzing the EM algorithm for MLR
    relies on demonstrating that the $\ell_2$ distance between the current iterate and the true solution $\bm{\beta}^*$, contracts at every iteration provided that the initial distance is already small. 
    Such a contraction, however, cannot hold globally, as the EM update initialized randomly may in fact result in a {\em larger distance} from $\bm{\beta}^*$. This phenomenon was pointed out in \cite{klusowski2019estimating}. We provide a geometric explanation in this paper by showing the existence of saddle points of the log-likelihood function in the direction orthogonal to $\betastar$. These saddle points prevent a global convergence in $\ell_2$ distance of EM (which is equivalent to gradient ascent). On the other hand, we show that $\pm\betastar$ are the only local maxima, hence suggesting that global convergence can be proved by other means.
    \item {\bf Decreasing Angle}. Instead of proving a global convergence via the $\ell_2$ distance, we show that the angle between the iterate and $\bm{\beta}^*$ is always decreasing (unless we start from an exactly orthogonal vector---a measure zero event). 
    Consequently, EM quickly enters a local region where the current iterate is well aligned with the direction of  $\bm{\beta}^*$. In this local region, we show that a contraction in distance indeed holds. 

    \item {\bf Escaping Nearly Orthogonal Region}.
    Random initialization in a $d$-dimensional space typically yields a vector whose correlation with $\bm{\beta}^*$ is $O(1/\sqrt{d})$. In this region, the contracting behavior of the angle can be very subtle. 
    We provide a fine-grained analysis in this region, showing that first the cosine of the angle increases geometrically, and then the sine of the angle decreases geometrically afterwards. Consequently, in a logarithmic number of steps EM escapes this nearly orthogonal region and attains a constant correlation with $\betastar$.
    \item {\bf Low SNR}. Besides being local in nature, previous results are dependent on the high SNR assumption \citep{yi2014alternating, balakrishnan_statistical_2017, yi2015regularized,wu2016convergence}; that is, the standard deviation, $\sigma$, of the additive noise, is sufficiently smaller than the norm of the true parameter. 
    Our analysis is applicable in both low and high SNR regimes, and reveals an explicit convergence rate as a function of the noise level.

\end{itemize}

\paragraph{Analysis of the Finite Sample EM:}

\begin{itemize}
    \item {\bf Coupling in Angle}. We analyze the finite-sample EM update by coupling it with the population EM. \cite{balakrishnan_statistical_2017} provided a bound between these two updates in $l_2$ distance. Since our argument is based on contraction of angle, we need to establish additional concentration inequalities in order to bound the cosine and sine of the angle. We then conclude that, starting from a random initial guess in $d$-dimensional space, with $n = \tilde{O}(\max(1,\eta^{-2}) d/\epsilon^2)$ fresh samples in each iteration, the finite-sample EM yields an estimate with an $l_2$ error bounded by $O(\epsilon)$ after $T = O(\max(1, \eta^{-2}) \max(\log d, \log (1/\epsilon)))$ iterations. $\eta$ here is the notation for signal-to-noise ratio (SNR).

    
    
    \item {\bf Statistical Error}. In the high SNR regime, we further refine the finite sample analysis and show that the EM algorithm in fact achieves an error of $\tilde{O}\left(\sigma \sqrt{d/n}\right)$. Note that the error rate is independent of the signal strength $\|\bm{\beta}^*\|$. This is in a stark contrast to all the previous analysis of EM which proved an error of $\tilde{O}\left(\sqrt{\sigma^2 + \|\bm{\beta}^*\|^2} \sqrt{d/n}\right)$ for MLR \citep{balakrishnan_statistical_2017, klusowski2019estimating} ---such an error bound is no better than the bound achieved by a simple spectral initialization and in particular cannot guarantee exact recovery in the noiseless setting.
    
    \item {\bf Analysis of Easy-EM}.  For the early iterations of EM where the cosine between the estimate (or a random initialized point) and $\betastar$ can be as small as $1/\sqrt{d}$, we can instead run Easy-EM, which does not need the computation of the inverse of the sample covariance. Easy-EM also provides a guarantee for reaching an angle larger than $O(1)$, while in our analysis, standard EM requires an additional condition that the statistical fluctuation, due to the size finite samples, should be less than $O(1/\sqrt{d})$. Therefore our results indicate that one can run Easy-EM until the cosine of the angle between the current estimate and the true parameter is large, and subsequently run EM.

\end{itemize}

\paragraph{Paper Organization}
In Section \ref{sec:population_EM_update}, we derive a closed form equation of the population EM and prove some of its structural properties. Section \ref{sec:large_main_results} is devoted to summarize our results on the global convergence of the population EM. 
The analysis on the finite-sample EM is provided in Section~\ref{sec:finite_sample}. All technical proofs that are not given in the main paper are deferred to the Appendix.

\section{The Population EM Update}
\label{sec:population_EM_update}
In this section, we consider the infinite-sample limit of the EM update (i.e., the population EM) and discuss its basic properties. This discussion highlights the main challenges in the MLR problem and the reasons why they can be resolved. It also serves as a starting point of our subsequent proof for global convergence. 

\subsection{Basic Notation}


We use $\angle(\bm{u},\bm{v})$ to denote the angle between two vectors $\bm{u}$ and $\bm{v}$. The norm operator $\|\cdot\|$ without subscript is taken as the $l_2$ norm for a vector or the operator norm for a matrix. $\langle \cdot,\cdot\rangle$ denotes the usual inner product: $\langle \bm{u},\bm{v}\rangle = \bm{u}^{\top}\bm{v}$ for  $\bm{u},\bm{v}\in \mathbb{R}^d$. 

We use $(X,Y)$ as a generic random variable representing the covariate and response variables of MLR, and use  $\left\{(\bm{x}_i, y_i)\right\}$ as independent copies of $(X,Y)$. Due to a symmetry between the regressors $\pm \betastar$, we focus on the convergence to one of them, say $\bm{\beta}^{\ast}$. Accordingly, at the $t^{th}$ iteration of the algorithm, $\bm{\beta}_t$ is the current estimate of $\bm{\beta}^*$. 
When we are interested in understanding a single iteration, we drop the subscript $t$ and use $\bm{\beta}$ in place of $\bm{\beta}_{t}$, and $\bm{\beta}'$ in place of $\bm{\beta}_{t+1}$. We use $\theta_t:=\angle(\bbeta_t, \betastar)$ to denote the angle formed by $\bm{\beta}_t$ and $\bm{\beta}^*$ , and similarly $\theta_{t+1} := \angle(\bbeta_{t+1}, \betastar)$. For a single iteration, we use $\theta$ for $\theta_t$ and and $\theta'$ for $\theta_{t+1}$. We assume without loss of generality that the initial angle $\theta_0$ is in $[0, \pi/2)$, where $\pi/2$ is excluded as it has measure zero. An initialization falling in the remainder of the circle has precisely the same behavior, but with a convergence to $-\bm{\beta}^*$ instead of $\bm{\beta}^{\ast}$. 

$\sigma$ is the standard deviation of the noise $e$ and assumed to be known. We define the signal-to-noise ratio (SNR) of the problem as $\eta := \frac{\|\bm{\beta}^*\|}{\sigma}$.

\subsection{An Explicit Expression for the Population EM Update}

As in \cite{balakrishnan_statistical_2017}, we consider the following population EM update
\begin{equation}
\label{eq:em_update_full}
	\bm{\beta}_{t+1} = \mathbb{E}_{X \sim \mathcal{N}(0, I)} \left [ \left( \mathbb{E}_{Y|X \sim \mathcal{N}(\langle X, \bm{\beta}^* \rangle, \sigma^2)} \left[ \tanh \left( \frac{\langle X, \bm{\beta}_t \rangle}{\sigma^2} Y \right)Y \right] \right)X \right].
\end{equation}
The above expression follows from taking the limit $n\to \infty$ in the EM update formula~\eqref{eq:EM-CF} and simplifying the result using the symmetry of the distribution of $Y$ given $X$. We refer to \cite{balakrishnan_statistical_2017} for the details of this standard derivation.



We focus on one iteration of the population EM which yields the next iterate $\bbeta'$. It is convenient to change the basis by choosing $\bm{v}_1 = \bm{\beta}/\|\bm{\beta}\|$ in the direction of the current iterate and $\bm{v}_2$ to be the orthogonal complement of $\bm{v}_1$ in ${\rm span}\{\bm{\beta},\bm{\beta}^{\ast}\}$. We expand them to an orthonormal basis $\{\bm{v}_1, ..., \bm{v}_d\}$ in  $\mathbb{R}^d$. Introduce the shorthand $b_1 := \langle \bm{\beta}, \bm{v}_1 \rangle = \|\bm{\beta}\|$, $b_1^* := \langle \bm{\beta}^*, \bm{v}_1 \rangle$ and $b_2^* := \langle \bm{\beta}^*, \bm{v}_2 \rangle$. Using the spherical symmetry of the distribution of $X$,  we may write the next iterate $\bbeta'$ as
\begin{equation}
	\bm{\beta}' = \mathbb{E}_{\alpha_i} \left[ \mathbb{E}_{y|\alpha_i} \left[ \tanh \left(\frac{b_1 \alpha_1}{\sigma^2} y \right)y \right] \sum_i \alpha_i \bm{v}_i \right],
\end{equation}
where the expectation is taken over $\alpha_i \sim \mathcal{N}(0,1)$ and $y|\alpha_i \sim \mathcal{N}(\alpha_1 b_1^* + \alpha_2 b_2^*, \sigma^2)$. Without loss of generality, we assume $b_1, b_1^*, b_2^* \ge 0$. The lemma below plays a key role in our later development. It provides an explicit expression of $\bbeta'$ in terms of the above basis system, which, among other things, implies that $\bm{\beta}'\in$ span $(\bm{\beta},\bm{\beta}^*)$ (and hence $\bm{\beta}_t\in$ span $(\bm{\beta}_0, \bm{\beta}^*)$ for all $t$). 
\newcommand{\tanhcustom}{\tanh{ \left(\frac{\alpha_1 b_1}{\sigma^2}(y+\alpha_1 b_1^*) \right)}}
\newcommand{\tanhpcustom}{\tanh'{ \left(\frac{\alpha_1 b_1}{\sigma^2}(y+\alpha_1 b_1^*) \right)}}

\begin{restatable} {lem}{simplifiedupdate}
\label{lem:em_update_summary}
Define $\sigma_2^2 := \sigma^2 + {b_2^*}^2$. We can write $\bm{\beta}' = b_1' \bm{v}_1 + b_2' \bm{v}_2$, where $b_1'$ and $b_2'$ satisfy
\begin{equation}
	b_1' = b_1^* S + R, \quad\text{and}\quad
    b_2' = b_2^* S, \label{eq:em_update}
\end{equation}
where $S \geq 0$ and $R > 0$ are given explicitly in (\ref{eq:s_r_expression}) in Appendix \ref{appendix:lemma_1}. Moreover, $S=0$ iff $b_1^*=0$.
\end{restatable}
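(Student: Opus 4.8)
The plan is to compute the coordinates of $\bm{\beta}'$ in the basis $\{\bm{v}_1,\dots,\bm{v}_d\}$ and read off $S$ and $R$ via Gaussian integration by parts (Stein's identity). First I would take inner products with each $\bm{v}_i$, writing $b_i' = \langle \bm{\beta}', \bm{v}_i\rangle = \mathbb{E}_{\bm{\alpha}}\big[\alpha_i\, \mathbb{E}_{y\mid\bm{\alpha}}[\tanh(\tfrac{b_1\alpha_1}{\sigma^2}y)\,y]\big]$. Since $y\mid\bm{\alpha}$ enters only through its mean $\alpha_1 b_1^* + \alpha_2 b_2^*$, the inner conditional expectation is a function of $(\alpha_1,\alpha_2)$ alone; hence for every $i\ge 3$ the factor $\alpha_i$ is independent of the rest and has mean zero, so $b_i'=0$. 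This already yields $\bm{\beta}'\in\mathrm{span}(\bm{\beta},\betastar)$, and it remains to evaluate $b_1'$ and $b_2'$.

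The key simplification is to merge the two sources of randomness orthogonal to the current direction: conditioned on $\alpha_1$, the quantity $\alpha_2 b_2^* + e$ is a zero-mean Gaussian with variance $\sigma^2 + (b_2^*)^2 = \sigma_2^2$, so writing $w := \alpha_2 b_2^* + e \sim \mathcal{N}(0,\sigma_2^2)$ and $y = \alpha_1 b_1^* + w$ reduces both coordinates to two-dimensional integrals over $(\alpha_1,w)$. For $b_2'$ I would apply Stein's identity in $\alpha_2$ (before merging): differentiating $\tanh(\tfrac{b_1\alpha_1}{\sigma^2}y)\,y$ in $\alpha_2$ pulls out a factor $b_2^*$ and gives $b_2' = b_2^* S$ with $S = \mathbb{E}[\,u\tanh'(u) + \tanh(u)\,]$, where $u := \tfrac{b_1\alpha_1}{\sigma^2}y$. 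For $b_1'$ I would apply Stein's identity in $\alpha_1$; subtracting $b_1^* S$ from the result, the $\tanh(u)$ terms cancel and the $\tanh'(u)$ terms telescope using $\alpha_1 b_1^* + w = y$, leaving the clean remainder $R = b_1' - b_1^* S = \tfrac{b_1}{\sigma^2}\mathbb{E}[\,y^2\tanh'(u)\,]$. Since $b_1 = \|\bm{\beta}\| > 0$, $y^2\ge 0$, and $\tanh' = 1 - \tanh^2 \in (0,1]$, we get $R>0$ immediately.

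The main obstacle is establishing $S\ge 0$ together with the sharp characterization $S=0 \iff b_1^*=0$, since the integrand $g(u) := u\tanh'(u)+\tanh(u)$ is odd with $g(u)\ge 0$ exactly for $u\ge 0$, so it changes sign over the domain and no pointwise argument works. My plan is to condition on $\alpha_1$ and exploit that $u = \tfrac{b_1\alpha_1}{\sigma^2}(\alpha_1 b_1^* + w)$ is, given $\alpha_1$, Gaussian with mean $\tfrac{b_1 b_1^*\alpha_1^2}{\sigma^2}\ge 0$. I would then prove a one-dimensional sub-lemma: for any odd $g$ with $g\ge 0$ on $[0,\infty)$ and any $v\sim\mathcal{N}(\mu,\tau^2)$ with $\mu\ge 0$, one has $\mathbb{E}[g(v)]\ge 0$. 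This follows by folding at the origin and writing $\mathbb{E}[g(v)] = \int_0^\infty g(x)\,[p(x)-p(-x)]\,dx$, where $p$ is the density of $v$: for $x,\mu\ge 0$ we have $|x-\mu|\le x+\mu$, so $p(x)\ge p(-x)$ and the integrand is nonnegative. Applying this conditionally gives $\mathbb{E}[g(u)\mid\alpha_1]\ge 0$ and hence $S\ge 0$. For the equality case, $b_1^*=0$ forces the conditional mean to vanish, $u\mid\alpha_1$ is symmetric, and $\mathbb{E}[g(u)]=0$; conversely $b_1^*>0$ makes the conditional mean strictly positive for every $\alpha_1\ne 0$, and since $g>0$ on $(0,\infty)$ and the density comparison is strict, we obtain $S>0$. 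Beyond this, I expect the only remaining technical bookkeeping to be verifying the growth/integrability conditions that justify the two applications of Stein's identity and the differentiation under the expectation.
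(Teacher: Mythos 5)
Your proposal is correct, and its computational skeleton --- the choice of basis, the vanishing of the coordinates $i\ge 3$ by independence and zero mean, the merging of $\alpha_2 b_2^* + e$ into a single $\mathcal{N}(0,\sigma_2^2)$ variable, and Stein's identity in $\alpha_2$ to obtain $b_2' = b_2^* S$ --- coincides with the paper's proof. You diverge in two places, both legitimately. First, for $b_1'$ you apply Stein's identity once, in $\alpha_1$, to the full expression, which yields the remainder in the form $R = \tfrac{b_1}{\sigma^2}\,\mathbb{E}\!\left[y^2 \tanh'(u)\right]$ (with $y=\alpha_1 b_1^*+w$ your merged variable and $u=\tfrac{b_1\alpha_1}{\sigma^2}y$); the paper instead applies Stein twice (once in $\alpha_1$, once in the zero-mean noise) and arrives at $R = (\sigma^2+\|\bm{\beta}^*\|^2)\,\mathbb{E}\!\left[\tfrac{\alpha_1^2 b_1}{\sigma^2}\tanh'(u)\right]$. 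These two expressions are equal --- both equal $b_1'-b_1^*S$, and one further Stein step in the noise variable converts one into the other --- so the structural claim $b_1'=b_1^*S+R$ with $R>0$ is intact; your bookkeeping is slightly leaner, though strictly speaking the lemma refers to the paper's explicit form of $R$, so you should note the equivalence. Second, and more substantively, your treatment of $S\ge 0$ and of the characterization $S=0$ iff $b_1^*=0$ is genuinely different: you condition on $\alpha_1$ and prove a self-contained folding lemma (an odd function that is nonnegative on $[0,\infty)$ has nonnegative expectation against a Gaussian with nonnegative mean, with strictness when the mean is strictly positive), applied to $g(u)=\tanh(u)+u\tanh'(u)$. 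The paper instead splits $S$ into its $\tanh$ and $u\tanh'(u)$ pieces and invokes Lemmas 1 and 2 of Daskalakis et al.\ (2017) as black boxes, which produces the quantitative lower bound $S \ge 1-\bigl(1+\min(\tfrac{\sigma_2^2}{\sigma^2}b_1,\,b_1^*)\,b_1^*/\sigma_2^2\bigr)^{-1/2}$. Your argument is more elementary and handles the equality case transparently; what it does not deliver is that quantitative bound on $S$, which the paper reuses repeatedly in the later sine- and distance-convergence proofs (where $1-S$ must be controlled). For the lemma as stated, your proof is complete; the paper's version of the positivity argument is simply doing double duty for downstream results.
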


\subsection{Structural Properties of the Population EM}
\label{sec:structural_properties}

Note that the quantities $b_1'$ and $b_2'$ in Lemma~\ref{lem:em_update_summary} represent the projections of $\bbeta'$ in and orthogonal to the direction of $\bbeta$. From the expression of $b_2'$,  we immediately deduce the following structural property of the population EM update:
\begin{enumerate}
    \item[1.] \textbf{Decreasing angle}: $\bm{\beta}'$ forms a smaller angle with $\bm{\beta}^*$ compared to $\bm{\beta}$. 
    To see this, note that $0\le \tan\angle(\bbeta', \bbeta) = \frac{b_2'}{b_1'} \leq  \frac{b_2^*}{b_1^*} = \tan\angle(\betastar, \bbeta)$.
    When $\frac{b_2'}{b_1'}>0$, the angle strictly decreases; when $\frac{b_2'}{b_1'}=0$, the angle remains the same. In particular, $\frac{b_2'}{b_1'}=0$ holds iff $b_2'=0$, that is, either $b_2^*=0$ (i.e., $\bm{\beta}\in $ span$(\bm{\beta}^*)$) or $S=0$ (i.e., $\bm{\beta} \perp \bm{\beta}^*$).
\end{enumerate}
From the expression of  $b_1'$,  we deduce the following (cf.\ Lemma~\ref{lem:dynamic_along}):
\begin{enumerate}
    \item[2.] \textbf{Contraction along $\bm{\beta}$}: In the direction of $\bm{v}_1$ (equivalently, $\bbeta$), $\bm{\beta}'$ moves towards a unique fixed point $E(\bm{v}_1)$; i.e., $|b_1' - E(\bm{v}_1) | \le | b_1 - E(\bm{v}_1)| $ with equality holds iff $b_1 = E(\bm{v}_1)$.
\end{enumerate}
It is also easy to see that the iterates remain bounded: $\|\bbeta'\| \le 3\sqrt{\sigma^2+\|\bm{\beta}^*\|^2}$ (cf.\ Lemma~\ref{lemma:bprime_bounded}). 


Interestingly, it can be shown that the population EM update is equivalent to applying \emph{gradient ascent} with a fixed step size 
to the population log likelihood function of MLR. Building on the above structural properties, we obtain the following complete characterization of the fixed points of the population EM as well as the stationary points of the population log likelihood.

\begin{restatable}[Population EM and Log-likelihood]{theorem}{stationarypoints}
\label{thm:stationary_points}
For each nonzero $\bbeta$ not parallel to $\betastar$, in span$(\bbeta, \betastar)$, the set of fixed points of the population EM is equal to the set of stationary points of the log-likelihood. This set contains exactly five elements: (i) $\betastar$ and $-\betastar$, which are global maxima; (ii) $\bm{0}$, which is a local minimum; (iii) $ E(\bm{v})\bm{v}$ and $- E(\bm{v})\bm{v}$, where $\bm{v}\perp\bm{\beta}_*$. Moreover, a stationary point in the orthogonal space is a saddle point whose Hessian has a strictly positive eigenvalue.
\end{restatable}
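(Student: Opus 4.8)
The plan is to leverage the explicit update formula from Lemma~\ref{lem:em_update_summary} to characterize fixed points, and then use the equivalence with gradient ascent to transfer conclusions about fixed points of the EM map to stationary points of the log-likelihood. First I would establish the equality of the two sets. Since the population EM update equals gradient ascent with a fixed step size applied to the log-likelihood $L$, we have $\bbeta' = \bbeta + c\,\nabla L(\bbeta)$ for some constant $c>0$. Hence $\bbeta$ is a fixed point of EM, i.e. $\bbeta'=\bbeta$, if and only if $\nabla L(\bbeta)=0$, i.e. $\bbeta$ is a stationary point of $L$. This immediately gives the claimed set equality, and reduces the problem to enumerating fixed points of the EM map within $\mathrm{span}(\bbeta,\betastar)$.

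Next I would enumerate the fixed points using the coordinate form $b_1' = b_1^\ast S + R$ and $b_2' = b_2^\ast S$ from \eqref{eq:em_update}. Working in the $\{\bm{v}_1,\bm{v}_2\}$ basis, a fixed point requires $b_2'=b_2$ and $b_1'=b_1$; but recall $b_2=0$ by our choice of basis (since $\bm{v}_1$ points along $\bbeta$), so the fixed-point condition forces $b_2'=b_2^\ast S=0$. By the ``Moreover'' clause of Lemma~\ref{lem:em_update_summary}, $S=0$ iff $b_1^\ast=0$, i.e.\ $\bbeta\perp\betastar$. This splits the analysis into two cases. \textbf{Case 1: $\bbeta$ is not orthogonal to $\betastar$.} Then $S>0$, so $b_2^\ast S=0$ forces $b_2^\ast=0$, meaning $\bbeta$ is parallel to $\betastar$; combining with the scalar contraction along $\bm{v}_1$ (structural property~2, \eqref{eq:em_update} with the fixed point $E(\bm{v}_1)$), the only fixed points here are $\pm\betastar$ and $\bm{0}$. \textbf{Case 2: $\bbeta\perp\betastar$.} Then $b_1^\ast=0$ and the update reduces to motion purely along $\bm{v}=\bbeta/\|\bbeta\|$, whose unique nonzero fixed point is $E(\bm{v})$ by property~2, yielding $\pm E(\bm{v})\bm{v}$. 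This exhausts the five points: $\pm\betastar$, $\bm{0}$, and $\pm E(\bm{v})\bm{v}$.

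I would then classify each stationary point. For $\pm\betastar$ being global maxima and $\bm{0}$ being a local minimum, I would appeal to the monotone-contraction structure: property~1 (decreasing angle) shows every non-orthogonal trajectory moves toward the $\betastar$ direction, and property~2 shows it converges along that direction to $E(\bm{v}_1)=\|\betastar\|$, so $\pm\betastar$ are the only attracting fixed points, hence the global maxima of $L$; at $\bm{0}$ the contraction in property~2 pushes $b_1$ away from $0$ toward $E(\bm{v}_1)>0$, identifying $\bm{0}$ as repelling along the signal direction and giving the local-minimum classification. The genuinely delicate part, which I expect to be the main obstacle, is the final claim: showing the orthogonal fixed points $\pm E(\bm{v})\bm{v}$ are saddle points whose Hessian has a strictly positive eigenvalue. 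The decreasing-angle property already provides the unstable direction informally (any perturbation giving $\bbeta$ a nonzero component along $\betastar$ triggers $S>0$ and hence a strictly decreasing angle, pulling the iterate away from the orthogonal subspace toward $\pm\betastar$), but converting this into a rigorous statement about the Hessian of $L$ requires differentiating the EM map and locating the eigendirection corresponding to an infinitesimal tilt toward $\betastar$. Concretely I would compute $\partial b_2'/\partial b_2^\ast$ at $b_1^\ast=0$ (equivalently the derivative of $S$, or of the map in the $\betastar$-direction) and show it exceeds $1$, so that the linearization of gradient ascent has an eigenvalue greater than one in that direction; since the EM map is $I + c\,\nabla^2 L$ near a fixed point, an eigenvalue exceeding one corresponds to a strictly positive eigenvalue of $\nabla^2 L$. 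The required derivative computation is a Gaussian-integral calculation analogous to those already used to derive $S$ and $R$, and the positivity should follow from the strict monotonicity $S>0$ for $b_1^\ast>0$ established in Lemma~\ref{lem:em_update_summary}.
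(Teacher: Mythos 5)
Your roadmap coincides with the paper's: the identity $\bbeta'=\bbeta+\sigma^2\nabla\mathcal{L}(\bbeta)$ (Lemma~\ref{lem:em_gd_mlr}) gives the equality of fixed points and stationary points; the enumeration via $b_2'=b_2^*S$, the dichotomy $S=0 \Leftrightarrow b_1^*=0$, and the unique contraction point along each ray (Lemma~\ref{lem:dynamic_along}) reproduces Lemma~\ref{lem:em_fixedpoints}; and your plan for the orthogonal fixed points---show the Jacobian of the EM map has an eigenvalue exceeding $1$ in the $\betastar$ direction, then use ${\rm Jac} = I+\sigma^2\mathcal{H}$---is exactly the strategy of Proposition~\ref{prop:pos_eig}. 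However, there is a genuine gap in the one step you defer: the claim that positivity of $\langle \hat{\bbeta}^*,\mathcal{H}\hat{\bbeta}^*\rangle$ ``should follow from the strict monotonicity $S>0$'' is not correct, and no soft monotonicity argument can work here. The relevant quantity is $\langle\hat{\bbeta}^*,\mathcal{H}\hat{\bbeta}^*\rangle = \frac{1}{\sigma^2}\left(\mathbb{E}\left[\frac{y^2}{\sigma^2}\langle X,\hat{\bbeta}^*\rangle^2\tanh'\left(\frac{y\langle X,\bbeta\rangle}{\sigma^2}\right)\right]-1\right)$, and for a \emph{generic} $\bbeta$ orthogonal to $\betastar$ this is \emph{not} positive: as $\|\bbeta\|\to\infty$ the factor $\tanh'$ vanishes almost everywhere and the expectation tends to $0$, making the expression negative. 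Positivity holds only because $\bbeta$ is a fixed point, i.e.\ $\|\bbeta\|=E(\bm{v})$, and the paper's proof uses the fixed-point identity $b_1'=b_1$ in an essential way---applying Stein's lemma to it twice to extract two normalization identities (equations \eqref{eq:relation1} and \eqref{eq:relation2}), then decomposing the quadratic form into terms $A$ and $B$, proving $A\ge 0$ (Lemma~\ref{lem:positerm}) and computing $B=\frac{\|\betastar\|^2}{1+\|\betastar\|^2}$ exactly (Lemma~\ref{lem:lowerbound}). That chain of identities, not the sign of $S$, is the substance of the theorem; your proposal leaves it as a placeholder with a justification that would fail.

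A second, smaller omission: a strictly positive Hessian eigenvalue rules out a local maximum but not a local minimum, so by itself it does not yield the ``saddle'' classification. The paper closes this by showing the orthogonal fixed point is a local \emph{maximum} of $\mathcal{L}$ restricted to the invariant subspace ${\rm span}(\bm{v})$, combining the contraction of Lemma~\ref{lem:dynamic_along} with the monotonicity (ascent) property of EM; you would need an analogous argument (e.g., the Jacobian eigenvalue along $\bm{v}$ is less than $1$, giving a non-positive Hessian direction). Finally, your justification that $\pm\betastar$ are global maxima via ``only attracting fixed points'' quietly presupposes global convergence of the trajectories, which is proved later in the paper by separate arguments; the clean route is simply that the true parameters maximize the population log-likelihood (Gibbs' inequality), which is how the paper disposes of this case in one line.
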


As $\pm \betastar$ are the only local maxima, it becomes less surprising that the population EM (equivalent to gradient ascent) converges to them from a random initialization. On the other hand, with the existence of saddle points, it is easy to see that the $\ell_2$ distance to $\betastar$ \emph{cannot} contract globally; that is, $\| \bbeta' -\betastar \| > \| \bbeta -\betastar \| $ for some $\bbeta$. Note that GMM does not have such saddle points, and the $\ell_2$ distance does decrease globally as is established in a previous work~\citep{daskalakis2017ten}.


\section{Main Results on the Population EM}
\label{sec:large_main_results}

In this section, we provide our main results on the global convergence of the population EM. We adopt a new strategy for the convergence analysis to get around the aforementioned challenge based on the contraction of the $\ell_2$ distance. We first prove a rapid decrease in angle and then show a geometric decrease in distance. The convergence result in three phases is summarized below:

\begin{enumerate}
\item {\bf Increasing Cosine:} Starting from a randomly initialized vector in $\mathbb{R}^d$, after $O(\max(1, \eta^{-2}) \log d)$ iterations, EM outputs a vector whose angle with $\betastar$ is less than $\pi / 3$.

\item {\bf Decreasing Sine:} Starting from a vector whose angle with $\betastar$ is less than $\pi / 3$, after $O(\max(1, \eta^{-2}))$ iterations, EM outputs a vector whose angle with $\bm{\beta}^*$ is less than $\pi / 8$.

\item {\bf Convergence in $\ell_2$:} Starting from a vector whose angle with $\betastar$ is less than $\pi / 8$, after $O(\max(1, \eta^{-2}) \log(1/\epsilon))$ iterations, EM outputs an estimate of $\betastar$ whose $\ell_2$ error is $O(\epsilon)$.
\end{enumerate}

All the above results hold for an arbitrary SNR, thus improving on previous results that are only established in the high SNR regime.

\subsection{Convergence of Cosine} \label{sec:conv_cosine}

Recall that $\theta_0, \theta$ and $\theta'$ denote the angles that $\betastar$ forms with $\bbeta_0$ (initial iterate), $\bbeta$ (current iterate), and $\bbeta'$ (next iterate), respectively. By symmetry we may assume w.l.o.g.\ that $\cos\theta_0 $ is positive. Note that  $\cos\theta_0 = \Theta(1/\sqrt{d})$ with high probability. For the early stage of iterations, we focus on the cosine of the angle and show that it increases geometrically. Therefore, starting from $\cos \theta_0 = \Theta(1/\sqrt{d})$, a logarithmic number of iterations of EM is sufficient to guarantee $\cos \theta_t = O(1)$.
\begin{restatable}[Cosine Convergence]{theorem}{theoremcosine}
\label{theorem:cosine}
	As long as $\frac{\pi}{2} > \theta \ge \frac{\pi}{3}$, each population EM iteration satisfies 
\begin{align}
\label{ineq:cosine_increase}
	\cos \theta' \ge \kappa \cos \theta, 
\end{align}
where $\kappa = \sqrt{1 + \frac{\eta^2}{\frac{2}{3} + \eta^2}} > 1$. 
Consequently, if $\cos \theta_0 = \Theta(1/\sqrt{d})$, after $T = O(\log(d) \max(1, \eta^{-2}))$ iterations, we get $\theta_T < \pi / 3$ or $\cos \theta_T \ge \frac{1}{2}$.
\end{restatable}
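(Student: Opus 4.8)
The plan is to turn the one-step inequality into a scalar statement through Lemma~\ref{lem:em_update_summary}, and then obtain the iteration count by iterating the geometric growth. Writing $c=\cos\theta$, $s=\sin\theta$, and $\rho=\|\betastar\|$, in the basis $\{\bm{v}_1,\bm{v}_2\}$ of Lemma~\ref{lem:em_update_summary} we have $b_1^*=\rho c$ and $b_2^*=\rho s$, so $\bbeta'=b_1'\bm{v}_1+b_2'\bm{v}_2$ with $b_1'=\rho c\,S+R$ and $b_2'=\rho s\,S$. Since $\betastar=b_1^*\bm{v}_1+b_2^*\bm{v}_2$, a direct computation gives $\langle\bbeta',\betastar\rangle=\rho^2S+\rho cR$ and $\|\bbeta'\|^2=\rho^2S^2+2\rho c\,SR+R^2$, hence
\[
\frac{\cos\theta'}{\cos\theta}=\frac{\rho S+cR}{c\sqrt{\rho^2S^2+2\rho c\,SR+R^2}}=\frac{\lambda+1}{\sqrt{1+\lambda(\lambda+2)\,c^2}},
\]
where I introduce the scale-free ratio $\lambda:=\rho S/(R\cos\theta)>0$, which is positive because $R>0$ and, by Lemma~\ref{lem:em_update_summary}, $S>0$ whenever $c>0$. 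Note that $\lambda$, rather than $\rho S/R$, is the right variable: since $S\to0$ as $c\to0$, only the ratio $S/c$ stays bounded away from $0$ near $\theta=\pi/2$, which is precisely the regime this theorem must survive.

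The target $\cos\theta'\ge\kappa\cos\theta$ is now equivalent to $g(\lambda,c):=\frac{\lambda+1}{\sqrt{1+\lambda(\lambda+2)c^2}}\ge\kappa$ for $c\in(0,\tfrac12]$, the range forced by $\theta\in[\tfrac\pi3,\tfrac\pi2)$. Two elementary facts drive the reduction: $g$ is decreasing in $c$ (the denominator grows) and increasing in $\lambda$ (its logarithmic derivative is $\frac{1-c^2}{(\lambda+1)(1+\lambda(\lambda+2)c^2)}>0$). Therefore, given any lower bound $\lambda\ge\underline\lambda(\eta)$ valid uniformly over the relevant configurations, we obtain $g(\lambda,c)\ge g(\underline\lambda,c)\ge g(\underline\lambda,\tfrac12)$, and it remains to verify the one-variable inequality
\[
g(\underline\lambda,\tfrac12)=\frac{2(\underline\lambda+1)}{\sqrt{\underline\lambda^2+2\underline\lambda+4}}\ge\kappa \quad\Longleftrightarrow\quad 4(\underline\lambda+1)^2\ge\kappa^2(\underline\lambda^2+2\underline\lambda+4),
\]
which is exactly where the stated constant $\kappa^2=1+\eta^2/(\tfrac23+\eta^2)$ gets pinned down by the strength of $\underline\lambda(\eta)$.

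The crux, and the main obstacle, is the uniform lower bound on $\lambda=\rho S/(R\cos\theta)$, and this is where the explicit forms of $S$ and $R$ from \eqref{eq:s_r_expression} enter. After integrating out the coordinates $\alpha_3,\dots,\alpha_d$, these are low-dimensional Gaussian expectations of $\tanh$ and $\tanh'$ in the variables $(\alpha_1,\alpha_2,e)$. The delicate requirement is that the bound hold uniformly over \emph{all} $b_1=\|\bbeta\|>0$ and all $\theta\in[\tfrac\pi3,\tfrac\pi2)$, since the claimed $\kappa$ depends on $\eta$ alone and not on the norm of the current iterate. I would attack this with Stein's lemma (to trade the $\tanh$ weight against its derivative), together with monotonicity of $\tanh$ and elementary Gaussian estimates, aiming to write $\rho S$ and $R\cos\theta$ in comparable integral forms whose quotient is bounded below by a function of $\eta$ that is increasing in the signal strength. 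The limiting regimes serve as sanity checks: as $\eta\to\infty$ (noiseless) one needs $\underline\lambda\to\sqrt3-1$ with $\kappa^2\to2$, and as $\eta\to0$ one has $\underline\lambda\to0$ with $\kappa\to1$, both consistent with the boundary case of the polynomial inequality above.

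Finally, the advertised iteration count follows by iterating the per-step bound. As long as $\theta_t\ge\tfrac\pi3$ we get $\cos\theta_t\ge\kappa^t\cos\theta_0$, so, starting from $\cos\theta_0=\Theta(1/\sqrt d)$, the first time $T$ with $\kappa^T\cos\theta_0\ge\tfrac12$ already satisfies $\cos\theta_T\ge\tfrac12$, i.e.\ $\theta_T<\tfrac\pi3$, giving $T=O(\log d/\log\kappa)$. It remains to convert $1/\log\kappa$ into the $\max(1,\eta^{-2})$ factor: since $\kappa^2-1=\eta^2/(\tfrac23+\eta^2)\in[0,1)$, the inequality $\log(1+x)\ge x/2$ on $[0,1]$ yields $\log\kappa=\tfrac12\log\kappa^2\ge\tfrac14(\kappa^2-1)=\Theta(\min(1,\eta^2))$, whence $1/\log\kappa=O(\max(1,\eta^{-2}))$ and $T=O(\log d\cdot\max(1,\eta^{-2}))$.
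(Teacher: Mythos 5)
Your reduction is correct and clean: the identity $\frac{\cos\theta'}{\cos\theta}=\frac{\lambda+1}{\sqrt{1+\lambda(\lambda+2)c^2}}$ with $\lambda=\|\betastar\|S/(R\cos\theta)$, the monotonicity of $g(\lambda,c)$ in both arguments, the reduction to the one-variable inequality $4(\underline\lambda+1)^2\ge\kappa^2(\underline\lambda^2+2\underline\lambda+4)$ at $c=\tfrac12$, and the conversion $1/\log\kappa=O(\max(1,\eta^{-2}))$ are all sound. But the proof has a genuine gap exactly where you flag "the crux": you never prove the uniform lower bound $\lambda\ge\underline\lambda(\eta)$. You only describe how you "would attack" it with Stein's lemma and Gaussian estimates. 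That bound \emph{is} the theorem; everything else in your write-up is bookkeeping around it. Without it, no value of $\kappa>1$ is established, and the statement remains unproved.

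For comparison, the paper supplies precisely this missing ingredient, in the form
\begin{equation*}
\frac{S}{R}\;\ge\;\frac{b_1^*}{\sigma^2+\|\betastar\|^2},
\qquad\text{equivalently}\qquad
\lambda=\frac{\|\betastar\|S}{R\cos\theta}\;\ge\;\frac{\eta^2}{1+\eta^2},
\end{equation*}
proved inside the proof of Theorem~\ref{theorem:sine}: one decomposes $S=A+\frac{b_1^*}{\sigma^2+\|\betastar\|^2}R$, uses Stein's lemma in $y$ to write $\sigma_2^2 A=\mathbb{E}\bigl[\tanh\bigl(\tfrac{\alpha_1 b_1}{\sigma^2}(y+\alpha_1 b_1^*)\bigr)y^2\bigr]$, and then symmetrizes $y\mapsto -y$ and invokes $\tanh(x+c)+\tanh(x-c)\ge 0$ for $x\ge 0$ to conclude $A\ge 0$. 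Note this bound is uniform in $b_1=\|\bbeta\|$ and in $\theta$, which is what your argument requires. If you insert $\underline\lambda=\eta^2/(1+\eta^2)$ into your polynomial inequality with $\kappa^2=1+\frac{\eta^2}{2/3+\eta^2}$, clearing denominators (with $u=\eta^2$) leaves LHS$-$RHS$\,=6u^2+6u^3\ge0$, so your scheme does recover the paper's constant once the bound is in hand. The paper's own proof of Theorem~\ref{theorem:cosine} takes a slightly different algebraic route (it shows $\cos\theta'$ is increasing in $S/R$ directly and substitutes the worst-case value, obtaining the intermediate bound $\cos\theta'\ge\cos\theta\sqrt{1+\frac{\sin^2\theta}{\cos^2\theta+\frac12(1+\eta^{-2})}}$, which is later reused in the finite-sample analysis), but the substance is the same; the difference between your attempt and a complete proof is exactly the $S/R$ estimate you left as a plan.
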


The proof is in Appendix \ref{appendix:theorem_cosine}. From the proof, it shows that $\cos(\theta^\prime)\geq \cos(\theta)\sqrt{1 + \frac{\sin^2 \theta}{\cos^2 \theta + \frac{1}{2} (1+\eta^{-2})} }$. However, the ratio between $\cos \theta'$ and $\cos \theta$ approaches 1 as $\theta$ goes to 0. In other words, cosine angles are not informative for establishing a constant convergence factor bounded away from 1 when $\theta$ is small. In the following subsection, we state a similar result for sine of the angle to complement this result.

\subsection{Convergence of Sine}  \label{sec:main_sine}

We next show that the sine of the angle converges geometrically  to 0. This is reminiscent of the proof for Theorem 3 in \cite{xu2016global}, where they used a similar logic to show \emph{asymptotic} convergence. Here we provide an explicit rate of convergence by quantifying the amount of increase in sine, which is critical in order to port the population-level results to the finite sample setting. 

\begin{restatable}[Sine Convergence] {theorem}{theoremsine}
\label{theorem:sine}
	As long as $0 \le \theta < \frac{\pi}{2}$, each population EM iteration satisfies
\begin{equation}
\label{ineq:sine_increase}
\sin \theta' \le \kappa \sin \theta,
\end{equation}
where $\kappa = \left(\sqrt{1 + \frac{2\eta^2}{1+\eta^2} \cos^2\theta}\right)^{-1} < 1$.
\end{restatable}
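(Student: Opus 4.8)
The plan is to exploit the explicit expressions from Lemma~\ref{lem:em_update_summary}, which give $b_1' = b_1^* S + R$ and $b_2' = b_2^* S$ in the working basis $\{\bm{v}_1, \bm{v}_2\}$. Since $\bm{\beta}' = b_1'\bm{v}_1 + b_2'\bm{v}_2$ lives in $\mathrm{span}(\bm{\beta},\bm{\beta}^*)$, the angle $\theta' = \angle(\bm{\beta}',\betastar)$ is a two-dimensional quantity, and I would reduce everything to the plane spanned by $\bm{v}_1,\bm{v}_2$. The key observation is that $\sin\theta'$ measures the component of $\bm{\beta}'$ orthogonal to $\betastar$, so I first want to express $\sin\theta'$ in terms of $b_1',b_2'$ and the coordinates $b_1^*, b_2^*$ of $\betastar$. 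Writing $\betastar = b_1^*\bm{v}_1 + b_2^*\bm{v}_2$ (both nonnegative by our convention), a direct computation gives
\begin{equation}
\sin\theta' = \frac{|b_1' b_2^* - b_2' b_1^*|}{\|\bm{\beta}'\|\,\|\betastar\|}.
\end{equation}
Substituting $b_1' = b_1^* S + R$ and $b_2' = b_2^* S$ into the cross term, the $S$-contributions cancel and I am left with $b_1' b_2^* - b_2' b_1^* = (b_1^* S + R) b_2^* - (b_2^* S) b_1^* = R\, b_2^*$. This cancellation is the crucial algebraic simplification: the numerator collapses to $R\, b_2^*$, so that
\begin{equation}
\sin\theta' = \frac{R\, b_2^*}{\|\bm{\beta}'\|\,\|\betastar\|}.
\end{equation}

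Next I would relate this to $\sin\theta$. Since $b_2^* = \|\betastar\|\sin\theta$ and $b_1^* = \|\betastar\|\cos\theta$ in the chosen basis, the numerator becomes $R\,\|\betastar\|\sin\theta$, so $\sin\theta' = \frac{R\sin\theta}{\|\bm{\beta}'\|}$. Thus proving the claim $\sin\theta' \le \kappa\sin\theta$ reduces to showing the lower bound $\|\bm{\beta}'\| \ge R/\kappa$, i.e.\ $\|\bm{\beta}'\|^2 \ge R^2\left(1 + \frac{2\eta^2}{1+\eta^2}\cos^2\theta\right)$. Using $\|\bm{\beta}'\|^2 = (b_1')^2 + (b_2')^2 = (b_1^* S + R)^2 + (b_2^* S)^2$, I would expand and aim to bound this below by $R^2 + \frac{2\eta^2}{1+\eta^2}\cos^2\theta\, R^2$. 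The term $R^2$ is immediate, so the real content is to show that $2 b_1^* S R + (b_1^* S)^2 + (b_2^* S)^2 \ge \frac{2\eta^2}{1+\eta^2}\cos^2\theta\, R^2$. At minimum the cross term $2 b_1^* S R$ already looks like the dominant positive contribution, and I expect the target factor $\frac{2\eta^2}{1+\eta^2}\cos^2\theta$ to emerge from estimating the ratio $S/R$ from below together with $b_1^* = \|\betastar\|\cos\theta$ and the definition $\eta = \|\betastar\|/\sigma$.

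The main obstacle, and where the real work lies, is obtaining a sharp lower bound on the ratio $S/R$ (equivalently, sharp control of the Gaussian integrals defining $S$ and $R$ in (\ref{eq:s_r_expression})). Both $S$ and $R$ are expectations of $\tanh$-weighted quantities over $\alpha_i\sim\mathcal N(0,1)$ and $y\mid\alpha_i$, and the factor $\frac{2\eta^2}{1+\eta^2}$ strongly suggests that the bound must be uniform in $\theta$ and capture the correct behavior in both the low-SNR ($\eta\to 0$) and high-SNR ($\eta\to\infty$) limits. I would therefore spend most of the effort establishing an inequality of the form $b_1^* S \ge c(\eta,\theta)\, R$ with $c$ chosen so that the terms assemble into the stated $\kappa$; this likely requires a monotonicity or convexity argument on the $\tanh$ integrand, or a Taylor/Gaussian-by-parts (Stein's lemma) manipulation to turn $S$ and $R$ into comparable integrals. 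Once that ratio estimate is in hand, the remainder is the routine algebraic bookkeeping sketched above, and I do not anticipate difficulty closing the argument.
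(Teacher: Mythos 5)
Your reduction is correct, and it is in fact identical to the paper's own first step: the paper also arrives at $\sin \theta' = \frac{R\, b_2^*}{\|\bm{\beta}^*\|\sqrt{R^2 + S^2\|\bm{\beta}^*\|^2 + 2SRb_1^*}}$, discards the $S^2\|\bm{\beta}^*\|^2$ term exactly as you do, and observes that the stated $\kappa$ follows from the single inequality
\begin{equation*}
\frac{S}{R}\, b_1^* \;\ge\; \frac{(b_1^*)^2}{\sigma^2 + \|\bm{\beta}^*\|^2},
\end{equation*}
since $\frac{2(b_1^*)^2}{\sigma^2+\|\bm{\beta}^*\|^2} = \frac{2\eta^2}{1+\eta^2}\cos^2\theta$. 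The problem is that you leave precisely this inequality unproven. Everything you did establish (the cross-product formula for $\sin\theta'$, the cancellation giving numerator $Rb_2^*$, the expansion of $\|\bm{\beta}'\|^2$) is a few lines of algebra on top of Lemma~\ref{lem:em_update_summary}; the ratio bound on $S/R$ is the entire technical content of the theorem, and declaring it ``where the real work lies'' together with a list of candidate techniques is a plan, not a proof. As written, the argument has a hole at its only load-bearing step.

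For comparison, here is how the paper closes that hole. From the explicit expressions \eqref{eq:s_r_expression}, splitting the factor $(y+\alpha_1 b_1^*)$ inside $S$ gives the purely algebraic decomposition $S = A + \frac{b_1^*}{\sigma^2+\|\bm{\beta}^*\|^2}R$, where
\begin{equation*}
A := \mathbb{E}\left[\tanh\left(\tfrac{\alpha_1 b_1}{\sigma^2}(y+\alpha_1 b_1^*)\right) + \tfrac{\alpha_1 b_1}{\sigma^2}\, y\, \tanh'\left(\tfrac{\alpha_1 b_1}{\sigma^2}(y+\alpha_1 b_1^*)\right)\right],
\end{equation*}
so, since $R>0$, the needed bound is equivalent to $A \ge 0$. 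Then Stein's lemma in $y \sim \mathcal{N}(0,\sigma_2^2)$ (applied to $g(y)=y\tanh(\cdot)$) gives $\sigma_2^2 A = \mathbb{E}\left[\tanh\left(\tfrac{\alpha_1 b_1}{\sigma^2}(y+\alpha_1 b_1^*)\right) y^2\right]$, and this expectation is nonnegative by symmetrizing $y \mapsto -y$ and using $\tanh(x+c)+\tanh(x-c)\ge 0$ for $x \ge 0$ and any real $c$. So your instinct that a Stein's-lemma manipulation is involved was right, but note where it enters: the comparison of $S$ to $R$ requires no Stein's lemma at all (it is immediate from the definitions), whereas Stein's lemma plus a parity argument is what certifies the leftover term $A$ is nonnegative. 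Until you supply this argument (or an equivalent one), your proposal is a correct reduction rather than a proof.
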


It is proved in Appendix \ref{appendix:sine}. Note that the speed of convergence increases as the angle decreases. This result is most useful when the angle is bounded away from $\pi/2$---complementary to the case covered by Theorem~\ref{theorem:cosine}. We also remark that in a high SNR regime ($\eta \gg 1$), $\kappa$ can be much smaller than 1 (depending on the initial angle); in a low SNR ($\eta \ll 1$) regime, however, the convergence rate cannot be faster than $1 - O(\eta^2)$, regardless of the initial angle.



\subsection{Convergence of Distance}
\label{sec:conv_distance}


Combining the above results on cosine and sine, we can conclude that eventually EM pushes any random initialization into a region with a small angle around $\betastar$. 
At this point, EM safely transits to the region of contraction in distance, which is the content of our next result.

\newcommand {\newbeta}{\frac{\sigma_2^2}{\sigma^2} b_1}
\begin{restatable}[$\ell_2$ Convergence] {theorem}{theoremdist}
\label{theorem:conv_distance}
	Assume that $\theta < \pi/8$, and define $\sigma_2^2 = \sigma^2 + {b_2^*}^2$. If $b_2^* < \sigma$ or $\newbeta < b_1^*$, then we have
	\begin{subequations}
	\begin{equation}
		\label{eq:b1p_conv_large_sig}
		\|\bm{\beta}' - \bm{\beta}^*\| \le \kappa \|\bm{\beta} - \bm{\beta}^*\| + \kappa (16\sin^3 \theta) \|\bm{\beta}^*\| \frac{\eta^2}{1+\eta^2},
	\end{equation}
    {where} $\kappa = \left({\sqrt{1 + {\min(\newbeta, b_1^*)^2}/{\sigma_2^2}}} \right)^{-1}$. 
    {Otherwise, we have}
    \begin{equation}
    \label{eq:b1p_conv_small_sig}
    	\|\bm{\beta}' - \bm{\beta}^*\| \le 0.6 \|\bm{\beta} - \bm{\beta}^*\|.
    \end{equation}
	\end{subequations}
\end{restatable}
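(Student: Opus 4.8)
The plan is to work in the orthonormal frame $\{\bm{v}_1,\bm{v}_2\}$ of Lemma~\ref{lem:em_update_summary} and split the $\ell_2$ claim into two one-dimensional estimates: one along the current direction $\bm{v}_1$ and one along $\bm{v}_2$. Writing $\bm{\beta}' = b_1'\bm{v}_1 + b_2'\bm{v}_2$ and $\betastar = b_1^*\bm{v}_1 + b_2^*\bm{v}_2$, and substituting $b_1' = b_1^* S + R$, $b_2' = b_2^* S$ from Lemma~\ref{lem:em_update_summary}, I obtain the exact identity
\[
\|\bm{\beta}' - \betastar\|^2 = \big(R - b_1^*(1-S)\big)^2 + (b_2^*)^2 (1-S)^2 ,
\]
which I will compare against $\|\bm{\beta}-\betastar\|^2 = (b_1-b_1^*)^2 + (b_2^*)^2$. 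This decomposition already separates the two mechanisms: the $\bm{v}_2$ term shrinks by the factor $(1-S)$, while the $\bm{v}_1$ term records how well $b_1'$ tracks its target $b_1^*$.

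The heart of the argument is to convert the explicit $\tanh$-integral formulas for $S$ and $R$ into three sharp estimates. First, I would prove the orthogonal shrinkage bound $0 \le 1-S \le \kappa$, which controls the $\bm{v}_2$ coordinate through $(b_2^*)^2(1-S)^2 \le \kappa^2 (b_2^*)^2$. Second, sharpening the qualitative contraction-toward-$E(\bm{v}_1)$ statement (cf.\ Lemma~\ref{lem:dynamic_along}) into a quantitative rate, I would establish $|b_1' - E(\bm{v}_1)| \le \kappa\,|b_1 - E(\bm{v}_1)|$ for the parallel map. Third, I would control the displacement of the ray's fixed point from the target, $|E(\bm{v}_1) - b_1^*| \le 16\sin^3\theta\,\|\betastar\|\,\tfrac{\eta^2}{1+\eta^2}$; the cubic power of $\sin\theta$ reflects a cancellation of the lower-order terms in the Taylor expansion of $E(\bm{v}_1)$ about $\betastar$, and is exactly what makes the additive correction negligible near convergence. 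Combining the last two gives a parallel-coordinate bound of the form $|b_1'-b_1^*| \le \kappa|b_1-b_1^*| + |a|$, with $|a| \le \kappa\cdot 16\sin^3\theta\,\|\betastar\|\,\tfrac{\eta^2}{1+\eta^2}$.

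Granting these, the assembly is routine. From $(b_1'-b_1^*)^2 \le (\kappa|b_1-b_1^*| + |a|)^2$ and $(b_2'-b_2^*)^2 = (b_2^*)^2(1-S)^2 \le \kappa^2(b_2^*)^2$, the shift inequality $\sqrt{(p+q)^2 + r^2} \le \sqrt{p^2+r^2} + q$ (for $q\ge 0$) yields $\|\bm{\beta}'-\betastar\| \le \kappa\|\bm{\beta}-\betastar\| + |a|$, which is precisely~\eqref{eq:b1p_conv_large_sig}. The two regimes of the theorem then emerge from the two branches of $\min(\tfrac{\sigma_2^2}{\sigma^2}b_1,\, b_1^*)$ defining $\kappa$: when $b_1$ is small the binding quantity is $\tfrac{\sigma_2^2}{\sigma^2}b_1$ and $\kappa \to 1$, so little contraction is claimed (consistent with $\bm{0}$ being stationary), whereas when $b_1$ is large \emph{and} $b_2^* \ge \sigma$ the sharp estimates degrade and I fall back to the crude constant in~\eqref{eq:b1p_conv_small_sig}, proved by directly lower-bounding the relevant integrals in that single regime.

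The main obstacle is the second paragraph: turning the abstract Gaussian integrals for $S$ and $R$ into the three sharp, regime-dependent estimates. The crux is that the orthogonal shrinkage $1-S$ and the parallel contraction rate must \emph{both} reduce to the same factor $\big(1+\min(\cdot)^2/\sigma_2^2\big)^{-1/2}$, which requires carefully exploiting the monotonicity and concavity of $\tanh$ together with Stein's identity, while tracking the $\min$ over the two regimes of $b_1$. The delicate $O(\sin^3\theta)$ bound on $|E(\bm{v}_1)-b_1^*|$ is the most fragile step, since it hinges on a second-order cancellation and on retaining the explicit $\eta^2/(1+\eta^2)$ dependence so the result holds uniformly across all SNRs; this is also what differentiates the analysis from the angle-only contraction of Theorem~\ref{theorem:sine}.
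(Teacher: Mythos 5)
Your third key estimate---the fixed-point displacement bound $|E(\bm{v}_1) - b_1^*| \le 16\sin^3\theta\,\|\betastar\|\,\tfrac{\eta^2}{1+\eta^2}$---is false, and this breaks the decomposition. The cubic-in-$\sin\theta$, $\eta^2$-suppressed scaling holds only at high SNR; at low SNR the displacement of the ray's fixed point from $b_1^*$ has \emph{no} $\eta$ dependence at all. Concretely, write $f(b) = \mathbb{E}\bigl[\alpha_1 y \tanh(b\,\alpha_1 y/\sigma^2)\bigr]$ with $\alpha_1=\langle X,\bm{v}_1\rangle$ and $y$ conditionally $\mathcal{N}(\alpha_1 b_1^*,\sigma_2^2)$, and expand $\tanh(z)=z-z^3/3+O(z^5)$. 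Since $\mathbb{E}[(\alpha_1 y)^2]=\sigma^2+\|\betastar\|^2(1+2\cos^2\theta)$ and $\mathbb{E}[(\alpha_1 y)^4]=(9+o(1))\,\sigma^4$ as $\eta\to 0$, solving $f(E)=E$ gives $E(\bm{v}_1)^2 \to \|\betastar\|^2(1+2\cos^2\theta)/3$, hence $E(\bm{v}_1)-b_1^* \to \|\betastar\|\bigl(\sqrt{(1+2\cos^2\theta)/3}-\cos\theta\bigr)\approx \tfrac{1}{6}\|\betastar\|\sin^2\theta$, a strictly positive constant independent of $\sigma$. For fixed $\theta<\pi/8$ and $\eta$ small (e.g.\ $\theta=\pi/10$, $\eta<0.18$) this exceeds your claimed bound, so the estimate cannot be proved. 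Consequently your assembly can only deliver an additive error of order $\sin^2\theta\,\|\betastar\|$ with no SNR factor, which is strictly weaker than \eqref{eq:b1p_conv_large_sig}; and that weakening is not cosmetic, because in the low-SNR regime $1-\kappa=O(\eta^2)$, so when the per-step bounds are accumulated as in Corollary~\ref{corollary:distance} over $T=O(\eta^{-2}\log(1/\epsilon))$ iterations, an additive term lacking the $\tfrac{\eta^2}{1+\eta^2}$ factor produces an error that blows up as $\eta\to0$. (Your first estimate, contraction toward $E(\bm{v}_1)$ at exactly rate $\kappa$, is also asserted rather than proved, and your triangle-inequality assembly actually yields $(1+\kappa)|E(\bm{v}_1)-b_1^*|$ rather than $\kappa|E(\bm{v}_1)-b_1^*|$; but the displacement bound is the fatal step.)

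The paper's proof avoids the ray's fixed point entirely, precisely because it sits in the wrong place. Instead it uses the consistency identity: $b_1'$ is a one-dimensional GMM-type EM update with effective iterate $\newbeta$ and effective noise $\sigma_2$, so that $\newbeta = b_1^*$ implies $b_1'=b_1^*$ exactly (equation (\ref{eq:b_1_prime_fixed})). The mean value theorem together with Lemmas 1 and 2 of Daskalakis et al.\ then gives $|b_1'-b_1^*|\le \kappa^3\,|\newbeta - b_1^*| = \kappa^3\bigl|(b_1-b_1^*)+\tfrac{(b_2^*)^2}{\sigma^2}b_1\bigr|$, and the extra term $\tfrac{(b_2^*)^2}{\sigma^2}b_1$ is absorbed by the three-case analysis ($\newbeta\le b_1^*$; $\newbeta>b_1^*$ with $b_2^*<\sigma$; $\newbeta>b_1^*$ with $b_2^*\ge\sigma$). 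Two mechanisms there are invisible from your fixed-point viewpoint: the surplus powers of $\kappa$, via $\kappa^2=\tfrac{\sigma^2+(b_2^*)^2}{\sigma^2+\|\betastar\|^2}$, are what manufacture the $\tfrac{\eta^2}{1+\eta^2}$ factor, and dividing the squared bound by $\|\bbeta-\betastar\|\ge b_2^*$ (via $\sqrt{a^2+b^2}\le a+\tfrac{b^2}{2a}$) is what upgrades $\sin^2\theta$ to $\sin^3\theta$. So the gap in your proposal is structural rather than a matter of sharpening constants, and the low-SNR regime is exactly where it shows.
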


 In order to give a geometrically decaying error bound, we have an additional term in (\ref{eq:b1p_conv_large_sig}) that depends on angle and SNR. When $b_1$ is close to $b_1^*$ and $\sigma$ is small, we get a better contraction (\ref{eq:b1p_conv_small_sig}). The detailed proof is in Appendix \ref{appendix:theorem_distance}.

With the above per-iteration contraction result, we can bound the $\ell_2$ error after $t$ iterations of population EM and conclude that it convergence to $\betastar$.

\begin{restatable}{corol}{corollarydist}
\label{corollary:distance}
Assume we start from $\theta_0 < \pi/8$. After $T$ iterations of the population EM, there exists some constant $\kappa < 1$ such that
\begin{equation}
    \label{eq:large_sig_converge}
    	\|\bm{\beta}_T - \bm{\beta}^*\| < \kappa^{T} \|\bm{\beta}_0 - \bm{\beta}^*\| + T \kappa^{T} \|\bm{\beta}^*\| \frac{\eta^2}{1 +  \eta^2}.
\end{equation}
In particular, the result is satisfied if we take $\kappa$ to be the maximum among 
\begin{align}
\label{eq:kappa_candidates}
0.6, \; 
\sqrt{\left( {1+\frac{\|\bm{\beta}_0\|^2}{\sigma^2}} \right)^{-1}}, \;
\sqrt{1 - \frac{0.8\eta^2}{1+\eta^2}}.
\end{align}
\end{restatable}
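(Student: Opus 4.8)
The plan is to iterate the one-step contraction of Theorem~\ref{theorem:conv_distance} and to control the resulting additive error through the geometric decay of $\sin\theta_t$ supplied by Theorem~\ref{theorem:sine}. The first point is that the hypothesis $\theta<\pi/8$ is self-sustaining: Theorem~\ref{theorem:sine} shows $\sin\theta_{t+1}<\sin\theta_t$ whenever $\theta_t\in[0,\pi/2)$ (this is also the Decreasing Angle property), so the sequence of angles is non-increasing and $\theta_t\le\theta_0<\pi/8$ for all $t\ge 0$. Consequently Theorem~\ref{theorem:conv_distance} applies at every iteration. Writing $d_t:=\|\bm{\beta}_t-\bm{\beta}^*\|$ and $e_t:=16\sin^3\theta_t\,\|\bm{\beta}^*\|\,\tfrac{\eta^2}{1+\eta^2}$, both branches of that theorem give a recursion of the form $d_{t+1}\le\kappa_t\,d_t+\kappa_t\,e_t$ with a per-step contraction factor $\kappa_t<1$ (in the small-noise branch \eqref{eq:b1p_conv_small_sig} we have $\kappa_t=0.6$ and may simply add the nonnegative term $\kappa_t e_t$ to weaken the bound).

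The second step is to replace the varying $\kappa_t$ by a single constant $\kappa<1$ and unroll. With $\kappa_t\le\kappa$ the recursion yields
\[
d_T\;\le\;\kappa^T d_0+\sum_{t=0}^{T-1}\kappa^{\,T-t}e_t .
\]
To bound the sum I would invoke Theorem~\ref{theorem:sine} a second time: it gives $\sin\theta_t\le\rho^{\,t}\sin\theta_0$ for a uniform sine-contraction bound $\rho<1$ (uniform because $\cos^2\theta$ is bounded below on $\theta<\pi/8$), so $e_t\le 16\rho^{3t}\sin^3\theta_0\,\|\bm{\beta}^*\|\tfrac{\eta^2}{1+\eta^2}$. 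Since $\theta_0<\pi/8$ forces $16\sin^3\theta_0<1$, this simplifies to $e_t\le\rho^{3t}\|\bm{\beta}^*\|\tfrac{\eta^2}{1+\eta^2}$, and if $\kappa$ is chosen with $\rho^3\le\kappa$ the geometric sum collapses:
\[
\sum_{t=0}^{T-1}\kappa^{\,T-t}\rho^{3t}=\kappa^T\sum_{t=0}^{T-1}(\rho^3/\kappa)^t\le T\kappa^T .
\]
This reproduces exactly the claimed bound $d_T<\kappa^T d_0+T\kappa^T\|\bm{\beta}^*\|\tfrac{\eta^2}{1+\eta^2}$.

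It remains to exhibit a concrete $\kappa<1$ dominating every $\kappa_t$ and satisfying $\rho^3\le\kappa$; this is where the three candidates in \eqref{eq:kappa_candidates} come from, and it is the technical heart of the argument. The value $0.6$ handles the small-noise branch \eqref{eq:b1p_conv_small_sig}. In the branch \eqref{eq:b1p_conv_large_sig} the factor $\big(1+\min(\tfrac{\sigma_2^2}{\sigma^2}b_1,b_1^*)^2/\sigma_2^2\big)^{-1/2}$ splits into two regimes: when the minimum equals $\tfrac{\sigma_2^2}{\sigma^2}b_1$, I would use $\sigma_2^2\ge\sigma^2$ together with a persistent lower bound $b_1=\|\bm{\beta}_t\|\ge\|\bm{\beta}_0\|$ (which the Contraction-along-$\bm{\beta}$ property maintains throughout the small-angle region) to reach the second candidate $\sqrt{(1+\|\bm{\beta}_0\|^2/\sigma^2)^{-1}}$; when the minimum equals $b_1^*$, I would substitute $b_1^*=\|\bm{\beta}^*\|\cos\theta$ and $\sigma_2^2=\sigma^2+\|\bm{\beta}^*\|^2\sin^2\theta$ and use the lower bound on $\cos^2\theta$ from $\theta<\pi/8$ to reduce to the third candidate $\sqrt{1-\tfrac{0.8\eta^2}{1+\eta^2}}$. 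The main obstacle is the remaining bookkeeping showing that the same maximum also dominates $\rho^3$ across all SNR regimes: in the high-SNR regime the raw sine rate $\rho$ can exceed both $0.6$ and the third candidate, so it is essential that only the \emph{cube} $\rho^3$ needs to be controlled, and verifying $\rho^3\le\sqrt{1-\tfrac{0.8\eta^2}{1+\eta^2}}$ uniformly (alongside securing the lower bound on $\|\bm{\beta}_t\|$) is where the care lies. Taking $\kappa$ to be the maximum of the three quantities then yields the stated bound for every $T$.
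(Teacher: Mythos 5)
Your overall strategy coincides with the paper's: unroll the one-step bound of Theorem~\ref{theorem:conv_distance}, absorb the additive $16\sin^3\theta_t$ term using the cubed sine contraction from Theorem~\ref{theorem:sine}, and require the single constant $\kappa$ to dominate both the per-step distance factor and $\rho^3$. Your observation that only the \emph{cube} of the sine rate must be dominated is exactly right, and it is why the paper's intermediate candidate list contains $\bigl(\sqrt{1+2\eta^2\cos^2\theta_0/(1+\eta^2)}\bigr)^{-3}$; your reduction of the branch $\min\bigl(\tfrac{\sigma_2^2}{\sigma^2}b_1,b_1^*\bigr)=b_1^*$ to the third candidate is also correct, since it uses only $\theta_t\le\theta_0<\pi/8$.

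However, there is a genuine gap in the other branch. Your claim that the Contraction-along-$\bm{\beta}$ property maintains $\|\bm{\beta}_t\|\ge\|\bm{\beta}_0\|$ is false. That property (Lemma~\ref{lem:dynamic_along}) says $b_1'$ moves monotonically toward the fixed point $E(\bm{v}_1)$, so when $\|\bm{\beta}_0\|>E(\bm{v}_1)$ the norm \emph{decreases}; this is not a corner case, since all population iterates satisfy $\|\bm{\beta}'\|\le 3\sqrt{\sigma^2+\|\bm{\beta}^*\|^2}$ (Lemma~\ref{lemma:bprime_bounded}), so any initialization with $\|\bm{\beta}_0\|$ larger than this collapses after one step. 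This breaks the step where you bound the contraction factor in the regime $\min\bigl(\tfrac{\sigma_2^2}{\sigma^2}b_1,b_1^*\bigr)=\tfrac{\sigma_2^2}{\sigma^2}b_1$ by the second candidate $\sqrt{(1+\|\bm{\beta}_0\|^2/\sigma^2)^{-1}}$: with $\|\bm{\beta}_t\|$ possibly far below $\|\bm{\beta}_0\|$, that candidate can be much smaller than the true factor at iteration $t$. The paper closes this with a different monotone quantity: it shows $\min\bigl(\tfrac{\sigma_2^2}{\sigma^2}b_1,b_1^*\bigr)$ itself is non-decreasing along iterations. Concretely, if the min is $b_1^*$ then $b_1'\ge b_1^*$, by monotonicity of $b_1'$ in $b_1$ (Lemma~\ref{lemma:b_1_prime}) combined with the fixed-point identity (\ref{eq:b_1_prime_fixed}); if the min is $\tfrac{\sigma_2^2}{\sigma^2}b_1$ then inequality (\ref{ineq:newbeta_smaller}) gives $b_1'\ge\tfrac{\sigma_2^2}{\sigma^2}b_1$; and $b_1^*=\|\bm{\beta}^*\|\cos\theta_t$ only increases because the angle decreases. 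Hence the contraction factor fixed at iteration $0$ — which is what the candidates in (\ref{eq:kappa_candidates}) bound after plugging in $\theta_0=\pi/8$ — dominates every later factor. Replacing your norm-monotonicity claim with this argument repairs the proof; as written, the uniform bound on $\kappa_t$ is unjustified.
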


The convergence rate depends on the SNR $\eta$ and the norm of an initial guess. For different $\eta$'s, the rate is either a constant or $1 - O(\eta^2)$, as was in the case of sine. Therefore,  $T=O(\max(1, \eta^{-2}) \log(1/\epsilon))$ iterations is sufficient to achieve an $\epsilon$-optimal solution. 
In the Appendix, we show that the convergence rate only becomes faster as the algorithm proceeds.

\section{Finite Sample Analysis}
\label{sec:finite_sample}
We now turn to prove the convergence of the finite-sample EM given in Eq.~(\ref{eq:EM-CF}).  
Along the way, we also prove the convergence of the Easy-EM algorithm given in Eq.~\eqref{eq:EEM-CF}. As we discuss in length below, Easy-EM is not only interesting on its own, but also useful in the setting where the ``statistical fluctuation'' $\epsilon_f \propto \sqrt{d/n}$ between the population and the finite-sample EM updates---which is determined by the sample size $n$---is $O(1)$ rather than $O(1/\sqrt{d})$. 


In this section, we use $\bm{\beta}$ to denote our current iterate, $\bm{\beta}'$ for the output from one step of the {\em population} EM, and $\tilde{\bm{\beta}}'$ for the output from one step of the {\em finite-sample} EM. Accordingly,  $\tilde{\theta}'$ denotes the angle between $\tilde{\bm{\beta}}'$ and $\bm{\beta}^*$. When we consider the sequence of iterates generated by the finite-sample EM, we use $\tilde{\bbeta}_t$ for the $t^{th}$ iterate and $\tilde{\theta}_t$ for its angle with $\betastar$. 
Similarly to the population EM discussed in previous section, we will show that the finite-sample EM converges in several phases:
\begin{enumerate}
\item {\bf Possible initialization with Easy-EM:} Starting from a randomly initialized vector with large enough norm in $d$-dimensional space, compare the statistical fluctuation $\epsilon_f$ to $1/\sqrt{d}$. If it is smaller than $1/\sqrt{d}$, then go to step 2. Otherwise, run Easy-EM for $O(\log(\epsilon_f\sqrt{d}) \max(1, \eta^{-2}))$ iterations to get $\cos \tilde{\theta}_t \ge \epsilon_f$. 

\item {\bf Increasing Cosine:} Starting from the vector obtained from the last step, run the finite-sample EM for $O(\min(\log d, \log (1/\epsilon_f)) \max(1, \eta^{-2}))$ iterations to get $\cos \tilde{\theta}_t \ge 1/2$.

\item {\bf Decreasing Sine:} Starting from a vector with cosine of its angle $\cos \tilde{\theta}_0$ at least $1/2$, run the finite-sample EM for $O(\max(1, \eta^{-2}))$ iterations to get $\sin \tilde{\theta}_t \le \sin (\pi/70)$. 

\item {\bf Convergence in $\ell_2$:} Starting from $\tilde{\theta}_0 \le \pi/70$, run the finite-sample EM for $O(\max(1, \eta^{-2}) \log(1/\epsilon))$ iterations to get $\| \tilde{\bm{\beta}}_t - \bm{\beta}^*\| \le O(\epsilon)$. 
\end{enumerate}

Collecting all the steps, we obtain the following overall guarantee:
\begin{theorem}[The Finite Sample EM]
\label{theorem:finite_main}
    Suppose we start from an initial vector in $ \mathbb{R}^d$ whose correlation with $\bm{\beta}^*$ is at least $\Omega \big( 1/\sqrt{d} \big)$, with $\ell_2$ norm at least $\|\bm{\beta}^*\| / 10$. We run the sample-splitting finite-sample EM with $O (\max(1, poly(\eta^{-1}))$ $(d/\epsilon^2) \log(T/\delta) )$ fresh samples in each iteration. After $T = O(\max(1,\eta^{-2})$ $\max(\log d, \log (1/\epsilon)) )$ iterations, we get 
    $$
        \mathbb{P}(\| \tilde{\bm{\beta}}_T - \bm{\beta}^*\| \le \epsilon) \ge 1 - \delta.
    $$
\end{theorem}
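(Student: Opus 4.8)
The plan is to assemble the four phases listed above into a single trajectory, exploiting the sample-splitting structure so that in each iteration the current iterate $\tilde{\bm{\beta}}_t$ is statistically independent of the fresh batch used to form $\tilde{\bm{\beta}}_{t+1}$. This independence lets me treat each finite-sample update as a perturbation of the corresponding population update: if $\bm{\beta}'$ denotes the population EM image of $\tilde{\bm{\beta}}_t$, then the deviation $\tilde{\bm{\beta}}_{t+1}-\bm{\beta}'$ is controlled by a statistical fluctuation $\epsilon_f \asymp \sqrt{d/n}$, up to $\log(T/\delta)$ factors coming from the concentration bounds and a union bound over the $T$ iterations. The population contraction results—Theorem~\ref{theorem:cosine} for the cosine, Theorem~\ref{theorem:sine} for the sine, and Theorem~\ref{theorem:conv_distance} with Corollary~\ref{corollary:distance} for the $\ell_2$ distance—supply the geometric drift in each phase, and the whole task reduces to showing that this drift dominates the fluctuation so that the finite-sample iterate makes the same qualitative progress as its population counterpart.

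Concretely, first I would establish per-iteration coupling lemmas: for a fixed $\tilde{\bm{\beta}}_t$ (frozen by sample splitting), with $n = O(\max(1,\mathrm{poly}(\eta^{-1}))\,(d/\epsilon^2)\log(T/\delta))$ fresh samples, the finite-sample update $\tilde{\bm{\beta}}'$ satisfies, with probability at least $1-\delta/T$, bounds of the form $|\cos\tilde\theta' - \cos\theta'| \le \epsilon_f$, $|\sin\tilde\theta' - \sin\theta'| \le \epsilon_f$, and $\|\tilde{\bm{\beta}}' - \bm{\beta}'\| \le \epsilon_f\cdot(\mathrm{scale})$. Unlike \cite{balakrishnan_statistical_2017}, who bound only the $\ell_2$ deviation, I need concentration of the \emph{angle}, which requires controlling both the numerator $\frac{1}{n}\sum_i \tanh(\cdot)\,y_i\bm{x}_i$ and, for standard EM, the inverse sample covariance $\left(\frac{1}{n}\sum_i \bm{x}_i\bm{x}_i^\top\right)^{-1}$; standard subexponential/covering concentration for the relevant Gaussian functionals, together with the norm bound $\|\bm{\beta}'\|\le 3\sqrt{\sigma^2+\|\bm{\beta}^*\|^2}$ from Lemma~\ref{lemma:bprime_bounded}, yields these deviations.

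Next I would chain these bounds through the phases. In the cosine phase I invoke $\cos\theta'\ge\kappa\cos\theta$ with $\kappa>1$ from Theorem~\ref{theorem:cosine} and subtract the fluctuation to get $\cos\tilde\theta_{t+1}\ge\kappa\cos\tilde\theta_t-\epsilon_f$; since the geometric growth outpaces the additive noise once $\cos\tilde\theta_t\gtrsim\epsilon_f$, after $O(\min(\log d,\log(1/\epsilon_f))\max(1,\eta^{-2}))$ steps the cosine reaches $1/2$. In the sine phase I use $\sin\theta'\le\kappa\sin\theta$ with $\kappa<1$ from Theorem~\ref{theorem:sine}, giving $\sin\tilde\theta_{t+1}\le\kappa\sin\tilde\theta_t+\epsilon_f$, which drives the sine geometrically down to the noise floor $O(\epsilon_f)<\sin(\pi/70)$ in $O(\max(1,\eta^{-2}))$ steps. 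Finally, in the $\ell_2$ phase I apply Theorem~\ref{theorem:conv_distance} (the extra angle/SNR term now negligible since $\theta$ is small) to obtain $\|\tilde{\bm{\beta}}_{t+1}-\bm{\beta}^*\|\le\kappa\|\tilde{\bm{\beta}}_t-\bm{\beta}^*\|+O(\epsilon_f)$ with $\kappa<1$, so the error contracts to $O(\epsilon)$ in $O(\max(1,\eta^{-2})\log(1/\epsilon))$ steps once $n$ is large enough that $\epsilon_f=O(\epsilon)$. Summing the phase lengths gives the stated $T$, and a union bound over all $T$ iterations, each succeeding with probability $1-\delta/T$, gives the overall $1-\delta$ guarantee at the cost of the $\log(T/\delta)$ factor in the sample size.

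The hard part will be the nearly-orthogonal regime at the very start, where $\cos\tilde\theta_0=\Theta(1/\sqrt{d})$ is comparable to (or smaller than) the fluctuation $\epsilon_f$. There the cosine drift factor $\kappa$ from Theorem~\ref{theorem:cosine} is barely above $1$, so the additive noise $\epsilon_f$ can stall or even reverse the progress of standard EM; moreover the inverse sample covariance contributes its own fluctuation that is most damaging exactly when the signal along $\bm{\beta}^*$ is weak. This is precisely why Phase~1 runs Easy-EM, Eq.~\eqref{eq:EEM-CF}, instead: dropping the matrix inverse removes that source of noise and lets me push the cosine up to the level $\epsilon_f$, from which standard EM can take over. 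Making this handoff rigorous—quantifying the Easy-EM angle concentration when $\cos\theta$ is as small as $1/\sqrt{d}$ and verifying that $O(\log(\epsilon_f\sqrt{d})\max(1,\eta^{-2}))$ Easy-EM steps suffice to exit the orthogonal region—is the delicate part of the argument; the remaining phases are comparatively routine applications of the population contraction rates combined with the coupling lemmas.
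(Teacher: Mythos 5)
Your overall architecture matches the paper's: sample splitting, per-iteration coupling with the population update, phase-by-phase chaining (Easy-EM handoff, cosine, sine, $\ell_2$), and a union bound producing the $\log(T/\delta)$ factor. But there is a genuine quantitative gap in your coupling lemma, and it sits exactly at the point you yourself flag as ``delicate.'' You state the per-iteration angle coupling as $|\cos\tilde{\theta}' - \cos\theta'| \le \epsilon_f$ with $\epsilon_f \asymp \sqrt{d/n}$. This is no stronger than what the $\ell_2$ bound of \cite{balakrishnan_statistical_2017} already yields (divide $\|\tilde{\bm{\beta}}'-\bm{\beta}'\| \le \epsilon_f \cdot \mathrm{scale}$ by the norms), and it is insufficient: starting from $\cos\tilde{\theta}_0 = \Theta(1/\sqrt{d})$, the recursion $\cos\tilde{\theta}_{t+1} \ge \kappa\cos\tilde{\theta}_t - \epsilon_f$ makes progress only if $\epsilon_f \lesssim 1/\sqrt{d}$, i.e.\ only if $n \gtrsim d^2$ per iteration, contradicting the claimed $n = O(d/\epsilon^2)$. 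Crucially, this defect is \emph{not} cured by switching to Easy-EM: dropping the matrix inverse removes the covariance-induced term, but under your stated coupling the Easy-EM noise floor in the cosine is still $\epsilon_f \gg 1/\sqrt{d}$, so Phase~1 cannot be shown to lift the cosine from $\Theta(1/\sqrt{d})$ up to $\epsilon_f$ either; the noise swamps the signal for both algorithms.

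The missing idea is the paper's fine-grained \emph{directional} concentration (Theorem~\ref{thm:finite_cos_update}): for the fixed direction $\bm{\beta}^*$, $|\langle\tilde{\bm{\beta}}'-\bm{\beta}', \bm{\beta}^*\rangle| = O\big(1/\sqrt{n} + d/n\big)$, i.e.\ the projection of the deviation onto a fixed direction concentrates at the dimension-free rate $1/\sqrt{n}$, not at $\sqrt{d/n}$. Translated into the angle, this gives a per-iteration cosine error of $O\big(\max(\epsilon_f/\sqrt{d}, \epsilon_f^2)\big)$ for standard EM (Theorem~\ref{theorem:finite_cosine_update}) and $O(\epsilon_f/\sqrt{d})$ for Easy-EM (Theorem~\ref{thm:easyem_update}). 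Since $\epsilon_f/\sqrt{d} \ll 1/\sqrt{d}$ when $\epsilon_f$ is a small constant, the geometric growth $\kappa>1$ dominates the noise already at $\cos\theta = \Theta(1/\sqrt{d})$ with only $n = \tilde{O}(d/\epsilon_f^2)$ samples per iteration; this is what makes the Easy-EM escape phase work, and it explains the precise handoff condition (switch to standard EM once $\cos\tilde{\theta} \gtrsim \epsilon_f$ or when $\epsilon_f < 1/\sqrt{d}$, at which point the extra $\epsilon_f^2$ term is harmless). Without this $1/\sqrt{n}$-rate bound your Phases~1 and~2 collapse, so the proof as proposed would fail at the claimed sample complexity. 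A secondary omission: you never verify that the iterates maintain $\|\tilde{\bm{\beta}}_t\| \ge \|\bm{\beta}^*\|/10$, which every per-iteration theorem requires as a hypothesis; the paper devotes Lemma~\ref{lemma:finite_b1p_lb} to exactly this invariant.
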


\subsection{Analysis of the Finite-Sample EM}

We now provide the details for the four-phase convergence outlined above. 
We consider sample-splitting as an analysis technique, as it renders subsequent iterations of the EM algorithm independent. As with many other papers that have used this analysis technique, we believe it is an artifact of the analysis,  but we are unable to find a way to remove it. 


As discussed in the introduction, our approach is to couple the finite sample EM to the population EM. Work in \cite{balakrishnan_statistical_2017} establishes a bound on the $\ell_2$ distance between the population and the finite-sample EM in the form of $\|\tilde{\bm{\beta}}' - \bm{\beta}'\| = O\Big(\sqrt{d/n}\Big). $ This type of bound implies local contraction in distance; however it is not sufficient for us, as we need to control the angle outside of the local contraction region. Here we prove a more fine-grained result of the form $|\langle \tilde{\bm{\beta}}' - \bm{\beta}', \bm{\beta}^* \rangle|= O \left(1/\sqrt{n} +  d/n \right)$ (cf.\ Theorem \ref{thm:finite_cos_update} in Appendix \ref{appendix:aux_concentration_one_dir}). This allows us to show that the finite-sample EM decreases the angle up to a statistical fluctuation per iteration.

\begin{restatable}{theorem}{finitecosineupdate}
\label{theorem:finite_cosine_update}
    Suppose that $\|\bm{\beta}\|\geq \|\bm{\beta}^*\| / 10$. Then, with $n= \tilde{O}(\max(1,\eta^{-2})d/\epsilon_f^2)$ samples for one finite-sample based EM iteration, we have
    \begin{subequations}
    \begin{align}
	    &\cos \tilde{\theta}' \ge \kappa(1 - 10\epsilon_f) \cos \theta - O \bigg( \max \Big( \frac{\epsilon_f}{\sqrt{d}}, \epsilon_f^2 \Big) \bigg), \label{ineq:cosine_sample} \\
	    &\sin^2 \tilde{\theta}' \le \kappa' \sin^2 \theta + O( \epsilon_f ), \label{ineq:sine_sample}
    \end{align}
    \end{subequations}
    with $\kappa = \sqrt{1 + \frac{\sin^2 \theta}{\cos^2 \theta +  \frac{1}{2}(1+\eta^{-2}) }} \ge 1$, and $\kappa' = \left(1 + \frac{2\eta^2}{1+\eta^2} \cos^2\theta \right)^{-1} < 1$.
\end{restatable}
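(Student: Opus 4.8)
The plan is to treat this as a deterministic error-propagation argument layered on top of the population-level Theorems~\ref{theorem:cosine} and~\ref{theorem:sine}, driven by two concentration inputs: the coarse $\ell_2$ coupling $\|\tilde{\bm{\beta}}' - \bm{\beta}'\| = O(\sqrt{d/n})$ of \cite{balakrishnan_statistical_2017}, and the fine-grained one-directional coupling $|\langle \tilde{\bm{\beta}}' - \bm{\beta}', \bm{\beta}^*\rangle| = O(1/\sqrt{n}+d/n)$ (Theorem~\ref{thm:finite_cos_update}). Write $\bm{\Delta} := \tilde{\bm{\beta}}' - \bm{\beta}'$, so the first input bounds $\|\bm{\Delta}\|$ and the second bounds its projection onto $\bm{\beta}^*$. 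Under the stated sample complexity $n = \tilde{O}(\max(1,\eta^{-2})\,d/\epsilon_f^2)$, one checks $\sqrt{d/n}\asymp \epsilon_f\min(1,\eta)$, $1/\sqrt{n}\asymp \epsilon_f\min(1,\eta)/\sqrt{d}$, and $d/n\asymp \epsilon_f^2\min(1,\eta^2)$, whence (after absorbing $\eta$-factors into the constants, exactly as in the statement) $\|\bm{\Delta}\| = O(\epsilon_f\|\bm{\beta}^*\|)$ and the projected error contributes $O(\max(\epsilon_f/\sqrt{d},\epsilon_f^2))$. A preliminary step I would isolate first is a lower bound $\|\bm{\beta}'\| = \Omega(\|\bm{\beta}^*\|)$, which follows from the structural properties of the population update (Lemma~\ref{lem:em_update_summary}, in particular $R>0$ and $S\ge 0$, so the norm does not collapse even near the orthogonal fixed point) together with the hypothesis $\|\bm{\beta}\|\ge\|\bm{\beta}^*\|/10$; this yields $\|\bm{\Delta}\|/\|\bm{\beta}'\| \le 10\epsilon_f$, the quantity that drives both bounds.

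For the cosine bound~\eqref{ineq:cosine_sample} I would write $\cos\tilde{\theta}' = \langle \tilde{\bm{\beta}}', \bm{\beta}^*\rangle/(\|\tilde{\bm{\beta}}'\|\,\|\bm{\beta}^*\|)$ and handle numerator and denominator separately. The numerator splits as $\langle \tilde{\bm{\beta}}', \bm{\beta}^*\rangle = \|\bm{\beta}'\|\,\|\bm{\beta}^*\|\cos\theta' + \langle \bm{\Delta},\bm{\beta}^*\rangle$, and the crucial point is to bound the second term with the \emph{fine-grained} estimate: the naive bound $|\langle \bm{\Delta},\bm{\beta}^*\rangle| \le \|\bm{\Delta}\|\,\|\bm{\beta}^*\| = O(\epsilon_f)$ would inject an additive $O(\epsilon_f)$ error that swamps $\cos\theta = \Theta(1/\sqrt{d})$ in the near-orthogonal phase, whereas the one-directional bound gives the far smaller $O(\max(\epsilon_f/\sqrt{d},\epsilon_f^2))$. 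For the denominator, $\|\tilde{\bm{\beta}}'\| \le \|\bm{\beta}'\| + \|\bm{\Delta}\|$ gives $\|\bm{\beta}'\|/\|\tilde{\bm{\beta}}'\| \ge (1+\|\bm{\Delta}\|/\|\bm{\beta}'\|)^{-1} \ge 1-10\epsilon_f$. Combining the two estimates and invoking the population bound $\cos\theta' \ge \kappa\cos\theta$ from the proof of Theorem~\ref{theorem:cosine} (with $\kappa = \sqrt{1+\sin^2\theta/(\cos^2\theta+\tfrac12(1+\eta^{-2}))}$) produces $\cos\tilde{\theta}' \ge \kappa(1-10\epsilon_f)\cos\theta - O(\max(\epsilon_f/\sqrt{d},\epsilon_f^2))$. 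Keeping the $(1-10\epsilon_f)$ factor \emph{multiplicative} against $\cos\theta$, rather than letting it become an additive loss, is the design point that makes the bound usable down to $\cos\theta = \Theta(1/\sqrt{d})$.

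For the sine bound~\eqref{ineq:sine_sample} I would project onto the orthogonal complement of $\bm{\beta}^*$. With $P_\perp$ that projection, $\sin\tilde{\theta}' = \|P_\perp\tilde{\bm{\beta}}'\|/\|\tilde{\bm{\beta}}'\|$, and the triangle inequality gives $\|P_\perp\tilde{\bm{\beta}}'\| \le \|\bm{\beta}'\|\sin\theta' + \|\bm{\Delta}\|$ and $\|\tilde{\bm{\beta}}'\| \ge \|\bm{\beta}'\| - \|\bm{\Delta}\|$. Setting $\delta := \|\bm{\Delta}\|/\|\bm{\beta}'\| = O(\epsilon_f)$ yields $\sin\tilde{\theta}' \le (\sin\theta'+\delta)/(1-\delta)$; squaring and expanding, the cross term $2\delta\sin\theta'$ and the $(1-\delta)^{-2}$ prefactor each contribute only $O(\epsilon_f)$, so $\sin^2\tilde{\theta}' \le \sin^2\theta' + O(\epsilon_f)$, and plugging in $\sin^2\theta' \le \kappa'\sin^2\theta$ from Theorem~\ref{theorem:sine} gives the claim. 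Here the coarse $\ell_2$ coupling alone suffices, because an additive $O(\epsilon_f)$ error is harmless for $\sin^2$; no fine-grained input is needed.

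The main obstacle is the input feeding the cosine bound, namely the fine-grained concentration $|\langle\bm{\Delta},\bm{\beta}^*\rangle| = O(1/\sqrt{n}+d/n)$, which is a factor $\sqrt{d}$ sharper than what the $\ell_2$ bound gives and is exactly what the near-orthogonal phase demands. Establishing it requires writing $\tilde{\bm{\beta}}' = \widehat{\Sigma}^{-1}\widehat{g}$ with $\widehat{\Sigma} = \frac1n\sum_i \bm{x}_i\bm{x}_i^\top$ and $\widehat{g} = \frac1n\sum_i \tanh(\cdot)\,y_i\bm{x}_i$, decomposing $\bm{\Delta} = \widehat{\Sigma}^{-1}(\widehat{g}-\bm{\beta}') + (\widehat{\Sigma}^{-1}-I)\bm{\beta}'$, and observing that the centered scalar $\langle \widehat{g}-\bm{\beta}', \bm{\beta}^*\rangle$ and the quadratic form $\bm{\beta}^{*\top}(\widehat{\Sigma}-I)\bm{\beta}'$ concentrate at the one-dimensional rate $O(1/\sqrt{n})$, while the second-order term $\bm{\beta}^{*\top}(\widehat{\Sigma}-I)^2\bm{\beta}'$ supplies the $O(d/n)$ bias through $\|\widehat{\Sigma}-I\| = O(\sqrt{d/n})$. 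The secondary subtlety is bookkeeping: one must preserve the distinction between the multiplicative role of $\epsilon_f$ (the $(1-10\epsilon_f)$ denominator factor) and its tiny additive role (the $\max(\epsilon_f/\sqrt{d},\epsilon_f^2)$ numerator error), since it is precisely this separation that determines the crossover to Easy-EM when $\epsilon_f \gtrsim 1/\sqrt{d}$.
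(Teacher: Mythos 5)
Your proposal is correct and matches the paper's own proof in all essentials: the cosine bound comes from the same decomposition $\cos\tilde{\theta}' = \cos\theta'\,\|\bm{\beta}'\|/\|\tilde{\bm{\beta}}'\| + \langle\tilde{\bm{\beta}}'-\bm{\beta}',\bm{\beta}^*\rangle/(\|\tilde{\bm{\beta}}'\|\,\|\bm{\beta}^*\\|)$, with the fine-grained concentration of Theorem \ref{thm:finite_cos_update} controlling the projection term, the coarse $\ell_2$ coupling together with the norm lower bound (the paper's Lemma \ref{lemma:finite_b1p_lb}) giving the multiplicative $(1-10\epsilon_f)$ factor, and the population bound (\ref{ineq:population_cosine_kappa}) supplying $\kappa$. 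The only departures are cosmetic: for the sine bound you project onto the orthogonal complement of $\bm{\beta}^*$ and use the triangle inequality, whereas the paper writes $\sin^2\tilde{\theta}' = 1-\cos^2\tilde{\theta}'$ and reuses its cosine estimate (both yield the same additive $O(\epsilon_f)$), and your justification of the preliminary lower bound $\|\bm{\beta}'\| = \Omega(\|\bm{\beta}^*\|)$ via ``$R>0$, $S\ge 0$'' glosses over what the paper actually establishes through the nontrivial case analysis of Lemma \ref{lemma:finite_b1p_lb}.
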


The theorem implies that the cosine and sine of the angle improves, up to a quantity that depends on $\epsilon_f \propto \sqrt{d/n}$ (and hence on the sample size). We note the extra factor $\epsilon_f^2$ in the bound. Technically, this arises from controlling the random fluctuation of the inverse sample covariance matrix $(\frac{1}{n} \sum_{i=1}^n \bm{x}_i \bm{x}_i^{\top})^{-1}$.
We provide two sufficient conditions under which this term is negligible: (i) $\epsilon_f$ is small enough, namely, $<1/\sqrt{d}$, or (ii)  $ \langle \bm{\beta}_0, \bm{\beta}^{\ast}\rangle > \epsilon_f $, in other words, the initialization is good (cf.\ proofs of Theorem \ref{thm:finite_cos_update} and Corollary \ref{theorem:finite_cosine_update} in the Appendix).
In Section \ref{ssec:easyem} we show that Easy-EM exhibits a very similar convergence behavior, without the appearance of the $\epsilon_f^2$ term. Therefore, if $\epsilon_f>1/\sqrt{d}$, one can simply run Easy-EM until the estimate has enough correlation with $\bbeta^*$ (i.e., $ \langle \bm{\beta}_t, \bm{\beta}^{\ast}\rangle > \epsilon_f $), and then switch to EM. 

For now, we assume that one of the conditions described above holds, and thus we can assume that the $\epsilon_f^2$ term can be safely ignored. 

With the per-iteration bounds in (\ref{ineq:cosine_sample}) and (\ref{ineq:sine_sample}), we can bound the angle after $T$ steps of the finite-sample EM and thereby guarantee achieving a final error of $\epsilon$. 
It will become clear that 
\begin{align}
\epsilon = \epsilon_f \max(1,\eta^{-2})
\end{align}
(cf.\ Proof for Lemma \ref{lemma:finite_cosine}, Lemma \ref{lemma:finite_sine} and Lemma \ref{thm:final_stats_error}) since the final error has an accumulation of statistical fluctuations (quantified by $\epsilon_f$) from $T = \tilde{O}(\max(1, \eta^{-2}))$ iterations. 
\yccomment{We could perhaps interpret $\epsilon$ as the ``statistical fluctuation'' (between the population and sample EM iterations), and interpret $\epsilon_1$ and $\sigma O(\epsilon)$ etc.\ as the ''statistical error'' (of the final estimate obtained after a sequence of EM iterations). No abuse of notation then?} \jycomment{does this look fine?}

\begin{restatable} [Finite-Sample Cosine Convergence] {lem}{finitesamplecosine}
\label{lemma:finite_cosine}
Assume $\|\tilde{\bbeta}_0\| \ge \|\bm{\beta}^*\| / 10$. Take $\epsilon_f > 0$ small enough to ensure $\kappa = (1 - 10\epsilon_f) \sqrt{1+\frac{\eta^2}{\frac{2}{3} + \eta^2}} > 1$. We run the sample-splitting finite-sample EM, each step with $n/T = \tilde{O} \big(\max(1, \eta^{-2}) d/\epsilon_f^2 \big)$ samples and $T = O\big(\max(1, \eta^{-2}) \log d \big)$ iterations.  As long as $\tilde{\theta}_t > \pi/3$ for all $t \le T$, we have with high probability
\begin{equation}
	\cos \tilde{\theta}_T \ge \kappa^{T} \cos \tilde{\theta}_0 - \frac{\kappa^{T} - 1}{\kappa - 1} O \Big( \frac{\epsilon_f}{\sqrt{d}} \Big).
\end{equation}
In particular, when $\cos \tilde{\theta}_0 = \Theta \big( 1/\sqrt{d} \big)$, we get $\cos \tilde{\theta}_T \ge \frac{1}{2} - O(\epsilon)$. \yccomment{For this lemma, we are assuming that the above two ``sufficient conditions'' hold?} \jycomment{For $\epsilon_f < 1/\sqrt{d}$, we claimed it in the previous paragraph. And what is the other one?} \wqcomment{The other condition is when $\epsilon_f<\langle \bbeta,\betastar\rangle$, mentioned in the previous paragraph too.}
\end{restatable}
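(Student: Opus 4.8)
The plan is to iterate the single-step bound of Theorem~\ref{theorem:finite_cosine_update} and then unroll the resulting scalar recursion. First I would fix the regime: under the standing assumption that $\epsilon_f$ is small (in particular $\epsilon_f < 1/\sqrt{d}$, the first of the two sufficient conditions isolated after Theorem~\ref{theorem:finite_cosine_update}), the error term $O(\max(\epsilon_f/\sqrt d, \epsilon_f^2))$ in \eqref{ineq:cosine_sample} collapses to $O(\epsilon_f/\sqrt d)$. Since we use sample splitting, the data used in distinct iterations are independent, so I would apply the per-iteration concentration bound at each of the $T$ steps and take a union bound; the per-step sample size $n/T = \tilde O(\max(1,\eta^{-2})d/\epsilon_f^2)$ carries the $\log(T/\delta)$ factor (hidden in $\tilde O$) needed for all $T$ events to hold simultaneously with high probability. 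I also need Theorem~\ref{theorem:finite_cosine_update} to be applicable at every step, i.e.\ $\|\tilde\bbeta_t\| \ge \|\betastar\|/10$ throughout; this follows by invoking the auxiliary norm-preservation result (the norm lower bound passes from one iterate to the next), starting from the hypothesis $\|\tilde\bbeta_0\|\ge\|\betastar\|/10$.

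Next I would convert the $\theta$-dependent growth factor into the fixed constant $\kappa$ of the statement. The per-iteration factor in \eqref{ineq:cosine_sample} is $(1-10\epsilon_f)\sqrt{1+\frac{\sin^2\tilde\theta_t}{\cos^2\tilde\theta_t+\frac12(1+\eta^{-2})}}$. Using the hypothesis $\tilde\theta_t > \pi/3$ (so $\cos^2\tilde\theta_t \le 1/4$ and $\sin^2\tilde\theta_t \ge 3/4$), and noting that $\frac{s}{(1-s)+\frac12(1+\eta^{-2})}$ is increasing in $s=\sin^2\tilde\theta_t$, the minimum is attained at $\tilde\theta_t=\pi/3$, giving $\frac{\sin^2\tilde\theta_t}{\cos^2\tilde\theta_t+\frac12(1+\eta^{-2})} \ge \frac{3/4}{3/4+\frac12\eta^{-2}} = \frac{\eta^2}{\frac23+\eta^2}$. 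This lower bounds the factor by exactly $\kappa = (1-10\epsilon_f)\sqrt{1+\frac{\eta^2}{\frac23+\eta^2}}$, mirroring the reduction used to pass from the sharp cosine estimate to Theorem~\ref{theorem:cosine} at the population level. Hence for every $t<T$ I obtain the clean recursion $\cos\tilde\theta_{t+1} \ge \kappa\cos\tilde\theta_t - O(\epsilon_f/\sqrt d)$.

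Writing $c_t := \cos\tilde\theta_t$ and $b := O(\epsilon_f/\sqrt d)$, the recursion $c_{t+1} \ge \kappa c_t - b$ unrolls to $c_T \ge \kappa^T c_0 - b\sum_{j=0}^{T-1}\kappa^j = \kappa^T c_0 - \frac{\kappa^T-1}{\kappa-1} b$, which is precisely the claimed bound. For the ``in particular'' statement I would plug in $c_0 = \Theta(1/\sqrt d)$ and calibrate the horizon so that the signal term dominates. Since $\kappa>1$ with $\kappa - 1 = \Theta(\min(1,\eta^2)) = \Theta(1/\max(1,\eta^{-2}))$ (expanding $\sqrt{1+x}-1$ for $x=\frac{\eta^2}{\frac23+\eta^2}$, which is $\Theta(\eta^2)$ for small $\eta$ and $\Theta(1)$ for large $\eta$), the choice $T = \Theta(\max(1,\eta^{-2})\log d)$ makes $\kappa^T = \Theta(\sqrt d)$, so $\kappa^T c_0 \ge 1/2$. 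It then remains to show the accumulated error is $O(\epsilon)$: using $\kappa^T = \Theta(\sqrt d)$, the geometric factor obeys $\frac{\kappa^T-1}{\kappa-1} \le \frac{\kappa^T}{\kappa-1} = O(\sqrt d/(\kappa-1))$, so the error term is $O(\sqrt d/(\kappa-1))\cdot O(\epsilon_f/\sqrt d) = O(\epsilon_f/(\kappa-1)) = O(\epsilon_f\max(1,\eta^{-2})) = O(\epsilon)$, recalling $\epsilon = \epsilon_f\max(1,\eta^{-2})$. Combining yields $\cos\tilde\theta_T \ge 1/2 - O(\epsilon)$.

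I expect the main obstacle to be the calibrated choice of $T$ in the last step: the same horizon must be large enough to amplify $\Theta(1/\sqrt d)$ up to a constant (forcing $T \gtrsim \log d/(\kappa-1)$), yet not so large that the geometric accumulation of the per-step fluctuations $O(\epsilon_f/\sqrt d)$ overwhelms the signal. The cancellation that makes this work---the $\sqrt d$ from $\kappa^T$ in the numerator matching the $1/\sqrt d$ in the per-step error, leaving a clean $O(\epsilon_f/(\kappa-1))$---is exactly why the final error is $d$-independent and equal to $O(\epsilon)$; the delicate bookkeeping is tracking the $\eta$-dependence of $\kappa-1$ so that $1/(\kappa-1)=\Theta(\max(1,\eta^{-2}))$. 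A secondary subtlety is that the recursion is valid only while $\tilde\theta_t>\pi/3$; the conditional phrasing of the lemma accommodates this, but one should note that attaining $\cos\tilde\theta_T \ge 1/2 - O(\epsilon)$ is precisely the signal that this phase has ended and the sine-decrease phase begins.
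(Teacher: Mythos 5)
Your proposal is correct and follows essentially the same route as the paper's proof: apply the per-iteration bound \eqref{ineq:cosine_sample} of Theorem~\ref{theorem:finite_cosine_update} at each step, unroll the recursion $\cos\tilde{\theta}_{t+1} \ge \kappa\cos\tilde{\theta}_t - O(\epsilon_f/\sqrt{d})$ geometrically, and take a union bound over the $T$ sample-split iterations. The paper's own proof is just this unrolling; your additional bookkeeping (reducing the $\theta$-dependent factor to the constant $\kappa$ via $\tilde{\theta}_t > \pi/3$, invoking Lemma~\ref{lemma:finite_b1p_lb} to maintain $\|\tilde{\bbeta}_t\|\ge\|\betastar\|/10$, and calibrating $T$ so that $\kappa^T=\Theta(\sqrt{d})$ yields the $O(\epsilon)$ error) correctly fills in steps the paper leaves implicit.
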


\begin{restatable} [Finite-Sample Sine Convergence] {lem}{finitesamplesine}
\label{lemma:finite_sine}
Suppose we get a $\tilde{\bbeta}_0$ whose angle formed with $\bm{\beta}^*$ is less than $\pi/3$ from the previous phase. We run the sample-splitting sample-based EM, each step with $n/T = \tilde{O}(\max(1, \eta^{-2}) d/\epsilon_f^2)$ samples. Then with high probability and a constant $\kappa = \Big( \sqrt{1+\frac{0.5 \eta^2}{1+\eta^2}} \Big)^{-1} < 1$, we have
\begin{equation}
	\sin^2 \tilde{\theta}_T \le \kappa^{2T} \sin^2 \tilde{\theta}_0 + \frac{1}{1-\kappa^2} O(\epsilon_f). 
\end{equation}
After $T = O(\max(1, \eta^{-2}))$ iterations, we get $\sin^2 \tilde{\theta}_T \le \sin^2 \frac{\pi}{70} + O(\epsilon)$.
\end{restatable}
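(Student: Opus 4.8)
The plan is to iterate the single-step sine bound \eqref{ineq:sine_sample} of Theorem~\ref{theorem:finite_cosine_update} into a $T$-step geometric recursion. The first observation is that on the region $\tilde{\theta}_t < \pi/3$ we have $\cos^2\tilde{\theta}_t > 1/4$, so the per-step factor $\kappa' = (1 + \frac{2\eta^2}{1+\eta^2}\cos^2\tilde{\theta}_t)^{-1}$ is bounded \emph{uniformly} by $\kappa^2 = (1 + \frac{0.5\eta^2}{1+\eta^2})^{-1}$, the square of the constant in the statement. Thus, whenever the current angle lies below $\pi/3$, \eqref{ineq:sine_sample} reads $\sin^2\tilde{\theta}_{t+1} \le \kappa^2 \sin^2\tilde{\theta}_t + C\epsilon_f$ for an absolute constant $C$.

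The crux is to verify, by induction on $t$, that the angle never leaves this region, so that this uniform contraction applies at every step. The base case $\tilde{\theta}_0 < \pi/3$ is the hypothesis. For the inductive step, assuming $\sin^2\tilde{\theta}_t < 3/4$, the bound gives $\sin^2\tilde{\theta}_{t+1} < \kappa^2 \cdot (3/4) + C\epsilon_f$, which stays below $3/4$ as long as $C\epsilon_f < (1-\kappa^2)\cdot(3/4)$; this holds once $\epsilon_f$ (equivalently $\epsilon$) is taken small enough. Each application of \eqref{ineq:sine_sample} holds with high probability because sample-splitting renders the iterations independent, so a union bound over the $T$ steps yields the claimed high-probability statement.

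With the uniform recursion in hand, unrolling gives $\sin^2\tilde{\theta}_T \le \kappa^{2T}\sin^2\tilde{\theta}_0 + C\epsilon_f \sum_{j=0}^{T-1}\kappa^{2j} \le \kappa^{2T}\sin^2\tilde{\theta}_0 + \frac{C\epsilon_f}{1-\kappa^2}$, which is the first displayed inequality. For the final claim, I would compute $1-\kappa^2 = \frac{0.5\eta^2}{1+1.5\eta^2} = \Theta(\min(1,\eta^2))$, so that the additive term equals $\frac{C\epsilon_f}{1-\kappa^2} = O(\max(1,\eta^{-2})\epsilon_f) = O(\epsilon)$ under the identification $\epsilon = \epsilon_f \max(1,\eta^{-2})$. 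Since $\log(1/\kappa^2) = \log(1+\frac{0.5\eta^2}{1+\eta^2}) = \Theta(\min(1,\eta^2))$ and $\sin^2\tilde{\theta}_0 < 3/4$, choosing $T = O(\max(1,\eta^{-2}))$ drives $\kappa^{2T}\sin^2\tilde{\theta}_0$ below $\sin^2(\pi/70)$, giving $\sin^2\tilde{\theta}_T \le \sin^2(\pi/70) + O(\epsilon)$.

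The main obstacle is the self-consistency of the induction rather than any single estimate: the contraction factor $\kappa'$ degrades if the angle is allowed to grow, so I must ensure that the accumulated statistical fluctuation $\frac{C\epsilon_f}{1-\kappa^2} = O(\epsilon)$ never pushes the iterate back above $\pi/3$. This forces a smallness condition on $\epsilon$ relative to $\sin^2(\pi/3) - \sin^2(\pi/70)$, and it also requires carrying the norm lower bound $\|\tilde{\bbeta}_t\| \ge \|\betastar\|/10$ (a standing hypothesis of Theorem~\ref{theorem:finite_cosine_update}) through all $T$ iterations; I would invoke a separate one-step lemma guaranteeing the norm stays above $\|\betastar\|/10$ so that the per-step bound remains applicable at every iterate along the trajectory.
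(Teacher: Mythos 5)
Your proposal is correct and follows essentially the same route as the paper: iterate the per-step sine bound \eqref{ineq:sine_sample}, uniformize $\kappa' \le \kappa^2$ on the region $\tilde{\theta}_t < \pi/3$, unroll the geometric recursion, and convert $\frac{O(\epsilon_f)}{1-\kappa^2}$ into $O(\epsilon)$ via $1-\kappa^2 = \Theta(\min(1,\eta^2))$. The only difference is that you make explicit what the paper relegates elsewhere — the induction keeping the angle below $\pi/3$ (the paper's remark on taking $\epsilon_f < O(\min(1,\eta^2))$) and the norm lower bound $\|\tilde{\bbeta}_t\| \ge \|\betastar\|/10$ (the paper's Lemma~\ref{lemma:finite_b1p_lb}) — which is a sound and slightly more self-contained write-up of the same argument.
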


\begin{remark*}
We should take small $\epsilon_f$ such that $\tilde{\theta}'$ remains less than $\tilde{\theta}$ in each iteration with high probability. 
A sufficient condition is that $\epsilon_f < O(\min(1, \eta^2))$, which ensures that
    \begin{equation*}
    	\left( 1+\frac{0.5 \eta^2}{1+\eta^2} \right)^{-1} \sin^2 \tilde{\theta} + O(\epsilon_f) \le \sin^2 \tilde{\theta},
    \end{equation*}

\end{remark*}

Finally, suppose we have reached the angle below $\pi / 70$. The following theorem, proved in Appendix~\ref{sec:proof_of_statserror}, provides a convergence guarantee in $\ell_2$ distance for sample based EM.


\begin{restatable}[Finite-Sample Distance Convergence]{theorem}{finitestatserror}
\label{thm:final_stats_error}
	Suppose we get a $\tilde{\bbeta}_0$ whose angle with $\bm{\beta}^*$ is less than $\frac{\pi}{70}$ from the previous phase. There exist a constant $C>1$ for which the following holds.
	\begin{itemize}
	    \item If $ \eta < C$, sample-splitting finite-sample EM with $n/T = \tilde{O}(\eta^{-2}d/\epsilon_f^2)$ samples per iteration satisfies
	\begin{equation}
    	\label{eq:sample_distance}
    	\|\tilde{\bbeta}_T - {\bbeta}^*\| \le \kappa^{T} \|\tilde{\bbeta}_0 - \bm{\beta}^*\| + T \kappa^{T} \|\bm{\beta}^*\| \frac{\eta^2}{1+\eta^2} + O(\epsilon) \|\bm{\beta}^*\|,
    \end{equation}
    where $\kappa$ is the maximum among (\ref{eq:kappa_candidates}) as in Corollary \ref{corollary:distance}.
    After $T = O(\eta^{-2} \log(1/\epsilon))$ iterations, we estimate  $\bbeta^*$ with an $\ell_2$ error bounded by $O(\epsilon)$.
    
    \item If $ \eta \ge C$, sample-splitting finite-sample EM with $n/T=\tilde{O}(d/\epsilon_f^2)$ samples per iteration satisfies
     \begin{equation}
      \label{eq:bound_highsnr}
          \|\tilde{\bbeta}_T-{\bbeta}^*\|\leq \kappa^{T}\|\tilde{\bbeta }_0-\bm{\beta}^*\|+O(\epsilon)\sigma,  
    \end{equation}
        where $\kappa = 0.95+\epsilon_f<1$. After $T=O(\log (1/(\sigma\epsilon) ))$ iterations, we estimate $\bbeta^*$ with an $\ell_2$ error bounded by $\sigma O(\epsilon)$.
	\end{itemize}

\end{restatable}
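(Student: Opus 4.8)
The plan is to reduce one step of the finite-sample EM to the corresponding step of the population EM and then unroll the resulting recursion. Fix a current iterate $\tilde{\bbeta}$ with $\tilde{\theta}<\pi/8$, let $\bm{\beta}'$ be the population EM output started from the \emph{same} point $\tilde{\bbeta}$, and let $\tilde{\bbeta}'$ be the finite-sample output. By the triangle inequality $\|\tilde{\bbeta}'-\betastar\|\le \|\bm{\beta}'-\betastar\| + \|\tilde{\bbeta}'-\bm{\beta}'\|$. For the first term I would invoke the population per-iteration contraction (Theorem~\ref{theorem:conv_distance}); for the second, the high-probability coupling gap $\|\tilde{\bbeta}'-\bm{\beta}'\| = O(\epsilon_f)\sqrt{\sigma^2+\|\betastar\|^2}$ of \cite{balakrishnan_statistical_2017}, sharpened in the $\betastar$-direction by the one-directional concentration $|\langle \tilde{\bbeta}'-\bm{\beta}',\betastar\rangle| = O(1/\sqrt n + d/n)$ referenced before Theorem~\ref{theorem:finite_cosine_update}. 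Using branch \eqref{eq:b1p_conv_large_sig} this gives the per-step recursion
\begin{equation*}
\|\tilde{\bbeta}'-\betastar\| \le \kappa\|\tilde{\bbeta}-\betastar\| + \kappa(16\sin^3\tilde{\theta})\|\betastar\|\tfrac{\eta^2}{1+\eta^2} + O(\epsilon_f)\sqrt{\sigma^2+\|\betastar\|^2},
\end{equation*}
with $\kappa$ the maximum of the candidates in \eqref{eq:kappa_candidates}.

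Since Theorem~\ref{theorem:conv_distance} requires $\tilde{\theta}_t<\pi/8$ at every step, I would run a joint induction that simultaneously controls the sine and the distance. The previous phase (Lemma~\ref{lemma:finite_sine}) delivers $\tilde{\theta}_0\le \pi/70$, comfortably inside $\pi/8$; feeding the finite-sample sine bound \eqref{ineq:sine_sample} back into that lemma shows $\sin\tilde{\theta}_t$ keeps contracting geometrically up to an $O(\epsilon_f)$ floor, so with high probability the angle never leaves $(0,\pi/8)$ and the recursion above is valid for all $t\le T$. Unrolling it and substituting the geometric decay of $\sin^3\tilde{\theta}_t$ reproduces exactly the population shape of Corollary~\ref{corollary:distance}, namely $\kappa^T\|\tilde{\bbeta}_0-\betastar\| + T\kappa^T\|\betastar\|\tfrac{\eta^2}{1+\eta^2}$, plus an accumulated statistical term $\tfrac{1}{1-\kappa}O(\epsilon_f)\sqrt{\sigma^2+\|\betastar\|^2}$.

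It remains to evaluate the two pieces in the two SNR regimes, which is where the statement bifurcates. When $\eta<C$ the relevant rate satisfies $1-\kappa=\Theta(\eta^2)$ (dominated by the candidate $\sqrt{1-0.8\eta^2/(1+\eta^2)}$), so with $n/T=\tilde O(\eta^{-2}d/\epsilon_f^2)$ the coupling gap becomes $O(\epsilon_f)\|\betastar\|$ and the accumulated term is $\tfrac{1}{1-\kappa}O(\epsilon_f)\|\betastar\|=O(\epsilon_f\eta^{-2})\|\betastar\|=O(\epsilon)\|\betastar\|$ using $\epsilon=\epsilon_f\max(1,\eta^{-2})$; taking $T=O(\eta^{-2}\log(1/\epsilon))$ drives $\kappa^T\|\tilde{\bbeta}_0-\betastar\|$ below $O(\epsilon)\|\betastar\|$, giving \eqref{eq:sample_distance}. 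When $\eta\ge C$, choosing $C$ large enough and using $\tilde{\theta}_0<\pi/70$ together with $\|\tilde{\bbeta}_0\|\approx\|\betastar\|$ places us in branch \eqref{eq:b1p_conv_small_sig}, a clean constant contraction with \emph{no} angle-dependent term; the rate is then $\kappa=0.95+\epsilon_f$ bounded away from $1$, and $T=O(\log(1/(\sigma\epsilon)))$ steps suffice to reach \eqref{eq:bound_highsnr}.

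The main obstacle is establishing the correct \emph{scale} of the statistical fluctuation in the high-SNR case, i.e.\ showing the residual is $O(\epsilon_f)\sigma$ rather than the crude $O(\epsilon_f)\sqrt{\sigma^2+\|\betastar\|^2}\approx O(\epsilon_f)\|\betastar\|$ coming from the $\ell_2$ coupling. This is the refinement that makes the final error scale like $\sigma$ (and yields exact recovery as $\sigma\to 0$), unlike prior work. The intuition is that for $\eta\ge C$ and small $\tilde{\theta}$ the weight $\tanh(\langle\bbeta,\bm{x}\rangle y/\sigma^2)$ is exponentially close to the correct hard label $\mathrm{sign}(\langle\betastar,\bm{x}\rangle)$, so the finite-sample update behaves like ordinary least squares on correctly labeled data, whose deviation from $\betastar$ is driven only by the noise $e_i\sim\mathcal N(0,\sigma^2)$. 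Making this rigorous requires bounding the probability and magnitude of label flips and then controlling the resulting noise-weighted empirical process through the sharper one-directional concentration, rather than the cruder $\|\tilde{\bbeta}'-\bm{\beta}'\|=O(\sqrt{d/n})$ bound; carrying the $\sigma$ scale through both the $\betastar$-direction and the orthogonal components, while keeping the angle induction intact, is the technically delicate part.
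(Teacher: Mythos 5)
Your treatment of the first bullet (low SNR) is essentially the paper's own proof (Lemma \ref{lemma:finite_dist}): per-step population contraction from Theorem \ref{theorem:conv_distance}, the $\ell_2$ coupling of the finite-sample update to the population update, a simultaneous induction keeping $\tilde{\theta}_t$ small so the contraction theorem stays applicable, and unrolling the recursion with the geometric decay of $\sin^3\tilde{\theta}_t$ and the bookkeeping $1-\kappa = O(\min(1,\eta^2))$, $\epsilon = \epsilon_f\max(1,\eta^{-2})$. That part is sound.

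The second bullet is where there is a genuine gap. You correctly diagnose that the coupling route can never do better than a residual of order $O(\epsilon_f)\sqrt{\sigma^2+\|\betastar\|^2}$, and that the claimed $O(\epsilon_f)\sigma$ error requires a refinement --- but you then leave precisely that refinement as an unproven heuristic (``$\tanh$ is close to the correct sign, so the update behaves like OLS''). In the paper this refinement is the entire content of Theorem \ref{thm:improved_l2}, and its structure is different from your sketch: the proof abandons the population iterate $\bm{\beta}'$ altogether and instead uses the consistency identity $\mathbb{E}_{y_i}[y_i\bm{x}_i\tanh(y_i\langle\bm{x}_i,\betastar\rangle/\sigma^2)]=\bm{x}_i\bm{x}_i^{\top}\betastar$ to decompose $\tilde{\bbeta}'-\betastar$ into $\big(\tfrac1n\sum_i\bm{x}_i\bm{x}_i^{\top}\big)^{-1}$ applied to (i) an empirical-fluctuation term which, after rescaling by $\sigma$, concentrates at scale $O(\sigma\sqrt{d/n})$ with no $\|\betastar\|$ dependence (Proposition \ref{prop:term_B}, via Gaussian concentration conditioned on the sample covariance), and (ii) a term which, sample by sample, is the sensitivity of a one-dimensional GMM EM step; Proposition \ref{prop:term_c} bounds (ii) by $(0.95+\epsilon_f/\sqrt{d})\,\|\bbeta-\betastar\|$ using the mean value theorem and a polar-coordinate computation of the spectral norm of a $2\times2$ submatrix, treating separately the region where $\langle\bm{x}_i,\bbeta\rangle$ and $\langle\bm{x}_i,\betastar\rangle$ have opposite signs. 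This is also where the constants in the statement actually come from: the rate $\kappa=0.95+\epsilon_f$ and the hypothesis $\tilde{\theta}_0<\pi/70$ arise from the bound $\tfrac12+\tfrac1\pi+2.6\sin\theta<0.95$ in that spectral estimate, not from branch \eqref{eq:b1p_conv_small_sig} of Theorem \ref{theorem:conv_distance} as you assert; indeed that branch requires $b_2^*\ge\sigma$, which fails once $\sin\tilde{\theta}_t<\sigma/\|\betastar\|$, so it cannot be invoked uniformly along the trajectory. Without carrying out (i) and (ii) --- controlling the inverse sample covariance, the sign-flip region, and the contraction constant --- the high-SNR bound \eqref{eq:bound_highsnr}, which is the novel statistical claim of the theorem, remains unproven.
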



Note that the results for the low and high SNR cases are different and they actually require different proof techniques. For the low SNR regime, the bound \eqref{eq:sample_distance} is obtained by coupling the population and the finite-sample EM updates as mentioned before. Since the statistical fluctuation between these two updates scales with $ \|\bm{\beta}^*\| + \sigma$, the final estimation error depends on $\|\bm{\beta}^*\|$. 
For the high SNR regime, we in fact take a different approach and directly control $\tilde{\bm{\beta}}'-\bm{\beta}^*$ by using the sample covariance matrix $\frac{1}{n}\sum_{i=1}^{n}\bm{x}_i\bm{x}_i^{\top}$ to our advantage. The resulting bound \eqref{eq:bound_highsnr} scales with $\sigma$ only and guarantees exact recovery when $\sigma =0$. 

\subsection{Analysis of Easy-EM}
\label{ssec:easyem}

In the results above we have assumed that the effect of the term $\epsilon_f^2$ is negligible in equation (\ref{ineq:cosine_sample}) . We believe this term is simply an artifact of our analysis. This motivates us to consider the Easy-EM update in~\eqref{eq:EEM-CF}, for which we can eliminate this $\epsilon_f^2$ factor. We do so by proving a better concentration bound $|\langle \tilde{\bm{\beta}}'' - \bm{\beta}', \bm{\beta}^* \rangle|= O \big(1/\sqrt{n} \big)$ for Easy-EM using the fact that Easy-EM does not have the inverse sample covariance matrix, where we recall that $ \tilde{\bm{\beta}}''$ denotes its next iterate.  This bound allows us to establish the following theorem, which is a counterpart of Theorem~\ref{theorem:finite_cosine_update} for EM.



\begin{restatable}{theorem}{easyemupdate}
\label{thm:easyem_update}
    Suppose that the norm of the current estimator $\|\bm{\beta}\|$ is larger than $\|\bm{\beta}^*\| / 10$. Then, with $n= \tilde{O}(\max(1,\eta^{-2})d/\epsilon_f^2)$ samples for one Easy-EM iteration, we have
    \begin{subequations}
    \begin{align}
    \label{eq:easyem_cosine}
	    &\cos \tilde{\theta}'' \ge \kappa(1 - 10\epsilon_f) \cos \theta - O \left(\frac{\epsilon_f}{\sqrt{d}} \right), \\ 
	    &\sin^2 \tilde{\theta}'' \le \kappa' \sin^2 \theta + O(\epsilon_f),
    \end{align}
    \end{subequations}
    with $\kappa = \sqrt{1 + \frac{\sin^2 \theta}{\cos^2 \theta +  \frac{1}{2}(1+\eta^{-2}) }} \ge 1$, and $\kappa' = \left(1 + \frac{2\eta^2}{1+\eta^2} \cos^2\theta \right)^{-1} < 1$.
\end{restatable}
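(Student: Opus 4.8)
The plan is to exploit the fact that the Easy-EM iterate is an \emph{unbiased} empirical version of the population EM iterate: comparing \eqref{eq:EEM-CF} with \eqref{eq:em_update_full} shows $\mathbb{E}[\tilde{\bm{\beta}}''] = \bm{\beta}'$, where the expectation is over the fresh samples drawn in this iteration (which, by sample splitting, are independent of the current $\bm{\beta}$). I would therefore couple $\tilde{\bm{\beta}}''$ to $\bm{\beta}'$ and transfer the population per-step bounds $\cos\theta' \ge \kappa\cos\theta$ (Theorem~\ref{theorem:cosine}) and $\sin^2\theta' \le \kappa'\sin^2\theta$ (Theorem~\ref{theorem:sine}) to the sample level, paying only a fluctuation penalty. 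Writing $\hat{\bm{\beta}}^* = \bm{\beta}^*/\|\bm{\beta}^*\|$ and decomposing every vector into its component along $\hat{\bm{\beta}}^*$ and its orthogonal complement, the two quantities I must control are the \emph{signal} projection $\langle \tilde{\bm{\beta}}'', \hat{\bm{\beta}}^*\rangle$ and the \emph{orthogonal} part $\tilde{\bm{\beta}}''_\perp$, since $\cos\tilde{\theta}'' = \langle \tilde{\bm{\beta}}'', \hat{\bm{\beta}}^*\rangle / \|\tilde{\bm{\beta}}''\|$ and $\sin^2\tilde{\theta}'' = \|\tilde{\bm{\beta}}''_\perp\|^2/\|\tilde{\bm{\beta}}''\|^2$.

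The heart of the argument is two concentration bounds. First, the \emph{sharp one-dimensional} bound $|\langle \tilde{\bm{\beta}}'' - \bm{\beta}', \hat{\bm{\beta}}^*\rangle| = O(1/\sqrt{n})$: the summand $\tanh\!\big(\tfrac{\langle \bm{\beta}, \bm{x}_i\rangle}{\sigma^2} y_i\big)\, y_i \langle \bm{x}_i, \hat{\bm{\beta}}^*\rangle$ is a scalar whose $\tanh$ factor is bounded in $[-1,1]$ and whose remaining factors are products of sub-Gaussian variables, hence sub-exponential with variance $O(\sigma^2+\|\bm{\beta}^*\|^2)$; a Bernstein inequality then yields this rate. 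Crucially, because Easy-EM omits the inverse sample covariance $(\tfrac1n\sum_i \bm{x}_i\bm{x}_i^\top)^{-1}$, this is a genuinely one-dimensional average, and the $\epsilon_f^2$ term — which in EM arises from $\|(\tfrac1n\sum_i\bm{x}_i\bm{x}_i^\top)^{-1} - I\| = O(\epsilon_f)$ multiplying the $O(1)$ signal — never appears. Second, the \emph{full-vector} bound $\|\tilde{\bm{\beta}}'' - \bm{\beta}'\| = O(\epsilon_f)$ (standard, as in \cite{balakrishnan_statistical_2017}), which in particular controls the $d-2$ mean-zero coordinates orthogonal to $\mathrm{span}(\bm{\beta},\bm{\beta}^*)$ that make up the bulk of $\tilde{\bm{\beta}}''_\perp$. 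With $n = \tilde{O}(\max(1,\eta^{-2}) d/\epsilon_f^2)$ one has $1/\sqrt{n} \le O(\epsilon_f/\sqrt{d})$, so the directional fluctuation is an order $\sqrt{d}$ smaller than the generic one.

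Given these, I would derive the cosine bound by lower-bounding the numerator with the one-dimensional bound and upper-bounding the denominator via $\|\tilde{\bm{\beta}}''\| \le \|\bm{\beta}'\| + O(\epsilon_f)$ (triangle inequality), using a uniform lower bound on $\|\bm{\beta}'\|$ — guaranteed by Lemma~\ref{lem:em_update_summary} (since $b_1' \ge R > 0$) together with the norm assumption $\|\bm{\beta}\|\ge\|\bm{\beta}^*\|/10$. Dividing and factoring out $\cos\theta' = \langle \bm{\beta}', \hat{\bm{\beta}}^*\rangle/\|\bm{\beta}'\|$ turns the denominator perturbation into the multiplicative factor $(1 - O(\epsilon_f))$ and the numerator perturbation into the additive $O(\epsilon_f/\sqrt{d})$; applying $\cos\theta' \ge \kappa\cos\theta$ yields \eqref{eq:easyem_cosine}. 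For the sine bound I would upper-bound $\|\tilde{\bm{\beta}}''_\perp\| \le \|\bm{\beta}'\|\sin\theta' + O(\epsilon_f)$ and lower-bound $\|\tilde{\bm{\beta}}''\| \ge \langle\tilde{\bm{\beta}}'',\hat{\bm{\beta}}^*\rangle \ge \|\bm{\beta}'\|\cos\theta' - O(1/\sqrt{n})$, so that $\sin^2\tilde{\theta}''$ equals $\sin^2\theta'$ plus cross- and square-terms of order $O(\epsilon_f)$; invoking $\sin^2\theta' \le \kappa'\sin^2\theta$ finishes.

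The main obstacle I anticipate is the sharp directional concentration together with making the denominator control uniform: I must keep $\|\tilde{\bm{\beta}}''\|$ bounded away from zero over the whole range $0 \le \theta < \pi/2$ (including $\theta$ near $\pi/2$, where $\cos\theta'$ is small and the additive error could otherwise swamp the signal) so that the $(1-10\epsilon_f)$ multiplicative form survives. This forces the careful bookkeeping that distinguishes the $O(\epsilon_f/\sqrt{d})$ directional term from the generic $O(\epsilon_f)$ term, and is exactly what the elimination of the inverse-covariance factor buys us relative to Theorem~\ref{theorem:finite_cosine_update}.
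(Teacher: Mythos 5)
Your proposal matches the paper's proof essentially step for step: the paper likewise obtains the sharp directional bound $|\langle \tilde{\bm{\beta}}'' - \bm{\beta}', \bm{\beta}^*\rangle| \le c_1\sqrt{\sigma^2+\|\bm{\beta}^*\|^2}\sqrt{\log(1/\delta)/n}$ by reusing the sub-exponential bound on term $A$ from the proof of Theorem~\ref{thm:finite_cos_update} (the terms involving $\hat{\Sigma}^{-1}-I$, which are what produce the $\epsilon_f^2$ factor for standard EM, simply never arise for Easy-EM), combines it with the standard $O(\sqrt{d/n})$ full-vector bound via a covering argument, and then runs exactly the same cosine/sine algebra as in Theorem~\ref{theorem:finite_cosine_update}. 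The one imprecision is your justification of the uniform lower bound on the iterate's norm: positivity of $R$ in Lemma~\ref{lem:em_update_summary} is not quantitative, and what is actually needed (and what the paper invokes) is Lemma~\ref{lemma:finite_b1p_lb}, whose separate case analysis shows that $\|\bm{\beta}\| \ge \|\bm{\beta}^*\|/10$ is preserved by the update.
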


The only difference between Theorems~\ref{theorem:finite_cosine_update} and \ref{thm:easyem_update} is that equation (\ref{eq:easyem_cosine}) does not has an extra factor $\epsilon_f^2$. Thus, Easy-EM improves the angle in each step even without the assumption $\epsilon_f \le 1/\sqrt{d}$, and therefore the multi-step bounds in Lemmas \ref{lemma:finite_cosine}, \ref{lemma:finite_sine} and (\ref{eq:sample_distance}) can be identically applied to Easy-EM. 

\subsection{Discussions}

The overall sample complexity to achieve $\epsilon$ error is $n = \tilde{O} \big(\max(1, poly(\eta^{-1})) (d/\epsilon^2)\big)$. In the high SNR regime, this is $\tilde{O} (d/\epsilon^2 )$, which is the minimax sample complexity up to log factors. Moreover, the final statistical error is $\tilde{O}\big(\sigma \sqrt{d/n}\big)$ which guarantees exact recovery as $\sigma \rightarrow 0$. In the low SNR regime, the sample complexity becomes $O(\eta^{-6} d/\epsilon^2)$. This high dependency on SNR arises because the convergence rates of sine and distance are $1-O(\eta^2)$ in the low SNR regime, and  the statistical fluctuation has to be smaller than $\eta^2$ in order to guarantee that every iteration improves the angle or distance.
It seems to be the nature of EM algorithm as we have seen similarly high dependence on SNR in GMM settings \citep{daskalakis2017ten}. Nevertheless, once enough number of samples are given to offset a low SNR, we achieve an statistical error of $\tilde{O}\big( \|\betastar\| \sqrt{d/n}\big)$. 
\yccomment{Optimal in what and why is it optimal (as compared to the minimax lower bound)?} \jycomment{Would this description fine?}

\section{Conclusion}
We studied the EM algorithm for a mixture of two linear regression models. In the large sample limit, we showed that EM converges to true parameters globally without any specialized initialization.  
In finite sample case, we showed that EM enjoys the same convergences behavior, though it may need the aid of Easy-EM in the first few steps.
It would be interesting to explore whether we can remove the dependency on Easy-EM. Extensions of this work could be analyzing the performance of EM when the weight of each component is not equal or there are more than two components.

\subsection*{Acknowledgement}
J.\ Kwon and C.\ Caramanis are partially supported by NSF EECS-1609279, CCF-1302435, and CNS-1704778.
W.\ Qian and Y.\ Chen are partially supported by NSF grants 1657420 and 1704828.

\newpage

\bibliographystyle{unsrt}
\bibliography{main}

\newpage

\begin{appendices}
\section{Proofs for Population EM Update}

\newcommand{\fracbsig}{\frac{{b_2^*}}{\sigma}}

\subsection{Proof of Lemma \ref{lem:em_update_summary}}
\label{appendix:lemma_1}

\simplifiedupdate*

\begin{align}
\label{eq:s_r_expression}
	S := & \mathbb{E}_{\substack{\alpha_1 \sim \mathcal{N}(0,1), \\ y \sim \mathcal{N}(0, \sigma_2^2)}}
    \left[ \tanhcustom + \frac{\alpha_1 b_1}{\sigma^2} (y+\alpha_1 b_1^*) \tanhpcustom \right], \nonumber \\
    R := & (\sigma^2 + \|\bm{\beta}^*\|^2) \mathbb{E}_{\substack{\alpha_1 \sim \mathcal{N}(0,1), \\ y \sim \mathcal{N}(0, \sigma_2^2)}} 
    \left[ \frac{\alpha_1^2 b_1}{\sigma^2} \tanhpcustom \right].
\end{align}

For completeness of the proof, we repeat some arguments in the main text. Recall the EM update:
\[
	\bm{\beta}' = \mathbb{E}_{X \sim \mathcal{N}(0, I)} \left [ \left( \mathbb{E}_{y|X \sim \mathcal{N}(\langle X, \bm{\beta}^*\rangle, \sigma^2)} \left[ \tanh \left( \frac{\langle X, \bm{\beta}\rangle}{\sigma^2} y \right)y \right] \right)X \right].
\]

We first change the basis by choosing $\bm{v}_1 = \bm{\beta}/\|\bm{\beta}\|$, the unit vector in the direction of the current estimator, and $\bm{v}_2$ to be the orthogonal complement of $\bm{v}_1$ in ${\rm span}\{\bm{\beta},\bm{\beta}^{\ast}\}$. We let $\bm{v}_3, ..., \bm{v}_d$ be a completion to an orthonormal basis for the full parameter space, $\mathbb{R}^d$, along with $\bm{v}_1$ and $\bm{v}_2$. By the spherical symmetry of the distribution of $\bm{x}$,  we have
\begin{equation}
	\bm{\beta}' = \mathbb{E}_{\alpha_i} \left[ \mathbb{E}_{y|\alpha_i} \left[ \tanh \left(\frac{b_1 \alpha_1}{\sigma^2} y \right)y \right] \sum_i \alpha_i \bm{v}_i \right],
\end{equation}
where the expectation is taken over $\alpha_i \sim \mathcal{N}(0,1)$, and $y|\alpha_i \sim \mathcal{N}(\alpha_1 b_1^* + \alpha_2 b_2^*, \sigma^2)$, and we defined $b_1 = \langle \bm{\beta}, \bm{v}_1 \rangle = \|\bm{\beta}\|$, $b_1^* = \langle \bm{\beta}^*, \bm{v}_1 \rangle$, and $b_2^* = \langle \bm{\beta}^*, \bm{v}_2 \rangle$. Without loss of generality, we assume $b_1, b_1^*, b_2^* \ge 0$.
The inner expectation over $y$ does not have any dependence on $\alpha_i$ for $i \ge 3$. Therefore, taking expectation over $\alpha_i$ for $i\ge 3$ yields 0, which implies $\bm{\beta}'$ is also on the plane spanned by $\bm{v}_1, \bm{v}_2$. It enables us to rewrite it as $\bm{\beta}' = b_1' \bm{v}_1 + b_2' \bm{v}_2$ where 
\begin{subequations}
\begin{align}
	b_1' = \mathbb{E}_{\alpha_1, \alpha_2} \left[ \mathbb{E}_{y|\alpha_1, \alpha_2} \left[ \tanh \left(\frac{b_1 \alpha_1}{\sigma^2} y \right)y \right] \alpha_1 \right], \label{eq:b_1_prime} \\
    b_2' = \mathbb{E}_{\alpha_1, \alpha_2} \left[ \mathbb{E}_{y|\alpha_1, \alpha_2} \left[ \tanh \left(\frac{b_1 \alpha_1}{\sigma^2} y \right)y \right] \alpha_2 \right], \label{eq:b_2_prime}
\end{align}
\end{subequations}

where the expectation is similarly taken over $\alpha_i \sim \mathcal{N}(0,1)$, and $y|\alpha_i \sim \mathcal{N}(\alpha_1 b_1^* + \alpha_2 b_2^*, \sigma^2)$. In the following, we prove that $b_1'$ and $b_2'$ have a simplified representation as in Lemma \ref{lem:em_update_summary}.

\begin{proof}
    We start with second coordinate $b_2'$. We will occasionally omit variables for expectation when it is clear over which variable the expectation is taken. We can rewrite the equation (\ref{eq:b_2_prime}) as
    \begin{align*}
        b_2' = \mathbb{E}[g(\alpha_1, \alpha_2) \alpha_2],
    \end{align*}
    where $g(\alpha_1, \alpha_2) = \mathbb{E}_{y \sim \mathcal{N}(0,\sigma^2))}[ \tanh(\frac{b_1\alpha_1}{\sigma^2}(y+\alpha_1b_1^* + \alpha_2b_2^*)) (y+\alpha_1b_1^* + \alpha_2b_2^*)]$. Apply Stein's lemma with respect to $\alpha_2$ yields
    \begin{align*}
        b_2' = & \mathbb{E}[g(\alpha_1, \alpha_2) \alpha_2] = \mathbb{E} \left[ \frac{\partial}{\partial \alpha_2} g(\alpha_1, \alpha_2) \right], \\
        \frac{\partial}{\partial \alpha_2} g(\alpha_1, \alpha_2) = & b_2^* \mathbb{E}_{y \sim \mathcal{N}(0,\sigma^2))} \bigg[ \tanh \left( \frac{\alpha_1 b_1}{\sigma^2} (y+\alpha_1b_1^* + \alpha_2 b_2^*) \right)  \\ 
        &  \hspace{1em}+ \frac{\alpha_1 b_1}{\sigma^2}(y+\alpha_1 b_1^* + \alpha_2 b_2^*)  \tanh' \left(\frac{\alpha_1 b_1}{\sigma^2} (y+\alpha_1b_1^* + \alpha_2 b_2^*) \right) \bigg] \\
         \stackrel{(a)}{=} & b_2^* \mathbb{E}_{y \sim \mathcal{N}(0,\sigma_2^2))} \left[\tanhcustom + \frac{\alpha_1 b_1}{\sigma^2}(y+\alpha_1 b_1^*)  \tanhpcustom \right]. \\
        \therefore b_2' = & b_2^* S,
    \end{align*}
    where in (a), we replaced $y + \alpha_2 b_2^*$ with a new Gaussian variable as they are the sum of two Gaussian variables.
    
    For the first coordinate $b_1'$, we take the similar strategy but we arrange it in a different way. First, we rewrite equation (\ref{eq:b_1_prime}) as
    \begin{align}
        b_1' = \mathbb{E}_{\alpha_1 \sim \mathcal{N}(0, 1)} \left[\mathbb{E}_{y \sim \mathcal{N}(0,\sigma_2^2))} \left[ \tanh \left( \frac{b_1\alpha_1}{\sigma^2}(y+\alpha_1b_1^*) \right) (y+\alpha_1b_1^*) \right] \alpha_1 \right], \label{single_use:b_1_prime}
    \end{align}
    where we again replaced $y+\alpha_2b_2^*$ with one Gaussian variable. Then observe that another application of Stein's lemma yields
    \begin{align}
        &\mathbb{E} \left[ \tanhcustom \alpha_1^2 \right] \nonumber \\
        = &\mathbb{E} \left[ \tanhcustom + \left( \frac{2 b_1^* b_1}{\sigma^2} \alpha_1 + \frac{b_1}{\sigma^2}y \right) \alpha_1 \tanhpcustom \right] \nonumber \\
        = &\mathbb{E} \left[ \tanhcustom + \frac{\alpha_1 b_1}{\sigma^2}(y+\alpha_1 b_1^*)  \tanhpcustom \right] \nonumber \\
        & \qquad + b_1^* \mathbb{E} \left[ \frac{\alpha_1^2 b_1}{\sigma^2} \tanhpcustom \right]. \label{single_use:S_lower}
    \end{align}
    
    On the other hand,
    \begin{align*}
        \mathbb{E}_{ \begin{smallmatrix} \alpha_1 \sim \mathcal{N}(0,1) \\ y \sim \mathcal{N}(0,\sigma_2^2)) \end{smallmatrix} } \left[ \tanhcustom \alpha_1 y \right] &= 
        \sigma_2^2 \mathbb{E} \left[ \frac{\alpha_1^2 b_1}{\sigma^2} \tanhpcustom \right],
    \end{align*}
    where we applied Stein's lemma for $y$ this time. Plugging the above two equations into (\ref{single_use:b_1_prime}), we get  
    \begin{align*}
        b_1' = b_1^*S + R.
    \end{align*}
Finally, $R > 0$ since it is the expectation of positive values almost surely over the real line. Lemma \ref{lemma:S_bounds} in the appendix shows that $S \ge 0$. $S=0$ iff $b_1=0$ or $b_1^*=0$. Since we only consider the case where $b_1\neq 0$, the proof is complete. 
\end{proof}





\subsection{Proof of Theorem \ref{thm:stationary_points}}

\label{subsec:stationary_points}
\stationarypoints*

\begin{proof}
From the correspondence between the EM update and gradient step as in Lemma \ref{lem:em_gd_mlr}, it is easy to see that $\bm{\beta}'=\bm{\beta}$ iff the current gradient is $\bm{0}$. Therefore, the set of fixed points of population EM is equal to the set of stationary points of the log-likelihood.\\
To characterize the fixed points, there are two key components as follows. First of all, we show in Lemma \ref{lem:em_fixedpoints} that every 2 dimensional space that includes $\bm{\beta}^*$ contains exactly $5$ fixed points, $\bm{0}$, $\bm{\beta}^*$, $-\bm{\beta}^*$ and two other symmetric points in the orthogonal direction to $\bm{\beta}^*$. Secondly, we need to some understanding of the property about the Hessian of those fixed points. For $\bm{\beta}^*$ and $-\bm{\beta}^*$, they are the global maxima as they are the optimal parameters. For $\bm{0}$, a simple calculation shows that the Hessian is positive definite, thus, it is a local minima. For any fixed point $\bm{v}$ in the orthogonal direction to $\betastar$, we use Stein's lemma to show that in the direction of $\bm{\beta}^*$, $\langle \bm{\beta}^*, \mathcal{H}\bm{\beta}^*\rangle$ is strictly positive, where $\mathcal{H}$ is the Hessian matrix (Proposition \ref{prop:pos_eig}). On the other hand, we show that any fixed point $\bm{v}$ is a local maxima in span$(\bm{v})$. These two facts allow us to conclude that the fixed points in the orthogonal space are indeed saddle points. To illustrate the second point, we utilize two observations: (1) in the first part of the proof, we have demonstrated span $(\bm{v})$ is an invariant subspace for the EM operator and $\bm{v}$ is a contracting point; (2) the monotonicity property of the EM algorithm says that the log-likelihood of the EM iterate does not decrease. Therefore, any fixed point in the direction orthogonal to $\bm{\beta}^*$ is a local maxima in span$(\bm{v})$.  
\end{proof} 

\begin{lem}
\label{lem:em_fixedpoints}
For each unit vector $\bm{v}$ satisfying $\bm{v}\perp\bm{\beta}^*$,
the population EM starting at $\bm{\beta} \in \text{span}(\bm{\beta}^*,\bm{v})$ has exactly five fixed points: $0$, $\bm{\beta}^*$,
$-\bm{\beta}^*$, $E(\bm{v})\bm{v}$, and $-E(\bm{v})\bm{v}$ for some $E(\bm{v})>0$. 
\end{lem}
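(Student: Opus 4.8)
The plan is to work entirely inside the two-dimensional plane $P = \mathrm{span}(\bm{\beta}^*, \bm{v})$ and reduce the fixed-point equation to two scalar problems. By Lemma \ref{lem:em_update_summary}, $\bm{\beta}' \in \mathrm{span}(\bm{\beta}, \bm{\beta}^*) \subseteq P$ whenever $\bm{\beta} \in P$, so $P$ is invariant under the EM map and it suffices to enumerate fixed points lying in $P$. The origin is one of them: when $b_1 = \|\bm{\beta}\| = 0$ both $S$ and $R$ vanish (each carries an explicit factor of $b_1$ in (\ref{eq:s_r_expression})), hence $\bm{\beta}' = \bm{0}$. For a nonzero $\bm{\beta}$, write it in the Lemma \ref{lem:em_update_summary} basis as $\bm{\beta} = b_1 \bm{v}_1$ with $b_1 = \|\bm{\beta}\| > 0$, so that $\bm{\beta}' = b_1'\bm{v}_1 + b_2'\bm{v}_2$ with $b_2' = b_2^* S$. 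A fixed point must have $b_2' = 0$. If $\bm{\beta}$ is neither parallel nor orthogonal to $\bm{\beta}^*$, then $b_1^* > 0$ (hence $S > 0$ by Lemma \ref{lem:em_update_summary}) and $b_2^* > 0$, forcing $b_2' > 0$; such a point cannot be fixed. Therefore every nonzero fixed point is either parallel or orthogonal to $\bm{\beta}^*$, and in each case the surviving requirement is the scalar equation $b_1 = b_1' = b_1^* S + R =: F(b_1)$.

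In the parallel case ($\bm{\beta} = p\,\bm{\beta}^*/\|\bm{\beta}^*\|$ with $p>0$), we have $b_2^*=0$, $b_1^* = \|\bm{\beta}^*\|$, and $b_2'=0$ automatically. By the contraction-along-$\bm{\beta}$ property (structural property~2 / Lemma \ref{lem:dynamic_along}), the scalar map $F$ has a unique fixed point $E(\bm{v}_1)$ on $b_1>0$. Since $\bm{\beta}^*$ is a global maximizer of the population log-likelihood, hence a stationary point, hence a fixed point of EM, we conclude $E(\bm{v}_1) = \|\bm{\beta}^*\|$, so the only positive parallel fixed point is $\bm{\beta}^*$. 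The EM map is odd ($\tanh$ is odd, so replacing $\bm{\beta}$ by $-\bm{\beta}$ sends $\bm{\beta}'$ to $-\bm{\beta}'$), which produces the companion fixed point $-\bm{\beta}^*$.

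In the orthogonal case ($\bm{\beta} = q\,\bm{v}$ with $q>0$), we have $b_1^*=0$ (so $S=0$ and $b_2'=0$ automatically), $b_2^*=\|\bm{\beta}^*\|$, and $F(b_1)=R(b_1)$. Existence of a strictly positive fixed point follows from a sign change of $R(b_1)-b_1$: using $\tanh'(0)=1$ and $\mathbb{E}[\alpha_1^2]=1$, the ratio $R(b_1)/b_1 \to (\sigma^2+\|\bm{\beta}^*\|^2)/\sigma^2 > 1$ as $b_1\to 0^+$, while $R(b_1)/b_1 \to 0$ as $b_1\to\infty$ (dominated convergence, since $\tanh'\to 0$ almost surely), and the iterates remain bounded by Lemma \ref{lemma:bprime_bounded}. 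Uniqueness again comes from the monotone contraction property, yielding a single $E(\bm{v})>0$, and oddness supplies $-E(\bm{v})\bm{v}$. Collecting all cases, the fixed points in $P$ are exactly $\bm{0}$, $\pm\bm{\beta}^*$, and $\pm E(\bm{v})\bm{v}$, i.e.\ five in total.

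The step I expect to be the main obstacle is the orthogonal case, where I must establish $E(\bm{v})>0$ and its uniqueness simultaneously. The subtle point is that $b_1=0$ is itself a fixed point of the scalar map $R$, but it is \emph{repelling} (the ratio limit at the origin exceeds $1$); one therefore has to combine this local repulsion with global boundedness (Lemma \ref{lemma:bprime_bounded}) and the monotone-contraction property to pin down exactly one strictly positive fixed point, rather than invoking the contraction statement in a way that ignores the behavior at the origin.
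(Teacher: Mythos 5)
Your proposal is correct and follows essentially the same route as the paper: invariance of the two one-dimensional subspaces, strict decrease of the angle (via $b_2' = b_2^* S > 0$) ruling out fixed points off those subspaces, and a unique positive fixed point per direction from the scalar analysis. Your hand-rolled existence argument in the orthogonal case (the ratio limits $R(b_1)/b_1 \to (\sigma^2+\|\bm{\beta}^*\|^2)/\sigma^2 > 1$ at $0$ and $\to 0$ at $\infty$) and your identification of the parallel fixed point via the global-maximizer property are just re-derivations of what Lemma \ref{lem:dynamic_along} (properties 3--4 and uniqueness) already supplies, so no new ideas or gaps arise.
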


\begin{proof}
Let $\bm{\beta}'$ be the EM update as in the standard notation.
When $\bm{\beta}=0$, we have $\bm{\beta}'=0$ and thus $\bm{\beta}=0$
is a fixed point. For the other cases, we will use a few facts
established in Lemma \ref{lem:em_update_summary}:

\begin{itemize}
\item if $\langle \bm{\beta},\bm{\beta}^*\rangle=0$ (i.e, $b_{1}^{*}=0$), it follows that $S=0$ and $b_{2}'=0$. In other words, if the current iterate $\bm{\beta}$ is orthogonal to the
ground truth $\bm{\beta}_{*}$, the population EM update remains orthogonal
to $\bm{\beta}^*$. 
\item if $\langle\bm{\beta}^{\perp},\bm{\beta}^*\rangle=0$ (i.e, $b_{2}^{*}=0$), it follows that
$b_{2}'=0$. In other words, if the current iterate $\bm{\beta}$ is in the direction of $\bm{\beta}^*$
(or $-\beta_{*}$), the population EM iterate remains in that direction.
\item if $\langle\bm{\beta},\bm{\beta}^*\rangle>0$ (or $\langle\bm{\beta},-\bm{\beta}^*\rangle>0$),
it follows that $b_{2}^{*}>0, S>0$, and thus $b_{2}'>0$. In other words, if
the current iterate has an acute angle with $\bm{\beta}^*$ (or $-\bm{\beta}^*$),
$\angle(\bm{\beta}',\bm{\beta}_*)$ (or $\angle(\bm{\beta}',\bm{\beta}_*$)
will strictly decrease and no fixed point can exist in this region. 
\end{itemize}
Therefore, we deduce that the fixed points of the population EM lies
either in span($\bm{\beta}^*$) or in the subspace orthogonal to $\bm{\beta}^*$.
They are the invariant subspaces of the population EM operator. In Lemma \ref{lem:dynamic_along}, it is shown that for each unit direction of $\bm{\beta}$, there exists
a unique contraction point. We thus conclude that if $\bm{\beta}$ is in the direction of $\bm{\beta}^*(-\bm{\beta}^*)$, the fixed point is $\bm{\beta}^*(-\bm{\beta}^*)$; if $\bm{\beta}$ is in the direction of $\bm{v}(-\bm{v})$, the fixed point is $E(\bm{v})(-E(\bm{v}))$ for some $E(\bm{v})>0$.
\end{proof}

\begin{lem}
\label{lem:dynamic_along}
Suppose $\langle \bm{\beta},\bm{\beta}^*\rangle\geq 0$. Let $\bm{v}_1$ be the unit vector of $\bm{\beta}$ and $b_1'$ be the notation used in Lemma \ref{lem:em_update_summary}, denoting the the projection of the EM update onto span($\bm{v}_1$). There exists a unique non-zero number $E(\bm{v}_1) $ satisfying
\begin{align*}
\begin{cases}
\|\bm{\beta} \| < b_1' < E(\bm{v}_1) & \text{if} \; \|\bm{\beta}\| < E(\bm{v}_1),\\
E(\bm{v}_1) < b_1'<\|\bm{\beta} \| & \text{if} \; \|\bm{\beta}\| > E(\bm{v}_1),\\
b_1' = E(\bm{v}_1) & \text{if} \; \|\bm{\beta}\| = E(\bm{v}_1).
\end{cases}
\end{align*}
\end{lem}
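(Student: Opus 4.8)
The plan is to regard the first coordinate of the update as a scalar map of the scalar $b_1=\|\bm{\beta}\|$, holding the direction $\bm{v}_1$ fixed (and hence $b_1^*,b_2^*$, and $\sigma_2^2$ fixed). Writing $u := \frac{b_1\alpha_1}{\sigma^2}(y+\alpha_1 b_1^*)$ and starting from the integral form \eqref{eq:b_1_prime}, after replacing $y+\alpha_2 b_2^*$ by a single Gaussian $y\sim\mathcal{N}(0,\sigma_2^2)$ exactly as in the proof of Lemma~\ref{lem:em_update_summary}, I define
\[
    f(b_1) := \mathbb{E}_{\alpha_1,y}\!\left[\alpha_1(y+\alpha_1 b_1^*)\tanh(u)\right], \qquad \alpha_1\sim\mathcal{N}(0,1),\ y\sim\mathcal{N}(0,\sigma_2^2),
\]
so that $b_1'=f(b_1)$. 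The goal is to establish that $f$ is smooth, strictly increasing and strictly concave on $[0,\infty)$, with $f(0)=0$, $f'(0)>1$, and $f$ bounded; from these five properties the three-case statement follows immediately.

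First I would justify differentiating under the expectation (the integrand and its $b_1$-derivatives are dominated by polynomials in $(\alpha_1,y)$ times the bounded factors $\tanh,\tanh'$, whose Gaussian moments are finite). Differentiating once gives
\[
    f'(b_1) = \tfrac{1}{\sigma^2}\,\mathbb{E}\!\left[\alpha_1^2(y+\alpha_1 b_1^*)^2\,\tanh'(u)\right]\ \ge\ 0,
\]
since $\tanh'=1-\tanh^2\ge 0$; strict positivity holds because the integrand is positive on a set of positive measure, so $f$ is strictly increasing. Differentiating again and using $\frac{d}{dx}\tanh'(x)=-2\tanh(x)\tanh'(x)$ yields
\[
    f''(b_1) = -\tfrac{2}{\sigma^4}\,\mathbb{E}\!\left[\alpha_1^3(y+\alpha_1 b_1^*)^3\,\tanh'(u)\tanh(u)\right].
\]
The crux is a pointwise sign argument: for $b_1>0$, both $\bigl(\alpha_1(y+\alpha_1 b_1^*)\bigr)^3$ and $\tanh(u)$ share the sign of $\alpha_1(y+\alpha_1 b_1^*)$ (the latter because $u$ does and $\tanh$ is odd), while $\tanh'(u)>0$; hence the integrand is non-negative and $f''\le 0$, i.e.\ $f$ is strictly concave. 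I expect this concavity computation to be the main obstacle.

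Next I would pin down the endpoints. Evaluating at $b_1=0$, where $u=0$ and $\tanh'(0)=1$, gives $f(0)=0$ and, using independence together with $\mathbb{E}[\alpha_1^2(y+\alpha_1 b_1^*)^2]=\sigma^2+{b_2^*}^2+3{b_1^*}^2$ (recall $\mathbb{E}[\alpha_1^4]=3$),
\[
    f'(0) = \frac{\sigma^2+{b_2^*}^2+3{b_1^*}^2}{\sigma^2} > 1,
\]
where strictness uses $\bm{\beta}^*\neq\bm{0}$. Boundedness of $f$ on $[0,\infty)$ follows from Lemma~\ref{lemma:bprime_bounded}, so $f(b_1)<b_1$ for all large $b_1$. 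Setting $g(b_1):=f(b_1)-b_1$, it is strictly concave with $g(0)=0$, $g'(0)=f'(0)-1>0$, and $g(b_1)<0$ for large $b_1$; a strictly concave function that starts at $0$ with positive slope and eventually turns negative has exactly one positive root, which I define to be $E(\bm{v}_1)>0$, with $g>0$ on $(0,E(\bm{v}_1))$ and $g<0$ on $(E(\bm{v}_1),\infty)$.

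Finally, the three cases drop out of monotonicity combined with $f(E(\bm{v}_1))=E(\bm{v}_1)$. If $\|\bm{\beta}\|=b_1<E(\bm{v}_1)$, then $g(b_1)>0$ gives $b_1<f(b_1)=b_1'$, while $f$ increasing gives $b_1'=f(b_1)<f(E(\bm{v}_1))=E(\bm{v}_1)$, so $\|\bm{\beta}\|<b_1'<E(\bm{v}_1)$. Symmetrically, $b_1>E(\bm{v}_1)$ yields $E(\bm{v}_1)<b_1'<\|\bm{\beta}\|$, and $b_1=E(\bm{v}_1)$ is fixed; uniqueness of $E(\bm{v}_1)$ is precisely the uniqueness of the positive root of $g$. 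The only remaining technical care is the dominated-convergence justification for differentiating under the integral and the sign bookkeeping in the concavity step.
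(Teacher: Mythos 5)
Your proposal is correct and takes essentially the same route as the paper's proof: both reduce the update to a scalar map $f$ of $b_1=\|\bm{\beta}\|$, establish smoothness, strict monotonicity, concavity, $f(0)=0$, $f'(0)>1$, and boundedness, and then extract the unique positive fixed point $E(\bm{v}_1)$ of the strictly concave function $g(b_1)=f(b_1)-b_1$ (the paper packages this last step as Lemma~\ref{lem:contractfunc}). The only difference is cosmetic: you verify concavity by computing $f''$ explicitly with a pointwise sign argument, while the paper obtains the same fact by noting that $f'(b_1)=\frac{1}{\sigma^2}\mathbb{E}\left[\alpha_1^2(y+\alpha_1 b_1^*)^2\tanh'\!\left(\tfrac{b_1\alpha_1(y+\alpha_1 b_1^*)}{\sigma^2}\right)\right]$ is strictly decreasing in $b_1$, since $\tanh'$ is even and decreasing in $|x|$.
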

\begin{proof}
When $\bm{v}_1$ is fixed, $b_1'$ only depends on $\|\bm{\beta}\|$. We thus use $f(\|\bm{\beta}\|)$ for $b_1'$ in the following to emphasize it is a function of $\|\bm{\beta}\|$. 
\begin{align*}
f(\|\bm{\beta}\|) := &\mathbb{E}_{X,y} \left[y\langle X, \bm{v}_1\rangle \tanh \left(\frac{\|\bm{\beta}\| y\langle X, \bm{v}_1 \rangle}{\sigma^2}\right) \right] \\
= & \mathbb{E}_{X\sim \mathcal{N}(0,I),y\sim \mathcal{N}(\langle \bm{\beta}^*, X\rangle,\sigma^2)} \left[y\langle X, \bm{v}_1\rangle \tanh \left(\frac{\|\bm{\beta}\| y\langle X, \bm{v}_1 \rangle}{\sigma^2} \right) \right].
\end{align*}
Let us check a few properties of $f$:
\begin{enumerate}
\item $f$ is smooth (obvious).
\item $f$ is strictly increasing and concave. Note that its derivative with respect to $\|\bm{\beta}\|$ 
\begin{align*}
f'(\|\bm{\beta}\|) = \mathbb{E}_{X\sim \mathcal{N}(0,I),y\sim \mathcal{N}(\langle \bm{\beta}^*, X\rangle,\sigma^2)} \left[\frac{(y\langle X, \bm{v}_1\rangle)^2}{\sigma^2} \tanh' \left(\frac{\|\bm{\beta}\| y\langle X, \bm{v}_1 \rangle}{\sigma^2} \right) \right]
\end{align*}
is positive and is strictly decreasing with respect to $\|\bm{\beta}\|$.
\item $f(0)=0$ and $f'(0)>1$ since
\begin{align*}
f'(0)  
= &  \mathbb{E}_{X\sim \mathcal{N}(0,I),y\sim \mathcal{N}(\langle \bm{\beta}^*, X\rangle,\sigma^2)} \left[ \frac{(y\langle X, \bm{v}_1 \rangle)^2}{\sigma^2} \right]\\
= & \frac{\sigma^2 + \|\bm{\beta}^*\|^2(3\cos^2(\angle(\bm{\beta},\bm{\beta}_*))+\sin^2(\angle(\bm{\beta},\bm{\beta}_*)))}{\sigma^2}.
\end{align*}
\item $f$ is bounded (cf. Lemma \ref{lemma:bprime_bounded})
\end{enumerate}
Let $g(\|\bm{\beta}\|) : =f(\|\bm{\beta}\|)-\|\bm{\beta}\|$, it is a strictly concave and smooth function from Property 2. Moreover, $g(0)=0$, $g'(0)>0$ from Property 3 and $\lim_{\|\bm{\beta}\| \to \infty} g(\|\bm{\beta}\|)=  -\infty$ from Property 4. Lemma \ref{lem:contractfunc} shows that there exists a unique $E(\bm{v}_1)>0$ for $g$ such that $g(E(\bm{v}_1))=0$. Moreover when $\|\bm{\beta}\|<E(\bm{v}_1)$, $g(\|\bm{\beta}\|)>0$  and when $\|\bm{\beta}\|>E(\bm{v}_1)$, $g(\|\bm{\beta}\|)<0$. Equivalently, it means that 
\[
\begin{cases}
\|\bm{\beta}\|< f(\|\bm{\beta}\|) < E(\bm{v}_1) & \text{if } 0< \|\bm{\beta}\| < E(\bm{v}_1), \\
\|\bm{\beta}\|> f(\|\bm{\beta}\|) > E(\bm{v}_1) & \text{if } \|\bm{\beta}\| > E(\bm{v}_1), \\
f(\|\bm{\beta}\|) = E(\bm{v}_1) & \text{if } \|\bm{\beta}\| = E(\bm{v}_1).
\end{cases}
\]
\end{proof}

\begin{lem}
	\label{lem:contractfunc}
	Let $f:\mathbb{R}^{+} \to \mathbb{R}$ be a smooth and concave function, with strictly decreasing derivative, satisfying $f(0)=0, f'(0)>0$, and $\lim_{x\to \infty} f(x)=-\infty$. Then there exists a unique $t>0$ such that $f(t)=0$ and $f'(t)<0$. Moreover, $f(x)>0$ if $x\in (0,t)$ and $f(x)<0$ if $x\in (t,\infty)$.  
\end{lem}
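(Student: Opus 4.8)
The plan is to treat this as an elementary real-analysis statement whose entire force comes from \emph{strict} concavity (encoded by the hypothesis that $f'$ is strictly decreasing). I would organize the argument into three parts: existence of a zero $t>0$, its uniqueness, and the claimed sign pattern together with $f'(t)<0$. For existence, I would first use $f'(0)>0$ together with the continuity of $f'$ (smoothness) to find a $\delta>0$ on which $f'>0$, so that $f$ is strictly increasing on $[0,\delta]$ and hence $f(\delta)>f(0)=0$. The hypothesis $\lim_{x\to\infty} f(x)=-\infty$ supplies some $X>\delta$ with $f(X)<0$. Since $f$ is continuous and changes sign between $\delta$ and $X$, the intermediate value theorem yields a point $t\in(\delta,X)$ with $f(t)=0$.

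For uniqueness, I would invoke the standard fact that a strictly concave function meets any horizontal line in at most two points; since $x=0$ already contributes the zero $f(0)=0$, there can be at most one further zero on $(0,\infty)$, and existence then pins it down to exactly one. Equivalently, one can argue directly: if there were two zeros $0<t_1<t_2$, then together with $f(0)=0$ the graph of $f$ would contain three collinear points, contradicting strict concavity.

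For the sign pattern, I would use the chord characterization of strict concavity on $[0,t]$: since $f(0)=f(t)=0$, the chord joining these endpoints is exactly the zero line, and strict concavity forces $f(x)>0$ for every $x\in(0,t)$. To handle $(t,\infty)$ and to obtain $f'(t)<0$, I would apply the mean value theorem on $[0,t]$ to produce $c\in(0,t)$ with $f'(c)=(f(t)-f(0))/t=0$; because $f'$ is strictly decreasing, $f'(x)<f'(c)=0$ for all $x>c$, and in particular $f'(t)<0$. The same monotonicity shows $f$ is strictly decreasing on $(t,\infty)$, so starting from $f(t)=0$ we conclude $f(x)<0$ for all $x>t$.

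There is no serious obstacle here; this lemma is routine. The only point requiring care is to consistently exploit \emph{strict} concavity rather than mere concavity, since strictness is precisely what makes the zero $t$ genuinely unique and upgrades the sign statements to strict inequalities.
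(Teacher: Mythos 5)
Your proof is correct and follows essentially the same route as the paper's: the intermediate value theorem for existence of the zero $t$, the mean value theorem to produce an interior critical point, and the strict monotonicity of $f'$ to get $f'(t)<0$ and the negativity on $(t,\infty)$. The only cosmetic difference is that you obtain uniqueness and positivity on $(0,t)$ from the chord characterization of strict concavity, whereas the paper reads both off from the unimodal shape of $f$ (increasing up to the critical point, decreasing thereafter); the two are equally valid and equally elementary.
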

\begin{proof}
	Since $f$ has a continuous gradient at $0$ with $f'(0)>0$, there exists $t_1>0$ such that $f'(x)>0$ for all $x\leq t_1$. We thus conclude that
	\begin{align*}
	f(x) >0 \; \forall x\in (0, t_1] 
	\end{align*}
	by the Fundamental theorem of Calculus. By the continuity of $f$ and the condition that $\lim_{x\to \infty} f(x) = -\infty$, there exists $t_2>0$ such that $f(t_2)<0$. Rolle's theorem tells us that there exists $t\in (t_1,t_2)$ such that $f(t)=0$. Since $f(0)=0$, the mean value theorem tells us that there exists $t_3 \in (0,t)$ such that $f'(t_3)=0$. By assumption $f$ is strictly decreasing derivative, we have $f'(x) \leq 0$ for all $x\geq t_3$ and $f'(x)>0$ for all $x\in (0,t_3)$. It follows that $f(x)$ is increasing on $(0,t_3)$ and it is decreasing on $(t_3,\infty)$.  The statement follows.
\end{proof}

\begin{lem}[Correspondence between EM and GD]
\label{lem:em_gd_mlr}
In the basic set up of the 2MLR problem, let $\mathcal{L}$ denote
the log-likelihood for the population 2MLR 
as follows:
\begin{align*}
\mathcal{L} & =\mathbb{E}_{X,y}\log(\sum_{z\in \left\{-1,1\right\}}f(X,y,z;\bm{\beta})).
\end{align*}

There is a correspondence between the gradient (with respect to $\bm{\beta}$)
of the log-likelihood and the EM operator:
\begin{align*}
\bm{\beta}' & =\bm{\beta}+\sigma^2\nabla_{\bm{\beta}}\mathcal{L}.
\end{align*}
Therefore, the set of fixed points of the population EM iterate $\bm{\beta}'$ is
equal to the set of stationary points of the population log-likelihood
of $\mathcal{L}$
\end{lem}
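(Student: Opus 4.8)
The plan is to compute $\nabla_{\bm{\beta}}\mathcal{L}$ directly from the mixture density and match it term by term against the population EM update~\eqref{eq:em_update_full}. First I would write out the per-sample marginal likelihood using the equal-weight prior on the latent label,
\[
f(X,y;\bm{\beta}) = \frac{1}{2\sqrt{2\pi\sigma^2}}\left[ \exp\!\Big(\!-\tfrac{(y-\langle\bm{\beta},X\rangle)^2}{2\sigma^2}\Big) + \exp\!\Big(\!-\tfrac{(y+\langle\bm{\beta},X\rangle)^2}{2\sigma^2}\Big) \right],
\]
so that $\mathcal{L} = \E_{X,y}\log f(X,y;\bm{\beta})$. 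Writing $u := \langle\bm{\beta},X\rangle$, I would differentiate $\log f$ through the chain rule, using $\nabla_{\bm{\beta}} u = X$, and thereby reduce the computation to the single scalar derivative $\partial_u \log f$.

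The crux is the algebraic simplification of $\partial_u \log f$. After differentiating, the numerator and denominator both carry the common factor $\exp(-(y^2+u^2)/(2\sigma^2))$; canceling it leaves a ratio of $e^{\pm yu/\sigma^2}$ terms that collapses exactly into
\[
\partial_u \log f = \frac{1}{\sigma^2}\Big( y\tanh\big(\tfrac{yu}{\sigma^2}\big) - u \Big).
\]
This is the step I expect to be the main obstacle---not because it is deep, but because it is the one place where the posterior weights $\PP(z=\pm1\mid X,y)$ must be recognized as combining into the hyperbolic tangent; everything else is bookkeeping. Multiplying by $\nabla_{\bm{\beta}}u = X$ gives the per-sample gradient, and taking expectations yields
\[
\nabla_{\bm{\beta}}\mathcal{L} = \frac{1}{\sigma^2}\E_{X,y}\Big[ y\tanh\big(\tfrac{y\langle\bm{\beta},X\rangle}{\sigma^2}\big)X \Big] - \frac{1}{\sigma^2}\E_{X}\big[\langle\bm{\beta},X\rangle X\big].
\]
Interchanging $\nabla_{\bm{\beta}}$ and $\E$ is justified by dominated convergence, since $|\tanh|\le 1$ makes the integrand dominated by $(|y|+|u|)\|X\|$, which has finite Gaussian expectation.

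Finally I would identify the two terms. The first expectation is precisely the population EM update $\bm{\beta}'$ in~\eqref{eq:em_update_full}, and since $X\sim\mathcal{N}(0,I)$ we have $\E_X[\langle\bm{\beta},X\rangle X]=\E_X[XX^\top]\bm{\beta}=\bm{\beta}$. Hence $\sigma^2\nabla_{\bm{\beta}}\mathcal{L} = \bm{\beta}' - \bm{\beta}$, which rearranges to the claimed identity $\bm{\beta}'=\bm{\beta}+\sigma^2\nabla_{\bm{\beta}}\mathcal{L}$. The concluding statement about stationary points is then immediate: $\bm{\beta}'=\bm{\beta}$ if and only if $\nabla_{\bm{\beta}}\mathcal{L}=\bm{0}$, so the fixed points of the EM operator coincide exactly with the stationary points of $\mathcal{L}$.
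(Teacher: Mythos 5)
Your proposal is correct and follows essentially the same route as the paper's proof: write the population log-likelihood of the symmetric two-component Gaussian mixture, differentiate so that the posterior weights collapse into $\tanh$, use $\E[XX^\top]=I$ to produce the $-\bm{\beta}$ term, and identify the remaining expectation with the population EM update. The paper simply states the resulting gradient formula $\nabla_{\bm{\beta}}\mathcal{L}=\frac{1}{\sigma^2}(-\bm{\beta}+\bm{\beta}')$ without showing the intermediate algebra, so your write-up is a more detailed version of the same argument.
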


\begin{proof}
The log-likelihood function $\mathcal{L}$ of the population  MLR with the optimal parameter $ \bm{\beta}^* $  is given by
\begin{align}
\mathcal{L}(\bm{\beta}) 
= & \mathbb{E}_{X,y}\left[\log \left(\frac{1}{2} \cdot \Phi \left(y; \langle X, \bm{\beta}\rangle,\sigma^2 \right) + \frac{1}{2} \cdot \Phi \left(y; -\langle X, \bm{\beta}\rangle,\sigma^2\right)\right) \right] \nonumber \\
=& \mathbb{E}_{X,y} \left[\log \left (\frac{1}{2\sqrt{2\pi\sigma^2}}\exp \left( -\frac{(y-\langle X,\bm{\beta}\rangle)^2}{2\sigma^2} \right) + \frac{1}{2\sqrt{2\pi\sigma^2}} \exp \left(-\frac{(y+\langle X,\bm{\beta}\rangle)^2}{2\sigma^2} \right) \right) \right],  \label{eq:log_pop}
\end{align} 
where $\Phi$ denotes the pdf for the Gaussian distribution. The gradient of the population log-likelihood functions with respect to $\bm{\beta}$ has the following expression:
\begin{align}
& \nabla_{\bm{\beta}} \mathcal{L} = \frac{1}{\sigma^2}\left[-\bm{\beta} + \mathbb{E}_{X,y} \left[ yX\tanh \left(\frac{y\langle X,\bm{\beta}\rangle}{\sigma^2}\right) \right]\right]= \frac{1}{\sigma^2}(-\bm{\beta}+\bm{\beta}'). \label{eq:grad_pop}
\end{align}
where the last line follows from \eqref{eq:em_update}.
\end{proof}

\begin{proposition}[Positive eigenvalue along $\bm{\beta}^*$]
\label{prop:pos_eig}
In the basic set of the MLR problem, the Hessian matrix of the population log-likelihood is
\[
\mathcal{H}=\frac{1}{\sigma^2}\left(-I +  \mathbb{E}_{X,y}\left[\frac{1}{\sigma^2}y^2XX^{T}\tanh' \left(\frac{y\langle X,\bm{\beta}\rangle}{\sigma^2}\right) \right] \right).
\]
Moreover, let $\hat{\bm{\beta}}_*$ be the unit vector in the direction of ${\bm{\beta}}_*$. The following holds for every fixed point $\bm{\beta}$ that is orthogonal to $\bm{\beta}^*$:
	\begin{align*}
	\langle \hat{\bm{\beta}}_*, \mathcal{H} \hat{\bm{\beta}}_* \rangle \geq \frac{1}{\sigma^2}\frac{\|\bm{\beta}^*\|^2}{\sigma^2+\|\bm{\beta}^*\|^2}.
	\end{align*}
\end{proposition}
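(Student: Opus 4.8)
The plan is to evaluate the quadratic form $\langle\hat{\bm{\beta}}_*,\mathcal{H}\hat{\bm{\beta}}_*\rangle$ in the orthonormal frame adapted to the fixed point and then feed in the fixed-point equation. Since $\bm{\beta}\perp\bm{\beta}^*$, I would take $\bm{v}_1=\bm{\beta}/\|\bm{\beta}\|$ and $\bm{v}_2=\hat{\bm{\beta}}_*$, which are orthogonal, and set $\alpha_1=\langle X,\bm{v}_1\rangle$, $\alpha_2=\langle X,\bm{v}_2\rangle$, so that $\langle X,\bm{\beta}\rangle=b_1\alpha_1$ and, since the integrand is even in $y$, we may represent $y=\|\bm{\beta}^*\|\alpha_2+e$ with $e\sim\mathcal{N}(0,\sigma^2)$ independent of $\alpha_1$. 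Plugging $\hat{\bm{\beta}}_*=\bm{v}_2$ into the Hessian formula gives
\[
\langle\hat{\bm{\beta}}_*,\mathcal{H}\hat{\bm{\beta}}_*\rangle=\frac{1}{\sigma^2}\left(-1+\frac{1}{\sigma^2}\,\mathbb{E}\Big[y^2\alpha_2^2\,\tanh'\big(\tfrac{b_1\alpha_1}{\sigma^2}y\big)\Big]\right),
\]
so the entire statement reduces to a lower bound on this single scalar expectation.

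Next I would extract the content of the hypothesis that $\bm{\beta}$ is a fixed point. Here $b_1^*=0$, $b_2^*=\|\bm{\beta}^*\|$, and $\sigma_2^2=\sigma^2+\|\bm{\beta}^*\|^2$; by Lemma~\ref{lem:em_update_summary} we have $b_1'=R$, and the fixed-point relation $b_1'=b_1=E(\bm{v}_1)$ from Lemma~\ref{lem:dynamic_along} unwinds, using the expression \eqref{eq:s_r_expression} for $R$, into the identity $\mathbb{E}[\alpha_1^2\tanh'(\tfrac{b_1\alpha_1}{\sigma^2}y)]=\sigma^2/\sigma_2^2$, with $\alpha_1\sim\mathcal{N}(0,1)$ and $y\sim\mathcal{N}(0,\sigma_2^2)$ independent. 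The key observation is that $\tanh'(\tfrac{b_1}{\sigma^2}\alpha_1 y)$ depends on $(\alpha_1,y)$ only through the product $\alpha_1 y$, so $\alpha_1\partial_{\alpha_1}$ and $y\partial_y$ act identically on it. Running this Euler relation through Stein's lemma on $\alpha_1$ (variance $1$) and on $y$ (variance $\sigma_2^2$) yields the symmetry $\mathbb{E}[y^2\tanh'(\cdots)]=\sigma_2^2\,\mathbb{E}[\alpha_1^2\tanh'(\cdots)]$, and hence the clean value $\mathbb{E}[y^2\tanh'(\tfrac{b_1\alpha_1}{\sigma^2}y)]=\sigma^2$.

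The remaining step is to handle the correlation between $\alpha_2$ and $y$. Because $(\alpha_2,y)$ is jointly Gaussian with $y=\|\bm{\beta}^*\|\alpha_2+e$, I would condition on $y$ (and on $\alpha_1$, which is independent of $(\alpha_2,e)$) through $\mathbb{E}[\alpha_2^2\mid y]=\tfrac{\sigma^2}{\sigma_2^2}+\tfrac{\|\bm{\beta}^*\|^2}{\sigma_2^4}y^2$. Substituting this and using the identity above collapses the target into
\[
\frac{1}{\sigma^2}\mathbb{E}\Big[y^2\alpha_2^2\tanh'(\cdots)\Big]=\frac{\sigma^2}{\sigma_2^2}+\frac{\|\bm{\beta}^*\|^2}{\sigma^2\sigma_2^4}\,\mathbb{E}\Big[y^4\tanh'\big(\tfrac{b_1\alpha_1}{\sigma^2}y\big)\Big],
\]
and, after returning to the first display and using $\sigma_2^2-\sigma^2=\|\bm{\beta}^*\|^2$, the claimed bound $\langle\hat{\bm{\beta}}_*,\mathcal{H}\hat{\bm{\beta}}_*\rangle\ge\tfrac{1}{\sigma^2}\tfrac{\|\bm{\beta}^*\|^2}{\sigma_2^2}$ becomes equivalent to the fourth-moment inequality $\mathbb{E}[y^4\tanh'(\tfrac{b_1\alpha_1}{\sigma^2}y)]\ge 2\sigma^2\sigma_2^2$.

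The hard part will be this fourth-moment inequality, and I expect it to be where the fixed-point constraint is used essentially rather than pointwise. Conditioned on a single large value of $\alpha_1$ the inequality actually fails, because $\tanh'$ is then sharply peaked at $y=0$ and forces the conditional fourth moment below $2\sigma^2\sigma_2^2$; moreover a naive Cauchy--Schwarz/Jensen estimate through $\mathbb{E}[\tanh'(\cdots)]$ is too weak, since the fixed-point identity only \emph{lower}-bounds $\mathbb{E}[\tanh'(\cdots)]$. What rescues the bound is the average over $\alpha_1$ combined with the specific value of $b_1=E(\bm{v}_1)$ pinned down in the second paragraph: the region $\alpha_1\approx 0$, where $\tanh'\approx 1$ and the conditional fourth moment is $\approx 3\sigma_2^4$, carries mass of the same order $\sigma^2\sigma_2^2$ as the target and supplies the bulk of the contribution. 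I would therefore prove the inequality by splitting the $\alpha_1$-integral at the scale $|\alpha_1|\sim\sigma^2/(b_1\sigma_2)$ and estimating each piece by Gaussian integrals of $\tanh'$, using the fixed-point identity to control $b_1$; the constant $2$ is deliberately loose (the true ratio is roughly $3$ at low SNR and larger at high SNR), which leaves room across both regimes.
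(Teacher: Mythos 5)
Your reduction is sound up to its final step: the choice of frame, the extraction from the fixed-point equation (via Lemma \ref{lem:em_update_summary} and Stein's lemma) of the identities $\mathbb{E}[\alpha_1^2\tanh'(\tfrac{b_1\alpha_1}{\sigma^2}y)]=\sigma^2/\sigma_2^2$ and $\mathbb{E}[y^2\tanh'(\tfrac{b_1\alpha_1}{\sigma^2}y)]=\sigma^2$, the conditional-moment computation $\mathbb{E}[\alpha_2^2\mid y]=\tfrac{\sigma^2}{\sigma_2^2}+\tfrac{\|\bm{\beta}^*\|^2}{\sigma_2^4}y^2$, and the resulting equivalence of the proposition with the fourth-moment inequality $\mathbb{E}[y^4\tanh'(\tfrac{b_1\alpha_1}{\sigma^2}y)]\ge 2\sigma^2\sigma_2^2$ are all correct. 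The genuine gap is that this inequality is never proved, and the plan you sketch for it rests on a premise that is quantitatively backwards. Two further applications of Stein's lemma (one in $y$, variance $\sigma_2^2$, one in $\alpha_1$; the two $\tanh''$ cross terms are identical and cancel) give the \emph{exact} identity
\begin{equation*}
\mathbb{E}\left[y^4\tanh'\left(\tfrac{b_1\alpha_1}{\sigma^2}y\right)\right]
=2\sigma_2^2\,\mathbb{E}\left[y^2\tanh'\left(\tfrac{b_1\alpha_1}{\sigma^2}y\right)\right]
+\sigma_2^2\,\mathbb{E}\left[\alpha_1^2y^2\tanh'\left(\tfrac{b_1\alpha_1}{\sigma^2}y\right)\right],
\end{equation*}
so the ratio you discuss equals $2+\mathbb{E}[\alpha_1^2y^2\tanh']/\mathbb{E}[y^2\tanh']$. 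This is $3$ at $\eta=0$ but tends to $2$ as $\eta\to\infty$, not ``larger at high SNR'': the fixed-point constraint $\mathbb{E}[y^2\tanh']=\sigma^2$ forces the argument of $\tanh'$ onto scales $\gg 1$ when $\eta$ is large, and since $u^2\tanh'(u)$ is uniformly bounded, $\mathbb{E}[\alpha_1^2y^2\tanh']\le \tfrac{\sigma^4}{b_1^2}\sup_u u^2\tanh'(u)$ becomes negligible relative to $\mathbb{E}[y^2\tanh']$. Your constant $2$ is therefore asymptotically \emph{tight} in the high-SNR regime, which is exactly where a splitting-and-estimating argument that gives away constant factors must fail; to succeed, your hard-analysis route would have to be asymptotically exact, i.e., reproduce the identity above.

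The same identity closes your proof in two lines and shows that the correct completion coincides with the paper's argument: the surplus term is nonnegative since $\tanh'\ge 0$, and the fixed-point relation $\mathbb{E}[y^2\tanh']=\sigma^2$ then gives $\mathbb{E}[y^4\tanh']\ge 2\sigma^2\sigma_2^2$. The paper performs these Stein manipulations before, rather than after, your conditioning step: it writes $\langle\hat{\bm{\beta}}_*,\mathcal{H}\hat{\bm{\beta}}_*\rangle=A+B$, evaluates $B=\tfrac{1}{\sigma^2}\tfrac{\|\bm{\beta}^*\|^2}{\sigma_2^2}$ exactly (Lemma \ref{lem:lowerbound}), which plays the role of your ``$2$'', and shows $A\ge 0$ (Lemma \ref{lem:positerm}), where $A$ is, up to prefactors, precisely your surplus term $\mathbb{E}[\alpha_1^2y^2\tanh']$. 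So the fix is to replace the planned integral splitting with these Stein identities; as written, the proposal has a hole at its decisive step. (A minor additional point: you take the displayed formula for $\mathcal{H}$ as given, but the proposition also asserts it; it follows by differentiating the EM--gradient correspondence of Lemma \ref{lem:em_gd_mlr}.)
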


\begin{proof}
Using the correspondence between the population EM update and gradient of the log-likelihood function of MLR,$\mathcal{L}$ (cf. Lemma \ref{lem:em_gd_mlr}):
\[
\nabla_{\bm{\beta}}\mathcal{L} = \frac{1}{\sigma^2}(\bm{\beta}'- \bm{\beta}).
\]
Hence, the Hessian matrix is:
\[
\mathcal{H} = \frac{1}{\sigma^2}(-I + \nabla_{\bm{\beta}} \bm{\beta}').
\]
Recall the EM update:
\begin{align}
\bm{\beta}'= & \mathbb{E}_{X\sim \mathcal{N}(0,I)}\left[\mathbb{E}_{y|X\sim \mathcal{N}(\langle X,\bm{\beta}^*\rangle,\sigma^2)} \left[yX\tanh \left(\frac{y\langle X,\bm{\beta}\rangle}{\sigma^2}\right)\right] \right]\nonumber \\
= & \sigma \mathbb{E}_{X\sim \mathcal{N}(0,I)}\left[\mathbb{E}_{y|X\sim \mathcal{N}(\langle X,\frac{\bm{\beta}^*}{\sigma}\rangle,1)} \left[yX\tanh \left(y\langle X,\frac{\bm{\beta}}{\sigma}\rangle \right)\right]\right] \; (\text{rescaling}) \label{eq:sigma1_update}.
\end{align}
The gradient with respect to $\bm{\beta}$ is:
\begin{align}
\nabla_{\bm{\beta}'} \bm{\beta}=&\mathbb{E}_{X,y}\left[\frac{1}{\sigma^2}y^2XX^{T}\tanh' \left(\frac{y\langle X,\bm{\beta}\rangle}{\sigma^2}\right) \right] \nonumber \\ 
=&\mathbb{E}_{X\sim \mathcal{N}(0,I)}\left[\mathbb{E}_{y|X\sim \mathcal{N}(\langle X,\frac{\bm{\beta}^*}{\sigma}\rangle,1)} \left[y^2XX^{\top}\tanh'\left(y\langle X,\frac{\bm{\beta}}{\sigma}\rangle\right)\right]\right] \; (\text{rescaling}) \label{eq:sigma1_hessian}.
\end{align}
The first part of the claim is proved. For the second part of the claim, it suffices to prove the case for $\sigma =1$ due to the equivalent representation by rescaling in \eqref{eq:sigma1_update} and \eqref{eq:sigma1_hessian}. If we can show the following relation 
\begin{equation}
\label{eq:sigma1_case}
\langle \hat{\bm{\beta}}_*,\mathcal{H}\hat{\bm{\beta}}_*\rangle \geq  \frac{\|\bm{\beta}_*\|^2}{\|\bm{\beta}_*\|^2+1}
\end{equation}
holds when $\sigma=1$, we can easily conclude that for general $\sigma$,
\[
\langle \hat{\bm{\beta}}_*,\mathcal{H}\hat{\bm{\beta}}_*\rangle \geq  \frac{1}{\sigma^2}\frac{\|\bm{\beta}_*\|^2}{\|\bm{\beta}_*\|^2+\sigma^2}.
\]
In the following, our effort is devote to proving \eqref{eq:sigma1_case} assuming $\sigma=1$. The EM update is now simplified to the following:
\[
\bm{\beta}'= \mathbb{E}_{X\sim \mathcal{N}(0,I)} \left[\mathbb{E}_{y|X\sim \mathcal{N}(\langle X,\bm{\beta}^*\rangle,1)} \left[yX\tanh \left(y\langle X,\bm{\beta}\rangle \right)\right]\right].
\]
As before, we use the following orthonormal basis with $\bm{v}_1=\hat{\bm{\beta}}$ and $\bm{v}_2=\hat{\bm{\beta}^*}$, where $\hat{\bm{\beta}}$ is the unit vector of $\bm{\beta}$ and $\hat{\bm{\beta}^*}$ is the unit vector of $\bm{\beta}^*$. Note that $\bm{v}_1\perp \bm{v}_2$ because we assume $\bm{\beta} \perp \bm{\beta}_*$.  Since $\bm{\beta}$ is a fixed point, it follows that $\bm{\beta} = \mathbb{E}_{X,y} yX\tanh(y\langle X,\bm{\beta}\rangle)$. A necessary condition is:
\begin{equation}
\label{eq:fixed_point_condition}
b_1= \|\bm{\beta}\| = \mathbb{E}_{\alpha_2,\alpha_1,\epsilon}\left(\|\bm{\beta}^*\|\alpha_2+\epsilon\right)\alpha_1\tanh\left(\|\bm{\beta}\|(\|\bm{\beta}^*\|\alpha_2+\epsilon)\alpha_1\right)=b_1',
\end{equation}
where we integrate over $\alpha_1\sim \mathcal{N}(0,1)$, $\alpha_2\sim \mathcal{N}(0,1)$, $\epsilon \sim \mathcal{N}(0,1)$. Using Stein's Lemma for $b_1'$ with respect to $\alpha_1$, we have 
\begin{align}
 & \mathbb{E}_{\alpha_2,\alpha_1,\epsilon}\left[\|\bm{\beta}^*\|\alpha_2+\epsilon)\alpha_1\tanh\left(\|\bm{\beta}\|(\|\bm{\beta}^*\|\alpha_2+\epsilon)\alpha_1\right)\right] \nonumber\\
 = & \mathbb{E}_{\alpha_2,\alpha_1,\epsilon}\nabla_{\alpha_1}\left[(\|\bm{\beta}^*\|\alpha_2+\epsilon)\tanh(\|\bm{\beta}\|(\|\bm{\beta}^*\|\alpha_2+\epsilon)\alpha_1)\right] \nonumber \\
 = & \|\bm{\beta}\|\mathbb{E}_{\alpha_2,\alpha_1,\epsilon}\left[(\|\bm{\beta}^*\|\alpha_2+\epsilon)^2\tanh'(\|\bm{\beta}\|(\|\bm{\beta}^*\|\alpha_2+\epsilon)\alpha_1)\right] \label{eq:first_relation_sub1}.
\end{align}
We obtain a first relation by substituting \eqref{eq:first_relation_sub1} back to \eqref{eq:fixed_point_condition}:
\begin{align}
\label{eq:relation1}
    1 = \mathbb{E}_{\alpha_2,\alpha_1,\epsilon}[(\|\bm{\beta}^*\|\alpha_2+\epsilon)^2\tanh'(\|\bm{\beta}\|(\|\bm{\beta}^*\|\alpha_2+\epsilon)\alpha_1)].
\end{align}
Note that we can write $\|\bm{\beta}^*\|\alpha_2+\epsilon=\sqrt{1+\|\bm{\beta}^*\|^2}Z$ for $Z\sim \mathcal{N}(0,1)$, where the equality holds in the distribution sense. We can apply Stein's Lemma for $b_1'$ again with respect to $Z$:
\begin{align}
& \mathbb{E}_{\alpha_2,\alpha_1,\epsilon}(\|\bm{\beta}^*\|\alpha_2+\epsilon)\alpha_1\tanh(\|\bm{\beta}\|(\|\bm{\beta}^*\|\alpha_2+\epsilon)\alpha_1) \nonumber\\
= & \mathbb{E}_{\alpha_2,Z}\sqrt{1+\|\bm{\beta}^*\|^2}Z \alpha_1\tanh(\|\bm{\beta}\|\sqrt{1+\|\bm{\beta}^*\|^2}Z \alpha_1) \nonumber\\
= & \sqrt{1+\|\bm{\beta}^*\|^2} \mathbb{E}_{\alpha_2,Z} \nabla_{Z}[\alpha_1\tanh(\|\bm{\beta}\|\sqrt{1+\|\bm{\beta}^*\|^2}Z \alpha_1)] \nonumber\\
= & \|\bm{\beta}\|(1+\|\bm{\beta}^*\|^2)\mathbb{E}_{\alpha_2,Z}[\alpha_1^2\tanh'(\|\bm{\beta}\|\sqrt{1+\|\bm{\beta}^*\|^2}Z \alpha_1)] \nonumber \\
= & \|\bm{\beta}\|(1+\|\bm{\beta}^*\|^2)\mathbb{E}_{\alpha_2,\alpha_1,\epsilon}[\alpha_1^2\tanh'(\|\bm{\beta}\|(\|\bm{\beta}^*\|\alpha_2+\epsilon) \alpha_1)] \label{eq:relation2_sub1}.
\end{align}
We thus obtain a second relation by substituting \eqref{eq:relation2_sub1} back to \eqref{eq:fixed_point_condition}:
\begin{align}
\label{eq:relation2}
    1 = (1+\|\bm{\beta}^*\|^2)\mathbb{E}_{\alpha_2,\alpha_1,\epsilon}[\alpha_1^2\tanh'(\|\bm{\beta}\|(\|\bm{\beta}^*\|\alpha_2+\epsilon) \alpha_1)].
\end{align}
The quantity of interest is the following:
\begin{equation}
\label{eq:quantity_of_interest}
 \langle \bm{v}_2,\mathcal{H}\bm{v}_2\rangle
=  -1 + \mathbb{E}_{\alpha_2,\alpha_1,\epsilon} \left[\alpha_2^2(\|\bm{\beta}^*\|\alpha_2+\epsilon)^2 \tanh' \left(\|\bm{\beta}\|\alpha_1(\|\bm{\beta}^*\|\alpha_2+\epsilon) \right) \right]. 
\end{equation}
Let us apply Stein's Lemma with respect to $\alpha_2$ to simplify the expression in \eqref{eq:quantity_of_interest}:
\begin{align}
  & -1 + \mathbb{E}_{\alpha_2,\alpha_1,\epsilon} \left[\alpha_2^2(\|\bm{\beta}^*\|\alpha_2+\epsilon)^2 \tanh' \left(\|\bm{\beta}\|\alpha_1(\|\bm{\beta}^*\|\alpha_2+\epsilon) \right) \right] \nonumber \\
	=& -1 + \mathbb{E}_{\alpha_2,\alpha_1,\epsilon} \left[\nabla_{\alpha_2}\left[\alpha_2(\|\bm{\beta}^*\|\alpha_2+\epsilon)^2\tanh' \left(\|\bm{\beta}\|\alpha_1(\|\bm{\beta}^*\|\alpha_2+\epsilon)\right) \right] \right] \nonumber \\
	= & -1 + \mathbb{E}_{\alpha_2,\alpha_1,\epsilon} \left[(\|\bm{\beta}^*\|\alpha_2+\epsilon)^2\tanh' \left(\|\bm{\beta}\|\alpha_1(\|\bm{\beta}^*\|\alpha_2+\epsilon) \right) \right]  \nonumber \\
	& + 2\|\bm{\beta}^*\| \mathbb{E}_{\alpha_2,\alpha_1,\epsilon} \left[\alpha_2(\|\bm{\beta}^*\|\alpha_2+\epsilon)\tanh' \left(\|\bm{\beta}\|\alpha_1(\|\bm{\beta}^*\|\alpha_2+\epsilon)\right) \right] \nonumber \\
	&+\|\bm{\beta}^*\|\|\bm{\beta}\| \mathbb{E}_{\alpha_2,\alpha_1,\epsilon} \left[\alpha_2\alpha_1(\|\bm{\beta}^*\|\alpha_2+\epsilon)^2 \tanh'' \left(\|\bm{\beta}\|\alpha_1(\|\bm{\beta}^*\|\alpha_2+\epsilon)\right)\right] \nonumber \\
	= & 2\|\bm{\beta}^*\| \mathbb{E}_{\alpha_2,\alpha_1,\epsilon} \left[\alpha_2(\|\bm{\beta}^*\|\alpha_2+\epsilon)\tanh' \left(\|\bm{\beta}\|\alpha_1(\|\bm{\beta}^*\|\alpha_2+\epsilon)\right)\right] \nonumber \\
	&+\|\bm{\beta}^*\|\|\bm{\beta}\| \mathbb{E}_{\alpha_2,\alpha_1,\epsilon} \left[\alpha_2\alpha_1(\|\bm{\beta}^*\|\alpha_2+\epsilon)^2 \tanh'' \left(\|\bm{\beta}\|\alpha_1(\|\bm{\beta}^*\|\alpha_2+\epsilon)\right)\right], \label{eq:ss3}
\end{align}
where \eqref{eq:ss3} follows from relation \eqref{eq:relation1}. In addition, if we use Stein's Lemma again for the following expression with respect to $\alpha_1$, we obtain: 
\begin{align}
&\mathbb{E}_{\alpha_2,\alpha_1,\epsilon} \left[\alpha_2\alpha_1^2(\|\bm{\beta}^*\|\alpha_2+\epsilon) \tanh' \left(\|\bm{\beta}\|\alpha_1(\|\bm{\beta}^*\|\alpha_2+\epsilon)\right)\right] \nonumber \\
= & \mathbb{E}_{\alpha_2,\alpha_1,\epsilon} \nabla_{\alpha_1}\left[\alpha_2\alpha_1(\|\bm{\beta}^*\|\alpha_2+\epsilon) \tanh' \left(\|\bm{\beta}\|\alpha_1(\|\bm{\beta}^*\|\alpha_2+\epsilon)\right)\right] \nonumber\\
= & \mathbb{E}_{\alpha_2,\alpha_1,\epsilon} \left[\alpha_2(\|\bm{\beta}^*\|\alpha_2+\epsilon) \tanh' \left(\|\bm{\beta}\|\alpha_1(\|\bm{\beta}^*\|\alpha_2+\epsilon)\right)\right] \nonumber\\
&+\|\bm{\beta}\|\mathbb{E}_{\alpha_2,\alpha_1,\epsilon}\left[\alpha_2\alpha_1(\|\bm{\beta}^*\|\alpha_2+\epsilon)^2 \tanh'' \left(\|\bm{\beta}\|\alpha_1(\|\bm{\beta}^*\|\alpha_2+\epsilon)\right)\right] \label{eq:relation3}.
\end{align}
Substitute this relation \eqref{eq:relation3} back to \eqref{eq:ss3}, we have
\begin{align}
    & \langle \bm{v}_2,\mathcal{H}\bm{v}_2 \rangle \nonumber \\
    = & \underbrace{\|\bm{\beta}^*\|\mathbb{E}_{\alpha_2,\alpha_1,\epsilon} \left[\alpha_2\alpha_1^2(\|\bm{\beta}^*\|\alpha_2+\epsilon) \tanh' \left(\|\bm{\beta}\|\alpha_1(\|\bm{\beta}^*\|\alpha_2+\epsilon)\right)\right]}_{A} \nonumber \\
    &+\underbrace{\|\bm{\beta}^*\| \mathbb{E}_{\alpha_2,\alpha_1,\epsilon} \left[\alpha_2(\|\bm{\beta}^*\|\alpha_2+\epsilon)\tanh' \left(\|\bm{\beta}\|\alpha_1(\|\bm{\beta}^*\|\alpha_2+\epsilon)\right)\right]}_{B}.
\end{align}
We bound $A$ and $B$ separately. In Lemma \ref{lem:positerm}, we show $A$ is non-negative. In Lemma \ref{lem:lowerbound}, we show $B$ is at least $\frac{\|\bm{\beta}^*\|^2}{1+\|\bm{\beta}^*\|^2}$, thus completing the proof.
\end{proof}

\begin{lem}[Bounding $A$]
	\label{lem:positerm}
	We have
	\begin{align*}
	& \|\bm{\beta}^*\|\mathbb{E}_{\alpha_2,\alpha_1,\epsilon} \left[\alpha_2\alpha_1^2(\|\bm{\beta}^*\|\alpha_2+\epsilon) \tanh' \left(\|\bm{\beta}\|\alpha_1(\|\bm{\beta}^*\|\alpha_2+\epsilon)\right) \right] \\
	= & \frac{\|\bm{\beta}^*\|^2}{1+\|\bm{\beta}^*\|^2}\mathbb{E}_{\alpha_2,\alpha_1,\epsilon} \left[\alpha_1^2(\|\bm{\beta}^*\|\alpha_2+\epsilon)^2\tanh' \left(\|\bm{\beta}\|\alpha_1(\|\bm{\beta}^*\|\alpha_2+\epsilon)\right)\right].
	\end{align*}
\end{lem}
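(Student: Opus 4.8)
The plan is to exploit the joint Gaussianity of the integration variables rather than to push through another application of Stein's lemma, which would only reintroduce an unwanted $\tanh''$ term. First I would introduce the shorthand $W := \|\bm{\beta}^*\|\alpha_2 + \epsilon$, so that both sides become compact: the left-hand side is $\|\bm{\beta}^*\|\,\mathbb{E}[\alpha_2\alpha_1^2 W \tanh'(\|\bm{\beta}\|\alpha_1 W)]$ and the right-hand side is $\frac{\|\bm{\beta}^*\|^2}{1+\|\bm{\beta}^*\|^2}\,\mathbb{E}[\alpha_1^2 W^2 \tanh'(\|\bm{\beta}\|\alpha_1 W)]$. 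Since $\alpha_1,\alpha_2,\epsilon$ are independent standard Gaussians, we have $W \sim \mathcal{N}(0, 1+\|\bm{\beta}^*\|^2)$, and $\alpha_1$ is independent of $(\alpha_2,\epsilon)$ and hence of $W$. The crucial observation is that, apart from the single explicit factor $\alpha_2$, the entire integrand depends on the randomness only through the pair $(\alpha_1, W)$.

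Next I would compute the regression of $\alpha_2$ onto $W$. Because $(\alpha_2, W)$ are jointly Gaussian with $\mathrm{Cov}(\alpha_2, W) = \|\bm{\beta}^*\|$ and $\mathrm{Var}(W) = 1+\|\bm{\beta}^*\|^2$, I would write $\alpha_2 = cW + \xi$ with $c := \frac{\|\bm{\beta}^*\|}{1+\|\bm{\beta}^*\|^2}$ and $\xi := \alpha_2 - cW$. A one-line covariance check gives $\mathrm{Cov}(\xi, W)=0$, and since $(\xi, W)$ is jointly Gaussian this forces $\xi$ to be \emph{independent} of $W$; moreover $\xi$ is independent of $\alpha_1$ and has mean zero. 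Substituting $\alpha_2 = cW + \xi$ into the left-hand expectation and using that $\xi$ is independent of the function $\alpha_1^2 W \tanh'(\|\bm{\beta}\|\alpha_1 W)$ of $(\alpha_1, W)$, the $\xi$-contribution factors as $\mathbb{E}[\xi]\cdot \mathbb{E}[\alpha_1^2 W \tanh'(\|\bm{\beta}\|\alpha_1 W)] = 0$.

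What remains is exactly $\|\bm{\beta}^*\|\,c\,\mathbb{E}[\alpha_1^2 W^2 \tanh'(\|\bm{\beta}\|\alpha_1 W)] = \frac{\|\bm{\beta}^*\|^2}{1+\|\bm{\beta}^*\|^2}\,\mathbb{E}[\alpha_1^2 W^2 \tanh'(\|\bm{\beta}\|\alpha_1 W)]$, which is the claimed identity once $W$ is expanded back to $\|\bm{\beta}^*\|\alpha_2+\epsilon$. All expectations are finite because $\tanh'$ is bounded by $1$ while the remaining polynomial factors have finite Gaussian moments, so every interchange above is legitimate. I do not expect a genuine obstacle: the whole content is the tower identity $\mathbb{E}[\alpha_2 \mid W] = cW$ for jointly Gaussian variables, and the only point requiring a line of care is verifying that the residual $\xi$ is truly independent of $(\alpha_1, W)$ rather than merely uncorrelated — which holds precisely because of Gaussianity.
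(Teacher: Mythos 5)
Your proof is correct, but it takes a genuinely different route from the paper's. The paper proves Lemma~\ref{lem:positerm} by applying Stein's lemma twice --- once with respect to $\alpha_2$ and once with respect to the aggregated Gaussian $z_1 = (\|\bm{\beta}^*\|\alpha_2+\epsilon)/\sqrt{1+\|\bm{\beta}^*\|^2}$ --- which produces intermediate $\tanh''$ terms, and it invokes the fixed-point relation \eqref{eq:relation2} (i.e., $\mathbb{E}[\alpha_1^2\tanh'(\cdot)]=\tfrac{1}{1+\|\bm{\beta}^*\|^2}$, valid because $\bm{\beta}$ is assumed to be a fixed point of the population EM orthogonal to $\bm{\beta}^*$) to pin down the constant terms; the $\tanh''$ contributions then cancel between the two Stein computations. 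You instead use the Gaussian regression decomposition $\alpha_2 = \tfrac{\|\bm{\beta}^*\|}{1+\|\bm{\beta}^*\|^2}W + \xi$ with $W=\|\bm{\beta}^*\|\alpha_2+\epsilon$, where $\xi$ is mean-zero and independent of the pair $(\alpha_1, W)$ (your justification of joint independence --- $\xi, W$ are functions of $(\alpha_2,\epsilon)$ alone, $\alpha_1$ is independent of both, and $\xi\perp W$ by uncorrelatedness plus joint Gaussianity --- is the one point that needed care, and you handled it). Your argument buys three things: it never produces a $\tanh''$ term; it is elementary (just the conditional expectation formula for jointly Gaussian variables); and, notably, it shows the identity holds for \emph{arbitrary} $\bm{\beta}$ and indeed for any bounded measurable function in place of $\tanh'$, whereas the paper's write-up makes it look contingent on the fixed-point hypothesis (in fact the two uses of \eqref{eq:relation2} in the paper's proof cancel, so that hypothesis is not really needed there either, but the paper's presentation obscures this). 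What the paper's route buys in exchange is uniformity with the surrounding machinery: relations \eqref{eq:relation1}--\eqref{eq:relation2} are already on the table in the proof of Proposition~\ref{prop:pos_eig}, and the companion Lemma~\ref{lem:lowerbound} (bounding $B$) genuinely does require the fixed-point relation, so the paper dispatches both terms with the same Stein toolkit; your regression trick alone would not suffice for that second lemma.
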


\begin{proof}
Apply Stein's lemma with respect to $\alpha_2$ to the left hand side of the equation:
	\begin{align}
	&\|\bm{\beta}^*\|\mathbb{E}_{\alpha_2,\alpha_1,\epsilon} \left[\alpha_2\alpha_1^2(\|\bm{\beta}^*\|\alpha_2+\epsilon) \tanh' \left(\|\bm{\beta}\|\alpha_1(\|\bm{\beta}^*\|\alpha_2+\epsilon)\right)\right] \nonumber \\
	= & \|\bm{\beta}^*\|\mathbb{E}_{\alpha_2,\alpha_1,\epsilon} \left[\nabla_{\alpha_2} \left[\alpha_1^2(\|\bm{\beta}^*\|\alpha_2+\epsilon) \tanh' \left(\|\bm{\beta}\|\alpha_1(\|\bm{\beta}^*\|\alpha_2+\epsilon)\right)\right] \right]\nonumber \\
	= & \|\bm{\beta}^*\|^2 \mathbb{E}_{\alpha_2,\alpha_1,\epsilon} \left[\alpha_1^2 \tanh' \left(\|\bm{\beta}\|\alpha_1(\|\bm{\beta}^*\|\alpha_2+\epsilon)\right) \right] \nonumber \\
	& +\|\bm{\beta}\|\|\bm{\beta}^*\|^2 \mathbb{E}_{\alpha_2,\alpha_1,\epsilon}\left[\alpha_1^3(\|\bm{\beta}^*\|\alpha_2+\epsilon)\tanh'' \left(\|\bm{\beta}\|\alpha_1(\|\bm{\beta}^*\|\alpha_2+\epsilon)\right)\right] \nonumber\\
	= & \frac{\|\bm{\beta}^*\|^2}{1+\|\bm{\beta}^*\|^2}+ \|\bm{\beta}\|\|\bm{\beta}^*\|^2 \mathbb{E}_{\alpha_2,\alpha_1,\epsilon} \left[\alpha_1^3(\|\bm{\beta}^*\|\alpha_2+\epsilon)\tanh'' \left(\|\bm{\beta}\|\alpha_1(\|\bm{\beta}^*\|\alpha_2+\epsilon)\right)\right] \label{eq:step1}.
	\end{align}
	In \eqref{eq:step1}, we use relation \eqref{eq:relation2} for the first summand. Let us rewrite 
	\begin{align*}
	\|\bm{\beta}^*\|\alpha_2+\epsilon = \sqrt{\|\bm{\beta}^*\|^2+1}z_1,
	\end{align*}
	where $z_1\sim N(0,1)$ and $z_1$ is independent of $\alpha_1$. The following relation holds by applying Stein's lemma:
	\begin{align}
	&\mathbb{E}_{\alpha_2,\alpha_1,\epsilon} \left[\alpha_1^2(\|\bm{\beta}^*\|\alpha_2+\epsilon)^2\tanh' \left(\|\bm{\beta}\|\alpha_1(\|\bm{\beta}^*\|\alpha_2+\epsilon) \right) \right] \nonumber \\
	= & (\|\bm{\beta}^*\|^2+1) \mathbb{E}_{z_1,\alpha_1} \left[\alpha_1^2z_1^2 \tanh' \left(\|\bm{\beta}\|\sqrt{\|\bm{\beta}^*\|^2+1}\alpha_1z_1\right)\right]  \nonumber \\
	= & (\|\bm{\beta}^*\|^2+1) \mathbb{E}_{z_1,\alpha_1} \left[\nabla_{z_1}\left[\alpha_1^2z_1 \tanh' \left(\|\bm{\beta}\|\sqrt{\|\bm{\beta}^*\|^2+1}\alpha_1z_1\right) \right]\right]  \nonumber \\
	=& (\|\bm{\beta}^*\|^2+1) \mathbb{E}_{z_1,\alpha_1} \left[\alpha_1^2\tanh' \left(\|\bm{\beta}\|\sqrt{\|\bm{\beta}^*\|^2+1}\alpha_1z_1\right)\right] \nonumber \\
	& + \|\bm{\beta}\|(\|\bm{\beta}^*\|^2+1)^{1.5} \mathbb{E}_{z_1,\alpha_1} \left[\alpha_1^3z_1\tanh'' \left(\|\bm{\beta}\|\sqrt{\|\bm{\beta}^*\|^2+1}\alpha_1z_1\right)\right] \nonumber \\
	= & 1+\|\bm{\beta}\|(\|\bm{\beta}^*\|^2+1)\mathbb{E}_{\alpha_2,\alpha_1} \left[\alpha_1^3(\|\bm{\beta}^*\|\alpha_2+\epsilon)\tanh'' \left(\|\bm{\beta}\|\alpha_1(\|\bm{\beta}^*\|\alpha_2+\epsilon)\right) \right].\label{eq:step2}
	\end{align}
	Substituting equation \eqref{eq:step2} in \eqref{eq:step1}, we have:
	\begin{align*}
	& \|\bm{\beta}^*\|\mathbb{E}_{\alpha_2,\alpha_1,\epsilon} \left[\alpha_2\alpha_1^2(\|\bm{\beta}^*\|\alpha_2+\epsilon) \tanh' \left(\|\bm{\beta}\|\alpha_1(\|\bm{\beta}^*\|\alpha_2+\epsilon)\right)\right] \\
	= &\frac{\|\bm{\beta}^*\|^2}{1+\|\bm{\beta}^*\|^2}+\frac{\|\bm{\beta}^*\|^2}{1+\|\bm{\beta}^*\|^2}\left[\mathbb{E}_{\alpha_2,\alpha_1,\epsilon} \left[\alpha_1^2(\|\bm{\beta}^*\|\alpha_2+\epsilon)^2\tanh' \left(\|\bm{\beta}\|\alpha_1(\|\bm{\beta}^*\|\alpha_2+\epsilon) \right)\right]-1\right] \\
	= & \frac{\|\bm{\beta}^*\|^2}{1+\|\bm{\beta}^*\|^2}\mathbb{E}_{\alpha_2,\alpha_1,\epsilon} \left[\alpha_1^2(\|\bm{\beta}^*\|\alpha_2+\epsilon)^2\tanh' \left(\|\bm{\beta}\|\alpha_1(\|\bm{\beta}^*\|\alpha_2+\epsilon)\right)\right].
	\end{align*}
\end{proof}

\begin{lem}[Bounding $B$]
	\label{lem:lowerbound}
	We have 
	\begin{align*}
	\|\bm{\beta}^*\|\mathbb{E} \left[ \alpha_2(\|\bm{\beta}^*\|\alpha_2+\epsilon)\tanh' \left(\|\bm{\beta}\|\alpha_1(\|\bm{\beta}^*\|\alpha_2+\epsilon)\right) \right]= \frac{\|\bm{\beta}^*\|^2}{1+\|\bm{\beta}^*\|^2}.
	\end{align*}
\end{lem}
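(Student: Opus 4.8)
The plan is to integrate out $\alpha_2$ by Gaussian conditioning, which reduces the left-hand side directly to relation \eqref{eq:relation1} and avoids the messier bookkeeping of a raw Stein's lemma application. Abbreviate $u := \|\bm{\beta}^*\|\alpha_2+\epsilon$ and $s:=\|\bm{\beta}\|$, so that the quantity to evaluate is $\|\bm{\beta}^*\|\,\mathbb{E}[\alpha_2\,u\,\tanh'(s\alpha_1 u)]$, where the expectation is over $\alpha_1,\alpha_2,\epsilon\sim\mathcal{N}(0,1)$ independent. The first observation is that, for each fixed $\alpha_1$, the factor $u\tanh'(s\alpha_1 u)$ depends on the remaining randomness only through $u$.

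The key step is that $(\alpha_2,u)$ is jointly Gaussian and independent of $\alpha_1$. Since $u=\|\bm{\beta}^*\|\alpha_2+\epsilon$, we have $u\sim\mathcal{N}(0,1+\|\bm{\beta}^*\|^2)$ and $\mathrm{Cov}(\alpha_2,u)=\|\bm{\beta}^*\|$, hence
\[
\mathbb{E}[\alpha_2\mid u]=\frac{\|\bm{\beta}^*\|}{1+\|\bm{\beta}^*\|^2}\,u.
\]
Applying the tower rule by conditioning on $(u,\alpha_1)$, and using that $\alpha_1$ is independent of $(\alpha_2,\epsilon)$ so that $\mathbb{E}[\alpha_2\mid u,\alpha_1]=\mathbb{E}[\alpha_2\mid u]$, I would replace the leading $\alpha_2$ by this conditional mean:
\[
\mathbb{E}[\alpha_2\,u\,\tanh'(s\alpha_1 u)]=\frac{\|\bm{\beta}^*\|}{1+\|\bm{\beta}^*\|^2}\,\mathbb{E}[u^2\,\tanh'(s\alpha_1 u)].
\]

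To finish, I would invoke relation \eqref{eq:relation1}, which is precisely the statement $\mathbb{E}[u^2\tanh'(s\alpha_1 u)]=1$. Multiplying by the prefactor $\|\bm{\beta}^*\|$ then yields $\|\bm{\beta}^*\|\cdot\frac{\|\bm{\beta}^*\|}{1+\|\bm{\beta}^*\|^2}=\frac{\|\bm{\beta}^*\|^2}{1+\|\bm{\beta}^*\|^2}$, which is the claimed identity.

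The only point requiring care — and the closest thing to an obstacle — is justifying the conditioning step, i.e.\ that the independence of $\alpha_1$ from $(\alpha_2,\epsilon)$ permits freezing $\alpha_1$ and reducing the inner integral to the Gaussian projection of $\alpha_2$ onto $u$. An equivalent route is to apply Stein's lemma directly in $\alpha_2$, but this produces an extra $\tanh''$ term that would need to be cancelled against a second Stein identity in $\alpha_1$ (as in Lemma \ref{lem:positerm}); the conditioning argument sidesteps that computation entirely, so I would prefer it.
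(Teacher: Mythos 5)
Your proof is correct, and it takes a genuinely different route from the paper's. You integrate out $\alpha_2$ by Gaussian conditioning: writing $u=\|\bm{\beta}^*\|\alpha_2+\epsilon$, the pair $(\alpha_2,u)$ is jointly Gaussian and independent of $\alpha_1$, so $\mathbb{E}[\alpha_2\mid u,\alpha_1]=\mathbb{E}[\alpha_2\mid u]=\frac{\|\bm{\beta}^*\|}{1+\|\bm{\beta}^*\|^2}u$, which reduces the left-hand side to $\frac{\|\bm{\beta}^*\|^2}{1+\|\bm{\beta}^*\|^2}\,\mathbb{E}\left[u^2\tanh'(\|\bm{\beta}\|\alpha_1 u)\right]$, and you then invoke the fixed-point relation \eqref{eq:relation1}. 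The conditioning step you flagged as the delicate point is indeed sound: independence of $\alpha_1$ from $(\alpha_2,\epsilon)$ gives the conditional-expectation identity, and integrability is immediate since $\tanh'$ is bounded; equivalently one can write $\alpha_2=\frac{\|\bm{\beta}^*\|}{1+\|\bm{\beta}^*\|^2}u+w$ with $w$ independent of $(u,\alpha_1)$ and mean zero, so the cross term vanishes. The paper argues differently: it introduces the auxiliary quantity $\mathbb{E}\left[\alpha_2\alpha_1\tanh(\|\bm{\beta}\|\alpha_1 u)\right]$ and evaluates it twice by Stein's lemma --- once in $\alpha_1$, yielding $\|\bm{\beta}\|$ times the left-hand side of the lemma, and once in $\alpha_2$, yielding $\|\bm{\beta}\|\|\bm{\beta}^*\|\,\mathbb{E}\left[\alpha_1^2\tanh'(\|\bm{\beta}\|\alpha_1 u)\right]=\frac{\|\bm{\beta}\|\|\bm{\beta}^*\|}{1+\|\bm{\beta}^*\|^2}$ by the other fixed-point relation \eqref{eq:relation2} --- and equates the two. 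Note that both your proof and the paper's are valid only under the standing hypothesis of Proposition \ref{prop:pos_eig} (that $\bm{\beta}$ is a population-EM fixed point orthogonal to $\bm{\beta}^*$), since each consumes exactly one of the two identities \eqref{eq:relation1}, \eqref{eq:relation2} derived from that hypothesis; neither route produces any $\tanh''$ term, so your parenthetical worry about cancelling $\tanh''$ applies to a naive direct Stein computation on the left-hand side, not to the paper's actual argument. What your route buys is economy --- a single Gaussian projection replaces the auxiliary quantity and the pair of Stein computations; what the paper's route buys is uniformity, since Lemma \ref{lem:positerm} and the rest of the proof of Proposition \ref{prop:pos_eig} are organized around exactly this pattern of repeated Stein identities.
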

\begin{proof}
	On the one hand, we can use Stein's lemma with respect to $\alpha_1$ for the following quantity:
	\begin{align}
	& \mathbb{E}_{\alpha_2,\alpha_1,\epsilon}\left[\alpha_2\alpha_1\tanh \left(\|\bm\beta\|\alpha_1(\|\bm{\beta}^*\|\alpha_2+\epsilon) \right)\right] \nonumber \\
	= & \mathbb{E}_{\alpha_2,\alpha_1,\epsilon} \left[\nabla_{\alpha_1}\left[ \alpha_2\tanh\left(\|\bm\beta\|\alpha_1(\|\bm{\beta}^*\|\alpha_2+\epsilon)\right)\right] \right] \nonumber\\
	= & \|\bm{\beta}\| \mathbb{E}_{\alpha_2,\alpha_1,\epsilon} \left[\alpha_2(\|\bm{\beta}^*\|\alpha_2+\epsilon)\tanh' \left(\|\bm\beta\|\alpha_1(\|\bm{\beta}^*\|\alpha_2+\epsilon)\right)\right] 
	\label{eq:1st_rep}.
	\end{align}
	On the other hand, we can use Stein's lemma with respect to $\alpha_2$:
	\begin{align}
	& \mathbb{E}_{\alpha_2,\alpha_1,\epsilon} \left[\alpha_2\alpha_1\tanh \left(\|\bm\beta\|\alpha_1(\|\bm{\beta}^*\|\alpha_2+\epsilon)\right) \right] \nonumber\\
	= & \mathbb{E}_{\alpha_2,\alpha_1,\epsilon} \left[ \nabla_{\alpha_2}\left[ \alpha_1\tanh \left(\|\bm\beta\|\alpha_1(\|\bm{\beta}^*\|\alpha_2+\epsilon)\right)\right] \right] \nonumber \\
	= & \|\bm{\beta}\|\|\bm{\beta}^*\| \mathbb{E}_{\alpha_2,\alpha_1,\epsilon} \left[\alpha_1^2\tanh' \left(\|\bm\beta\|\alpha_1(\|\bm{\beta}^*\|\alpha_2+\epsilon)\right) \right] \nonumber \\
	= & \frac{\|\bm{\beta}\|\|\bm{\beta}^*\|}{1+\|\bm{\beta}^*\|^2}\label{eq:2nd_rep}.
	\end{align}
By setting \eqref{eq:1st_rep}$=$\eqref{eq:2nd_rep}, we are done. 	
\end{proof}

\section{Proofs for Main Results on Population EM}

\subsection{Proof of Theorem \ref{theorem:sine}}
\label{appendix:sine}

\theoremsine*

\begin{proof}
From equation (\ref{eq:em_update}), we can compute cosine and sine at the next iteration,
\begin{equation}
	\cos \theta' = \frac{\langle \bm{\beta}^*, \bm{\beta}' \rangle}{\|\bm{\beta}^*\| \|\bm{\beta}'\|} = \frac{S\|\bm{\beta}^*\|^2 + R b_1^*}{\|\bm{\beta}^*\| \sqrt{R^2 + S^2 \|\bm{\beta}^*\|^2 + 2SRb_1^*}},
\end{equation}

\begin{align}
\label{ineq:S_over_R_bound}
	\sin \theta' &= \frac{R b_2^*}{\|\bm{\beta}^*\| \sqrt{R^2 + S^2 \|\bm{\beta}^*\|^2 + 2SRb_1^*}} \nonumber \\
    &= \sin \theta \frac{1}{\sqrt{1 + (S/R)^2 \|\bm{\beta}^*\|^2 + 2(S/R)b_1^*}} \nonumber \\ 
    &\le \sin \theta \frac{1}{\sqrt{1 + 2(S/R)b_1^*}}.
\end{align}

Now we are left with proving $\frac{S}{R}b_1^* \ge \frac{{b_1^*}^2}{\sigma^2 + \|\bm{\beta}^*\|^2}$, which gives us the claimed result by plugging it into (\ref{ineq:S_over_R_bound}). To see that, we first observe
\begin{align*}
    S = \underbrace{\mathbb{E} \left[ \tanhcustom + \frac{\alpha_1b_1}{\sigma^2}y \tanhpcustom \right]}_{A} + \underbrace{b_1^*\mathbb{E} \left[ \frac{\alpha_1^2b_1}{\sigma^2} \tanhpcustom \right]}_{(\frac{b_1^*}{\sigma^2 + \|\bm{\beta}^*\|^2}) R}.
\end{align*}

Since $R \ge 0$ as it is the expectation of positive function, if A is greater than 0, then we get the desired result. Another application of Stein's lemma yields
\begin{align*}
    \mathbb{E} \left[ \tanhcustom y^2 \right] &= \sigma_2^2 \mathbb{E} \left[ \tanhcustom + \frac{\alpha_1b_1}{\sigma^2}y \tanhpcustom \right] \\
    &= \sigma_2^2 A.
\end{align*}

We can rewrite the left side as
\begin{align*}
    \mathbb{E} \left[ \tanhcustom y^2 \right] &= \frac{1}{2} \mathbb{E} \left[ \tanhcustom y^2 \right] + \frac{1}{2} \mathbb{E} \left[ \tanh \left( \frac{\alpha_1 b_1}{\sigma^2} (-y + \alpha_1 b_1^*) \right) y^2 \right] \\
    &= \frac{1}{2} \mathbb{E} \left[ \left( \tanh \left(\frac{\alpha_1 b_1}{\sigma^2} (-y + \alpha_1 b_1^*) \right) + \tanhcustom \right) y^2 \right] \\
    &\ge 0,
\end{align*}
where in the last inequality, we used the fact that $\tanh(c+x) + \tanh(-c+x) \ge 0$ when $x \ge 0$ for any real value $c$. Consequently, $A \ge 0$ and we complete the proof.
\end{proof}

\subsection{Proof of Theorem \ref{theorem:cosine}}
\label{appendix:theorem_cosine}

\theoremcosine*
\begin{proof}
Recall that from the proof in Theorem \ref{theorem:sine}, we have
\begin{align*}
	\cos \theta' = \frac{S\|\bm{\beta}^*\|^2 + Rb_1^*}{\|\bm{\beta}^*\| \sqrt{R^2 + 2SRb_1^* + S^2\|\bm{\beta}^*\|^2}}, \qquad \text{and} \qquad \frac{S}{R} \ge \frac{b_1^*}{\sigma^2 + \|\bm{\beta}^*\|^2}.
\end{align*}

Starting from these two equations, we can get a lower bound of $\cos \theta'$ in terms of $\cos \theta$ and $\sigma$. First observe that
\begin{align*}
    \cos \theta' &= \frac{(S/R)\|\bm{\beta}^*\|^2 + b_1^*}{\|\bm{\beta}^*\| \sqrt{1+2(S/R)b_1^* + (S/R)^2 \|\bm{\beta}^*\|^2}} \nonumber \\
    &\stackrel{(a)}{\ge} \frac{b_1^*(1+\frac{\|\bm{\beta}^*\|^2}{\|\bm{\beta}^*\|^2+\sigma^2})} {\|\bm{\beta}^*\| \sqrt{1 + {b_1^*}^2 \frac{1}{\|\bm{\beta}^*\|^2+\sigma^2} (2+\frac{\|\bm{\beta}^*\|^2}{\|\bm{\beta}^*\|^2+\sigma^2}) }} \nonumber  \\
    &\stackrel{(b)}{\ge} \cos \theta \sqrt{ 1 + \frac{{b_2^*}^2}{k(\sigma^2)^{-1} + {b_1^*}^2} }, 
\end{align*}
where $k(\sigma^2) = \frac{1}{\|\bm{\beta}^*\|^2+\sigma^2} (2+\frac{\|\bm{\beta}^*\|^2}{\|\bm{\beta}^*\|^2+\sigma^2})$. (a) comes from the following:
\begin{align*}
    \frac{z\|\bm{\beta}^*\|^2 + b_1^*}{\|\bm{\beta}^*\| \sqrt{1+2zb_1^* + z^2 \|\bm{\beta}^*\|^2}} &= \sqrt{\frac{z^2\|\bm{\beta}^*\|^2 + 2zb_1^* + {b_1^*}^2/\|\bm{\beta}^*\|^2} {1+2zb_1^* + z^2 \|\bm{\beta}^*\|^2}} \\
    &= \sqrt{1 - \frac{{b_2^*}^2/\|\bm{\beta}^*\|^2}{1+2zb_1^* + z^2 \|\bm{\beta}^*\|^2}},
\end{align*}
where $z \equiv (S/R)$. It shows us that $\cos \theta'$ is an increasing in $(S/R)$ and therefore lower bounded by the lowest possible value of $(S/R)$.

From (b), we can infer that the amount of increase gets smaller as the angle gets smaller. Thus, we can further bound it with straight-forward algebra by
\begin{align}
    \cos \theta \sqrt{ 1 + \frac{{b_2^*}^2}{k(\sigma^2)^{-1} + {b_1^*}^2} } &\ge \cos \theta \sqrt{1 + \frac{\sin^2 \theta}{\cos^2 \theta + \frac{1}{2} (1+\eta^{-2})} } \label{ineq:population_cosine_kappa} \\
    &\ge \cos \theta \sqrt{1 + \frac{\eta^2}{\frac{2}{3} + \eta^2}},
\end{align}
where the last inequality is established since we assumed $\theta \ge \pi/3$.
\end{proof}

\subsection{Proof of Theorem \ref{theorem:conv_distance}}
\label{appendix:theorem_distance}
Before we prove Theorem \ref{theorem:conv_distance}, we state two lemmas that are essential in our proof. Let all the symbols be as defined in Section \ref{sec:population_EM_update}. Recall that
\begin{gather*}
	S = \mathbb{E}_{ \begin{smallmatrix} \alpha_1 \sim \mathcal{N}(0,1) \\ y \sim \mathcal{N}(0,\sigma_2^2) \end{smallmatrix} } 
    \left[ \tanhcustom + \frac{\alpha_1 b_1}{\sigma^2} (y+\alpha_1 b_1^*) \tanhpcustom \right] \\
    R = (\sigma^2 + \|\bm{\beta}^*\|^2) \mathbb{E}_{ \begin{smallmatrix} \alpha_1 \sim \mathcal{N}(0,1) \\ y \sim \mathcal{N}(0,\sigma_2^2) \end{smallmatrix} }
    \left[ \frac{\alpha_1^2 b_1}{\sigma^2} \tanhpcustom \right].
\end{gather*}

\begin{lem} 
\label{lemma:S_bounds}
$1 - \left( \sqrt{1 + \frac{\min(\frac{\sigma_2^2}{\sigma^2}b_1, b_1^*) b_1^*}{\sigma_2^2}} \right)^{-1} \le S \le 1$.
\end{lem}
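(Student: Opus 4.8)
Throughout, write $w := \frac{\alpha_1 b_1}{\sigma^2}(y + \alpha_1 b_1^*)$ with $\alpha_1\sim\mathcal{N}(0,1)$ and $y\sim\mathcal{N}(0,\sigma_2^2)$ independent, so that by (\ref{eq:s_r_expression}) we have $S = \mathbb{E}[\tanh(w) + w\,\tanh'(w)]$. The plan is to extract two Stein-type representations of $S$ and read the two inequalities off them separately. First I would record the identity already derived in (\ref{single_use:S_lower}), namely $\mathbb{E}[\alpha_1^2\tanh(w)] = S + b_1^* T$ where $T := \mathbb{E}\!\left[\tfrac{\alpha_1^2 b_1}{\sigma^2}\tanh'(w)\right]\ge 0$, together with the decomposition $S = A + b_1^* T$ in which $A := \mathbb{E}\!\left[\tanh(w) + \tfrac{\alpha_1 b_1}{\sigma^2}y\,\tanh'(w)\right]$ was shown to be nonnegative in the proof of Theorem~\ref{theorem:sine} (there $\sigma_2^2 A = \mathbb{E}[y^2\tanh(w)]\ge 0$).

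\textbf{Upper bound.} From $S = \mathbb{E}[\alpha_1^2\tanh(w)] - b_1^* T$ and $T\ge 0$, $b_1^*\ge 0$, I get $S \le \mathbb{E}[\alpha_1^2\tanh(w)]$. Since $\tanh(w)\le 1$ pointwise and $\mathbb{E}[\alpha_1^2]=1$, this immediately gives $S \le 1$. This is the easy half.

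\textbf{Lower bound (the crux).} Here I would first pass to a cleaner representation via Gaussian integration by parts in $Y := y + \alpha_1 b_1^*$ (conditionally on $\alpha_1$): since $\tanh(w)+w\,\tanh'(w) = \tfrac{d}{dY}\!\left[Y\tanh(w)\right]$ and $Y\mid\alpha_1\sim\mathcal{N}(\alpha_1 b_1^*,\sigma_2^2)$, Stein's lemma yields $S = \tfrac{1}{\sigma_2^2}\mathbb{E}[y\,Y\,\tanh(w)]$. Combining with $\tfrac{1}{\sigma_2^2}\mathbb{E}[yY] = \tfrac{1}{\sigma_2^2}\bigl(\mathbb{E}[y^2]+b_1^*\mathbb{E}[\alpha_1 y]\bigr) = 1$ gives the key formula
\[
1 - S = \frac{1}{\sigma_2^2}\,\mathbb{E}\!\left[yY\bigl(1-\tanh(w)\bigr)\right], \qquad 1-\tanh(w)\in(0,2).
\]
It then remains to show $\mathbb{E}[yY(1-\tanh(w))]\le \sigma_2^2\bigl(1 + \tfrac{m b_1^*}{\sigma_2^2}\bigr)^{-1/2}$ with $m = \min(\tfrac{\sigma_2^2}{\sigma^2}b_1,\, b_1^*)$.

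The main obstacle is exactly this last estimate, and I expect it to carry essentially all the work. The mechanism is that $1-\tanh(w)\le 2e^{-2w}$ is exponentially small precisely where $w=\tfrac{\alpha_1 b_1}{\sigma^2}Y$ is large and positive, which is the typical event since $Y$ concentrates around its mean $\alpha_1 b_1^*$ (of the same sign as $\alpha_1$). I would therefore bound $1-S$ by replacing $Y$ by its mean in the exponent, so that on the dominant event $w\approx \tfrac{b_1 b_1^*}{\sigma^2}\alpha_1^2$ and $1-\tanh(w)\lesssim e^{-2\frac{b_1 b_1^*}{\sigma^2}\alpha_1^2}$, and then invoking the Gaussian identity $\mathbb{E}_{\alpha_1\sim\mathcal{N}(0,1)}[e^{-\lambda\alpha_1^2}] = (1+2\lambda)^{-1/2}$, which is what produces the $(1+\cdot)^{-1/2}$ shape of the bound. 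The $\min(\tfrac{\sigma_2^2}{\sigma^2}b_1,\,b_1^*)$ should emerge from splitting into the two regimes $\tfrac{\sigma_2^2}{\sigma^2}b_1 \lessgtr b_1^*$: in one regime the mean-substitution is legitimate and the effective exponential rate is governed by $b_1$, while in the other the fluctuations of $Y$ dominate and integrating them out (again by a Gaussian moment computation against the weight $yY = y^2 + b_1^*\alpha_1 y$) caps the effective rate at $b_1^*$. Making the subdominant $w<0$ tail rigorous, and in particular controlling the sign-indefinite cross term $b_1^*\alpha_1 y\,(1-\tanh(w))$ via Cauchy–Schwarz together with Gaussian tail bounds, is the delicate part of the argument.
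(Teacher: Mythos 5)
Your upper bound is correct and is exactly the paper's argument: starting from (\ref{single_use:S_lower}), drop the nonnegative term $b_1^*\,\mathbb{E}\!\left[\tfrac{\alpha_1^2 b_1}{\sigma^2}\tanh'(w)\right]$ and use $\tanh(w)\le 1$ together with $\mathbb{E}[\alpha_1^2]=1$. Your Stein-type identity for the lower bound, $1-S=\tfrac{1}{\sigma_2^2}\,\mathbb{E}\!\left[yY\bigl(1-\tanh(w)\bigr)\right]$, is also a correct reformulation, and it has the nice feature of absorbing the $w\tanh'(w)$ term automatically (the paper instead needs Lemma 1 of \cite{daskalakis2017ten} to discard it).

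However, the lower bound --- which is the entire content of the lemma --- is never proved: everything after that identity is a plan, and the plan breaks down exactly where the difficulty lies. The step ``replace $Y$ by its mean in the exponent'' has no pointwise justification: conditionally on $\alpha_1>0$, $Y\sim\mathcal{N}(\alpha_1 b_1^*,\sigma_2^2)$, and on the event $\{Y\le 0\}$ one has $1-\tanh(w)\ge 1$ rather than anything exponentially small; this event has probability bounded away from zero whenever $\alpha_1 b_1^*\lesssim\sigma_2$, i.e.\ precisely in the low-SNR regime the lemma must cover. Moreover, to recover the stated bound you need the conditional estimate with the exact constant, $\mathbb{E}[\tanh(w)\mid\alpha_1]\ge 1-\exp\!\left(-\tfrac{\alpha_1^2 b_1^*\min(\frac{\sigma_2^2}{\sigma^2}b_1,\,b_1^*)}{2\sigma_2^2}\right)$, since only then does the Gaussian identity $\mathbb{E}[e^{-\lambda\alpha_1^2}]=(1+2\lambda)^{-1/2}$ produce $\bigl(1+\min(\tfrac{\sigma_2^2}{\sigma^2}b_1,b_1^*)\,b_1^*/\sigma_2^2\bigr)^{-1/2}$; a lossy treatment of the bad event and of the sign-indefinite cross term via Cauchy--Schwarz and tail bounds would degrade that constant and yield a strictly weaker conclusion than the lemma states. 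The paper closes this step by conditioning on $\alpha_1$ (using the sign symmetry for $\alpha_1<0$) and invoking Lemmas 1 and 2 of \cite{daskalakis2017ten} as black boxes, which deliver exactly the conditional exponential bound above, after which the one-line Gaussian integral finishes. To complete your route you would have to re-derive the substance of those lemmas (truncated-Gaussian moment-generating-function computations, with the case split that produces the $\min$), which is precisely the part your proposal defers.
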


\begin{proof}
From equation (\ref{single_use:S_lower}) in proof of lemma \ref{lem:em_update_summary}, we get 
\begin{align*}
	S &= \mathbb{E}\left[\alpha_1^2 \tanhcustom - \frac{b_1 b_1^*}{\sigma^2} \alpha_1^2 \tanhpcustom \right] \\
    & \le \mathbb{E} \left[\alpha_1^2 \tanhcustom \right]
    \le \mathbb{E}[\alpha_1^2] = 1,
\end{align*}
where we used $\tanh'(x) \ge 0$ and $\tanh(x) \le 1$ for any $x$.

For the lower bound of S, we can apply the lemmas 1, 2 from \cite{daskalakis2017ten}.
\paragraph{Lemma 1 in \cite{daskalakis2017ten}}
\label{lem:lemma1tensteps}
{\it
    Let $\alpha, \beta \ge 0$ and $X \sim \mathcal{N}(\alpha, \sigma^2)$, then $\mathbb{E}[\tanh'(\beta X/\sigma^2) X] \ge 0$.
}

\paragraph{Lemma 2 in \cite{daskalakis2017ten}}
\label{lem:lemma2tensteps}
{\it
    Let $\alpha, \beta \ge 0$ and $X \sim \mathcal{N}(\alpha, \sigma^2)$, then $\mathbb{E}[\tanh(\beta X/\sigma^2)] \ge 1 - \exp[-\frac{\min(\alpha,\beta), \alpha}{2\sigma^2}]$.
}

We can apply these two lemmas by setting $\alpha = \alpha_1 b_1^*$, $\beta = \alpha_1 \frac{{b_2^*}^2}{\sigma^2}b_1$ (when $\alpha_1 < 0$, we can get the same result due to the symmetry of the expression in sign). It yields
\begin{align*}
	S & \ge \mathbb{E}_{\alpha_1}\left[1 - \exp{ \left[-\frac{\alpha_1^2 b_1^* \min(b_1^*, \frac{\sigma_2^2}{\sigma^2} b_1)} {2\sigma_2^2} \right]} \right] \\
    & \quad = 1 - \frac{1}{\sqrt{1 + \frac{\min(\frac{\sigma_2^2}{\sigma^2} b_1, b_1^*)b_1^*}{\sigma_2^2} }}.
\end{align*}
\end{proof}

\begin{lem}
\label{lemma:b_1_prime}
$b_1'$ is increasing in $b_1$. Furthermore, in the limit $b_1 \rightarrow \infty$, 
\begin{equation}
    \lim_{b_1 \rightarrow \infty} b_1' = \frac{2}{\pi} (b_1^* \tan^{-1} \left( \frac{b_1^*}{\sigma_2} \right) + \sigma_2).
\end{equation}
\end{lem}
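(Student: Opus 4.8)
The plan is to handle the two assertions separately, working throughout from the one-dimensional integral representation of $b_1'$ obtained in the proof of Lemma~\ref{lem:em_update_summary}:
\[
  b_1' = \mathbb{E}\!\left[\alpha_1\,(y+\alpha_1 b_1^*)\,\tanh\!\Big(\tfrac{b_1\alpha_1}{\sigma^2}(y+\alpha_1 b_1^*)\Big)\right],
\]
where the expectation is taken over independent $\alpha_1\sim\mathcal{N}(0,1)$ and $y\sim\mathcal{N}(0,\sigma_2^2)$. For monotonicity I would differentiate under the integral sign in $b_1$, giving
\[
  \frac{\partial b_1'}{\partial b_1} = \frac{1}{\sigma^2}\,\mathbb{E}\!\left[\alpha_1^2(y+\alpha_1 b_1^*)^2\,\tanh'\!\Big(\tfrac{b_1\alpha_1}{\sigma^2}(y+\alpha_1 b_1^*)\Big)\right]\ge 0,
\]
since $\tanh'\ge 0$; the differentiation is legitimate because the $b_1$-derivative of the integrand is dominated by the integrable envelope $\alpha_1^2(y+\alpha_1 b_1^*)^2/\sigma^2$. (This also follows immediately from Property~2 in the proof of Lemma~\ref{lem:dynamic_along}.)

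For the limit, I would send $b_1\to\infty$ inside the expectation. The argument of $\tanh$ tends to $+\infty$ or $-\infty$ according to $\operatorname{sign}\big(\alpha_1(y+\alpha_1 b_1^*)\big)$, so off the measure-zero set $\{\alpha_1(y+\alpha_1 b_1^*)=0\}$ the integrand converges pointwise to $\alpha_1(y+\alpha_1 b_1^*)\operatorname{sign}\big(\alpha_1(y+\alpha_1 b_1^*)\big)=|\alpha_1|\,|y+\alpha_1 b_1^*|$. Because $|\tanh|\le 1$, the integrand is dominated by the integrable function $|\alpha_1(y+\alpha_1 b_1^*)|$, so dominated convergence yields $\lim_{b_1\to\infty} b_1' = \mathbb{E}\big[\,|\alpha_1|\,|y+\alpha_1 b_1^*|\,\big]$.

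It then remains to evaluate this expectation. The key observation is that the pair $(\alpha_1,\;y+\alpha_1 b_1^*)$ is jointly centered Gaussian with variances $1$ and $r^2 := (b_1^*)^2+\sigma_2^2$ and covariance $b_1^*$, hence correlation $\rho = b_1^*/r$. Invoking the classical identity $\mathbb{E}[|U||V|]=\tfrac{2}{\pi}\big(\sqrt{1-\rho^2}+\rho\arcsin\rho\big)$ for unit-variance bivariate normals $(U,V)$ with correlation $\rho$, and rescaling the second coordinate by $r$, gives
\[
  \mathbb{E}\big[\,|\alpha_1|\,|y+\alpha_1 b_1^*|\,\big] = \frac{2}{\pi}\Big(\sigma_2 + b_1^*\arcsin\tfrac{b_1^*}{r}\Big),
\]
where I used $\sqrt{1-\rho^2}=\sigma_2/r$. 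Finally, since $\sin$ and $\tan$ of the angle whose hypotenuse is $r$, opposite side $b_1^*$, and adjacent side $\sigma_2$ agree, we have $\arcsin(b_1^*/r)=\arctan(b_1^*/\sigma_2)$, and the right-hand side equals $\tfrac{2}{\pi}\big(b_1^*\tan^{-1}(b_1^*/\sigma_2)+\sigma_2\big)$, as claimed.

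The main obstacle is the evaluation of $\mathbb{E}[|\alpha_1|\,|y+\alpha_1 b_1^*|]$: one must recognize the two-dimensional Gaussian structure and either cite or re-derive the product-of-absolute-values identity. If a self-contained derivation is preferred, passing to polar coordinates $(\alpha_1,y/\sigma_2)=(\rho\cos\phi,\rho\sin\phi)$ reduces it to $\tfrac{1}{\pi}\int_0^{2\pi}|\cos\phi|\,|b_1^*\cos\phi+\sigma_2\sin\phi|\,d\phi$, using $\mathbb{E}[\rho^2]=2$ and independence of $\rho$ and $\phi$; the remaining trigonometric integral gives the same closed form. By comparison, the monotonicity claim and the limit interchange are routine applications of dominated convergence.
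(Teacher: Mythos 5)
Your proof is correct, and while the monotonicity half coincides with the paper's argument (differentiate under the integral, use $\tanh'\ge 0$; the paper's displayed derivative even drops the harmless $1/\sigma^2$ factor that you retain), your treatment of the limit takes a genuinely different route. The paper works from the decomposition $b_1' = b_1^*S + R$ of Lemma~\ref{lem:em_update_summary}: it first rewrites $R$ via Stein's lemma as $\frac{\sigma^2+\|\bm{\beta}^*\|^2}{\sigma_2^2}\,\mathbb{E}[\tanh(\cdot)\,y\alpha_1]$, then passes to the $\operatorname{sign}$ limit in $S$ and $R$ separately and evaluates two explicit double Gaussian integrals, obtaining $\lim S = \frac{2}{\pi}\tan^{-1}(b_1^*/\sigma_2)$ and $\lim R = \frac{2}{\pi}\sigma_2$. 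You instead apply dominated convergence once to the full integrand, reducing the limit to the single quantity $\mathbb{E}\big[|\alpha_1|\,|y+\alpha_1 b_1^*|\big]$, and then recognize the pair $(\alpha_1,\,y+\alpha_1 b_1^*)$ as centered bivariate Gaussian with correlation $\rho=b_1^*/\sqrt{(b_1^*)^2+\sigma_2^2}$, invoking the classical identity $\mathbb{E}[|U||V|]=\tfrac{2}{\pi}\big(\sqrt{1-\rho^2}+\rho\arcsin\rho\big)$; your covariance computation, the rescaling, and the conversion $\arcsin(b_1^*/r)=\arctan(b_1^*/\sigma_2)$ are all correct, and your DCT justification (domination by $|\alpha_1(y+\alpha_1 b_1^*)|$) is sound. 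Your route is shorter and more unified, at the price of citing (or re-deriving via your polar-coordinate remark) an external Gaussian moment identity; the paper's route is more self-contained and keeps separate expressions for $\lim S$ and $\lim R$, which it reuses elsewhere (e.g., the bound $b_2'-b_2^*$ via $S$ and the Case~III analysis in the proof of Theorem~\ref{theorem:conv_distance}).
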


\begin{proof}
    First, we show that $b_1'$ is increasing in $b_1$. From (\ref{eq:b_1_prime}), differentiate it with respect to $b_1$ yields
    \begin{align}
    \label{eq:b_1_prime_increasing}
        \frac{d b_1'}{d b_1} = \mathbb{E} \left[ \tanh'(\frac{b_1\alpha_1}{\sigma^2}y) y^2 \alpha_1^2 \right] \ge 0.
    \end{align}
    
    Next, we show the limit value of $b_1'$. Recall that $b_1' = b_1^*S + R$. Again from Stein's lemma, $R$ can be rewritten as
    \begin{gather*}
        R = \frac{\sigma^2 + \|\bm{\beta}^*\|^2}{\sigma_2^2} \mathbb{E}_{\alpha_1, y} \left[ \tanhcustom y\alpha_1 \right].
    \end{gather*}
    In the limit $b_1 \rightarrow \infty$, $\tanh$ function becomes sign function. Therefore,
    
    \begin{align*}
        \mathbb{E}_{\alpha_1, y} [\text{sign}(\alpha_1(y+\alpha_1 b_1^*)) y\alpha_1 ]
    	&= \frac{1}{\pi} \int_{0}^{\infty} 2\frac{\alpha_1}{\sigma_2} e^{-\frac{\alpha_1^2}{2}} \left( \int_{\alpha_1\bm{\beta}_1^*}^{\infty} y e^{-\frac{y^2}{2\sigma_2^2}}dy \right)  d\alpha_1 
        \\
        &= \frac{2}{\pi} \int_{0}^{\infty} \alpha_1\sigma_2 e^{-\frac{\alpha_1^2(b_1^*)^2}{2\sigma_2^2}} e^{-\frac{\alpha_1^2}{2}} d\alpha_1 \\
        &= \frac{2}{\pi} \sigma_2 / (1 +  (b_1^*/\sigma_2)^2),
    \end{align*}
    
    \begin{gather*}
    	\therefore \lim_{b_1 \rightarrow \infty} R = \frac{2}{\pi} \sigma_2.
    \end{gather*}

    Now we find a limit value of S. In the limit, $\lim_{c \rightarrow \infty} cx \tanh'(cx) = 0$ for all $x$. Therefore, 
    \begin{align*}
        \lim_{b_1 \rightarrow \infty} S &= \mathbb{E} [\text{sign} (\alpha_1(y+\alpha_1b_1^*))]
        = \frac{1}{\pi} \int_{0}^{\infty} \int_{-\alpha_1 b_1^*}^{\alpha_1 b_1^*} e^{-\frac{y^2}{2\sigma_2^2}} e^{-\frac{\alpha_1^2}{2}} \\ 
        &= \frac{2}{\pi} \int_{0}^{\infty} \int_{0}^{\alpha_1 b_1^*/\sigma_2} e^{-\frac{y^2}{2}} e^{-\frac{\alpha_1^2}{2}} 
        = \frac{2}{\pi} \tan^{-1}(b_1^* / \sigma_2).
    \end{align*}
    
    Combining the results, we get the desired lemma.
\end{proof}

Now we are ready to prove Theorem \ref{theorem:conv_distance}.
\theoremdist*

\begin{proof}[Proof of Theorem \ref{theorem:conv_distance}]
First, difference in second coordinate is easily bounded.
\begin{equation}
    (b_2^* - b_2') = (1-S) b_2^* \le \left( \sqrt{1 + \frac{\min(\frac{\sigma_2^2}{\sigma^2}b_1, b_1^*) b_1^*}{\sigma_2^2}} \right)^{-1} b_2^*.
\end{equation}
We therefore focus on giving a bound for $|b_1' - b_1^*|$.

We start from the following observation. Suppose $b_1 = \frac{\sigma^2}{\sigma_2^2} b_1^*$. From equation (\ref{single_use:b_1_prime}), we have
\begin{align}
\label{eq:b_1_prime_fixed}
b_1' = \mathbb{E}_{\alpha_1}[\mathbb{E}_{y \sim \mathcal{N}(\alpha_1 b_1^*,\sigma_2^2)}[\tanh( \frac{\alpha_1 b_1^*}{\sigma_2^2}y )y] \alpha_1] = \mathbb{E}_{\alpha_1} [\alpha_1^2 b_1^*] = b_1^*.
\end{align}
Also from Lemma \ref{lemma:b_1_prime}, $b_1'$ is increasing in $b_1$. We will separate the cases based on this point.

\paragraph{\it Case I. $b_1 \le \frac{\sigma^2}{\sigma_2^2} b_1^*$:} 
    \begin{align*}
        b_1' - \newbeta &= \mathbb{E}_{\alpha_1} \left[ \alpha_1 \left( \mathbb{E}_{y \sim \mathcal{N}(\alpha_1 b_1^*,\sigma_2^2)} \left[\tanh \left( \frac{\alpha_1 (\newbeta)}{\sigma_2^2}y \right)y \right] - 
        \mathbb{E}_{y \sim \mathcal{N}(\alpha_1(\newbeta), \sigma_2^2)} \left[\tanh \left( \frac{\alpha_1 (\newbeta) }{\sigma_2^2}y \right)y \right] \right) \right] \\
        &\stackrel{(a)}{\ge} \left(b_1^* - \newbeta \right) \mathbb{E} \left[\alpha_1^2 \min_{\mu \in (\newbeta, b_1^*)} \frac{\partial}{\partial \mu} \left(\mathbb{E} \left[\tanh \left( \frac{\alpha_1 (\newbeta) }{\sigma_2^2}(y+\mu) \right) (y+\mu) \right] \right) \right] \\
        &\stackrel{(b)}{\ge} \left(b_1^* - \newbeta \right) \mathbb{E} \left[\alpha_1^2 \left(1 - \exp{ \left( -\frac{\alpha_1^2 {\min(\newbeta, b_1^*)}^2}{2\sigma_2^2} \right) } \right) \right],
    \end{align*}
where in (a) we used mean-value theorem, and in (b) we applied lemma 1, 2 in \cite{daskalakis2017ten}. In turn, we have
\begin{align}
\label{ineq:newbeta_smaller}
    b_1^* - b_1' \le \kappa^3 \left(b_1^* - \newbeta \right) \le \kappa^3 (b_1^* - b_1),
\end{align}
where we have $\kappa = \left({\sqrt{1 + \frac{\min(\newbeta, b_1^*)^2}{\sigma_2^2}}} \right)^{-1}$ and plugging the relation $b_1 \le \newbeta \le b_1^*$ into the above.

Finally, we have $\left( \sqrt{1 + \frac{\min(\frac{\sigma_2^2}{\sigma^2}b_1, b_1^*) b_1^*}{\sigma_2^2}} \right)^{-1} \le \kappa$. Combining them altogether, we have
\begin{align*}
    \|\bm{\beta}^* - \bm{\beta}'\| \le \kappa \|\bm{\beta}^* - \beta\|.
\end{align*}

\paragraph{\it Case II. $b_1 > \frac{\sigma^2}{\sigma_2^2}b_1^*$, $\sigma > b_2^*$:}
Following the exactly same procedure above, we have
\begin{align}
\label{single_use:b_1_diff_bound}
    b_1' - b_1^* \le \kappa^3 (\newbeta - b_1^*) = \kappa^3 (b_1 - b_1^*) + \kappa^3 \frac{{b_2^*}^2}{\sigma^2}b_1.
\end{align}
By the condition in this case, $\kappa = \left({\sqrt{1 + \frac{{b_1^*}^2}{\sigma_2^2}}} \right)^{-1} = \sqrt{\frac{\sigma^2 + {b_2^*}^2}{\sigma^2 + \|\bm{\beta}^*\|^2}}$. We divided cases into two parts. \\

(i) Suppose $b_1 > 2b_1^*$, or $b_1 < 2(b_1 - b_1^*)$. Then,
\begin{align*}
    b_1' - b_1^* &\le \kappa^3 (b_1 - b_1^*) (1 + 2 \frac{{b_2^*}^2}{\sigma^2}) \\
    &= \kappa (b_1 - b_1^*) (\frac{\sigma^2 + {b_2^*}^2}{\sigma^2 + \|\bm{\beta}^*\|^2}) (1 + \frac{2{b_2^*}^2}{\sigma^2}) \\
    &= \kappa \underbrace{ \left(\frac{\sigma^2 + {b_2^*}^2}{\sigma^2 + {b_1^*}^2 + {b_2^*}^2} \frac{\sigma^2 + 2{b_2^*}^2}{\sigma^2} \right)}_{A} (\bm{\beta}_1 - \bm{\beta}_1^*).
\end{align*}

Check if $A$ is less than 1. To see that, 
\begin{align*}
    \sigma^2(\sigma^2 + {b_1^*}^2 + {b_2^*}^2) - (\sigma^2 + (b_2^*)^2)(\sigma^2 + 2(b_2^*)^2) \\
    = \sigma^2 ({b_1^*}^2 - 2{b_2^*}^2) - 2{b_2^*}^4 \stackrel{(a)}{\ge} \sigma^2 ({b_1^*}^2 - 4{b_2^*}^2) \stackrel{(b)}{\ge} 0,
\end{align*}
where (a) comes from $b_2^* < \sigma$ and (b) comes from $\tan \frac{\pi}{8} < 1/2$.

\begin{equation*}
    \therefore b_1' - b_1^* \le \kappa (b_1 - b_1^*)
\end{equation*}

(ii) $b_1 < 2b_1^*$. We will assume $b_1 \frac{{b_2^*}^2}{\sigma^2} \ge (\frac{1}{\kappa^2} - 1) (b_1 - b_1^*)$. Otherwise, we can easily get $b_1' - b_1^* \le \kappa (b_1 - b_1^*)$ similarly by plugging it into equation (\ref{single_use:b_1_diff_bound}).

\begin{align*}
	(b_1' - b_1^*)^2 &\le \kappa^6 (b_1 - b_1^*)^2 + \kappa^6 \left( 2(\fracbsig)^2 b_1(b_1 - b_1^*) + (\fracbsig)^4 b_1^2 \right) \\
    &\le \kappa^6(b_1 - b_1^*)^2 + \kappa^6 (\fracbsig)^4 b_1^2 \left(2(\frac{\kappa^2}{1 - \kappa^2}) + 1 \right) \\
    &= \kappa^6(b_1 - b_1^*)^2 + \underbrace{\kappa^6 (\fracbsig)^4 b_1^2 \left( \frac{2\sigma^2 + 2{b_2^*}^2 + {b_1^*}^2 }{{b_1^*}^2} \right)}_{B}.
\end{align*}

We bound $B$. We rearrange terms as below:
\begin{align*}
	B &= \kappa^6 (\fracbsig)^4 b_1^2 \left(\frac{2\sigma^2 + 2{b_2^*}^2 + {b_1^*}^2 }{{b_1^*}^2} \right) \\
    &= \kappa^2 (\fracbsig)^4 b_1^2 \left(\frac{2\sigma^2 + 2{b_2^*}^2 + {b_1^*}^2 }{{b_1^*}^2} \right) \left( \frac{\sigma^2 + {b_2^*}^2}{\sigma^2 + \|\bm{\beta}^*\|^2} \right)^2 \\
    &= \kappa^2 {b_2^*}^4 (\frac{b_1^2}{{b_1^*}^2}) \left(\frac{2\sigma^2 + 2{b_2^*}^2 + {b_1^*}^2}{\sigma^2 + {b_2^*}^2 + {b_1^*}^2} \right) \left(\frac{(\sigma^2 + {b_2^*}^2)^2}{\sigma^4} \right) \frac{1}{\sigma^2 + \|\bm{\beta}^*\|^2}
    \\ &\le \kappa^2 {b_2^*}^4 * 4 * 2 * 4 * \left( \frac{1}{\sigma^2+\|\bm{\beta}^*\|^2} \right) 
    \\ &= \kappa^2 \frac{32 {b_2^*}^2}{\sigma^2 + \|\bm{\beta}^*\|^2} {b_2^*}^2
\end{align*}

Therefore, we get $(b_1' - b_1^*)^2 \le \kappa^2 (b_1 - b_1^*)^2 + \kappa^2 \frac{32 {b_2^*}^2}{\sigma^2 + \|\bm{\beta}^*\|^2} {b_2^*}^2$. Combining it with $(b_2' - b_2^*)^2 \le \kappa^2 (b_2 - b_2^*)^2$ yields
\begin{align*}
    \|\bm{\beta}' - \bm{\beta}^*\|^2 \le \kappa^2 \|\bm{\beta} - \bm{\beta}^*\|^2 + \kappa^2 \frac{32 {b_2^*}^2}{\sigma^2 + \|\bm{\beta}^*\|^2} {b_2^*}^2
\end{align*}

Now using $\sqrt{a^2 + b^2} \le a + \frac{b^2}{2a}$,
\begin{align*}
    \|\bm{\beta}' - \bm{\beta}^*\| &\le \kappa \|\bm{\beta} - \bm{\beta}^*\| + \kappa \frac{16 {b_2^*}^2}{\sigma^2 + \|\bm{\beta}^*\|^2} \frac{b_2^*}{\|\bm{\beta} - \bm{\beta}^*\|} b_2^* \\
    &\le \kappa \|\bm{\beta} - \bm{\beta}^*\| + \kappa (16 \sin^3\theta) \|\bm{\beta}^*\| \frac{\eta^2}{1 + \eta^2},
\end{align*}
where we used $\frac{b_2^*}{\|\bm{\beta} - \bm{\beta}^*\|} \le 1$. 

\paragraph{\it Case III. $b_1 > \frac{\sigma^2}{\sigma_2^2}b_1^*$, $\sigma < b_2^*$:}

This condition leads us to a special analysis, a constant rate of contraction in local region with high SNR.

First note that, $b_1' \ge b_1^*$ and its difference $(b_1' - b_1^*)$ is increasing in $b_1$. Therefore, invoking lemma \ref{lemma:b_1_prime} yields 
\begin{align*}
    b_1' - b_1^* &\le \frac{2}{\pi} (\sigma_2 + b_1^* \tan^{-1} (\frac{b_1^*}{\sigma_2})) - b_1^* \\
    &\le \frac{2}{\pi} (\sigma_2 + b_1^* \tan^{-1} (\frac{b_1^*}{b_2^*})) - b_1^* \\
    &\le \frac{2}{\pi} (\sqrt{2} - \theta \cot \theta) b_2^*,
\end{align*}
where we used $\sigma_2^2 = \sigma^2 + {b_2^*}^2 \le 2{b_2^*}^2$, $\tan^{-1} (\frac{b_1^*}{b_2^*}) = \frac{\pi}{2} - \theta$, and $b_1^* = b_2^* \cot \theta$.

One can easily check that $\theta \cot \theta$ is decreasing in $[0, \frac{\pi}{2}]$. Therefore, we can further bound it:
\begin{align*}
    b_1' - b_1^* \le \frac{2}{\pi} (\sqrt{2} - \frac{\pi}{8} \cot \frac{\pi}{8}) b_2^* \le 0.3 b_2^*.
\end{align*}

On the other side, 
\begin{align*}
	b_2^* - b_2' & = (1-S)b_2^* \le \frac{b_2^*}{\sqrt{1+(b_1^*/\sigma_2)^2}} \\ 
	& \le \frac{b_2^*}{\sqrt{1+\frac{1}{2} (b_1^*/b_2^*)^2 }} = \frac{b_2^*}{\sqrt{1+ \frac{\cot^2 \frac{\pi}{8}}{2}}} \le 0.51 b_2^*.
\end{align*}

Combining the result, we get 
\begin{align*}
    \|\bm{\beta}' - \bm{\beta}^*\| \le 0.6 b_2^* \le 0.6\|\bm{\beta} - \bm{\beta}^*\|,
\end{align*}
as claimed. 
\end{proof}

\paragraph{Proof of Corollary \ref{corollary:distance}}
\corollarydist*

\begin{proof}
We first show that $\kappa$ is only decreasing as iteration goes on. It is enough to show that after one EM iteration, $b_1' \ge \min(\newbeta, b_1^*)$, and $b_1^*$ is increasing as the iteration is going on.

If $\newbeta$ is larger than $b_1^*$, $b_1'$ becomes larger than $b_1^*$ as we can conclude from Lemma \ref{lemma:b_1_prime} and (\ref{eq:b_1_prime_fixed}). If $\newbeta$ were less than $b_1^*$, then the corresponding $\newbeta$ at the next iteration is larger than it, as it is inferred from (\ref{ineq:newbeta_smaller}). The fact that $b_1^* = \|\bm{\beta}^*\| \cos \theta_t$ is increasing is obvious from the fact that angle is always decreasing. 

Now we will fix $\kappa$, the contraction rate at the first iteration. We compare the following quantities:
\begin{align*}
    0.6, \left(\sqrt{1+ \frac{2 {b_1^*}^2}{\sigma^2+\|\bm{\beta}^*\|^2} } \right)^{-3}, 
    \left({\sqrt{1 + \frac{\min(\newbeta, b_1^*)^2}{\sigma_2^2}}} \right)^{-1}.
\end{align*}
each of which can be rewritten as
\begin{align*}
    0.6, \left(\sqrt{1+ \frac{2 \eta^2 \cos^2 \theta_0}{1+\eta^2} } \right)^{-3}, 
    \left(\sqrt{1 + (1+\eta^2 \sin^2 \theta_0) \frac{\|\bm{\beta}_0\|^2}{\sigma^2}} \right)^{-1},
    \left(\sqrt{1 + \frac{\eta^2 \cos^2 \theta_0}{1+\eta^2 \sin^2 \theta_0}} \right)^{-1}.
\end{align*}
Since we start from $\theta_0 < \pi/8$, we can plug $\theta_0 = \pi/8$ above and simplify the candidates as (\ref{eq:kappa_candidates}). We will pick the maximum among these values and fix $\kappa$.

Next, we rewrite the equation now with subscript $t$ on each variable:
\begin{align*}
    \|\bm{\beta}_{t+1} - \bm{\beta}^*\| &\le \kappa \|\bm{\beta}_t - \bm{\beta}^*\| + \kappa (16 \sin^3 \theta_t) \|\bbeta^*\| \frac{\eta^2}{1+\eta^2} \\
    &\le \kappa^2 \|\bm{\beta}_{t-1} - \bm{\beta}^*\| + 2 \kappa^2 (16 \sin^3 \theta_{t-1}) \|\bbeta^*\| \frac{\eta^2}{1+\eta^2} \\
    &... \\
    &\le \kappa^{T} \|\bm{\beta}_0 - \bm{\beta}^*\| + t \kappa^{T} (16 \sin^3 \theta_0) \|\bbeta^*\| \frac{\eta^2}{1+\eta^2} \\
    &\le \kappa^{T} \|\bm{\beta}_0 - \bm{\beta}^*\| + t \kappa^{T} \|\bbeta^*\| \frac{\eta^2}{1+\eta^2},
\end{align*}
where for the last inequality, we used $\theta_0 < \pi/8$.
\end{proof}

\section{Proofs for Finite-Sample Based EM}

\subsection{Proof for Theorem \ref{theorem:finite_cosine_update}}
\finitecosineupdate*
\begin{proof}
We start from the end of the proof for Theorem \ref{thm:finite_cos_update}. We now replace statistical errors in terms of $\epsilon_f$ using the sample complexity $n = \tilde{O} ((1+\eta^{-2})d / \epsilon_f^2)$. Recall the way we compute cosine,
\begin{align*}
    \cos \tilde{\theta}' &= \frac{\langle \tilde{\bm{\beta}}', \bm{\beta}^* \rangle}{\|\tilde{\bm{\beta}}'\| \ \|\bm{\beta}^*\|} \\
    &= \frac{\langle \bm{\beta}', \bm{\beta}^* \rangle}{\|\tilde{\bm{\beta}}'\| \ \|\bm{\beta}^*\|} + \frac{\langle \tilde{\bm{\beta}}' - \bm{\beta}', \bm{\beta}^* \rangle}{\|\tilde{\bm{\beta}}'\| \ \|\bm{\beta}^*\|} \\
    &= \cos \theta' \frac{\|\bm{\beta}'\|}{\|\tilde{\bm{\beta}}'\|} + \frac{\langle \tilde{\bm{\beta}}' - \bm{\beta}', \bm{\beta}^* \rangle}{\|\tilde{\bm{\beta}}'\| \ \|\bm{\beta}^*\|} \\
    &\ge \cos \theta' \left( 1 - \frac{\epsilon_f}{\|\bm{\beta}'\|/\|\bm{\beta}^*\| + \epsilon_f} \right) - \max \left( \frac{\epsilon_f}{\sqrt{d}}, \epsilon_f^2 \right) \frac{\|\bm{\beta}^*\|}{\|\tilde{\bm{\beta}}'\|} \\
    &\ge \cos \theta' (1 - 10 \epsilon_f) - O\left( \max \left( \frac{\epsilon_f}{\sqrt{d}}, \epsilon_f^2 \right) \right) \\
    &\ge \kappa (1 - 10\epsilon_f) \cos \theta - O\left( \max \left( \frac{\epsilon_f}{\sqrt{d}}, \epsilon_f^2 \right) \right),
\end{align*}
where the last two inequalities follows from the Lemma \ref{lemma:finite_b1p_lb} in Appendix \ref{appendix:subsec_aux_lb_norm}, and equation (\ref{ineq:population_cosine_kappa}) in the proof of Theorem \ref{theorem:cosine}. 

Now for sine, we have that
\begin{align*}
    \sin^2 \tilde{\theta}' &= 1 - \cos^2 \tilde{\theta}' \\
    &\le 1 - \cos^2 \theta' + O(\epsilon_f) \\
    &\le \sin^2 \theta' + O(\epsilon_f) \\
    &\le \kappa' \sin^2 \theta + O(\epsilon_f),
\end{align*}
where the last inequality comes from Theorem \ref{theorem:sine}.
\end{proof}

\subsection{Proof of Theorem \ref{thm:easyem_update}}
\easyemupdate*

\begin{proof}
    From bounding $A$ in the proof of Theorem \ref{thm:finite_cos_update}, we can directly see
    \begin{gather*}
        |(\tilde{\bm{\beta}}'' - \bm{\beta}')^{\top} \bm{\beta}^*| \le c_1     \sqrt{\sigma^2+\|\bm{\beta}^*\|^2} \sqrt{\frac{1}{n} \log(1/\delta)},
    \end{gather*}
    for some constant $c_1$. For bounding the norm, standard covering set argument tells that we can take union bound over 1/2-covering set of unit sphere to bound $P(\sup_{v \in \mathbb{S}^d} |(\tilde{\bm{\beta}}'' - \bm{\beta}')^{\top} v| \ge t)$, from which we can conclude 
    $$
    \|\tilde{\bm{\beta}}'' - \bm{\beta}'\| \le c_2 \sqrt{\sigma^2+\|\bm{\beta}^*\|^2} \sqrt{\frac{d}{n} \log(1/\delta)},
    $$ 
    with probability at least $1 - \delta$.
    
    Bound for cosine and sine can be derived by the exactly same procedure used in the proof of Theorem \ref{theorem:finite_cosine_update}.
\end{proof}

Now we are ready to prove lemmas on finite-sample based EM in three convergence phases. We will use the concentration results that with probability $1-\delta/T$ in each EM iteration, $\|\tilde{\bm{\beta}}' - \bm{\beta}'\| \le \epsilon_f$ from \cite{balakrishnan_statistical_2017} as well as Theorem \ref{thm:finite_cos_update} in Appendix \ref{appendix:aux_concentration_one_dir}.

\paragraph{Proof of Lemma \ref{lemma:finite_cosine}}
\finitesamplecosine*
\begin{proof}
From equation (\ref{ineq:cosine_sample}) with sufficiently small $\epsilon_f$, we have 
\begin{align*}
    \cos \tilde{\theta}_T &\ge \kappa \cos \tilde{\theta}_{T-1} - O(\frac{\epsilon_f}{\sqrt{d}}) \\
    &\ge \kappa^2 \cos \tilde{\theta}_{T-2} - (1+\kappa) O(\frac{\epsilon_f}{\sqrt{d}}) \\ 
    &... \\
    &\ge \kappa^{T} \cos \tilde{\theta}_0 - (1+\kappa + \kappa^2 + ... + \kappa^{T-1}) O(\frac{\epsilon_f}{\sqrt{d}}) \\
    &\ge \kappa^{T} \cos \tilde{\theta}_0 - \frac{\kappa^{T} - 1}{\kappa - 1}  O(\frac{\epsilon_f}{\sqrt{d}}),
\end{align*}
where each inequality holds with probability at least $1 - \delta/T$, and all inequalities hold with probability $1-\delta$ by taking a union bound.
\end{proof}

\paragraph{Proof of lemma \ref{lemma:finite_sine}}

\finitesamplesine*
\begin{proof}
    Similarly, 
    \begin{align*}
        \sin^2 \tilde{\theta}_T &\le \kappa^2 \sin^2 \tilde{\theta}_{T-1} + O(\epsilon_f) \\
        &\le \kappa^4 \sin^2 \tilde{\theta}_{T-2} + (1+\kappa^2) O(\epsilon_f) \\
        &... \\
        &\le \kappa^{2T} \sin^2 \tilde{\theta}_0 + (1+\kappa^2 + \kappa^4 + ... + \kappa^{2(T-1)}) O(\epsilon_f) \\
        &\le \kappa^{2T} \sin^2 \tilde{\theta}_0 + \frac{1}{1 - \kappa^2}  O(\epsilon_f),
    \end{align*}
    with probability $1-\delta$.
    
    Finally, 
    \begin{align*}
        \frac{1}{1 - \kappa^2} O(\epsilon_f) = \frac{\min(1,\eta^2)}{1 - \kappa^2} O(\epsilon) = \min(1, \eta^2) \frac{1+1.5\eta^2}{0.5\eta^2} O(\epsilon) 
        = O(\epsilon),
    \end{align*}
    which yields the desired result.
\end{proof}

\section{Proof of Theorem \ref{thm:final_stats_error}}
\label{sec:proof_of_statserror}
\finitestatserror* 

The first part of the theorem is proved in Section \ref{subsec:original_final_stats_error} and the second part of the theorem is proved in Section \ref{subsec:improved_l2}.

\subsection{Statistical Bound depending on the optimal parameter}
\label{subsec:original_final_stats_error}
\begin{lem}[Convergence of Distance in Finite-Sample EM]
\label{lemma:finite_dist}
	Suppose we get $\tilde{\bbeta}_0$ whose angle formed with $\bbeta^*$ is less than $\pi/8$ from previous phase. We run sample-splitting EM with $n/T = \tilde{O}(\max(1,\eta^{-2})d/\epsilon^2)$, getting
    \begin{equation}
    	\|\tilde{\bbeta}_T - \bbeta^*\| \le \kappa^{T} \|\tilde{\bbeta}_0 - \bbeta^*\| + T \kappa^{T} \|\bbeta^*\| \frac{\eta^2}{1+\eta^2} + O(\epsilon) \|\bbeta^*\|,
    \end{equation}
where $\kappa$ is the maximum among (\ref{eq:kappa_candidates}) as in Corollary \ref{corollary:distance}.

After $T = O(\max(1,\eta^{-2}) \log(1/\epsilon_1))$ iterations, we get $\bm{\beta}_*$ within $O(\epsilon)$ error.
\end{lem}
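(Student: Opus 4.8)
The plan is to transport the population per-iteration $\ell_2$ contraction of Theorem~\ref{theorem:conv_distance} to the finite-sample setting by a per-step coupling, in direct analogy with the cosine and sine arguments already carried out for Theorem~\ref{theorem:finite_cosine_update} and Lemmas~\ref{lemma:finite_cosine}--\ref{lemma:finite_sine}. Fix one iteration, and let $\bm{\beta}'$ denote the \emph{population} EM update applied to the current finite-sample iterate $\tilde{\bbeta}_t$, so that $\bm{\beta}'$ and $\tilde{\bbeta}_{t+1}$ start from the same point. By the triangle inequality,
\[
\|\tilde{\bbeta}_{t+1} - \bm{\beta}^*\| \le \|\bm{\beta}' - \bm{\beta}^*\| + \|\tilde{\bbeta}_{t+1} - \bm{\beta}'\|.
\]
I would bound the first term using Theorem~\ref{theorem:conv_distance} (legitimate as long as $\tilde{\theta}_t < \pi/8$), obtaining $\kappa\|\tilde{\bbeta}_t - \bm{\beta}^*\| + \kappa(16\sin^3\tilde{\theta}_t)\|\bm{\beta}^*\|\frac{\eta^2}{1+\eta^2}$ with $\kappa$ the maximum of the candidates in \eqref{eq:kappa_candidates}. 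The second, ``statistical'', term is the coupling error, for which I would use the distance bound $\|\tilde{\bbeta}_{t+1} - \bm{\beta}'\| = O\big(\sqrt{(\sigma^2+\|\bm{\beta}^*\|^2)(d/n)\log(T/\delta)}\big)$ of \cite{balakrishnan_statistical_2017}; under $n/T = \tilde{O}(\max(1,\eta^{-2})d/\epsilon^2)$ \emph{fresh} samples this is $O(\epsilon_f)\sqrt{\sigma^2+\|\bm{\beta}^*\|^2}$.

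Before iterating I must ensure two invariants persist for every $t\le T$ so that the per-step bound keeps applying: (i) $\tilde{\theta}_t < \pi/8$, and (ii) $\|\tilde{\bbeta}_t\| \ge \|\bm{\beta}^*\|/10$. Invariant (i) I would extract from the finite-sample sine recursion \eqref{ineq:sine_sample} and Lemma~\ref{lemma:finite_sine}, which show that $\sin\tilde{\theta}_t$ stays below $\sin(\pi/8)$ and in fact decreases geometrically up to an $O(\epsilon_f)$ floor, provided $\epsilon_f = O(\min(1,\eta^2))$; invariant (ii) follows from the norm lower bound of Lemma~\ref{lemma:finite_b1p_lb} already used for Theorem~\ref{theorem:finite_cosine_update}. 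Because the iterations use independent fresh samples, each event holds with probability $1-\delta/T$ and a union bound makes the whole trajectory valid with probability $1-\delta$.

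It then remains to unroll the one-step recursion
\[
\|\tilde{\bbeta}_{t+1} - \bm{\beta}^*\| \le \kappa\|\tilde{\bbeta}_t - \bm{\beta}^*\| + \kappa(16\sin^3\tilde{\theta}_t)\|\bm{\beta}^*\|\tfrac{\eta^2}{1+\eta^2} + O(\epsilon_f)\sqrt{\sigma^2+\|\bm{\beta}^*\|^2}.
\]
The deterministic part is summed exactly as in Corollary~\ref{corollary:distance}: since $\sin\tilde{\theta}_t \le \sin\tilde{\theta}_0 < \sin(\pi/8)$ and decreases geometrically, the accumulated bias collapses to $\kappa^T\|\tilde{\bbeta}_0 - \bm{\beta}^*\| + T\kappa^T\|\bm{\beta}^*\|\frac{\eta^2}{1+\eta^2}$. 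The stochastic part accumulates as a geometric series $\sum_{s=0}^{T-1}\kappa^{T-1-s}\,O(\epsilon_f)\sqrt{\sigma^2+\|\bm{\beta}^*\|^2} = \frac{1}{1-\kappa}O(\epsilon_f)\sqrt{\sigma^2+\|\bm{\beta}^*\|^2}$; since $\frac{1}{1-\kappa} = O(\max(1,\eta^{-2}))$ in the worst case, substituting the relation $\epsilon = \epsilon_f\max(1,\eta^{-2})$ reduces this to $O(\epsilon)\|\bm{\beta}^*\|$ and yields the claimed bound. Choosing $T = O(\max(1,\eta^{-2})\log(1/\epsilon_1))$ then drives both $\kappa^T\|\tilde{\bbeta}_0 - \bm{\beta}^*\|$ and $T\kappa^T\|\bm{\beta}^*\|\frac{\eta^2}{1+\eta^2}$ below $O(\epsilon)$ (the extra factor $T$ costs only a logarithm), giving a final $\ell_2$ error of $O(\epsilon)$.

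The main obstacle is not any individual estimate but the \emph{simultaneous} control of distance, angle, and norm, each contracting only up to an additive $O(\epsilon_f)$ fluctuation, so that the hypotheses of Theorem~\ref{theorem:conv_distance} remain genuinely satisfied at all $T$ rounds with high probability. In particular, the bias term $\kappa(16\sin^3\tilde{\theta}_t)\|\bm{\beta}^*\|\frac{\eta^2}{1+\eta^2}$ does not decay on its own and must be tamed by the separately established geometric decay of $\sin\tilde{\theta}_t$, which in turn requires the fluctuation floor of the sine recursion to be small enough ($\epsilon_f = O(\min(1,\eta^2))$) that the angle never re-enters the non-contracting region. Carrying the correct scale---$\sqrt{\sigma^2+\|\bm{\beta}^*\|^2}$ in the coupling against the target $\|\bm{\beta}^*\|$---through the factor $1/(1-\kappa) = O(\max(1,\eta^{-2}))$ is precisely what forces the $\eta^{-2}$-inflated sample size and the identity $\epsilon = \epsilon_f\max(1,\eta^{-2})$; making these dependencies tight rather than merely finite is the delicate point.
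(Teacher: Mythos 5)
Your strategy is the same as the paper's own proof of this lemma (Appendix \ref{subsec:original_final_stats_error}): apply the population contraction of Theorem \ref{theorem:conv_distance} to the current finite-sample iterate, add the per-step coupling error, and unroll, taming the $\sin^3\tilde{\theta}_t$ bias via the separately established geometric decay of the sine. However, your statistical bookkeeping has a concrete gap that breaks the claimed bound in the low-SNR regime. You take the per-step coupling error to be $O(\epsilon_f)\sqrt{\sigma^2+\|\bm{\beta}^*\|^2}$ and then sum it against $\frac{1}{1-\kappa}=O(\max(1,\eta^{-2}))$, asserting that the substitution $\epsilon=\epsilon_f\max(1,\eta^{-2})$ yields $O(\epsilon)\|\bm{\beta}^*\|$. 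Check this when $\eta<1$: there $\sqrt{\sigma^2+\|\bm{\beta}^*\|^2}\approx\sigma=\|\bm{\beta}^*\|/\eta$, so your accumulated statistical error is $\eta^{-2}\cdot\epsilon_f\cdot\eta^{-1}\|\bm{\beta}^*\|=\epsilon_f\,\eta^{-3}\|\bm{\beta}^*\|=\epsilon\,\eta^{-1}\|\bm{\beta}^*\|$, which overshoots the lemma's claim by a factor $\eta^{-1}$. The point you missed is that the $\max(1,\eta^{-2})$ inflation of the per-iteration sample size exists precisely to cancel the noise scale inside the coupling bound: with $n/T=\tilde{O}(\max(1,\eta^{-2})d/\epsilon_f^2)$ fresh samples, $\|\tilde{\bbeta}_{t+1}-\bm{\beta}'\| = O\big(\epsilon_f\sqrt{(\sigma^2+\|\bm{\beta}^*\|^2)/\max(1,\eta^{-2})}\big) = O(\epsilon_f)\|\bm{\beta}^*\|$, since $(\sigma^2+\|\bm{\beta}^*\|^2)/\max(1,\eta^{-2})\le 2\|\bm{\beta}^*\|^2$ in both SNR regimes. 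With this tight per-step bound, the geometric sum gives $\frac{1}{1-\kappa}\,O(\epsilon_f)\|\bm{\beta}^*\|=O(\epsilon)\|\bm{\beta}^*\|$, which is exactly how the paper closes the argument (it normalizes $\|\bm{\beta}^*\|=1$ and writes the per-step statistical term simply as $O(\epsilon_f)$).

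A second, smaller omission: in the finite-sample setting the sine decays geometrically only up to an $O(\epsilon_f)$ floor, so the accumulated bias does not ``collapse'' to $T\kappa^T\|\bm{\beta}^*\|\frac{\eta^2}{1+\eta^2}$ alone. The paper uses $\sin^3\tilde{\theta}_t\le\kappa^t\sin^3\tilde{\theta}_0+\frac{1}{1-\kappa}O(\epsilon_f)$ inside the unrolling, which produces an additional term of order $\frac{1}{(1-\kappa)^2}\frac{\eta^2}{1+\eta^2}O(\epsilon_f)$; this is also $O(\epsilon)$ because the prefactor $\frac{\eta^2}{1+\eta^2}$ cancels one factor of $\frac{1}{1-\kappa}$, but it must be verified, and the paper does so explicitly in the last lines of its computation. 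Neither fix requires a new idea, but as written your chain of inequalities does not deliver the bound stated in the lemma.
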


\begin{proof}
We assume $||\bbeta^*|| = 1$ for the sake of simplicity in the proof. We start from Theorem \ref{theorem:conv_distance}. Note that the chosen $\kappa$ satisfies $\sin^3 \tilde{\theta}_T \le \kappa^{T} \sin^3 \tilde{\theta}_{0} + \frac{1}{1-\kappa} O(\epsilon_f)$, which can be shown similarly as Lemma \ref{lemma:finite_sine}.
\begin{align*}
    \|\tilde{\bbeta}_{T} - \bbeta^*\| &\le \kappa \|\tilde{\bbeta}_{T-1} - \bbeta^*\| + O(\epsilon_f) + \kappa (16 \sin^3 \tilde{\theta}_{T-1}) \frac{\eta^2}{1+\eta^2}\\
    &\le \kappa^2 \|\tilde{\bbeta}_{T-2} - \bbeta^*\| + (1+\kappa) O(\epsilon_f) + \frac{16\eta^2}{1+\eta^2}  (\kappa^2 \sin^3 \tilde{\theta}_{T-2} + \kappa \sin^3 \tilde{\theta}_{T-1}) \\
    &... \\
    &\le \kappa^{T} \|\tilde{\bbeta}_0 - \bbeta^*\| + \frac{1}{1-\kappa} O(\epsilon_f) +  \frac{16\eta^2}{1+\eta^2} (\kappa^{T} \sin^3 \tilde{\theta}_0 + \kappa^{T-1} \sin^3 \tilde{\theta}_1 + ... + \kappa \sin^3 \tilde{\theta}_{T-1}) \\
    &\le \kappa^{T} \|\tilde{\bbeta}_0 - \bbeta^*\| + 
    \frac{1}{1-\kappa} O(\epsilon_f) + \frac{16\eta^2}{1+\eta^2} (T \kappa^{T} \sin^3 \tilde{\theta}_0 + \frac{\kappa + \kappa^2 + ... + \kappa^{T}}{1-\kappa} O(\epsilon_f) ) \\
    &\le \kappa^{T} \|\tilde{\bbeta}_0 - \bbeta^*\| + 
    \frac{1}{1-\kappa} O(\epsilon_f) + T \kappa^{T} \frac{\eta^2}{1+\eta^2} + \frac{16\eta^2}{1+\eta^2} \frac{1}{(1-\kappa)^2} O(\epsilon_f) \\
    &= \kappa^{T} \|\tilde{\bbeta}_0 - \bbeta^*\| + T \kappa^{T} \frac{\eta^2}{1+\eta^2} + \frac{1}{1-\kappa} O(\epsilon_f) + \frac{1}{(1-\kappa)^2} \frac{\eta^2}{1+\eta^2} O(\epsilon_f).
\end{align*}
Finally, check that $1-\kappa$ is $O(\min(1, \eta^2))$. Then the statistical error is $O(\epsilon)$, as desired.
\end{proof}

\subsection{Statistical Bound independent of the optimal parameter}
\label{subsec:improved_l2}


The statistical error we have seen in the previous result is proportional to $\|\bm{\beta}^*\|$. This unsatisfactory result, especially in the high SNR regime, has been often ignored in literature as if EM algorithm does not guarantee an exact recovery. However, this is in contrast to the result in \cite{yi2014alternating} where they guaranteed exact recovery in the noiseless setting. In other words, the existing statistical guarantees are not \textit{tight}. In this section, we provide a more refined analysis of the (standard) EM algorithm in the finite sample case. The main difference from previous technique is that instead of coupling $\tilde{\bm{\beta}}'$ and $\bm{\beta}'$ directly, we utilize the sample covariance matrix ($\frac{1}{n}\sum _{i=1}^{n}\bm{x}_i\bm{x}_i^{\top}$) to decompose the error between $\tilde{\bm{\beta}}'$ and $\bm{\beta}^*$ so that the additive statistical error does not depend on $\|\bm{\beta}^*\|$. One implication of our results is that in the high SNR regime, the $l_2$ estimation error of the EM iterate does not scale linearly with $\|\bm{\beta}^*\|$, but only with $\sigma$.\\


\noindent Recall that the $1-d$ EM update $\alpha'$ for GMM with two symmetric components \cite{daskalakis2017ten}, with the current parameter $\alpha$ and the optimal parameter is the following:
\[
\alpha'= \mathbb{E}_{X\sim \mathcal{N}(\alpha^*,\sigma^2)}X\tanh\left(\frac{\alpha X}{\sigma^2} \right).
\]
The consistency property guarantees that:
\begin{align*}
\alpha^*=\mathbb{E}_{X\sim \mathcal{N}(\alpha^*,\sigma^2)}X\tanh\left(\frac{\alpha^* X}{\sigma^2} \right).
\end{align*}
It follows that for each $i=1,\ldots,n$:
\[
\mathbb{E}_{y_{i}\sim \mathcal{N}(\langle \bm{x}_{i},\bm{\beta}^*\rangle,\sigma^2)}y_{i}\bm{x}_{i}\tanh\left(\frac{y_{i}\langle \bm{x}_{i},\bm{\beta}^*\rangle}{\sigma^2 }\right )=\bm{x}_{i}\bm{x}_{i}^{\top}\bm{\beta}^*.
\]
This allows us to decompose the difference of $\tilde{\bm{\beta}}'-\bm{\beta}^*$
in the following way:
\begin{align*}
 & \tilde{\bm{\beta}}'-\bm{\beta}^*\\
= & \left(\frac{1}{n}\sum_{i=1}^{n}\bm{x}_{i}\bm{x}_{i}^{\top}\right)^{-1}\left(\frac{1}{n}\sum_{i=1}^{n}y_{i}\bm{x}_{i}\tanh \left(\frac{y_{i}\langle \bm{x}_{i},\bm{\beta}\rangle}{\sigma^2}\right)-\mathbb{E}_{y}\frac{1}{n}\sum_{i=1}^{n}y_{i}\bm{x}_{i}\tanh \left(\frac{y_{i}\langle \bm{x}_{i},\bm{\beta}^*\rangle}{\sigma^2}\right)\right)\\
= & \left(\frac{1}{n}\sum_{i=1}^{n}\bm{x}_{i}\bm{x}_{i}^{\top}\right)^{-1}\underbrace{\left(\frac{1}{n}\sum_{i=1}^{n}y_{i}\bm{x}_{i}\tanh \left(\frac{y_{i}\langle \bm{x}_{i},\bm{\beta}\rangle}{\sigma^2} \right)-\mathbb{E}_{y}\frac{1}{n}\sum_{i=1}^{n}y_{i}\bm{x}_{i}\tanh \left(\frac{y_{i}\langle \bm{x}_{i},\bm{\beta}\rangle}{\sigma^2}\right)\right)}_{\text{term } 1}\\
 & +\left(\frac{1}{n}\sum_{i=1}^{n}\bm{x}_{i}\bm{x}_{i}^{\top}\right)^{-1}\underbrace{\left(\mathbb{E}_{y}\frac{1}{n}\sum_{i=1}^{n}y_{i}\bm{x}_{i}\tanh\left(\frac{y_{i}\langle \bm{x}_{i},\bm{\beta}\rangle}{\sigma^2}\right)-\mathbb{E}_{y}\frac{1}{n}\sum_{i=1}^{n}y_{i}\bm{x}_{i}\tanh\left(\frac{y_{i}\langle \bm{x}_{i},\bm{\beta}^*\rangle}{\sigma^2}\right)\right)}_{\text{term } 2}.
\end{align*}
For term $1$, we can apply a standard concentration result for function of Gaussian random variables by conditioning on the event that the covariance matrix ($\frac{1}{n}\sum_{i=1}^{n}\bm{x}_i\bm{x}_i^{\top}$) is close to 1 in spectral norm. Specifically, it has been proved that this term is $O(\sqrt{\frac{d}{n}})$ in $\ell_2$ norm, which is independent of $\|\bm{\beta}^*\|$. For term $2$, we observe that for each $i$, the following difference 
\[
\mathbb{E}_{y_i}y_{i}\tanh\left(\frac{y_{i}\langle \bm{x}_{i},\bm{\beta}\rangle}{\sigma^2}\right) - \mathbb{E}_{y_i}y_{i} \tanh\left(\frac{y_{i}\langle \bm{x}_{i},\bm{\beta}^*\rangle}{\sigma^2}\right)
\]
is the difference between a 1-d population EM iterate and the optimal parameter in the GMM problem with the current iterate being $\langle \bm{x}_i,\bm{\beta}\rangle$ and the optimal parameter being $\langle \bm{x}_i, \bm{\beta}^*\rangle$. We are able to adapt the sensitivity analysis technique in \cite{daskalakis2017ten} here to show that this term is a contraction term when SNR is large. 

The main result is summarized in the following theorem:
\begin{restatable}[Improved Convergence of distance in Finite-Sample EM in high SNR]{theorem}{improvedstatsbound}
\label{thm:improved_l2}
There exists constants $C>1$ such that for all $\eta>C$, the following holds: suppose we get $\bm{\beta}_0$ whose angle formed with $\bm{\beta}^*$ is less than $\frac{\pi}{70}$ and $\|\bm{\beta}_0\|\geq \frac{\|\bm{\beta}^*\|}{10}$, we run the sample-splitting finite-sample EM with $n/T=\tilde{O}(d/\epsilon_f^2)$, getting:
\begin{align}
    \|\bm{\beta}_T-\bm{\beta}^*\|\leq (0.95+\epsilon_f)^{\top}\|\bm{\beta }_0-\bm{\beta}^*\|+O(\epsilon_f)\sigma .
\end{align}
After $T=O(\log (1/\epsilon_f))$ iterations, we estimate $\bm{\beta}_*$ with an $\ell_2$ error  bounded by $O(\epsilon_f)$.
\end{restatable}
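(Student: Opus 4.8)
The plan is to prove a single per-iteration bound of the form $\|\tilde{\bm{\beta}}'-\bm{\beta}^*\|\le(0.95+\epsilon_f)\|\bm{\beta}-\bm{\beta}^*\|+O(\epsilon_f)\sigma$ that holds with high probability, and then compose it over $T$ iterations; since we use sample splitting, the iterations are independent and the per-step high-probability events combine by a union bound. The starting point is the decomposition of $\tilde{\bm{\beta}}'-\bm{\beta}^*$ into term~1 and term~2 displayed above, each pre-multiplied by $\widehat{\Sigma}^{-1}$, where $\widehat{\Sigma}:=\frac1n\sum_i\bm{x}_i\bm{x}_i^\top$. With $n=\tilde{O}(d/\epsilon_f^2)$ fresh samples, a standard matrix-concentration bound gives $\|\widehat{\Sigma}-I\|\le O(\epsilon_f)$ with high probability, hence $\|\widehat{\Sigma}^{-1}\|\le 1+O(\epsilon_f)$; this factor perturbs the eventual contraction rate only by $O(\epsilon_f)$, which is harmless.

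For term~1, I would condition on $\{\bm{x}_i\}$ and treat the randomness in $\{y_i\}$. Term~1 is then a sum of independent, conditionally mean-zero vectors, and the crucial observation is that its conditional scale is $\sigma$, not $\|\bm{\beta}^*\|$: in the high-SNR regime the factor $\tanh(y_i\langle\bm{x}_i,\bm{\beta}\rangle/\sigma^2)$ saturates to a sign that is conditionally (essentially) deterministic, so the fluctuation of $y_i\tanh(y_i\langle\bm{x}_i,\bm{\beta}\rangle/\sigma^2)$ reduces to that of the additive noise $e_i$. Applying a concentration inequality for Lipschitz functions of the Gaussian noise then yields $\|\text{term 1}\|\le O(\sigma\sqrt{d/n})=O(\sigma\epsilon_f)$, independent of $\|\bm{\beta}^*\|$; this is precisely the source of the $O(\epsilon_f)\sigma$ additive error and of the exact-recovery guarantee as $\sigma\to0$.

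The heart of the argument is term~2. Writing $a_i=\langle\bm{x}_i,\bm{\beta}\rangle$ and $a_i^*=\langle\bm{x}_i,\bm{\beta}^*\rangle$, each summand equals $M(a_i;a_i^*)-a_i^*=M(a_i;a_i^*)-M(a_i^*;a_i^*)$, where $M(\cdot;a^*)$ is the $1$-d GMM population EM map and $M(a^*;a^*)=a^*$ by consistency. By the mean value theorem this is $\gamma_i\langle\bm{x}_i,\bm{\beta}-\bm{\beta}^*\rangle$ with $\gamma_i=\partial_a M(\xi_i;a_i^*)\in[0,1)$ for $\xi_i$ between $a_i$ and $a_i^*$, so $\text{term 2}=\big(\frac1n\sum_i\gamma_i\bm{x}_i\bm{x}_i^\top\big)(\bm{\beta}-\bm{\beta}^*)$. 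The task reduces to bounding the operator norm of the weighted covariance $\frac1n\sum_i\gamma_i\bm{x}_i\bm{x}_i^\top$. Adapting the sensitivity analysis of \cite{daskalakis2017ten}, I would argue that $\gamma_i$ is exponentially small whenever the local SNR $|a_i^*|/\sigma$ is large, and that $\gamma_i$ approaches $1$ only on the low-probability event that $\langle\bm{x}_i,\bm{\beta}^*\rangle$ is small; taking expectations shows $\|\mathbb{E}[\gamma(\bm{x})\bm{x}\bm{x}^\top]\|$ is a constant strictly below $1$ once $\eta>C$, and concentrating the random weighted operator around this expectation adds another $O(\epsilon_f)$. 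Combining with $\|\widehat{\Sigma}^{-1}\|\le1+O(\epsilon_f)$ gives the per-step factor $0.95+\epsilon_f$.

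The main obstacle is exactly this control of term~2, because the per-sample local SNR $|\langle\bm{x}_i,\bm{\beta}^*\rangle|/\sigma$ varies and is small on a nonnegligible set of covariates, so the weighted operator cannot be made arbitrarily contractive. One must (i) split the samples into high- and low-local-SNR groups and bound the contribution of the rare weak-contraction directions by their small measure, and (ii) use the assumption $\theta<\pi/70$ to guarantee that $\xi_i$ stays on the same side as $a_i^*$, so the mean-value argument genuinely sees the contractive near-fixed-point regime rather than the expanding regime near the origin. Finally, iterating the per-step inequality and summing the geometric series of additive $O(\epsilon_f)\sigma$ terms yields the stated $T=O(\log(1/\epsilon_f))$ complexity and the final $\ell_2$ error of order $O(\epsilon_f)\sigma$.
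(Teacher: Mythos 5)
Your overall architecture matches the paper's: the same three-way decomposition (inverse sample covariance, a conditionally mean-zero fluctuation term of scale $\sigma$, and a ``sensitivity'' term comparing the two conditional expectations), the same Lipschitz-of-Gaussian-noise treatment of term~1, and the same composition over sample-split iterations. The gap is in the heart of the argument, term~2, and it is genuine. You apply the mean value theorem in the \emph{tanh argument}, i.e.\ you differentiate $a\mapsto M(a;a^*)=\mathbb{E}_{y\sim\mathcal{N}(a^*,\sigma^2)}[y\tanh(ya/\sigma^2)]$, and claim $\gamma_i=\partial_a M(\xi_i;a_i^*)\in[0,1)$. This is false: $\partial_a M(a;a^*)=\mathbb{E}\bigl[\tfrac{y^2}{\sigma^2}\tanh'(ya/\sigma^2)\bigr]$, so at $a=0$ it equals $1+(a^*)^2/\sigma^2>1$ --- the $1$-d EM map is \emph{expanding} near its unstable fixed point at the origin, and the expansion factor grows like the local SNR squared. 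Your proposed fix (ii), that $\theta<\pi/70$ guarantees $\xi_i$ stays on the same side as $a_i^*$, is also false: for any nonzero angle, the sign-flip event $\langle\bm{x}_i,\bm{\beta}\rangle\langle\bm{x}_i,\bm{\beta}^*\rangle<0$ has probability $\theta/\pi>0$, and on it the segment from $a_i$ to $a_i^*$ necessarily crosses zero. So on the flip set your weights can be of order $1+\eta^2$, and ``small weight times small measure'' fails precisely in the theorem's regime $\eta>C$ with $C$ large: weight $\times$ measure $\sim\eta^2\theta$ is unbounded in $\eta$. A pointwise bound on the weights cannot close this.

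The paper's proof (Proposition~\ref{prop:term_c}) avoids this by a different placement of the mean value theorem: it first uses EM consistency to write the summand as $\bigl[\mathbb{E}_{y\sim\mathcal{N}(a_i^*,1)}-\mathbb{E}_{y\sim\mathcal{N}(a_i,1)}\bigr]y\tanh(ya_i)+(\bm{\beta}-\bm{\beta}^*)$ and then differentiates in the \emph{mean of the response distribution}. The resulting kernel is $\Delta(y)=\tanh(z)+z\tanh'(z)$, which is uniformly bounded ($|\Delta|\le 1.25$) regardless of local SNR. This uniform bound is exactly what makes the flip region harmless: it contributes at most a constant times its measure, $O(\sin\theta)$ (Lemma~\ref{lemma:m2}), while on the same-sign region the kernel is exponentially small in $\min(\langle X,\bm{\beta}\rangle,\langle X,\bm{\beta}^*\rangle)^2$ (Lemma~\ref{lem:2}), matching your high-local-SNR intuition. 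Even then, getting the constant $0.95$ is not a soft argument: the paper must compute the $2\times 2$ expected matrix entry by entry in polar coordinates (Lemmas~\ref{lem:norm_expected_matrix}, \ref{lemma:m1}, \ref{lemma:m2}), arriving at roughly $\tfrac12+\tfrac1\pi+2.6\sin\theta\approx 0.93$ for $\theta<\pi/70$ --- a thin margin that your qualitative ``strictly below 1 once $\eta>C$'' step would need to reproduce quantitatively. To repair your proof you would either have to switch to the paper's re-centering-plus-MVT-in-the-mean device, or replace the pointwise derivative $\gamma_i$ by the integral/ratio form and prove directly that $|M(a_i;a_i^*)-a_i^*|/|a_i-a_i^*|$ is bounded by a small constant on the same-sign region and by an absolute constant on the flip region; as written, the key spectral bound does not follow.
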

\subsection{Proof of Theorem \ref{thm:improved_l2}}
\label{appendix:subsec_improved_l2}
\begin{proof}
Consider the scaling: $\bm{\beta}\to \frac{\bm{\beta}}{\sigma}$, $\bm{\beta}^*\to \frac{\bm{\beta}^*}{\sigma}$, and $y\to \frac{y}{\sigma}$. We can without loss of generality assume that $\sigma=1$. In the following proof, we omit the appearance of $\sigma$ for simplicity. As we have shown before, we can decompose the difference $\tilde{\bm{\beta}}'-\bm{\beta}^*$ in the following way:
\begin{align*}
 & \tilde{\bm{\beta}}'-\bm{\beta}^*\\
= & \left(\frac{1}{n}\sum_{i=1}^{n}\bm{x}_i\bm{x}_i^{\top}\right)^{-1}\left(\frac{1}{n}\sum_{i=1}^{n}y_{i}\bm{x}_i\tanh\left(y_{i}\langle \bm{x}_i,\bm{\beta}\rangle\right)-\mathbb{E}_{y}\frac{1}{n}\sum_{i=1}^{n}y_{i}\bm{x}_i\tanh\left(y_{i}\langle \bm{x}_i,\bm{\beta}^*\rangle\right)\right)\\
= & \underbrace{\left(\frac{1}{n}\sum_{i=1}^{n}\bm{x}_i\bm{x}_i^{\top}\right)^{-1}}_{A}\underbrace{\left(\frac{1}{n}\sum_{i=1}^{n}y_{i}\bm{x}_i\tanh\left(y_{i}\langle \bm{x}_i,\bm{\beta}\rangle\right)-\mathbb{E}_{y}\frac{1}{n}\sum_{i=1}^{n}y_{i}\bm{x}_i\tanh\left(y_{i}\langle \bm{x}_i,\bm{\beta}\rangle\right)\right)}_{B}\\
 & +\left(\frac{1}{n}\sum_{i=1}^{n}\bm{x}_i\bm{x}_i^{\top}\right)^{-1}\underbrace{\left(\mathbb{E}_{y}\frac{1}{n}\sum_{i=1}^{n}y_{i}\bm{x}_i\tanh\left(y_{i}\langle \bm{x}_i,\bm{\beta}\rangle\right)-\mathbb{E}_{y}\frac{1}{n}\sum_{i=1}^{n}y_{i}\bm{x}_i\tanh\left(y_{i}\langle \bm{x}_i,\bm{\beta}^*\rangle\right)\right)}_{C}.
\end{align*}
We provide bounds for $A$, $B$ and $C$ in the following:
\begin{itemize}
    \item $\|A\|=1+O\left(\sqrt{\frac{d}{n}}\right)$ (standard concentration result),
    \item Conditioning on the sample covariance matrix has bounded spectral norm, $\|B\|=O\left(\sqrt{\frac {d}{n}}\right)$ (cf. Proposition \ref{prop:term_B}),
    \item If $\eta\geq c$ for some constant $c>1$, $C\leq \left(0.95+O(1/\sqrt{d})\right)\|\bm{\beta}-\bm{\beta}^*\|$ (cf. Proposition \ref{prop:term_c}).
\end{itemize}
Therefore, for one step of EM, with $n/T=\tilde{O}(d/\epsilon_f^2)$ samples, 
\[
\tilde{\bm{\beta}}'- \bm{\beta}^* = (1+\epsilon_f)\epsilon_f + (1+\epsilon_f)(0.95+\epsilon_f) \|\bm{\beta}-\bm{\beta}^*\|\leq 2\epsilon_f +(0.95+2\epsilon_f)\|\bm{\beta}-\bm{\beta}^*\|.
\]
Since all the future iterate remains lower bounded by $\frac{\|\bm{\beta}^*\|}{10}$ (cf. Lemma \ref{lemma:finite_b1p_lb}) in $\ell_2$ norm and the angle increases (cf. Lemma \ref{lemma:finite_sine}). We can use induction to show that:
\[
\|\bm{\beta}_T-\bm{\beta}^*\|\leq (0.95+2\epsilon_f)^{\top}\|\bm{\beta}_0-\bm{\beta}^*\|+\frac{2}{0.2-\epsilon_f}\epsilon
\]
The result follows by picking small enough $\epsilon_f$.
\end{proof}

\begin{proposition}(Controlling $B$)
\label{prop:term_B}
For each fixed $\bm{\beta}$, with probability at least  $1-\exp\left(-cn\right) - 6^d \exp\left(-\frac{nt^2}{72}\right)$,
	\begin{align*}
	\Bigg \|\frac{1}{n}\sum_{i=1}^{n} y_i\bm{x}_i \tanh \left(y_i \langle \bm{x}_i, \bm{\beta}\rangle \right) - \frac{1}{n}\sum_{i=1}^{n}\mathbb{E}_{y_i} \left[y_i\bm{x}_i\tanh \left(y_i\langle \bm{x}_i,\bm{\beta}\rangle\right) \right] \Bigg\| \leq  t
	\end{align*}
for some absolute constant $c>0$.
\end{proposition}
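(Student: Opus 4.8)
The plan is to reduce the Euclidean norm of the vector deviation to a scalar concentration problem along a net of directions, condition on the covariates so that the remaining randomness in each summand is independent and conditionally mean-zero, and then exploit the boundedness of $\tanh$ to obtain a conditional sub-Gaussian tail in each fixed direction. Concretely, write
$$
W_i := y_i\bm{x}_i\tanh\!\left(y_i\langle \bm{x}_i,\bm{\beta}\rangle\right) - \mathbb{E}_{y_i}\!\left[y_i\bm{x}_i\tanh\!\left(y_i\langle \bm{x}_i,\bm{\beta}\rangle\right)\right],
$$
so that the target quantity is $\big\|\tfrac1n\sum_{i=1}^n W_i\big\|$. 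Conditioning on $\{\bm{x}_i\}_{i=1}^n$, the $W_i$ are independent and each has conditional mean zero. Taking a $\tfrac12$-net $\mathcal{N}$ of the unit sphere $\mathbb{S}^{d-1}$ with $|\mathcal{N}|\le 5^d\le 6^d$, I would invoke the standard comparison $\big\|\tfrac1n\sum_i W_i\big\|\le 2\max_{\bm{v}\in\mathcal{N}}\big\langle \tfrac1n\sum_i W_i,\bm{v}\big\rangle$, which is the origin of the $6^d$ factor once a union bound over $\mathcal{N}$ is applied.

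First I would analyze a single fixed direction $\bm{v}\in\mathcal{N}$. The scalar summand factors as $\langle W_i,\bm{v}\rangle = \langle \bm{x}_i,\bm{v}\rangle\big(y_i\tanh(y_i\langle \bm{x}_i,\bm{\beta}\rangle)-h_i\big)$ with $h_i:=\mathbb{E}_{y_i}[\,y_i\tanh(y_i\langle \bm{x}_i,\bm{\beta}\rangle)\,]$. Holding $\bm{x}_i$ fixed, $|\tanh|\le 1$ gives $|y_i\tanh(\cdot)|\le |y_i|$, so the conditional variance of $y_i\tanh(y_i\langle \bm{x}_i,\bm{\beta}\rangle)$ is at most $\mathbb{E}_{y_i}[y_i^2]=\langle \bm{\beta}^*,\bm{x}_i\rangle^2+\sigma^2$. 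Hence $\langle W_i,\bm{v}\rangle$ is conditionally sub-Gaussian with variance proxy $\tau_i^2 \lesssim \langle \bm{x}_i,\bm{v}\rangle^2\big(\langle \bm{\beta}^*,\bm{x}_i\rangle^2+\sigma^2\big)$, and by conditional independence $\tfrac1n\sum_i\langle W_i,\bm{v}\rangle$ is sub-Gaussian with proxy $\tfrac1{n^2}\sum_i\tau_i^2$. On the event that the aggregate proxy $\tfrac1n\sum_i\tau_i^2$ stays below an absolute constant (in the normalization $\sigma=1$ adopted in the surrounding proof), Hoeffding's sub-Gaussian tail applied at level $t/2$, together with a union bound over $\mathcal{N}$, produces the factor $6^d\exp(-nt^2/72)$, with the net-inflation factor $2$ and the proxy bound absorbed into the displayed constant.

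The remaining piece, and the main obstacle, is to guarantee that the aggregate proxy $\tfrac1n\sum_i\langle \bm{x}_i,\bm{v}\rangle^2\big(\langle \bm{\beta}^*,\bm{x}_i\rangle^2+\sigma^2\big)$ is controlled \emph{uniformly} over all $\bm{v}\in\mathcal{N}$; this is exactly the event whose failure contributes the $\exp(-cn)$ term. Each summand $\langle \bm{x}_i,\bm{v}\rangle^2\langle \bm{\beta}^*,\bm{x}_i\rangle^2$ is a product of two correlated Gaussian quadratic forms, hence genuinely sub-exponential rather than sub-Gaussian, so I would bound it with a Bernstein inequality after computing its mean via Isserlis' (Wick's) formula, $\mathbb{E}\big[\langle \bm{x},\bm{v}\rangle^2\langle \bm{\beta}^*,\bm{x}\rangle^2\big]=\|\bm{\beta}^*\|^2\|\bm{v}\|^2+2\langle\bm{v},\bm{\beta}^*\rangle^2$. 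Union-bounding this Bernstein estimate over $\mathcal{N}$ and using that $n=\tilde{O}(d/\epsilon_f^2)\gg d$, the $6^d$ covering factor is swallowed and the failure probability collapses to $\exp(-cn)$ for an absolute constant $c>0$, yielding the two stated error terms. The delicate point throughout is that the summands are heavy-tailed products of Gaussians, so the clean Gaussian tail $\exp(-nt^2/72)$ is only available \emph{after} conditioning on the good covariate event, and it is the interplay between the net cardinality $6^d$ and the sample size $n\gtrsim d$ that forces the second probability term to be exponentially small in $n$ alone.
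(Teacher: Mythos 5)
Your high-level skeleton (a $\tfrac12$-net of size $6^d$, conditioning on the covariates, and a conditional sub-Gaussian tail per direction) matches the paper's, but the step that produces the conditional sub-Gaussian parameter does not work, and it is not a repairable technicality. By bounding $|y_i\tanh(y_i\langle\bm{x}_i,\bm{\beta}\rangle)|\le|y_i|$ you tie the variance proxy of $\langle W_i,\bm{v}\rangle$ (your notation) to $\langle\bm{x}_i,\bm{v}\rangle^2\left(\sigma^2+\langle\bm{x}_i,\bm{\beta}^*\rangle^2\right)$. Your own Isserlis computation shows that the aggregate proxy has mean $\|\bm{v}\|^2(\sigma^2+\|\bm{\beta}^*\|^2)+2\langle\bm{v},\bm{\beta}^*\rangle^2\asymp 1+\|\bm{\beta}^*\|^2$ (in the normalization $\sigma=1$), so the ``good event'' you condition on --- aggregate proxy below an \emph{absolute} constant --- is not a high-probability event at all: it fails with probability tending to one whenever $\|\bm{\beta}^*\|$ is large, which is exactly the high-SNR regime in which Proposition \ref{prop:term_B} is invoked (the entire purpose of this subsection and Theorem \ref{thm:improved_l2} is a fluctuation bound \emph{independent} of $\|\bm{\beta}^*\|$). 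Even if you weaken the event to ``proxy $\lesssim 1+\|\bm{\beta}^*\|^2$'', two problems remain. First, the resulting tail is $\exp\left(-cnt^2/(1+\|\bm{\beta}^*\|^2)\right)$, not the claimed signal-free $\exp(-nt^2/72)$. Second, your control of the proxy by ``Bernstein'' is invalid: $\langle\bm{x}_i,\bm{v}\rangle^2\langle\bm{x}_i,\bm{\beta}^*\rangle^2$ is a product of two sub-exponential variables, hence has $\psi_{1/2}$ tails rather than sub-exponential ones. For such sums a constant-level deviation fails with probability at least of order $n\exp(-c\sqrt{n})$ (a single covariate with $\langle\bm{x}_i,\bm{v}\rangle^2\langle\bm{x}_i,\bm{\beta}^*\rangle^2\approx n$ suffices), so $\exp(-cn)$ is unattainable by this route, and union bounding the per-direction estimate over $6^d$ net points would force $\sqrt{n}\gtrsim d$, i.e.\ $n\gtrsim d^2$, outside the paper's sample regime.

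The missing idea is to exploit the smoothness of $y\mapsto y\tanh(ya)$ rather than its envelope. The paper observes that $\frac{d}{dy}\left[y\tanh(ya)\right]=\tanh(ya)+ya\tanh'(ya)$ satisfies $|\tanh(z)+z\tanh'(z)|\le 2$ uniformly in $z$, so, conditionally on $\{\bm{x}_i\}$, the map from the Gaussian noise vector to $\frac1n\sum_i y_i\langle\bm{x}_i,\bm{v}\rangle\tanh(y_i\langle\bm{x}_i,\bm{\beta}\rangle)$ has gradient with squared norm at most $\frac{4}{n}\cdot\frac1n\sum_i\langle\bm{x}_i,\bm{v}\rangle^2=\frac{4}{n}\,\bm{v}^\top\left(\frac1n\sum_i\bm{x}_i\bm{x}_i^\top\right)\bm{v}$. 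Gaussian concentration for differentiable functions (Lemma \ref{lem:wain}) then yields a conditional MGF bound of the form $\exp(C\lambda^2/n)$ on the single event $\left\|\frac1n\sum_i\bm{x}_i\bm{x}_i^\top\right\|\le 2$, which genuinely holds with probability $1-\exp(-cn)$. Because gradient-based Gaussian concentration is insensitive to mean shifts, the means $\pm\langle\bm{x}_i,\bm{\beta}^*\rangle$ of the $y_i$ --- which carry all of the $\|\bm{\beta}^*\|$ dependence in your argument --- never enter, and that is precisely how the signal-free exponent $\exp(-nt^2/72)$ is obtained.
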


\begin{proof}
We will use the standard epsilon-net argument. Let $\bm{v}\in \mathbb{R}^d$, define 
	\begin{align*}
	f_{\bm{v}}(y): = \frac{1}{n}\sum_{i=1}^{n} y_i\langle \bm{x}_i,\bm{v}\rangle \tanh \left(y_i \langle \bm{x}_i, \bm{\beta}\rangle \right).
	\end{align*} 
	Suppose that $\| \frac{1}{n}\sum_{i=1}^{n}\bm{x}_i\bm{x}_i^{\top}\| \leq 2$, we show in the following that  $f_{\bm{v}}(y)-\mathbb{E}_{y} f_{\bm{v}}(y)$ is $\frac{9}{n}$-subgaussian. 
The tool is a standard concentration result for function of the Gaussian random variables summarized in Lemma \ref{lem:wain}:
\begin{lem}[Lemma 2.1 of \cite{wainwright2015high}]
	\label{lem:wain}
	Let $f: \mathbb{R}^{n} \to \mathbb{R}$ be differentiable, then for every convex $\phi: \mathbb{R}\to \mathbb{R}$, we have 
	\begin{align*}
	\mathbb{E}\left(\phi \left(f(\bm{X})-\mathbb{E} f(\bm{X})\right) \right) \leq \mathbb{E} \left[\phi \left(\frac{\pi}{2}\langle \nabla f(\bm{X}), \bm{Y} \rangle \right) \right],
	\end{align*}
	where $\bm{X},\bm{Y}\sim \mathcal{N}(\bm{0},I_n)$ are standard Gaussians and are independent. 
\end{lem}

\noindent Note that for each $i$, the derivative of $f_{\bm{v}}(y)$ with respect to $y$ can be computed explicitly: 
\begin{align*}
\frac{\partial f_{\bm{v}}(y)}{\partial y_i} = \langle \bm{x}_i,\bm{v}\rangle (\tanh(y_i \langle \bm{x}_i, \bm{\beta}\rangle)+y_i \langle \bm{x}_i, \bm{\beta}\rangle \tanh'(y_i \langle \bm{x}_i, \bm{\beta}\rangle))\; \forall{i}.
\end{align*}
Here we abuse the notation $y$. We actually take the derivative with respect to the noise in $y_i$, which is distributed as a standard Gaussian.
The following numerical inequality for $g(z):=\tanh(z)+z\tanh'(z)$ will be used:
\begin{align*}
|\tanh(z)+z\tanh'(z)|\leq 2.
\end{align*}
For any $\lambda \in \mathbb{R}$, we have:
\begin{align*}
& \mathbb{E}_{y}\left[\exp\left(\lambda \left(f_{\bm{v}}(y)-\mathbb{E}\left[f_{\bm{v}}(y)\right]\right) \right)\right] \\
\leq & \mathbb{E}_{y,\bm{z}} \left[\exp\left(\lambda \left(\frac{\pi}{2}\langle \nabla f_{\bm{v}}'(y), \bm{z}\rangle \right)\right)\right] && (\text{Lemma} \; \ref{lem:wain}) \\
= & \mathbb{E}_{y} \mathbb{E}_{\bm{z}} \left[\exp\left(\lambda \left(\frac{\pi}{2}\frac{1}{n} \sum_{i=1}^{n} z_i \langle \bm{x}_i,\bm{v}\rangle g(y_i \langle \bm{x}_i, \bm{\beta}\rangle) \right)\right) \right] \\
= & \mathbb{E}_{y} \exp \left(\lambda^2 \frac{\pi^2}{8n}\left[\frac{1}{n}\sum_{i=1}^{n} \left(\langle \bm{x}_i,\bm{v}\rangle g\left(y_i \langle \bm{x}_i, \bm{\beta}\rangle \right) \right)^2 \right] \right) && (\text{independence of} \;z_is')\\
\leq & \mathbb{E}_{y} \exp \left(\lambda^2 \frac{\pi^2}{2n}\left[\frac{1}{n}\sum_{i=1}^{n} \langle \bm{x}_i,\bm{v}\rangle^2\right] \right) && (\text{numerical bound on } g)\\
= & \mathbb{E}_{y} \exp\left(\lambda^2 \frac{\pi^2}{2n} \bm{v}^{\top}\left[\frac{1}{n}\sum_{i=1}^{n} \bm{x}_i \bm{x}_i^{\top}\right] \bm{v} \right)\\
\leq & \mathbb{E}_{y} \exp\left(\frac{\pi^2\lambda^2}{n}\right) \leq \exp\left(\frac{18\lambda^2}{n}\right),
\end{align*}
where the last line comes from our assumption that $\|\frac{1}{n}\sum_{i=1}^{n} \bm{x}_i \bm{x}_i^{\top}\|\leq 2$. Using the standard $\frac{1}{2}$-net argument for the norm, we can show that 
\begin{align*}
&\mathbb{P}\left( \bigg\|\frac{1}{n}\sum_{i=1}^{n} \left[y_i \bm{x}_i \tanh \left(y_i \langle \bm{x}_i, \bm{\beta}\rangle \right) - \mathbb{E}_{y_i}y_i \bm{x}_i \tanh \left(y_i \langle \bm{x}_i, \bm{\beta}\rangle \right) \right] \bigg\| >t 
 \;\; \bigg| \;
  \Big\|\frac{1}{n}\sum_{i=1}^{n} \bm{x}_i \bm{x}_i^{\top} \Big\|\leq 2 \right) \\
\leq & 6^d \exp \left(-\frac{nt^2}{72} \right).
\end{align*}
To finish the proof, we note the event $\|\frac{1}{n}\sum_{i=1}^{n} \bm{x}_i \bm{x}_i^{\top} \|\leq 2$ holds with probability at least $1-\exp(-cn)$ for some absolute constant $c>0$.
\end{proof}

\begin{proposition}(Controlling $C$)
\label{prop:term_c}
There exists an absolute constant $c>1$ such that in the regime where $\eta>c$ the following holds: for each fixed $\bm{\beta}$ satisfying $\|\bm{\beta}\|\geq \frac{\|\bm{\beta}^*\|}{10}$, and its angle with $\bm{\beta}^*$, $\theta$ is less than $\frac{\pi}{70}$, we run a finite-sample EM with $n=\tilde{O}\left(\frac{d}{\epsilon_f^2}\right)$, getting:
\begin{align*}
    & \bigg\|\mathbb{E}_{y}\frac{1}{n}\sum_{i=1}^{n}y_{i}\bm{x}_i\tanh(y_{i}\langle \bm{x}_i,\bm{\beta}\rangle)-\mathbb{E}_{y}\frac{1}{n}\sum_{i=1}^{n}y_{i}\bm{x}_i\tanh(y_{i}\langle \bm{x}_i,\bm{\beta}^*\rangle)\bigg\| \\
    \leq & (0.95+\epsilon_f/\sqrt{d}) \|\bm{\beta}-\bm{\beta}^*\|
\end{align*}
\end{proposition}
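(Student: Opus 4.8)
The plan is to reduce the multivariate term $C$ to a one–dimensional Gaussian–mixture sensitivity computation, as the surrounding text suggests, and then pass from the population level to the empirical one. After the rescaling $\sigma=1$, write $a_i := \langle \bm{x}_i, \bm{\beta}\rangle$ and $a_i^* := \langle \bm{x}_i, \bm{\beta}^*\rangle$, and for each design point let $g(a) := \mathbb{E}_{y\sim\mathcal{N}(a_i^*,1)}[\,y\tanh(ya)\,]$ be the one–dimensional GMM population EM map with true parameter $a_i^*$. The consistency identity $g(a_i^*)=a_i^*$ lets me rewrite the $i$-th summand of $C$ as $g(a_i)-g(a_i^*)=g(a_i)-a_i^*$, so that
\begin{equation*}
C = \frac{1}{n}\sum_{i=1}^n \big(g(a_i)-a_i^*\big)\,\bm{x}_i .
\end{equation*}
Thus each contribution is governed by how far the $1$-d EM iterate $g(a_i)$ sits from its fixed point $a_i^*$.

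First I would linearize. By the mean value theorem $g(a_i)-a_i^* = \lambda_i\,(a_i-a_i^*)$ with $\lambda_i=g'(\xi_i)$ for some $\xi_i$ between $a_i$ and $a_i^*$, and since $a_i-a_i^*=\langle \bm{x}_i, \bm{\beta}-\bm{\beta}^*\rangle$ this yields the weighted–covariance representation
\begin{equation*}
C = \Big(\tfrac{1}{n}\sum_{i=1}^n \lambda_i\,\bm{x}_i\bm{x}_i^{\top}\Big)(\bm{\beta}-\bm{\beta}^*) =: W(\bm{\beta}-\bm{\beta}^*),
\qquad \|C\| \le \|W\|\,\|\bm{\beta}-\bm{\beta}^*\| .
\end{equation*}
The key analytic input, adapted from the sensitivity analysis in \cite{daskalakis2017ten}, is that the slope $\lambda_i$ lies in $[0,1]$ and decays as $|a_i^*|$ grows; moreover, because $\theta<\pi/70$ keeps $a_i$ close to $a_i^*$, $\lambda_i$ is well approximated by the clean fixed–point derivative $g'(a_i^*)$, whose strong contraction in the high–SNR regime is exactly why we restrict to $\eta>c$.

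The crux is the population bound $\|\overline{W}\|\le 0.95$ for $\overline{W}:=\mathbb{E}[\lambda(X)XX^{\top}]$, equivalently $\sup_{\|\bm{v}\|=1}\mathbb{E}[\lambda(X)\langle X,\bm{v}\rangle^2]\le 0.95$. Since $\lambda(X)$ depends on $X$ only through its projection onto $\mathrm{span}(\bm{\beta},\bm{\beta}^*)$, rotational symmetry collapses this to a two–dimensional Gaussian integral. Splitting $\bm{v}$ into its in–plane and orthogonal parts, the orthogonal part contributes $\mathbb{E}[\lambda(X)]$, while the in–plane part along $\hat{\bm{\beta}}^*$ is suppressed because $\langle X,\hat{\bm{\beta}}^*\rangle^2$ is large precisely where $\lambda$ is small. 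Pinning down the explicit constants — choosing the threshold $c$, bounding $\mathbb{E}[\lambda(X)]$ and the cross terms, and controlling the perturbation $|g'(\xi_i)-g'(a_i^*)|$ induced by the nonzero angle $\theta$ — is the step I expect to be the main obstacle, since it requires quantitative control of $g'(a^*)$ and a careful verification that the worst direction stays below $0.95$ once $\theta<\pi/70$.

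Finally I would pass to the empirical $W$. Conditioning on $\|\tfrac1n\sum_i \bm{x}_i\bm{x}_i^{\top}\|\le 2$ and running the same $\tfrac12$-net / sub-exponential concentration argument used for term $B$ in Proposition \ref{prop:term_B} (the slopes $\lambda_i$ only help, being bounded by one), the weighted covariance $W$ concentrates around $\overline{W}$ at the scale dictated by $n=\tilde{O}(d/\epsilon_f^2)$, producing the additive $O(\epsilon_f/\sqrt{d})$ statistical fluctuation. Combined with the population bound, this gives $\|C\|\le (0.95+\epsilon_f/\sqrt{d})\,\|\bm{\beta}-\bm{\beta}^*\|$ as claimed; this last concentration step is routine and I would not grind through it in detail.
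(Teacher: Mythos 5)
Your high-level architecture (reduce $C$ to a one-dimensional GMM sensitivity statement, express $C$ as a weighted sample covariance acting on $\bm{\beta}-\bm{\beta}^*$, prove a population spectral-norm bound of $0.95$, then concentrate) is the same as the paper's, but the step you call "the key analytic input" is false, and it fails exactly in the region that is the crux of this proposition. With $g(a)=\mathbb{E}_{y\sim\mathcal{N}(a_i^*,1)}[y\tanh(ya)]$, your mean-value slope is $\lambda_i=g'(\xi_i)=\mathbb{E}_{y\sim\mathcal{N}(a_i^*,1)}[y^2\tanh'(y\xi_i)]$, i.e., the derivative with respect to the \emph{iterate} argument of the $1$-d EM map. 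This is not in $[0,1]$: at $\xi=0$ it equals $\mathbb{E}[y^2]=1+(a_i^*)^2$ (this is precisely why $0$ is a repelling fixed point of EM for GMM), and it is not bounded by any absolute constant. The MVT point $\xi_i$ is genuinely forced near $0$ on a non-negligible event: whenever $\langle\bm{x}_i,\bm{\beta}\rangle$ and $\langle\bm{x}_i,\bm{\beta}^*\rangle$ have opposite signs (a wedge of Gaussian mass $\Theta(\theta)$ that persists for every $\theta>0$), or when $\langle\bm{x}_i,\bm{\beta}\rangle$ is small while $|a_i^*|$ is large. In that wedge $|a_i^*|$ can be of order $\eta\sin\theta\,\|P\bm{x}_i\|$ with $\eta>c$ arbitrarily large, so your weights are unbounded there. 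This invalidates both the population bound $\|\overline{W}\|\le 0.95$ as sketched and the concentration step, which explicitly relies on "the slopes being bounded by one." Your auxiliary claim that $\lambda_i\approx g'(a_i^*)$ because $\theta<\pi/70$ fails in the same region: a small angle between $\bm{\beta}$ and $\bm{\beta}^*$ does not make $a_i$ close to $a_i^*$ in relative terms when $\bm{x}_i$ is nearly orthogonal to both, and there $g'(\xi_i)$ can be $1+(a_i^*)^2$ while $g'(a_i^*)$ is exponentially small.

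The paper's proof is built to avoid exactly this: using the consistency identity it rewrites $g(a_i)-g(a_i^*)$ as $\big(\mathbb{E}_{y\sim\mathcal{N}(a_i^*,1)}-\mathbb{E}_{y\sim\mathcal{N}(a_i,1)}\big)[y\tanh(ya_i)]+(a_i-a_i^*)$ and applies the mean value theorem to the \emph{mean} of $y$ along the path $Z(t)=\bm{\beta}+t(\bm{\beta}^*-\bm{\beta})$, keeping the $\tanh$ argument fixed at $a_i$. The resulting path derivative is $\mathbb{E}[\tanh(ya_i)+ya_i\tanh'(ya_i)]$, which is uniformly bounded because $z\mapsto\tanh(z)+z\tanh'(z)$ is bounded; hence the weights $1-\int_0^1\mathbb{E}[\Delta_i]\,dt$ are at most $2.25$ even in the sign-flip wedge, and at most $\exp(-\min(a_i,a_i^*)^2/2)$ in the same-sign region. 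The remaining (and substantial) work is the two-dimensional polar integration (the matrices $M_1$, $M_2$ in the paper) showing the wedge contributes only $O(\sin\theta)$ and the same-sign block has norm roughly $\tfrac12+\tfrac1\pi+o(1)$ for large $\eta$ — this is where $0.95$, $\eta>c$, and $\theta<\pi/70$ actually enter. If you wanted to salvage your route you would need to replace the pointwise slope $g'(\xi_i)$ by the averaged slope $\frac{1}{a_i-a_i^*}\int_{a_i^*}^{a_i}g'(\xi)\,d\xi$ (which is $O(1)$ in the wedge, since the spike of $g'$ near $0$ has width $\sim 1/|a_i^*|$) and then split the Gaussian integral by region anyway — at which point you are reconstructing the paper's argument, not bypassing it.
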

\begin{proof}
For each $i$, we observe that
\[
\mathbb{E}_{y_i}y_{i}\tanh(y_{i}\langle \bm{x}_i,\bm{\beta}\rangle)-\mathbb{E}_{y_i}y_{i}\tanh(y_{i}\langle \bm{x}_i,\bm{\beta}^*\rangle)
\]
is the difference between the 1-d population EM iterate and the optimal parameter for the GMM problem with the current iterate being $\langle \bm{x}_i,\bm{\beta}\rangle$ and the optimal parameter being $\langle \bm{x}_i,\bm{\beta}^*\rangle$. In \cite{daskalakis2017ten}, they have developed the sensitivity analysis technique to bound the difference with the restriction that the current iterate has the same sign as the optimal parameter. In our case, we note that covariance vector $\bm{x}_i$ can possibly cause $\langle \bm{x}_i,\bm{\beta}\rangle$ and $\langle \bm{x}_i,\bm{\beta}^*\rangle$ to have opposite signs despite the fact that $\bm{\beta}$ has an acute angle with $\bm{\beta}^*$. We get around this issue by performing a more refined sensitivity analysis in both regions: (1)$\langle \bm{x}_i,\bm{\beta}^*\rangle \langle \bm{x}_i,\bm{\beta}\rangle \geq 0$; (2) $\langle \bm{x}_i,\bm{\beta}^*\rangle \langle \bm{x}_i,\bm{\beta}\rangle < 0$.

The key element of the sensitivity analysis is to use the following decomposition:
\begin{align*}
& \mathbb{E}_{y_i\sim \mathcal{N}(\langle \bm{x}_i,\bm{\beta}^*\rangle,1)}y_{i}\tanh(y_{i}\langle \bm{x}_i,\bm{\beta}\rangle)-\mathbb{E}_{y_i\sim \mathcal{N}(\langle \bm{x}_i,\bm{\beta}^*\rangle,1)}y_{i}\tanh(y_{i}\langle \bm{x}_i,\bm{\beta}^*\rangle)\\
= & \mathbb{E}_{y_i\sim \mathcal{N}(\langle \bm{x}_i,\bm{\beta}^*\rangle,1)}y_{i}\tanh(y_{i}\langle \bm{x}_i,\bm{\beta}\rangle)-\mathbb{E}_{y_i\sim \mathcal{N}(\langle \bm{x}_i,\bm{\beta}\rangle,1)}y_{i}\tanh(y_{i}\langle \bm{x}_i,\bm{\beta}\rangle)\\
& + \mathbb{E}_{y_i\sim \mathcal{N}(\langle \bm{x}_i,\bm{\beta}\rangle,1)}y_{i}\tanh(y_{i}\langle \bm{x}_i,\bm{\beta}\rangle)-\mathbb{E}_{y_i\sim \mathcal{N}(\langle \bm{x}_i,\bm{\beta}^*\rangle,1)}y_{i}\tanh(y_{i}\langle \bm{x}_i,\bm{\beta}^*\rangle) \\
= & \mathbb{E}_{y_i\sim \mathcal{N}(\langle \bm{x}_i,\bm{\beta}^*\rangle,1)}y_{i}\tanh(y_{i}\langle \bm{x}_i,\bm{\beta}\rangle)-\mathbb{E}_{y_i\sim \mathcal{N}(\langle \bm{x}_i,\bm{\beta}\rangle,1)}y_{i}\tanh(y_{i}\langle \bm{x}_i,\bm{\beta}\rangle) \\
& + \bm{\beta}-\bm{\beta}^*
\end{align*}
where the last step follows from the consistency property of the EM operator. The mean value theorem tells us that:
\begin{align*}
& \mathbb{E}_{y_i\sim \mathcal{N}(\langle \bm{x}_i,\bm{\beta}^*\rangle,1)}y_{i}\tanh(y_{i}\langle \bm{x}_i,\bm{\beta}\rangle)-\mathbb{E}_{y_i\sim \mathcal{N}(\langle \bm{x}_i,\bm{\beta}\rangle,1)}y_{i}\tanh(y_{i}\langle \bm{x}_i,\bm{\beta}\rangle)\\
= & \int_{t=0}^{1}\mathbb{E}_{y_i \sim \mathcal{N}(\langle \bm{x}_i,Z(t)\rangle,1)}\Delta_i(y)\bm{x}_i\bm{x}_i^{\top}(\bm{\beta}^*-\bm{\beta})dt
\end{align*}
with
\begin{align*}
 Z(t) & =\bm{\beta}+t(\bm{\beta}^*-\bm{\beta}),\\
\Delta_i(y) & =\frac{\partial[y_i\bm{x}_i\tanh(y_i\langle \bm{x}_i,\bm{\beta}\rangle)]}{\partial y_i}=\tanh(y_i\langle \bm{x}_i,\bm{\beta}\rangle)+y_i\langle \bm{x}_i,\bm{\beta}\rangle\tanh'(y_i\langle \bm{x}_i,\bm{\beta}\rangle).  
\end{align*}
Therefore, the original difference is equivalent to:
\begin{align}
& \frac{1}{n}\sum_{i=1}^{n}\mathbb{E}_{y_i\sim \mathcal{N}(\langle \bm{x}_i,\bm{\beta}^*\rangle,1)}y_{i}\tanh(y_{i}\langle \bm{x}_i,\bm{\beta}\rangle)-\frac{1}{n}\sum_{i=1}^{n}\mathbb{E}_{y_i\sim \mathcal{N}(\langle \bm{x}_i,\bm{\beta}^*\rangle,1)}y_{i}\tanh(y_{i}\langle \bm{x}_i,\bm{\beta}^*\rangle) \nonumber\\
= & \frac{1}{n}\sum_{i=1}^{n} \bm{x}_i\bm{x}_i^{\top}\left(1-\int_{0}^{1}\mathbb{E}_{y\sim \mathcal{N}(\langle \bm{x}_i,Z(t)\rangle,1)}\Delta_i(y))dt\right)(\bm{\beta}-\bm{\beta}^*). \label{eq:difference_term}
\end{align}
Since $\bm{\beta}$ and $\bm{\beta}^*$ is fixed, we can assume that the Gaussians $\left\{\bm{x}_i\right\}$s have the orthonormal basis $\left\{\bm{v}_i\right\}_{i=1}^{n}$ satisfying $\bm{v}_1=\hat{\bm{\beta}}_*$ and span$(\bm{v}_1,\bm{v}_2)=$span$(\bm{\beta},\bm{\beta}^*)$. Therefore, to bound the difference term \eqref{eq:difference_term}, it suffices to understand the spectral norm of the $2\times 2$ submatrix:
\begin{equation}
\label{key_matrix}
\left[\frac{1}{n}\sum_{i=1}^{n} \bm{x}_i\bm{x}_i^{\top}\left(1-\int_{0}^{1}\mathbb{E}_{y\sim \mathcal{N}(\langle \bm{x}_i,Z(t)\rangle,1)}\Delta_i(y))dt)(\bm{\beta}-\bm{\beta}^*\right)\right]_{:2,:2}    
\end{equation}

so that we can bound:
\begin{align}
\bigg\|\frac{1}{n}\sum_{i=1}^{n}\mathbb{E}_{y_i\sim \mathcal{N}(\langle \bm{x}_i,\bm{\beta}^*\rangle,1)}y_{i}\tanh(y_{i}\langle \bm{x}_i,\bm{\beta}\rangle)-\frac{1}{n}\sum_{i=1}^{n}\mathbb{E}_{y_i\sim \mathcal{N}(\langle \bm{x}_i,\bm{\beta}^*\rangle,1)}y_{i}\tanh(y_{i}\langle \bm{x}_i,\bm{\beta}^*\rangle)\bigg\| \nonumber \\
\leq \Bigg\|\left[\frac{1}{n}\sum_{i=1}^{n} \bm{x}_i\bm{x}_i^{\top}\left(1-\int_{0}^{1}\mathbb{E}_{y\sim \mathcal{N}(\langle \bm{x}_i,Z(t)\rangle,1)}\Delta_i(y))dt)(\bm{\beta}-\bm{\beta}^*\right)\right]_{:2,:2}\Bigg \|\cdot \|\bm{\beta}-\bm{\beta}^*\| \label{ineq:bound_spectral_norm}.
\end{align}
We will provide an explicit bound for $1-\int_{0}^{1}\mathbb{E}_{y\sim \mathcal{N}(\langle \bm{x}_i,Z(t)\rangle,1)}\Delta_i(y))dt$ in Lemma \ref{lem:2}. For now, we just need to use the fact that they are bounded by a constant. This implies that each entry of the matrix \eqref{key_matrix} is a sub-exponential random variable and it is close to its expectation with statistical error $O(1/\sqrt{n})$. Furthermore, we can deduce the spectral norm of this $2\times 2$ submatrix is close to the spectral norm of the expectated submatrix, with a statistical error $O(1/\sqrt{n})$. The problem is thus further reduced to bound the spectral norm of the following $2\times 2$ matrix:
\begin{align}
\label{eq:expected_mat}
   \mathbb{E}_{X\sim \mathcal{N}(0,I)} \left[1-\int_{0}^{1}\mathbb{E}_{y\sim \mathcal{N}(\langle X,Z(t)\rangle,1)}\Delta(y))dt)\right]_{:2,:2} 
\end{align}

In Lemma \ref{lem:norm_expected_matrix}, it is proved that the spectral norm of \eqref{eq:expected_mat} is bounded by 
\[
 \frac{1}{2}\left(1+\frac{1}{(1+0.5\tau^2)^2}\right)+\frac{1}{\pi}\max\left(1-\frac{1}{(1+0.5\tau^2)^2},\sin(\theta)\right)
 + 2.6\sin(\theta),
\]
where $\tau =\min(\|\bm{\beta}\|,\|\bm{\beta}^*\|)$. Since $\|\bm{\beta}\|\geq \frac{\|\bm{\beta}^*\|}{10}$, it follows that $\tau\geq \frac{\|\bm{\beta}^*\|}{10}$, and 
\[
 1-\frac{1}{(1+0.5\tau^2)^2}
\geq  1-\frac{1}{(1+0.05\|\bm{\beta}^*\|^2)^2}.
\]
As long as $\|\bm{\beta}^*\|$ is sufficiently large, $1-\frac{1}{(1+0.05\|\bm{\beta}^*\|^2)^2}$ will dominate $\sin(\theta)\leq \frac{\pi}{8}$. Therefore, a loose bound for the above spectral norm is:
\[
\frac{1}{2}+\frac{1}{\pi}+ (\frac{1}{2}-\frac{1}{\pi}) \frac{1}{(1+0.05\|\bm{\beta}^*\|^2)^2}+2.6\sin(\theta)
\]
We note that as $\|\bm{\beta}^*\|\to \infty$ and let $\phi=\frac{\pi}{70}$, the above term converges to $\frac{1}{2}+\frac{1}{\pi}+4\sin(\theta)<0.95$. Thus, there exists $c$, such that whenever $\|\bm{\beta}^*\|\geq c$, the above ratio is bounded by $0.95$ for all $\theta < \frac{\pi}{70}$. Now we can conclude that the matrix \eqref{key_matrix} has spectral norm $0.95+O(1/\sqrt{n})$ with high probability and the proof follows from \eqref{ineq:bound_spectral_norm}.
\end{proof}

\begin{lem}
\label{lem:norm_expected_matrix}
Let $\tau = \max(\|\bm{\beta}^*\|,\|\bm{\beta}\|)$, and $\theta$ be the angle between $\bm{\beta}$ and $\bm{\beta}^*$. Suppose that $\theta\leq \frac{\pi}{8}$ and the orthonormal basis $\left\{\bm{v}_i\right\}_{i=1}^{d}$ satisfy $\bm{v}_1 = \hat{\bm{\beta}^*}$ and span$(\bm{v}_1,\bm{v}_2)=$span$(\bm{\beta},\bm{\beta}^*)$, the following inequality holds:

\begin{align*}
 & \Bigg\|\left[\mathbb{E}_{X\sim \mathcal{N}(0,I)}XX^{\top}\left(1-\int_{0}^{1}\mathbb{E}_{y\sim \mathcal{N}(\langle X,Z(t)\rangle,1)}\Delta(t))dt\right)\right]_{:2,:2}\Bigg\| \\
\leq & \frac{1}{2}\left(1+\frac{1}{(1+0.5\tau^{2})^{2}}\right)+\frac{1}{\pi}\max\left(1-\frac{1}{(1+0.5\tau^{2})^{2}},\sin(\theta)\right)+C\sin(\theta).
\end{align*}
for some absolute constant $0<C\leq 2.6$.
\end{lem}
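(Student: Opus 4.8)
The plan is to reduce the spectral-norm estimate to three scalar Gaussian moments and then control those moments by a sign-split sensitivity analysis. First I would pass to the orthonormal basis $\{\bm{v}_i\}$ with $\bm{v}_1=\hat{\bm{\beta}}^*$ and write $X=\sum_i\alpha_i\bm{v}_i$ with $\alpha_i\stackrel{\mathrm{iid}}{\sim}\mathcal{N}(0,1)$. Setting $u=\langle X,\bm{\beta}\rangle=\|\bm{\beta}\|(\cos\theta\,\alpha_1+\sin\theta\,\alpha_2)$, $u^*=\langle X,\bm{\beta}^*\rangle=\|\bm{\beta}^*\|\alpha_1$, and $m_t=(1-t)u+tu^*$, the scalar kernel
\[
\psi(X):=1-\int_0^1\mathbb{E}_{y\sim\mathcal{N}(m_t,1)}\big[\tanh(yu)+yu\tanh'(yu)\big]\,dt
\]
depends on $X$ only through $(\alpha_1,\alpha_2)$. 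Consequently the $:2,:2$ block of $\mathbb{E}[XX^\top\psi]$ is the symmetric $2\times2$ matrix with entries $a=\mathbb{E}[\alpha_1^2\psi]$, $c=\mathbb{E}[\alpha_2^2\psi]$, and $b=\mathbb{E}[\alpha_1\alpha_2\psi]$, so I would invoke the exact formula $\|M\|=\tfrac{a+c}{2}+\sqrt{(\tfrac{a-c}{2})^2+b^2}$. Note that $b$ vanishes when $\theta=0$ (then $\psi$ is a function of $\alpha_1$ alone), so the cross term is itself an $O(\sin\theta)$ quantity; the diagonal anisotropy $a-c$ persists at $\theta=0$ and will be absorbed into the $\tfrac1\pi$-term below.

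Next I would bound $a,b,c$ by splitting the $(\alpha_1,\alpha_2)$-plane into the agreement region $\mathcal{A}=\{uu^*\ge0\}$ and its complement. On $\mathcal{A}$ the arguments $yu$ and $yu^*$ share a sign, so the integrand is exactly the single-sign sensitivity kernel of the one-dimensional symmetric GMM; there I would invoke Lemma \ref{lem:2} (the adaptation of the sensitivity analysis of \cite{daskalakis2017ten}) to control $1-\mathbb{E}_{y\sim\mathcal{N}(m_t,1)}[\,\cdots\,]$, and use the consistency relation \eqref{eq:relation2} to identify the averaged diagonal contribution with the contraction constant $\tfrac{1}{(1+0.5\tau^2)^2}$. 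This produces the symmetric part $\tfrac{a+c}{2}\le\tfrac12\big(1+\tfrac{1}{(1+0.5\tau^2)^2}\big)$ up to the wedge correction, while the residual anisotropy and cross term $\sqrt{(\tfrac{a-c}{2})^2+b^2}$ are driven by the directional variation of $\psi$ whose magnitude is $1-\tfrac{1}{(1+0.5\tau^2)^2}$, interacting with the Gaussian wedge of half-angle $\theta$, whose probability is $\theta/\pi$. Combining these two effects yields the term $\tfrac1\pi\max\!\big(1-\tfrac{1}{(1+0.5\tau^2)^2},\sin\theta\big)$.

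The remaining contribution comes from the disagreement wedge $\{uu^*<0\}$, on which $\tanh(yu)$ and the true-parameter kernel carry opposite signs and the single-sign bound fails. Here I would fall back on the crude pointwise bound $|\tanh(z)+z\tanh'(z)|\le2$ together with the fact that this event has Gaussian probability $\theta/\pi=O(\sin\theta)$, so its total contribution to each moment is $O(\sin\theta)$; collecting constants gives the additive $C\sin\theta$ with $C\le2.6$. I expect the main obstacle to be precisely this refined sensitivity analysis across the sign change: the clean contraction estimate of \cite{daskalakis2017ten} presumes sign agreement, so one must show that the wedge where $\langle X,\bm{\beta}\rangle$ and $\langle X,\bm{\beta}^*\rangle$ disagree contributes no more than an $O(\sin\theta)$ excess with a sharp enough constant, and simultaneously that $\tfrac{a-c}{2}$ and $b$ collapse to the stated $\tfrac1\pi$-weighted form rather than a looser constant. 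Plugging the three bounds into the $2\times2$ spectral-norm formula and simplifying then yields the claim.
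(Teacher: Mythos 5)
Your overall skeleton matches the paper's proof: reduce to the $2\times 2$ block in the plane spanned by $\bm{\beta}$ and $\bm{\beta}^*$, split into the sign-agreement region and the disagreement wedge, control the kernel on the agreement region via Lemma \ref{lem:2} (the adaptation of the sensitivity bound of \cite{daskalakis2017ten}), bound the kernel crudely on the wedge, and finish with an elementary $2\times 2$ spectral-norm estimate (the paper uses the bound ``max of the diagonal entries plus the absolute off-diagonal entry,'' Lemma \ref{lem:spectral_22matrix}, rather than your exact eigenvalue formula; that difference is immaterial).

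However, there is a genuine gap at the quantitative heart of the argument: you never compute the restricted Gaussian moments $\mathbb{E}\big[\alpha_i\alpha_j\,1_{\mathcal{A}}\exp\big(-\tfrac12\min(\langle X,\bm{\beta}\rangle,\langle X,\bm{\beta}^*\rangle)^2\big)\big]$, and the mechanism you cite for producing the constant $\tfrac{1}{(1+0.5\tau^2)^2}$ --- ``the consistency relation \eqref{eq:relation2}'' --- is a misattribution. That identity holds only at fixed points of the population EM orthogonal to $\bm{\beta}^*$ and is used in the saddle-point analysis (Proposition \ref{prop:pos_eig}); it plays no role in this lemma. In the paper the constants arise from explicit integration: pass to polar coordinates $(\alpha_1,\alpha_2)=(r\cos\phi,r\sin\phi)$, use the comparison \eqref{ineq:compare_min} to decide which of $\cos(\phi-\theta)$ or $\cos\phi$ realizes the minimum on each angular range, integrate out $r$ to get the factor $\tfrac{2}{(1+\tau^2\cos^2\phi)^2}$, and then split the angular integral at $\gamma=\pi/4$, bounding this factor by $\tfrac{1}{(1+0.5\tau^2)^2}$ where $\cos^2\phi\ge\tfrac12$ and by $1$ elsewhere. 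This choice of $\gamma$ is exactly where $0.5\tau^2$ comes from, and it simultaneously generates the $\tfrac12(1+\cdot)$ and $\tfrac1\pi\max(\cdot,\cdot)$ structure of the claimed bound; no consistency or fixed-point identity can substitute for it. Likewise, your assertions that the cross term and the wedge contribution are $O(\sin\theta)$ are correct in spirit, but they are precisely what Lemmas \ref{lem:m1_12} and \ref{lemma:m2} must establish with sharp constants (the integrand $\sin\phi\cos\phi$ changes sign across the region, so the two one-sided bounds on the off-diagonal entry require separate treatment) in order to end with $C\le 2.6$; without carrying out these integrals the stated numerical bound cannot be concluded.
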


\begin{proof}
We first provide an elementary bound for $1-\int_{0}^{1}\mathbb{E}_{y\sim \mathcal{N}(\langle X,Z(t)\rangle,1)}\Delta(t))dt)$ in Lemma \ref{lem:2}. Using symmetry and Lemma \ref{lem:2}, the following holds for the $2$ by $2$ submatrix of $\mathbb{E}_{X\sim \mathcal{N}(0,I)}XX^{\top}(1-\int_{0}^{1}\mathbb{E}_{y\sim \mathcal{N}(\langle X,Z(t)\rangle,1)}\Delta(t))dt)$:

\begin{align*}
 & \left[\mathbb{E}_{X\sim \mathcal{N}(0,I)}XX^{\top}\left(1-\int_{0}^{1}\mathbb{E}_{y\sim \mathcal{N}(\langle X,Z(t)\rangle,1)}\Delta(t))dt\right)\right]_{:2,:2}\\
\preceq & \left[2\mathbb{E}_{X\sim \mathcal{N}(0,I)}1_{\langle X,\bm{\beta}\rangle>0,\langle X,\bm{\beta}^*\rangle>0}XX^{\top}\exp\left(-\frac{\min(\langle X,\bm{\beta}\rangle,\langle X,\bm{\beta}^*\rangle)^{2}}{2}\right)+2.25\mathbb{E}_{X\sim \mathcal{N}(0,I)}1_{\langle X,\bm{\beta}\rangle\langle X,\bm{\beta}^*\rangle<0}XX^{\top}\right]_{:2,:2}.
\end{align*}
Therefore,
\begin{align*}
& \Bigg\|\left[\mathbb{E}_{X\sim \mathcal{N}(0,I)}XX^{\top}(1-\int_{0}^{1}\mathbb{E}_{y\sim \mathcal{N}(\langle X,Z(t)\rangle,1)}\Delta(t))dt)\right]_{:2,:2}\Bigg\|\\
\leq & 2 \Bigg\|\underbrace{\left[\mathbb{E}_{X\sim \mathcal{N}(0,I)}1_{\langle X,\bm{\beta}\rangle>0,\langle X,\bm{\beta}^*\rangle>0}XX^{\top}\exp\left(-\frac{\min(\langle X,\bm{\beta}\rangle,\langle X,\bm{\beta}^*\rangle)^{2}}{2}\right)\right]_{:2,:2}}_{M_1}\Bigg\| \\
& + 2.25 \Big\|\underbrace{\left[\mathbb{E}_{X\sim \mathcal{N}(0,I)}1_{\langle X,\bm{\beta}\rangle\langle X,\bm{\beta}^*\rangle<0}XX^{\top}\right]_{:2,:2}}_{M_2}\Big\|.   
\end{align*}
We use the polar coordinates $(r,\phi)$ for the first two components: $X_1,X_2$.
\begin{align*}
(X_{1},X_{2}) & =(r\cos\phi,r\sin\phi)\\
\langle X,\bm{\beta}\rangle & =\|\bm{\beta}\|X_{1}\cos(\phi)+\|\bm{\beta}\|X_{2}\sin(\phi)\\
 & =\|\bm{\beta}\|r\cos(\phi-\theta)\\
\langle X,\bm{\beta}^*\rangle & =\|\bm{\beta}^*\|X_{1}=\|\bm{\beta}^*\|r\cos(\phi).
\end{align*}
It is seen that the region $S_{1}:=\{X:\langle X,\bm{\beta}\rangle>0,\langle X,\bm{\beta}^*\rangle>0\}$
corresponds to $S_{1}=\{(r,\phi):r>0,\phi\in(-\frac{\pi}{2}+\theta,\frac{\pi}{2})\}$ using the polar coordinates. Similarly, the region $S_{2}:=\{X:\langle X,\bm{\beta}\rangle\langle X,\bm{\beta}^*\rangle<0\}$
corresponds to $S_{1}=\{(r,\phi):r>0,\phi\in(-\frac{\pi}{2},-\frac{\pi}{2}+\theta)\cup (\frac{\pi}{2},\frac{\pi}{2}+\theta)\}$ using the polar coordinates. This helps us to get an explicit formula for each of the entry in $M_1$ and $M_2$. Before providing bounds for each entry, we use the following relation for $\min(\langle X,\bm{\beta}\rangle,\langle X,\bm{\beta}^*\rangle)$:
\begin{gather}
\cos(\phi-\theta)\geq\cos(\phi) \quad\phi\in \left(\frac{\theta}{2},\frac{\pi}{2}\right),\nonumber \\
\cos(\phi-\theta)\leq\cos(\phi)  \quad\phi\in \left(-\frac{\pi}{2}+\theta,\frac{\theta}{2}\right).
\end{gather}

Therefore,
\begin{gather}
\label{ineq:compare_min}
  \min(\langle X,\bm{\beta}\rangle,\langle X,\bm{\beta}^*\rangle)\in  
  \begin{cases}
 (\tau r\cos(\theta), \tau r\cos(\phi-\theta) &  \quad\phi\in\left(\frac{\theta}{2},\frac{\pi}{2}\right) \\
 (\tau r\cos(\phi-\theta),\tau r\cos(\theta)) & 
 \quad \phi\in \left(-\frac{\pi}{2}+\theta,\frac{\theta}{2}\right).
 \end{cases}
\end{gather}
 In Lemma \ref{lemma:m1} and Lemma \ref{lemma:m2}, we show that:
 \begin{itemize}
    \item $(1,1)$th entry of $M_1$: $ 0<M_1^{1,1}\leq \frac{1}{4}\left(1+\frac{1}{(1+0.5\tau^{2})^{2}}\right)+\frac{\sin(\theta)}{2\pi}$,
    \item $(2,2)$th entry of $M_1$: $ 0<M_1^{2,2} \leq
    \frac{1}{4}\left(1+\frac{1}{(1+0.5\tau^{2})}\right)+\frac{1}{2\pi}\left(1-\frac{1}{(1+0.5\tau^{2})^{2}}\right)$,
    \item $(1,2)$th entry of $M_1$: $|M_1^{1,2}| \leq
    \max\left(\sin^{2}(\phi)\left[\frac{1}{2\pi(1+\cos^{2}(\theta)\tau^{2})}+\frac{1}{2\pi(1+\tau^{2}\sin^{2}(\theta))}\right],\frac{1}{\pi}\sin(\theta)+\frac{1}{2\pi}\frac{\sin^2(\theta)}{1+\tau^2\sin^2(\theta)}\right)$, 
    \item $(1,1)$th entry of $M_2$: $M_2^{1,1}=\frac{\theta}{\pi}-\frac{\sin(2\theta)}{2\pi}$,
    \item $(2,2)$th entry of $M_2$: $M_2^{2,2}=\frac{\theta}{\pi}+\frac{\sin(2\theta)}{2\pi}$,
    \item $(1,2)$th entry of $M_2$:
    $M_2^{1,2} = -\frac{\sin^2(\theta)}{\pi}$.
 \end{itemize}
 Now we can apply Lemma \ref{lem:spectral_22matrix} to bound the spectral norm of $M_1$ and $M_2$:
 \begin{align*}
     \|M_1\|  \leq & \frac{1}{4}\left(1+\frac{1}{(1+0.5\tau^{2})^{2}}\right) + \frac{1}{2\pi}\max\left(\sin(\theta),1-\frac{1}{(1+0.5\tau^{2})^{2}}\right) + |M_1^{1,2}|\\
     \leq &\frac{1}{4}\left(1+\frac{1}{(1+0.5\tau^{2})^{2}}\right) + \frac{1}{2\pi}\max\left(\sin(\theta),1-\frac{1}{(1+0.5\tau^{2})^{2}}\right)+C_1 \sin(\theta)
 \end{align*}
 for some absolute constant $0<C_1<0.4$. Similarly,
 \begin{align*}
     \|M_2\|\leq \frac{\theta}{\pi}+\frac{\sin(2\theta)}{2\pi}+\frac{\sin^2(\theta)}{\pi}\leq C_2\sin(\theta).
 \end{align*}

for some absolute constant $0<C_2<0.8$. In the last step, we use the fact that when $\theta \in (0,\frac{\pi}{8})$, $\theta\leq 1.1 \sin(\theta)$. We thus obtain a compact bound for the spectral norm of the $2\times 2$ matrix, $\bigg \|\left[\mathbb{E}_{X\sim \mathcal{N}(0,I)}XX^{\top}(1-\int_{0}^{1}\mathbb{E}_{y\sim \mathcal{N}(\langle X,Z(t)\rangle,1)}\Delta(t))dt)\right]_{2\times2}\bigg\|$:
 \begin{align}
 & 2\|M_1\|+2.25\|M_2\| \leq \frac{1}{2}\left(1+\frac{1}{(1+0.5\tau^{2})^{2}}\right) + \frac{1}{\pi}\max\left(\sin(\theta),1-\frac{1}{(1+0.5\tau^{2})^{2}}\right)+ C_3\sin(\theta)
 \end{align}
for some absolute constant $0<C_3\leq 2.6$.
\end{proof}

\begin{lem}
\label{lem:2}When $\langle X,\bm{\beta}\rangle>0$,$\langle X,\bm{\beta}^*\rangle>0$,
\[
\int_{t=0}^{1}[1-\mathbb{E}_{y\sim \mathcal{N}(\langle X,Z(t)\rangle,1)}\Delta(t)]dt\leq\exp\left(-\frac{\min(\langle X,\bm{\beta}\rangle,\langle X,\bm{\beta}^*\rangle)^{2})}{2}\right).
\]
When $\langle X,\bm{\beta}\rangle<0,\langle X,\bm{\beta}^*\rangle<0$,
\[
\int_{t=0}^{1}[1-\mathbb{E}_{y\sim \mathcal{N}(\langle X,Z(t)\rangle,1)}\Delta(t)]dt\leq\exp\left(-\frac{\min(-\langle X,\bm{\beta}\rangle,-\langle X,\bm{\beta}^*\rangle)^{2}}{2}\right).
\]
When
$\langle X,\bm{\beta}\rangle$ and $\langle X,\bm{\beta}^*\rangle$ have different
sign, 
\[\int_{t=0}^{1}[1-\mathbb{E}_{y\sim \mathcal{N}(\langle X,Z(t)\rangle,1)}\Delta(t)]dt\leq 2.25.
\]
\end{lem}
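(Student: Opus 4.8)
The plan is to reduce the integrand to a one-dimensional Gaussian-mean expectation and then invoke the two sensitivity lemmas of \cite{daskalakis2017ten} already exploited in the proof of Lemma~\ref{lemma:S_bounds}. Write $p := \langle X, \bm{\beta}\rangle$, $q := \langle X, \bm{\beta}^*\rangle$, and $\mu(t) := \langle X, Z(t)\rangle = (1-t)p + tq$, so that the $\tanh$-argument $p$ stays \emph{fixed} along the path while the Gaussian mean $\mu(t)$ interpolates between $p$ and $q$. Setting $\psi(z) := \tanh(z) + z\tanh'(z)$, the integrand is exactly $1 - \mathbb{E}_{y\sim\mathcal{N}(\mu(t),1)}[\psi(yp)]$, and I would split it as
\begin{equation*}
1 - \mathbb{E}_{y}[\psi(yp)] = \big(1 - \mathbb{E}_{y}[\tanh(yp)]\big) - p\,\mathbb{E}_{y}[y\,\tanh'(yp)].
\end{equation*}

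For the same-sign regime with $p,q>0$, observe that $\mu(t)$ is a convex combination of two positive numbers, hence $\mu(t)\ge\min(p,q)>0$ for every $t$. Since $p,\mu(t)\ge 0$, Lemma~1 of \cite{daskalakis2017ten} gives $\mathbb{E}_y[y\tanh'(yp)]\ge 0$, so the second term is nonpositive and may be dropped; Lemma~2 of \cite{daskalakis2017ten} then bounds the first term by $1-\mathbb{E}_y[\tanh(yp)]\le\exp(-\tfrac12\min(\mu(t),p)\,\mu(t))$. The key elementary observation is that $\min(\mu(t),p)\,\mu(t)\ge\min(p,q)^2$ for all $t$ (checking the two subcases $p\le q$ and $p\ge q$ via $\min(p,q)\le\mu(t)\le\max(p,q)$), so the integrand is uniformly bounded by $\exp(-\tfrac12\min(p,q)^2)$ and integrating over $t\in[0,1]$ yields the first claim. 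The case $p,q<0$ follows immediately from the substitution $y\mapsto -y$, which turns $\mathcal{N}(\mu(t),1)$ into $\mathcal{N}(-\mu(t),1)$ with $-\mu(t)>0$ and, by oddness of $\tanh$ (so $\psi(yp)=\psi((-y)(-p))$), reduces everything to the previous case with the positive quantities $-p,-q$.

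For the mixed-sign regime I would abandon exponential decay and settle for a uniform constant: here $\mu(t)$ crosses zero, so the positivity used above is unavailable, and instead I would bound the integrand pointwise by $\sup_{z\in\mathbb{R}}\big(1-\psi(z)\big)$. Since $\psi$ is odd with $\psi(z)\to\pm 1$ as $z\to\pm\infty$, its supremum is attained at the unique positive critical point $z_0$ solving $z_0\tanh(z_0)=1$; a direct evaluation gives $\max_z\psi(z)=\psi(z_0)\approx 1.2$, whence $1-\psi(z)\le 1+\max_z\psi(z)<2.25$ for every $z$, and the integral inherits this bound.

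I expect the only genuinely delicate point to be this mixed-sign case, which is precisely the phenomenon that Proposition~\ref{prop:term_c} flags: an acute angle between $\bm{\beta}$ and $\bm{\beta}^*$ does not prevent $\langle X,\bm{\beta}\rangle$ and $\langle X,\bm{\beta}^*\rangle$ from disagreeing in sign. The subtlety is that no contraction can be extracted there, so one must verify that $2.25$ really is an upper bound on $1-\psi$ over all of $\mathbb{R}$ (equivalently $\max_z\psi(z)<1.25$), which rests on the short calculus computation of $z_0$ and $\psi(z_0)$ above. The two same-sign cases are, by contrast, a routine assembly of the cited lemmas once the range of $\mu(t)$ is pinned down.
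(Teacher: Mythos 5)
Your proposal is correct and matches the paper's own proof essentially step for step: the same split of $1-\mathbb{E}_y[\Delta]$ into a $1-\tanh$ term and a $y\tanh'$ term, the same application of Lemmas 1 and 2 of \cite{daskalakis2017ten} (dropping the nonnegative second term, then bounding the first by $\exp(-\min(\mu(t),p)\,\mu(t)/2)$ and using $\min(p,q)\le\mu(t)\le\max(p,q)$ to reach $\exp(-\min(p,q)^2/2)$), and the same uniform numerical bound $|\tanh(z)+z\tanh'(z)|\le 1.25$ in the mixed-sign case. Your two small additions --- the explicit $y\mapsto -y$ symmetry argument for the both-negative case, which the paper dismisses as ``proved in the same way,'' and the calculus verification that $\max_z\bigl(\tanh(z)+z\tanh'(z)\bigr)=z_0\approx 1.2$ where $z_0\tanh(z_0)=1$, which the paper simply cites as a numerical inequality --- only fill in details the paper leaves implicit.
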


\begin{proof}
We first show the bound for $\langle X,\bm{\beta}\rangle>0$, $\langle X,\bm{\beta}^*\rangle>0$,
and the bound for $\langle X,\bm{\beta}\rangle<0$, $\langle X,\bm{\beta}^*\rangle<0$
can be proved in the same way. 

\begin{align}
 & \int_{t=0}^{1}\left[1-\mathbb{E}_{y\sim \mathcal{N}(\langle X,Z(t)\rangle,1)}\Delta(t)\right]dt \nonumber \\
= & \int_{t=0}^{1}\mathbb{E}_{y\sim \mathcal{N}(\langle X,Z(t)\rangle,1)}\left[1-\tanh(y\langle X,\bm{\beta}\rangle)\right] dt - \int_{t=0}^{1} \mathbb{E}_{y\sim \mathcal{N}(\langle X,Z(t)\rangle,1)}y\langle X,\bm{\beta}\rangle\tanh'(y\langle X,\bm{\beta}\rangle) dt \label{ineq:two_parts}\\
\leq & \int_{t=0}^{1}\exp\left(-\frac{Z(t)\min(Z(t),\langle X,\bm{\beta}\rangle)}{2}\right)dt \label{ineq:ten_step}\\
\leq & \int_{t=0}^{1}\exp\left(-\frac{\min(\langle X,\bm{\beta}\rangle,\langle X,\bm{\beta}^*\rangle)^{2}}{2}\right)dt \label{ineq:same_sign}\\
= & \exp\left(-\frac{\min(\langle X,\bm{\beta}\rangle,\langle X,\bm{\beta}^*\rangle)^{2}}{2}\right) \nonumber.
\end{align}

Inequality \eqref{ineq:ten_step} follows since the second summand in \eqref{ineq:two_parts} is non-negative (c.f Lemma \ref{lem:lemma1tensteps})
and inequality \eqref{ineq:same_sign} follows from Lemma \ref{lem:lemma2tensteps},
with the condition $\langle X,\bm{\beta}\rangle Z(t)\geq0$ satisfied.
To establish the bound for $\langle X,\bm{\beta}^*\rangle\langle X,\bm{\beta}\rangle<0$,
we again use the following numerical inequality:

\[
|\tanh(z)+z\tanh'(z)|\leq 1.25.
\]

Therefore,
\[
\int_{t=0}^{1}[1-\mathbb{E}_{y\sim \mathcal{N}(\langle X,Z(t)\rangle,1)}\Delta(t)]dt\leq 2.25
\]
\end{proof}

\begin{lem}[$M_1$]
\label{lemma:m1}
Let $\tau=\min(\|\bm{\beta}\|,\|\bm{\beta}^*\|)$, and let $\theta$ be the angle between $\bm{\beta}$ and $\bm{\beta}^*$, the following bounds hold for each entry of the symmetric $2\times 2$ matrix $M_1$:
\begin{align*}
0< & M_1^{1,1}\leq  \frac{1}{4}
\left(1+\frac{1}{(1+0.5\tau^{2})^{2}}\right)+\frac{\sin(\theta)}{2\pi}, \\
0 < & M_1^{2,2}\leq 
\frac{1}{4}\left(1+\frac{1}{(1+0.5\tau^{2})}\right)+\frac{1}{2\pi}\left(1-\frac{1}{(1+0.5\tau^{2})^{2}}\right),\\
 & M_1^{1,2}\leq 
\sin^{2}(\theta)\left[\frac{1}{2\pi(1+\cos^{2}(\theta)\tau^{2})}+\frac{\tau^{2}}{(1+\tau^{2}\sin^{2}(\theta))(1+\tau^{2})}\right],\\
& -M_1^{1,2}\leq \frac{1}{\pi}\sin(\theta)+\frac{1}{2\pi}\frac{\sin^2(\theta)}{1+\tau^2\sin^2(\theta)}.
\end{align*}
\end{lem}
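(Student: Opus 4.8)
The plan is to evaluate each entry of $M_1$ directly in the polar coordinates $(X_1,X_2)=(r\cos\phi,r\sin\phi)$ introduced in the proof of Lemma~\ref{lem:norm_expected_matrix}. Because $\bm{\beta}$ and $\bm{\beta}^*$ both lie in $\mathrm{span}(\bm{v}_1,\bm{v}_2)$, the indicator and the exponential weight depend only on $(X_1,X_2)$, so the trailing $d-2$ Gaussian coordinates integrate to $1$ and the wanted $2\times 2$ block reduces to a planar integral. With $\langle X,\bm{\beta}^*\rangle=\|\bm{\beta}^*\|\,r\cos\phi$ and $\langle X,\bm{\beta}\rangle=\|\bm{\beta}\|\,r\cos(\phi-\theta)$, the domain $S_1$ is $\phi\in(-\tfrac{\pi}{2}+\theta,\tfrac{\pi}{2})$, and the $(1,1)$, $(2,2)$, $(1,2)$ integrands carry the angular weights $\cos^2\phi$, $\sin^2\phi$, $\cos\phi\sin\phi$ respectively, each against the common measure $\tfrac{1}{2\pi}r\,e^{-r^2/2}\exp(-\tfrac12\min(\langle X,\bm{\beta}\rangle,\langle X,\bm{\beta}^*\rangle)^2)\,dr\,d\phi$.

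\textbf{Radial then angular integration.} First I would integrate out $r$. Using the lower bound \eqref{ineq:compare_min} on the minimum, the exponent becomes $-\tfrac12(1+\tau^2 c(\phi)^2)r^2$ with $c(\phi)=\cos\phi$ on $(\tfrac{\theta}{2},\tfrac{\pi}{2})$ and $c(\phi)=\cos(\phi-\theta)$ on $(-\tfrac{\pi}{2}+\theta,\tfrac{\theta}{2})$, and the Gaussian moment $\int_0^\infty r^3 e^{-(1+\tau^2 c^2)r^2/2}\,dr=2(1+\tau^2 c^2)^{-2}$ collapses each entry to an angular integral such as $\tfrac1\pi\int \cos^2\phi\,(1+\tau^2 c(\phi)^2)^{-2}\,d\phi$. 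The cleanest piece is the ``symmetric'' integral obtained by taking $c(\phi)=\cos\phi$ on the full interval $(-\tfrac{\pi}{2},\tfrac{\pi}{2})$: the substitution $t=\tan\phi$ turns it into $\int_{-\infty}^{\infty}(1+\tau^2+t^2)^{-2}\,dt=\tfrac{\pi}{2}(1+\tau^2)^{-3/2}$ for the $(1,1)$ weight and $\int_{-\infty}^{\infty}t^2(1+\tau^2+t^2)^{-2}\,dt=\tfrac{\pi}{2}(1+\tau^2)^{-1/2}$ for the $(2,2)$ weight, giving symmetric values $\tfrac12(1+\tau^2)^{-3/2}$ and $\tfrac12(1+\tau^2)^{-1/2}$. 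An elementary comparison in $\tau\ge 0$ then dominates these by the stated surrogates built from $(1+0.5\tau^2)$ (both comparisons reduce to checking a single-variable inequality that holds with equality at $\tau=0$ and stays positive thereafter).

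\textbf{Asymmetry correction and off-diagonal.} The genuine integrand differs from the symmetric one only through the angular mismatch between $\phi$ and $\phi-\theta$, which after the split at $\phi=\tfrac{\theta}{2}$ is confined to a band of width $O(\theta)$; bounding the bounded integrand on this band by its length and using $\theta\le 1.1\sin\theta$ on $(0,\tfrac{\pi}{8})$ yields the $O(\sin\theta)$ remainders, e.g.\ $\tfrac{\sin\theta}{2\pi}$ for $M_1^{1,1}$. For the off-diagonal entry the weight $\cos\phi\sin\phi$ is odd, so its symmetric part vanishes and the whole contribution is of asymmetry type; I would bound $M_1^{1,2}$ and $-M_1^{1,2}$ separately by evaluating $(1+\tau^2 c(\phi)^2)^{-2}$ at the endpoints $\phi=0$ and $\phi=\tfrac{\pi}{2}$ (producing the denominators $1+\tau^2\cos^2\theta$ and $1+\tau^2\sin^2\theta$) and integrating the weight over the width-$\theta$ band, which gives the stated one-sided bounds. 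Positivity of $M_1^{1,1}$ and $M_1^{2,2}$ is immediate since their integrands are nonnegative and not identically zero.

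\textbf{Main obstacle.} I expect the crux to be the bookkeeping of the asymmetry correction: one must track precisely how replacing $\cos(\phi-\theta)$ by $\cos\phi$ (and shifting the integration endpoint from $-\tfrac{\pi}{2}$ to $-\tfrac{\pi}{2}+\theta$) perturbs each angular integral, and bound the perturbation sharply enough to land the clean $O(\sin\theta)$ constants rather than a loose $O(\theta)$ term, while simultaneously keeping the symmetric parts dominated by the particular $(1+0.5\tau^2)$ surrogates that the calling Lemma~\ref{lem:norm_expected_matrix} needs in order for the final spectral-norm bound to fall below $0.95$.
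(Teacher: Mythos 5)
Your setup—polar coordinates, integrating out $r$ to get angular integrals with weight $(1+\tau^2\cos^2\phi)^{-2}$, the split at $\phi=\theta/2$ via \eqref{ineq:compare_min}, and the exact symmetric values $\tfrac12(1+\tau^2)^{-3/2}$, $\tfrac12(1+\tau^2)^{-1/2}$ via $t=\tan\phi$—is sound and matches the paper up to that point. The gap is in your asymmetry correction. The mismatch is \emph{not} confined to a band of width $O(\theta)$: for the $(1,1)$ entry the genuine angular integrand is $\frac{\cos^2\phi+\cos^2(\phi-\theta)}{(1+\tau^2\cos^2\phi)^2}$ on $(\theta/2,\pi/2)$, and $\cos^2(\phi-\theta)-\cos^2\phi=\sin\theta\,\sin(2\phi-\theta)$ is nonnegative and of size $\Theta(\sin\theta)$ across the \emph{entire} interval. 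At $\tau=0$ (weight identically $1$) its integral is exactly $\frac{\sin\theta}{\pi}\cdot\frac{1+\cos\theta}{2}\approx\frac{\sin\theta}{\pi}$, i.e.\ twice the budget $\frac{\sin\theta}{2\pi}$ in the claimed bound; and the slack $\frac14\left(1+\frac{1}{(1+0.5\tau^2)^2}\right)-\frac{1}{2(1+\tau^2)^{3/2}}$ between your symmetric value and the surrogate vanishes at $\tau=0$, so there is no room to absorb the excess there. The inequality survives only because of the negative domain-deficit term $-\frac{2}{\pi}\int_0^{\theta/2}\frac{\cos^2\phi}{(1+\tau^2\cos^2\phi)^2}\,d\phi$; but that term decays like $(1+\tau^2)^{-2}$ while the asymmetry term decays only like $(1+\tau^2\cos^2(\theta/2))^{-1}$, so the deficit saves you at small $\tau$, the slack saves you at large $\tau$, and one must actually close a two-variable inequality in $(\tau,\theta)$—something a band-times-length estimate (with or without $\theta\le 1.1\sin\theta$) cannot do. The $(2,2)$ entry makes this starker: the claimed bound contains \emph{no} $\theta$-term at all, so any additive positive $O(\sin\theta)$ correction is fatal; your route works there only because $\sin^2(\phi-\theta)+\sin^2\phi=2\sin^2\phi-\sin\theta\sin(2\phi-\theta)$, i.e.\ the correction has a favorable sign on the whole domain, an observation your proposal never makes.

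This is exactly the bookkeeping the paper's proof is engineered to avoid: it never compares against the symmetric integral. After the same radial step, Lemmas \ref{lem:m1_11}--\ref{lem:m1_12} split the angular integral at $\gamma=\pi/4$, bound the weight $(1+\tau^2\cos^2\phi)^{-2}$ by $(1+0.5\tau^2)^{-2}$ on $(\theta/2,\pi/4)$ and by $1$ on $(\pi/4,\pi/2)$ (this piecewise bound is the origin of the $(1+0.5\tau^2)$ surrogates), and then integrate the trigonometric factors \emph{exactly}, so that the signed $\pm\frac{\sin\theta}{4}$ endpoint terms cancel and only $\frac{\sin 2\theta}{4\pi}\le\frac{\sin\theta}{2\pi}$ survives for $M_1^{1,1}$, while for $M_1^{2,2}$ the $\theta$-dependence cancels entirely. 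Likewise, the $O(\sin^2\theta)$ bound on $M_1^{1,2}$ and the $\frac{\sin\theta}{\pi}$ bound on $-M_1^{1,2}$ come from exact antiderivatives (e.g.\ $\int\frac{\sin\phi\cos\phi}{(1+\tau^2\cos^2\phi)^2}d\phi$ in closed form) and sign-aware choices of which weight to substitute on the negative-integrand region, not from a band estimate. To salvage your route you would need to (i) establish the sign of each correction, (ii) retain and quantify the domain-deficit term, and (iii) verify the resulting $(\tau,\theta)$ inequalities entry by entry; at that point the paper's piecewise-weight computation is the shorter and more transparent path.
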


\begin{proof}
    We need to go through a very careful integration. $M_1^{1,1}$ and $M_1^{2,2}$ are clearly non-negative.
    \begin{align}
 & M_1^{1,1}=\left[[\mathbb{E}_{X\sim \mathcal{N}(0,I)}1_{\langle X,\bm{\beta}\rangle>0,\langle X,\bm{\beta}^*\rangle>0}XX^{\top}\exp\left(-\frac{\min(\langle X,\bm{\beta}\rangle,\langle X,\bm{\beta}^*\rangle)^{2}}{2}\right)\right]_{11} \nonumber\\
 & =\int_{r=0}^{\infty}\int_{\phi=-\frac{\pi}{2}+\theta}^{\frac{\pi}{2}}r^{2}\cos^{2}(\phi)\exp\left(-\frac{\min(\|\bm{\beta}\|r\cos(\phi-\theta),\|\bm{\beta}^*\|r\cos(\phi))^{2}}{2}\right)\frac{1}{2\pi}\exp\left(-\frac{r^{2}}{2}\right)r dr d\phi \nonumber\\
 & \leq\int_{r=0}^{\infty}\int_{\phi=-\frac{\pi}{2}+\theta}^{\frac{\theta}{2}}r^{2}\cos^{2}(\phi)\exp\left(-\frac{r^{2}\cos^{2}(\phi-\theta)\min(\|\bm{\beta}\|,\|\bm{\beta}^*\|)^{2}}{2}\right)\frac{1}{2\pi}\exp\left(-\frac{r^{2}}{2}\right)r dr d\phi \nonumber\\
 & +\int_{r=0}^{\infty}\int_{\phi=\frac{\theta}{2}}^{\frac{\pi}{2}}r^{2}\cos^{2}(\phi)\exp\left(-\frac{r^{2}\cos^{2}(\phi)\min(\|\bm{\beta}\|,\|\bm{\beta}^*\|)^{2}}{2}\right)\frac{1}{2\pi}\exp\left(-\frac{r^{2}}{2}\right)r dr d\phi \label{ineq:m1_11_step1}\\
 & =\int_{\phi=-\frac{\pi}{2}+\theta}^{\frac{\theta}{2}}\frac{1}{2\pi}\cos^{2}(\phi)\frac{2}{(1+\cos^{2}(\phi-\theta)\min(\|\bm{\beta}\|,\|\bm{\beta}^*\|)^{2})^{2}}d\phi \nonumber \\
 & +\int_{\phi=\frac{\theta}{2}}^{\frac{\pi}{2}}\frac{1}{2\pi}\cos^{2}(\phi)\frac{2}{(1+\cos^{2}(\phi)\min(\|\bm{\beta}\|,\|\bm{\beta}^*\|)^{2})^{2}}d\phi \label{ineq:m1_11_step2}\\
 & =\int_{\phi=\frac{\theta}{2}}^{\frac{\pi}{2}}\frac{1}{2\pi}[\cos^{2}(\phi-\theta)+\cos^{2}(\phi)]\frac{2}{(1+\cos^{2}(\phi)\min(\|\bm{\beta}\|,\|\bm{\beta}^*\|)^{2})^{2}}d\phi \label{eq:m1_11_step3},
\end{align}

where step \eqref{ineq:m1_11_step1} follows from the bound on $\min(\langle X,\bm{\beta}\rangle,\langle X,\bm{\beta}^*\rangle)$ in \eqref{ineq:compare_min}, and step \eqref{ineq:m1_11_step2} holds by integrating over $r$. Finally, step \eqref{eq:m1_11_step3} holds by change of variable. In a similar fashion, we can bound $M_1^{2,2}$. 
\begin{align}
 & M_1^{2,2}=\left[\mathbb{E}_{X\sim \mathcal{N}(0,I)}1_{\langle X,\bm{\beta}\rangle>0,\langle X,\bm{\beta}^*\rangle>0}XX^{\top}\exp\left(-\frac{\min(\langle X,\bm{\beta}\rangle,\langle X,\bm{\beta}^*\rangle)^{2}}{2}\right)\right]_{22} \nonumber \\
 & =\int_{r=0}^{\infty}\int_{\phi=-\frac{\pi}{2}+\theta}^{\frac{\pi}{2}}r^{2}\sin^{2}(\phi)\exp\left(-\frac{\min(\|\bm{\beta}\|r\cos(\phi-\theta),\|\bm{\beta}^*\|r\cos(\phi))^{2}}{2}\right)\frac{1}{2\pi}\exp\left(-\frac{r^{2}}{2}\right)rdrd\phi \nonumber \\
 & \leq\int_{r=0}^{\infty}\int_{\phi=-\frac{\pi}{2}+\theta}^{\frac{\theta}{2}}r^{2}\sin^{2}(\phi)\exp\left(-\frac{r^{2}\cos^{2}(\phi)\min(\|\bm{\beta}\|,\|\bm{\beta}^*\|)^{2}}{2}\right)\frac{1}{2\pi}\exp\left(-\frac{r^{2}}{2}\right)rdrd\phi \nonumber\\
 & +\int_{r=0}^{\infty}\int_{\phi=-\frac{\theta}{2}}^{\frac{\pi}{2}}r^{2}\sin^{2}(\phi)\exp\left(-\frac{r^{2}\cos^{2}(\phi-\theta)\min(\|\bm{\beta}\|,\|\bm{\beta}^*\|)^{2}}{2}\right)\frac{1}{2\pi}\exp\left(-\frac{r^{2}}{2}\right)rdrd\phi \nonumber\\
 & =\int_{\phi=\frac{\theta}{2}}^{\frac{\pi}{2}}\frac{1}{2\pi}[\sin^{2}(\phi-\theta)+\sin^{2}(\phi)]\frac{2}{(1+\cos^{2}(\phi)\min(\|\bm{\beta}\|,\|\bm{\beta}^*\|)^{2})^{2}}d\phi \label{eq:m1_22_step}.
\end{align}
Finally, to obtain a bound for $|M_1^{1,2}|$, we upper bound both $M_1^{1,2}$ and $-M_1^{1,2}$:
\begin{align}
 & M_1^{1,2}=\left[\mathbb{E}_{X\sim \mathcal{N}(0,I)}1_{\langle X,\bm{\beta}\rangle>0,\langle X,\bm{\beta}^*\rangle>0}XX^{\top}\exp\left(-\frac{\min(\langle X,\bm{\beta}\rangle,\langle X,\bm{\beta}^*\rangle)^{2}}{2}\right)\right]_{12} \nonumber\\
 & =\int_{r=0}^{\infty}\int_{\phi=-\frac{\pi}{2}+\theta}^{\frac{\pi}{2}}r^{2}\sin(\phi)\cos(\phi)\exp\left(-\frac{\min(\|\bm{\beta}\|r\cos(\phi-\theta),\|\bm{\beta}^*\|r\cos(\phi))^{2}}{2}\right)\frac{1}{2\pi}\exp\left(-\frac{r^{2}}{2}\right)rdrd\phi \nonumber \\
 & \leq\int_{r=0}^{\infty}\int_{\phi=\frac{\theta}{2}}^{\frac{\pi}{2}}r^{2}\sin(\phi)\cos(\phi)\exp\left(-\frac{r^{2}\cos^{2}(\phi)\min(\|\bm{\beta}\|,\|\bm{\beta}^*\|)^{2}}{2}\right)\frac{1}{2\pi}\exp\left(-\frac{r^{2}}{2}\right)rdrd\phi\\
 & +\int_{r=0}^{\infty}\int_{\phi=0}^{\frac{\theta}{2}}r^{2}\sin(\phi)\cos(\phi)\exp\left(-\frac{r^{2}\cos^{2}(\phi-\theta)\min(\|\bm{\beta}\|,\|\bm{\beta}^*\|)^{2}}{2}\right)\frac{1}{2\pi}\exp\left(-\frac{r^{2}}{2}\right)rdrd\phi \nonumber \\
 & +\int_{r=0}^{\infty}\int_{\phi=-\frac{\pi}{2}+\theta}^{0}r^{2}\sin(\phi)\cos(\phi)\exp\left(-\frac{r^{2}\cos^{2}(\phi)\min(\|\bm{\beta}\|,\|\bm{\beta}^*\|)^{2}}{2}\right)\frac{1}{2\pi}\exp\left(-\frac{r^{2}}{2}\right)rdrd\phi \nonumber \\
 & =\int_{\phi\in(-\frac{\pi}{2}+\theta,0)\cup(\frac{\theta}{2},\frac{\pi}{2})}\frac{1}{2\pi}\frac{2}{(1+\cos^{2}(\phi)\min(\|\bm{\beta}\|,\|\bm{\beta}^*\|)^{2})^{2}}\sin(\phi)\cos(\phi)d\phi  \nonumber\\
 & +\int_{\phi=0}^{\frac{\theta}{2}}\frac{1}{2\pi}\frac{2}{(1+\cos^{2}(\phi-\theta)\min(\|\bm{\beta}\|,\|\bm{\beta}^*\|)^{2})^{2}}\sin(\phi)\cos(\phi)d\phi \label{eq:m1_12_step1}.
\end{align}

Note that in the above bound, the sign of $\sin(\phi)\cos(\phi)$ differs between region $(-\frac{\pi}{2},0)$ and $(0,\frac{\pi}{2})$. Similarly, for $-M_1^{1,2}$, we have
\begin{align}
 & - M_1^{1,2}\leq  =\int_{\phi\in(-\frac{\pi}{2}+\theta,0)\cup(\frac{\theta}{2},\frac{\pi}{2})}-\frac{1}{2\pi}\frac{2}{(1+\cos^{2}(\phi-\theta)\min(\|\bm{\beta}\|,\|\bm{\beta}^*\|)^{2})^{2}}\sin(\phi)\cos(\phi)d\phi \nonumber\\
 & -\int_{\phi=0}^{\frac{\theta}{2}}\frac{1}{2\pi}\frac{2}{(1+\cos^{2}(\theta)\min(\|\bm{\beta}\|,\|\bm{\beta}^*\|)^{2})^{2}}\sin(\phi)\cos(\phi)d\phi \label{eq:m1_12_step2}.  
\end{align}
The next step is to provide upper bounds for those integrals, \eqref{eq:m1_11_step3},\eqref{eq:m1_22_step}, \eqref{eq:m1_12_step1} and \eqref{eq:m1_12_step2}. The final bounds for $M_1^{1,1}$, $M_1^{2,2}$ and $M_1^{1,2}$ are established in Lemma \ref{lem:m1_11}, Lemma \ref{lem:m1_22} and Lemma \ref{lem:m1_12} respectively.
\end{proof}

\begin{lem}[$M_1$, $(1,1)^{\text{th}}$ entry]
\label{lem:m1_11}
Let $\tau=\min(\|\bm{\beta}\|,\|\bm{\beta}^*\|)$. Suppose $\theta\leq\frac{\pi}{8}$, the following holds: 
\begin{align*}
  \int_{\phi=\frac{\theta}{2}}^{\frac{\pi}{2}}\cos^{2}(\phi-\theta)\frac{1}{(1+\cos^{2}(\phi)\tau^{2})^{2}}d\phi
\leq & \frac{\sin(\theta)}{2}  +\frac{\pi}{8}\left(1+\frac{1}{(1+0.5\tau^{2})^{2}}\right),\\
  \int_{\phi=\frac{\theta}{2}}^{\frac{\pi}{2}}\cos^{2}(\phi)\frac{1}{(1+\cos^{2}(\phi)\tau^{2})^{2}}d\phi
\leq & \frac{\pi}{8}\left(1+\frac{1}{(1+0.5\tau^{2})^{2}}\right).
\end{align*}

Hence, $M_1^{1,1}\leq\frac{1}{4}\left(1+\frac{1}{(1+0.5\tau^{2})^{2}}\right)+\frac{\sin(\theta)}{2\pi}.$
\end{lem}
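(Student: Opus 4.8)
The plan is to reduce the claimed bound on $M_1^{1,1}$ to the two displayed integral inequalities, since the chain of inequalities ending in~\eqref{eq:m1_11_step3} in the proof of Lemma~\ref{lemma:m1} already gives $M_1^{1,1}\le \frac{1}{\pi}(I_1+I_2)$, where I write $I_1:=\int_{\theta/2}^{\pi/2}\cos^2(\phi-\theta)\,w(\phi)\,d\phi$, $I_2:=\int_{\theta/2}^{\pi/2}\cos^2(\phi)\,w(\phi)\,d\phi$, and $w(\phi):=(1+\tau^2\cos^2\phi)^{-2}$. The single structural fact driving both bounds is that $w$ is \emph{nondecreasing} on $[0,\pi/2]$, because $\cos^2\phi$ decreases there; in particular $w(\pi/4)=(1+\tfrac12\tau^2)^{-2}=:c\in(0,1]$ and $w(\pi/2)=1$. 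I will bound each integral by splitting the range at $\pi/4$ and replacing $w$ by its (larger) value at the right endpoint of each subinterval, after which only elementary $\int\cos^2$ integrals remain.

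For $I_2$: splitting at $\pi/4$ and using $w\le c$ on $[\theta/2,\pi/4]$ and $w\le 1$ on $[\pi/4,\pi/2]$ gives, after computing $\int\cos^2\phi\,d\phi=\frac{\phi}{2}+\frac{\sin2\phi}{4}$,
\begin{equation*}
I_2\le c\Big(\tfrac{\pi}{8}+\tfrac14-\tfrac{\theta}{4}-\tfrac{\sin\theta}{4}\Big)+\Big(\tfrac{\pi}{8}-\tfrac14\Big).
\end{equation*}
Subtracting the target $\frac{\pi}{8}(1+c)$ leaves $\frac{c}{4}(1-\theta-\sin\theta)-\frac14$, which is $\le 0$ because $0\le 1-\theta-\sin\theta\le 1$ on $[0,\pi/8]$ and $c\le 1$. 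This establishes the second inequality.

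For $I_1$ I carry out the identical split, now integrating $\cos^2(\phi-\theta)$ over each piece; the difference between the resulting upper bound and the target $\frac{\sin\theta}{2}+\frac{\pi}{8}(1+c)$ collapses to an expression of the form $c\,P(\theta)+Q(\theta)$ that is \emph{linear in} $c$, with $P(\theta)=-\frac{\theta}{4}+\frac{\cos2\theta}{4}+\frac{\sin\theta}{4}$ and $Q(\theta)=\frac{\sin2\theta-\cos2\theta}{4}-\frac{\sin\theta}{2}$. Because it is linear in $c\in(0,1]$, it suffices to check the two endpoints: at $c=0$ one needs $\sin2\theta-\cos2\theta-2\sin\theta\le0$, which holds since $\sin2\theta-\cos2\theta=\sqrt2\sin(2\theta-\tfrac{\pi}{4})\le0$ for $\theta\le\pi/8$ and $-2\sin\theta\le0$; at $c=1$ the expression simplifies to $\frac14(\sin2\theta-\sin\theta-\theta)$, and $\sin2\theta-\sin\theta=\sin\theta(2\cos\theta-1)\le\sin\theta\le\theta$ closes it. This establishes the first inequality.

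Finally, adding the two bounds and invoking $M_1^{1,1}\le\frac1\pi(I_1+I_2)$ yields $M_1^{1,1}\le\frac{\sin\theta}{2\pi}+\frac14\big(1+(1+\tfrac12\tau^2)^{-2}\big)$, exactly as claimed. The only places needing genuine care are the two elementary trigonometric inequalities on $[0,\pi/8]$ and the bookkeeping of the $\int\cos^2$ evaluations at the split point $\pi/4$; reducing the $I_1$ estimate to a function that is linear in $c$ is what makes the $\tau$-dependence trivial to dispatch, so I expect no real obstacle beyond careful algebra.
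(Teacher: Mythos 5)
Your proof is correct, and it follows the paper's own strategy at the structural level: the same reduction $M_1^{1,1}\le\frac{1}{\pi}(I_1+I_2)$, the same split of $[\theta/2,\pi/2]$ at $\gamma=\pi/4$, and the same replacement of the weight $(1+\tau^2\cos^2\phi)^{-2}$ by its right-endpoint value on each piece. Where you genuinely depart from the paper is in the endgame, and your version is tighter in a way that matters. After computing the same elementary integrals for $I_1$, the paper discards the negative terms $-c\theta/4$ and $-\frac{1-c}{4}\sin(2\gamma-2\theta)$ and records the intermediate bound $\frac{\pi}{8}(1+c)+\frac{1}{4}\sin 2\theta+\frac{1}{4}\sin\theta$, where $c=(1+0.5\tau^2)^{-2}$; since $\sin 2\theta>\sin\theta$ for $\theta\in(0,\pi/8]$, that intermediate bound does \emph{not} by itself imply the first displayed inequality of the lemma, which asks for $\frac{\pi}{8}(1+c)+\frac{1}{2}\sin\theta$. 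The paper still reaches the final bound on $M_1^{1,1}$ only because its bound for the second integral carries a compensating $-\frac{1}{4}\sin\theta$ and because $\frac{\sin 2\theta}{4\pi}\le\frac{\sin\theta}{2\pi}$ after summing the two pieces. Your bookkeeping --- keeping all terms, observing that the surplus over each target is affine in $c\in(0,1]$, and checking nonpositivity at the endpoints $c=0$ and $c=1$ --- proves both displayed inequalities exactly as stated, which the paper's own computation does not quite do. The endpoint facts you invoke are all valid on $[0,\pi/8]$: $\sin 2\theta-\cos 2\theta=\sqrt{2}\sin(2\theta-\pi/4)\le 0$, $\sin 2\theta-\sin\theta=\sin\theta(2\cos\theta-1)\le\sin\theta\le\theta$, and $0\le 1-\theta-\sin\theta\le 1$; the $I_2$ evaluation $\int_{\theta/2}^{\pi/4}\cos^2\phi\,d\phi=\frac{\pi}{8}+\frac{1}{4}-\frac{\theta}{4}-\frac{\sin\theta}{4}$ and $\int_{\pi/4}^{\pi/2}\cos^2\phi\,d\phi=\frac{\pi}{8}-\frac{1}{4}$ is also right. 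So there is no gap, and as a side effect your argument repairs the slight looseness in the paper's derivation of the first display.
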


\begin{proof}
We divide the region $(\frac{\theta}{2},\frac{\pi}{2})$
into two parts, $(\frac{\theta}{2},\gamma)\cup(\gamma,\frac{\pi}{2})$ for some $\gamma>\frac{\theta}{2}$.
In the first part, we bound $\frac{1}{(1+\cos^{2}(\phi)\tau^{2})^{2}}$
by $\frac{1}{(1+\cos^{2}(\gamma)\tau^{2})^{2}}$, and in the second
part, we bound $\frac{1}{(1+\cos^{2}(\phi)\tau^{2})^{2}}$ by $1.$
It then follows that:

\begin{align*}
 & \int_{\phi=\frac{\theta}{2}}^{\frac{\pi}{2}}\cos^{2}(\phi-\theta)\frac{1}{(1+\cos^{2}(\phi)\tau^{2})^{2}}d\phi\\
\leq & \int_{\phi=\frac{\theta}{2}}^{\gamma}\cos^{2}(\phi-\theta)\frac{1}{(1+\cos^{2}(\gamma)\tau^{2})^{2}}d\phi+\int_{\phi=\gamma}^{\frac{\pi}{2}}\cos^{2}(\phi-\theta)d\phi\\
= & \frac{1}{(1+\cos^{2}(\gamma)\tau^{2})^{2}}\left[\frac{1}{2}(\gamma-\frac{\theta}{2})+\frac{1}{4}(\sin(2\gamma-2\theta)+\sin(\theta))\right]+\left[\frac{1}{2}(\frac{\pi}{2}-\gamma)+\frac{1}{4}(\sin(\pi-2\theta)-\sin(2\gamma-2\theta))\right]\\
\leq & \frac{1}{(1+\cos^{2}(\gamma)\tau^{2})^{2}}\frac{1}{2}\gamma+\frac{1}{2}(\frac{\pi}{2}-\gamma)+\frac{1}{4}\sin(2\theta)+\frac{1}{4}\sin(\theta)\\
\end{align*}

The last step holds since $\sin(2\gamma-2\theta)>0$. By picking $\gamma=\frac{\pi}{4}$,
the above bound becomes:

\[
\frac{\pi}{8}\left(1+\frac{1}{(1+0.5\tau^{2})^{2}})+\frac{1}{4}\sin(2\theta\right)+\frac{1}{4}\sin(\theta).
\]

Similarly,

\begin{align*}
 & \int_{\phi=\frac{\theta}{2}}^{\frac{\pi}{2}}\cos^{2}(\phi)\frac{1}{(1+\cos^{2}(\phi)\tau^{2})^{2}}d\phi\\
\leq & \int_{\phi=\frac{\theta}{2}}^{\gamma}\frac{1}{(1+\cos^{2}(\gamma)\tau^{2})^{2}}\cos^{2}(\phi)d\phi+\int_{\phi=\gamma}^{\frac{\pi}{2}}\cos^{2}(\phi)d\phi\\
= & \frac{1}{(1+\cos^{2}(\gamma)\tau^{2})^{2}}\left[\frac{1}{2}(\gamma-\frac{\phi}{2})+\frac{1}{4}(\sin(2\gamma)-\sin(\phi))\right]+\left[\frac{1}{2}(\frac{\pi}{2}-\gamma)+\frac{1}{4}(\sin(\pi)-\sin(2\gamma))\right]\\
\leq & \frac{\pi}{8}\left(1+\frac{1}{(1+0.5\tau^{2})^{2}}\right)-\frac{1}{4}\sin(\phi),
\end{align*}

where we again pick $\gamma=\frac{\pi}{4}$ in the last step. 

Therefore, by adding the above two bounds together, we show that 
\begin{align*}
M_1^{1,1} & =\frac{1}{\pi}\int_{\phi=\frac{\theta}{2}}^{\frac{\pi}{2}}[\cos^{2}(\phi-\theta)+\cos^{2}(\phi)]\frac{1}{(1+\cos^{2}(\phi)\min(\|\bm{\beta}\|,\|\bm{\beta}^*\|)^{2})^{2}}d\phi\\
 & \leq \frac{1}{4}\left(1+\frac{1}{(1+0.5\tau^{2})^{2}}\right)+\frac{1}{2\pi}\sin(\theta).
\end{align*}    
\end{proof}

\begin{lem}[$M_1$, $(2,2)^{\text{th}}$ entry]
\label{lem:m1_22}
Let $\tau=\min(\|\bm{\beta}\|,\|\bm{\beta}^*\|)$. Suppose $\theta \leq\frac{\pi}{8}$, the following holds: 
\begin{align*}
 & \int_{\phi=\frac{\theta}{2}}^{\frac{\pi}{2}}\sin^{2}(\phi-\theta)\frac{1}{(1+\cos^{2}(\phi)\tau^{2})^{2}}d\phi\\
\leq & \frac{\pi}{8}\left(1+\frac{1}{(1+0.5\tau^{2})^{2}}\right)+\frac{1}{4}\left(1-\frac{1}{(1+0.5\tau^{2})^{2}}\right)\cos(2\theta)-\frac{1}{4}\sin(2\theta)-\frac{1}{4(1+0.5\tau^{2})^{2}}\sin(\theta),\\
 & \int_{\phi=\frac{\theta}{2}}^{\frac{\pi}{2}}\sin^{2}(\phi)\frac{1}{(1+\cos^{2}(\phi)\tau^{2})^{2}}d\phi\\
\leq & \frac{\pi}{8}\left(1+\frac{1}{(1+0.5\tau^{2})^{2}}\right)+\frac{1}{4}\left(1-\frac{1}{(1+0.5\tau^{2})^{2}}\right)+\frac{1}{4}\frac{1}{(1+0.5\tau^{2})^{2}}\sin(\theta).
\end{align*}

Hence, $M_1^{2,2} \leq\frac{1}{4}\left(1+\frac{1}{(1+0.5\tau^{2})}\right)+\frac{1}{2\pi}\left(1-\frac{1}{(1+0.5\tau^{2})^{2}}\right)$.
\end{lem}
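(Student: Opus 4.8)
The plan is to follow exactly the strategy already used in Lemma~\ref{lem:m1_11} for the $(1,1)$ entry, now applied to the $(2,2)$ entry. Recall from \eqref{eq:m1_22_step} that it suffices to control the two integrals $I_1 := \int_{\theta/2}^{\pi/2}\sin^2(\phi-\theta)\,(1+\cos^2(\phi)\tau^2)^{-2}\,d\phi$ and $I_2 := \int_{\theta/2}^{\pi/2}\sin^2(\phi)\,(1+\cos^2(\phi)\tau^2)^{-2}\,d\phi$, since the representation there gives $M_1^{2,2}\le \tfrac1\pi(I_1+I_2)$. The two displayed inequalities in the lemma are precisely upper bounds for $I_1$ and $I_2$, and the concluding bound on $M_1^{2,2}$ follows by adding them.

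For each integral I would split the domain at $\gamma=\pi/4$, exactly as in Lemma~\ref{lem:m1_11}. On $(\theta/2,\pi/4)$ we have $\cos^2\phi\ge\cos^2(\pi/4)=\tfrac12$, so the kernel $(1+\cos^2(\phi)\tau^2)^{-2}$ is bounded above by $(1+\tfrac12\tau^2)^{-2}$; on $(\pi/4,\pi/2)$ I simply bound the kernel by $1$, the denominator being at least $1$. This reduces each piece to an elementary integral of $\sin^2(\phi-\theta)$ or $\sin^2(\phi)$, which I evaluate using $\sin^2 u=\tfrac12(1-\cos 2u)$ together with the antiderivatives $\int\sin^2(\phi-\theta)\,d\phi=\tfrac{\phi-\theta}{2}-\tfrac{\sin(2\phi-2\theta)}{4}$ and $\int\sin^2\phi\,d\phi=\tfrac\phi2-\tfrac{\sin 2\phi}{4}$.

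Carrying out the four boundary evaluations produces, after discarding the harmless nonpositive term $-\tfrac{\theta}{4}(1+\tfrac12\tau^2)^{-2}$, exactly the two stated bounds for $I_1$ and $I_2$. I then add them: the two $\sin\theta$ contributions appear with opposite signs and cancel, the identity $\cos 2\theta+1=2\cos^2\theta\le 2$ collapses the $\cos$-terms, and $-\tfrac14\sin 2\theta\le 0$ may be dropped for $\theta\in(0,\pi/2)$. This yields $M_1^{2,2}\le \tfrac14\bigl(1+(1+\tfrac12\tau^2)^{-2}\bigr)+\tfrac1{2\pi}\bigl(1-(1+\tfrac12\tau^2)^{-2}\bigr)$; since $(1+\tfrac12\tau^2)^{-2}\le(1+\tfrac12\tau^2)^{-1}$, this implies the inequality as stated in the lemma.

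The only delicate point is the routine bookkeeping of getting every evaluated boundary term and its sign exactly right, so that the $\sin\theta$ terms genuinely cancel in $I_1+I_2$; the one conceptual check is the monotonicity direction of the kernel, namely that $(1+\cos^2(\phi)\tau^2)^{-2}$ is increasing as $\phi\to\pi/2$, which is what justifies bounding it by its value at $\phi=\pi/4$ on the left subinterval and by $1$ on the right subinterval.
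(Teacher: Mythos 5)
Your proposal is correct and follows essentially the same route as the paper's proof: split the integration range at $\gamma=\pi/4$, bound the kernel $(1+\cos^{2}(\phi)\tau^{2})^{-2}$ by $(1+0.5\tau^{2})^{-2}$ on $(\theta/2,\pi/4)$ and by $1$ on $(\pi/4,\pi/2)$, evaluate the elementary $\sin^{2}$ integrals, and add the two bounds so that the $\sin\theta$ terms cancel. If anything, your bookkeeping is slightly more careful than the paper's (you explicitly note dropping the nonpositive $-\tfrac{\theta}{4}(1+0.5\tau^{2})^{-2}$ term and the relaxation $(1+0.5\tau^{2})^{-2}\le(1+0.5\tau^{2})^{-1}$ needed to match the stated form), so nothing further is required.
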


\begin{proof}
The method is similar as before where we divide the region $(\frac{\theta}{2},\frac{\pi}{2})$
into two parts, $(\frac{\theta}{2},\gamma)\cup(\gamma,\frac{\pi}{2})$ for some $\gamma>\frac{\theta}{2}$. 

\begin{align*}
 & \int_{\phi=\frac{\theta}{2}}^{\frac{\pi}{2}}\sin^{2}(\phi-\theta)\frac{1}{(1+\cos^{2}(\phi)\tau^{2})^{2}}d\phi\\
\leq & \int_{\phi=\frac{\theta}{2}}^{\gamma}\sin^{2}(\phi-\theta)\frac{1}{(1+\cos^{2}(\gamma)\tau^{2})^{2}}d\phi+\int_{\phi=\gamma}^{\frac{\pi}{2}}\sin^{2}(\phi-\theta)d\phi\\
= & \frac{1}{(1+\cos^{2}(\gamma)\tau^{2})^{2}}\left[\frac{1}{2}(\gamma-\frac{\theta}{2})-\frac{1}{4}(\sin(2\gamma-2\theta)+\sin(\theta))\right]+\frac{1}{2}\left(\frac{\pi}{2}-\gamma\right)-\frac{1}{4}\left[\sin(\pi-2\theta)-\sin(2\gamma-2\theta)\right]\\
= & \frac{\pi}{4}\frac{1}{2}\left(1+\frac{1}{(1+0.5\tau^{2})^{2}}\right)+\frac{1}{4}\left(1-\frac{1}{(1+0.5\tau^{2})^{2}}\right)\cos(2\theta)-\frac{1}{4}\sin(2\theta)-\frac{1}{4(1+0.5\tau^{2})^{2}}\sin(\theta),
\end{align*}

where we pick $\gamma=\frac{\pi}{4}$ in the last step.

Similarly, 

\begin{align*}
 & \int_{\phi=\frac{\theta}{2}}^{\frac{\pi}{2}}\sin^{2}(\phi)\frac{1}{(1+\cos^{2}(\phi)\tau^{2})^{2}}d\phi\\
\leq & \int_{\phi=\frac{\theta}{2}}^{\gamma}\sin^{2}(\theta)\frac{1}{(1+\cos^{2}(\gamma)\tau^{2})^{2}}d\phi+\int_{\phi=\gamma}^{\frac{\pi}{2}}\sin^{2}(\phi)d\phi\\
= & \frac{1}{(1+\cos^{2}(\gamma)\tau^{2})^{2}}\left[\frac{1}{2}(\gamma-\frac{\theta}{2})-\frac{1}{4}(\sin(2\gamma)-\sin(\theta))\right]+\frac{1}{2}(\frac{\pi}{2}-\gamma)-\frac{1}{4}\left[\sin(\pi)-\sin(2\gamma)\right]\\
= & \frac{\pi}{4}\frac{1}{2}\left(1+\frac{1}{(1+0.5\tau^{2})^{2}}\right)+\frac{1}{4}\left(1-\frac{1}{(1+0.5\tau^{2})^{2}}\right)+\frac{1}{4}\frac{1}{(1+0.5\tau^{2})^{2}}\sin(\theta),
\end{align*}

where we again pick $\gamma=\frac{\pi}{4}$ in the last step. Therefore,

\begin{align*}
M_1^{2,2} & =\int_{\phi=\frac{\theta}{2}}^{\frac{\pi}{2}}\frac{1}{2\pi}[\sin^{2}(\phi-\theta)+\sin^{2}(\phi)]\frac{2}{(1+\cos^{2}(\phi)\min(\|\beta\|,\|\beta_{*}\|)^{2})^{2}}d\phi.\\
 & \frac{1}{4}\left(1+\frac{1}{(1+0.5\tau^{2})}\right)+\frac{1}{2\pi}\left(1-\frac{1}{(1+0.5\tau^{2})^{2}}\right).
\end{align*}
\end{proof}

\begin{lem}[$M_1 (1,2)^{\text{th}}$ entry]
\label{lem:m1_12}
Let $\tau=\min(\|\bm{\beta}\|,\|\bm{\beta}^*\|)$. Suppose $\theta\leq\frac{\pi}{8}$, the following holds: 
\begin{align*}
M_{1}^{1,2} \leq & \sin^{2}(\theta)\left[\frac{1}{2\pi(1+\cos^{2}(\theta)\tau^{2})}+\frac{1}{2\pi(1+\tau^{2}\sin^{2}(\theta))}\right],\\
-M_1^{1,2}\leq & \frac{1}{\pi}\sin(\theta)+\frac{1}{2\pi}\frac{\sin^2(\theta)}{1+\tau^2\sin^2(\theta)}.
\end{align*}
\end{lem}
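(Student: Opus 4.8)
The plan is to start from the two integral representations of $M_1^{1,2}$ and $-M_1^{1,2}$ already obtained in the proof of Lemma~\ref{lemma:m1}, namely \eqref{eq:m1_12_step1} and \eqref{eq:m1_12_step2}, in which the radial variable $r$ has been integrated out and only angular integrals of the form $\int \frac{\sin\phi\cos\phi}{(1+\cos^2(\cdot)\tau^2)^2}\,d\phi$ remain. The crucial structural feature is that $\sin\phi\cos\phi$ is nonnegative on $(0,\frac\pi2)$ and nonpositive on $(-\frac\pi2,0)$, so the contributions from the two halves of the angular domain partially cancel; it is exactly this cancellation that upgrades the naive $O(1)$ size of each individual integral to the advertised $O(\sin^2\theta)$ bound on $M_1^{1,2}$.

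The main tool is an explicit antiderivative. Substituting $u=\cos^2\phi$ shows that
\begin{equation*}
\frac{d}{d\phi}\left[\frac{1}{2\tau^2(1+\cos^2\phi\,\tau^2)}\right]=\frac{\sin\phi\cos\phi}{(1+\cos^2\phi\,\tau^2)^2},
\end{equation*}
so every integral whose denominator carries the unshifted argument $\cos^2\phi$ can be evaluated in closed form. For the upper bound on $M_1^{1,2}$ the two dominant pieces---over $(-\frac\pi2+\theta,0)$ and over $(\frac\theta2,\frac\pi2)$---are of this clean type, and evaluating them at the endpoints (using $\cos(-\frac\pi2+\theta)=\sin\theta$) gives closed forms whose sum vanishes identically at $\theta=0$. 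A direct algebraic comparison then exhibits the residual as a quantity of order $\sin^2\theta$. The leftover integral over the short interval $(0,\frac\theta2)$, which carries the shifted denominator $\cos^2(\phi-\theta)$, is bounded crudely: on this interval $\cos^2(\phi-\theta)\ge\cos^2\theta$ and $\int_0^{\theta/2}\sin\phi\cos\phi\,d\phi=\tfrac12\sin^2(\theta/2)$, which already contributes at the $O(\sin^2\theta)$ scale with denominator $1+\cos^2\theta\,\tau^2$, matching the first term of the claimed bound.

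For $-M_1^{1,2}$ the situation is slightly different because in \eqref{eq:m1_12_step2} the denominator over the long range carries the shifted argument $\cos^2(\phi-\theta)$, for which the antiderivative above does not directly apply. Here I would substitute $\psi=\phi-\theta$ and expand $\sin\phi\cos\phi=\tfrac12\sin(2\psi+2\theta)=\tfrac12\sin2\psi\cos2\theta+\tfrac12\cos2\psi\sin2\theta$. The first summand pairs with $\cos^2\psi$ in the denominator and integrates cleanly via the same antiderivative; the second summand is bounded by its absolute value, and its $\sin2\theta$ prefactor is what produces the leading $\frac1\pi\sin\theta$ term in the target bound, while the clean part collapses into the remaining $\frac{1}{2\pi}\frac{\sin^2\theta}{1+\tau^2\sin^2\theta}$ contribution.

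I expect the principal obstacle to be bookkeeping rather than conceptual: reconciling the half-angle limits $\theta/2$ appearing in the integration ranges with the full-angle quantities $\cos^2\theta$ and $\sin^2\theta$ in the stated bound, and verifying that after the cancellations the trigonometric residuals collapse exactly into the two forms $\frac{\sin^2\theta}{1+\cos^2\theta\,\tau^2}$ and $\frac{\sin^2\theta}{1+\tau^2\sin^2\theta}$. Monotonicity facts such as $\cos^2(\theta/2)\ge\cos^2\theta$ on $[0,\frac\pi2]$, together with the elementary estimate $\theta\le 1.1\sin\theta$ valid for $\theta\le\frac\pi8$ (already used elsewhere in this appendix), should suffice to close the remaining gaps and absorb the small-interval correction into the constants.
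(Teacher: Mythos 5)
Your treatment of the first inequality is sound and is essentially the paper's own argument: starting from \eqref{eq:m1_12_step1}, the antiderivative $\frac{d}{d\phi}\bigl[\tfrac{1}{2\tau^2(1+\cos^2\phi\,\tau^2)}\bigr]=\tfrac{\sin\phi\cos\phi}{(1+\cos^2\phi\,\tau^2)^2}$ evaluates the pieces over $(-\tfrac{\pi}{2}+\theta,0)$ and $(\tfrac{\theta}{2},\tfrac{\pi}{2})$ in closed form, their sum collapses (after bounding $\cos^2(\theta/2)/(1+\cos^2(\theta/2)\tau^2)\le 1/(1+\tau^2)$) to $\tfrac{\sin^2\theta}{2\pi(1+\tau^2\sin^2\theta)}$, and the short interval $(0,\tfrac{\theta}{2})$ with the shifted denominator is bounded crudely by $\tfrac{\sin^2\theta}{2\pi(1+\cos^2\theta\,\tau^2)}$, exactly as you describe.

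The gap is in the second inequality. After the substitution $\psi=\phi-\theta$ and the expansion $\sin\phi\cos\phi=\tfrac12\sin2\psi\cos2\theta+\tfrac12\cos2\psi\sin2\theta$, you propose to bound the second summand ``by its absolute value.'' That yields at best
\begin{equation*}
\frac{\sin 2\theta}{2\pi}\int_{-\pi/2}^{\pi/2}\frac{|\cos 2\psi|}{(1+\cos^2\psi\,\tau^2)^2}\,d\psi \;\le\; \frac{\sin 2\theta}{2\pi}\cdot 2\;=\;\frac{2}{\pi}\sin\theta\cos\theta,
\end{equation*}
and the weight $(1+\cos^2\psi\,\tau^2)^{-2}$ cannot improve this uniformly in $\tau$ (consider $\tau$ small). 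Since $2\cos\theta>1$ throughout $\theta\le\tfrac{\pi}{8}$, the resulting bound $\tfrac{2}{\pi}\sin\theta\cos\theta+\tfrac{1}{2\pi}\tfrac{\sin^2\theta}{1+\tau^2\sin^2\theta}$ does \emph{not} imply the stated $\tfrac{1}{\pi}\sin\theta+\tfrac{1}{2\pi}\tfrac{\sin^2\theta}{1+\tau^2\sin^2\theta}$, and the negative term $T_2$ you drop is only $O(\theta^2)$, far too small to absorb a deficit of order $\sin\theta$. The missing idea---which is what the paper does in the proof of this lemma---is to use the sign pattern of the integrand rather than its absolute value: $-\cos(2\psi)/(1+\cos^2\psi\,\tau^2)^2$ is even in $\psi$, negative for $|\psi|<\tfrac{\pi}{4}$ and positive for $\tfrac{\pi}{4}<|\psi|<\tfrac{\pi}{2}$, so discarding the negative part and bounding the weight by $1$ on the positive part gives $2\int_{\pi/4}^{\pi/2}(-\cos2\psi)\,d\psi=1$, which recovers the constant $\tfrac{1}{\pi}$ (times $\sin\theta\cos\theta\le\sin\theta$) exactly. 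This factor of $2$ is not cosmetic: the budget $C_1<0.4$ in Lemma \ref{lemma:m1}, hence $C_3\le 2.6$ in Lemma \ref{lem:norm_expected_matrix} and the $0.95$ contraction rate in Proposition \ref{prop:term_c}, are tuned to $\tfrac{1}{\pi}$; replacing it by $\tfrac{2}{\pi}$ breaks the final numerical verification there.
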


\begin{proof}
In \eqref{eq:m1_12_step1}, we know that 
\begin{align*}
M_1^{1,2} & \leq \int_{\phi\in(-\frac{\pi}{2}+\theta,0)\cup(\frac{\theta}{2},\frac{\pi}{2})}\frac{1}{2\pi}\frac{2}{(1+\cos^{2}(\phi)\tau^{2})^{2}}\sin(\phi)\cos(\phi)d\phi\\
 & +\int_{\phi=0}^{\frac{\theta}{2}}\frac{1}{2\pi}\frac{2}{(1+\cos^{2}(\phi-\theta)\tau^{2})^{2}}\sin(\phi)\cos(\phi)d\phi. \\
 \end{align*}
 It remains to bound the right hand side. 
 \begin{align*}
 & \int_{\phi\in(-\frac{\pi}{2}+\theta,0)\cup(\frac{\theta}{2},\frac{\pi}{2})}\frac{1}{2\pi}\frac{2}{(1+\cos^{2}(\phi)\tau^{2})^{2}}\sin(\phi)\cos(\phi)d\phi
 +\int_{\phi=0}^{\frac{\theta}{2}}\frac{1}{2\pi}\frac{2}{(1+\cos^{2}(\phi-\theta)\tau^{2})^{2}}\sin(\phi)\cos(\phi)d\phi \\
 & \leq\frac{1}{\pi(1+\cos^{2}(\theta)\tau^{2})}\int_{\phi=0}^{\frac{\theta}{2}}\sin(\phi)\cos(\phi)d\phi+\frac{1}{2\pi}\left[\frac{-\cos^{2}(\theta)}{(1+\tau^{2})(1+\tau^{2}\sin^{2}(\theta))}+\frac{\cos^{2}(\theta)}{1+\cos^{2}(\theta)\tau^{2}}\right]\\
 & \leq\frac{1}{\pi(1+\cos^{2}(\theta)\tau^{2})}\int_{\phi=0}^{\frac{\theta}{2}}\sin(\phi)\cos(\phi)d\phi+\frac{1}{2\pi}\left[\frac{-\cos^{2}(\theta)}{(1+\tau^{2})(1+\tau^{2}\sin^{2}(\theta))}+\frac{1}{1+\tau^{2}}\right]\\
 & \leq\frac{\sin^{2}(\theta)}{2\pi(1+\cos^{2}(\theta)\tau^{2})}+\frac{\sin^{2}(\theta)}{2\pi(1+\tau^{2}\sin^{2}(\theta))}\\
 & \leq\sin^{2}(\theta)\left[\frac{1}{2\pi(1+\cos^{2}(\theta)\tau^{2}}+\frac{1}{2\pi(1+\tau^{2}\sin^{2}(\theta))}\right].
\end{align*}
Next, let us look at the bound for $-M_1^{1,2}$ in \eqref{eq:m1_12_step2}. There are two terms, one is 
\[
T_1:=\int_{\phi\in(-\frac{\pi}{2}+\theta,0)\cup(\frac{\theta}{2},\frac{\pi}{2})}-\frac{1}{2\pi}\frac{2}{(1+\cos^{2}(\phi-\theta)\tau^{2})^{2}}\sin(\phi)\cos(\phi)d\phi
\]
and the other is:
\[
 T_2:= -\int_{\phi=0}^{\frac{\theta}{2}}\frac{1}{2\pi}\frac{2}{(1+\cos^{2}(\phi)\tau^{2}}\sin(\phi)\cos(\phi)d\phi
\]
When $\phi\in (0,\pi/8)$, $T_2<0$. For $T_1$, let us use change of variable and write the integral as:
\begin{align*}
& \int_{\phi\in(-\frac{\pi}{2},-\theta)\cup(-\frac{\theta}{2},\frac{\pi}{2}-\theta)}-\frac{1}{2\pi}\frac{2}{(1+\cos^{2}(\phi)\tau^{2}}\sin(\phi+\theta)\cos(\phi+\theta)d\phi\\
= & \underbrace{\int_{\phi\in(-\frac{\pi}{2},-\theta)\cup(-\frac{\theta}{2},\frac{\pi}{2}-\theta)}-\frac{1}{2\pi}\frac{2}{(1+\cos^{2}(\phi)\tau^{2})^{2}}\sin(\phi)\cos(\phi)\cos(2\theta) d\phi}_{\text{Part} 1}\\
& + \underbrace{\int_{\phi\in(-\frac{\pi}{2},-\theta)\cup(-\frac{\theta}{2},\frac{\pi}{2}-\theta)}-\frac{1}{2\pi}\frac{2}{(1+\cos^{2}(\phi)\tau^{2})^{2}}\sin(\theta)\cos(\theta)\cos(2\phi) d\phi}_{\text{Part} 2}.
\end{align*}
Note that the first part can be computed exactly as before,
\begin{align}
  &\int_{\phi\in(-\frac{\pi}{2},-\theta)\cup(-\frac{\theta}{2},\frac{\pi}{2}-\theta)}-\frac{1}{2\pi}\frac{2}{(1+\cos^{2}(\phi)\tau^{2})^{2}}\sin(\phi)\cos(\phi)\cos(2\theta) d\phi \nonumber \\
  \leq & \int_{\phi\in(-\frac{\pi}{2},-\theta)\cup(-\theta,\frac{\pi}{2}-\theta)}-\frac{1}{2\pi}\frac{2}{(1+\cos^{2}(\phi)\tau^{2})^{2}}\sin(\phi)\cos(\phi)\cos(2\theta) d\phi \nonumber \\
  = & \frac{1}{2\pi}\left[\frac{\cos^2(\theta)}{1+\cos^2(\theta)\tau^2}-\frac{\cos^2(\theta)-\sin^2(\theta)}{(1+\tau^2\sin^2(\theta))(1+\tau^2\cos^2(\theta))}\right]\cos(2\theta)\nonumber \\
  \leq & \frac{1}{2\pi}\frac{\sin^2(\theta)}{1+\tau^2\sin^2(\theta)} \label{ineq:part1}.
\end{align}
The second part contains the factor $\sin(\theta)\cos(\theta)$, and it remains to bound:
\[
\int_{\phi\in(-\frac{\pi}{2},-\theta)\cup(-\frac{\theta}{2},\frac{\pi}{2}-\theta)}-\frac{1}{2\pi}\frac{2}{(1+\cos^{2}(\phi)\tau^{2})^{2}}\cos(2\phi) d\phi.
\]
Note that the intergrand is an even function in $\phi$. Moreover, when $|\phi|<\frac{\pi}{4}$, the intergrand is negative, and when $|\phi|\in (\frac{\pi}{4},\frac{\pi}{2})$, the integrand is positive. Thus the integral can be further upper bounded by:
\begin{align*}
& 2\int_{\phi\in(\frac{\pi}{4},\frac{\pi}{2})}-\frac{1}{2\pi}\frac{2}{(1+\cos^{2}(\phi)\tau^{2})^{2}}\cos(2\phi) d\phi\\
\leq & \frac{2}{\pi}\int_{\phi=\frac{\pi}{4}}^{\frac{\pi}{2}} -\cos(2\phi)d\phi\\
= & \frac{1}{\pi}.
\end{align*}
Combining the bound on two parts, we obtain:
\begin{align*}
    -M_1^{1,2}= & T_1+T_2 \\
    \leq & \frac{1}{2\pi}\frac{\sin^2(\theta)}{1+\tau^2\sin^2(\theta)}+\frac{1}{\pi}\sin(\theta).
\end{align*}
\end{proof}

\begin{lem}[$M_2$]
\label{lemma:m2}
The entries of the symmetric $2\times 2$ matrix $M_2$ are the following:
\begin{align*}
M_2^{1,1}= & \frac{\theta}{\pi}-\frac{\sin(2\theta)}{4\pi} \\   
M_2^{2,2}= & \frac{\theta}{\pi}+\frac{\sin(2\theta)}{4\pi} \\
M_2^{1,2}=&  -\frac{\sin^2(\theta)}{\pi}.
\end{align*}
\end{lem}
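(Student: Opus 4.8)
The plan is to reduce each entry of $M_2$ to a two-dimensional Gaussian integral and to evaluate it in polar coordinates. Since $M_2$ is the top-left $2\times 2$ block of $\mathbb{E}_{X\sim\mathcal{N}(0,I)}\big[\mathbf{1}_{\langle X,\bm{\beta}\rangle\langle X,\bm{\beta}^*\rangle<0}\,XX^\top\big]$, and every entry involves only the first two coordinates $X_1,X_2$, the integrand depends on $X$ only through $(X_1,X_2)$ once the remaining independent mean-zero coordinates are integrated out. I would therefore work directly with the standard $2$-d Gaussian on $(X_1,X_2)$ and pass to polar coordinates $(X_1,X_2)=(r\cos\phi,r\sin\phi)$.

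First I would pin down the integration region. With the basis convention $\bm{v}_1=\hat{\bm{\beta}^*}$ and $\bm{\beta}$ lying at angle $\theta$ in $\mathrm{span}(\bm{v}_1,\bm{v}_2)$, one has $\langle X,\bm{\beta}^*\rangle=\|\bm{\beta}^*\|\,r\cos\phi$ and $\langle X,\bm{\beta}\rangle=\|\bm{\beta}\|\,r\cos(\phi-\theta)$, exactly as in the proof of Lemma~\ref{lem:norm_expected_matrix}. The sign condition $\langle X,\bm{\beta}\rangle\langle X,\bm{\beta}^*\rangle<0$ then reads $\cos\phi\,\cos(\phi-\theta)<0$, which for $\theta\in(0,\pi/8)$ holds precisely on the two thin wedges $S_2=\big(-\tfrac{\pi}{2},-\tfrac{\pi}{2}+\theta\big)\cup\big(\tfrac{\pi}{2},\tfrac{\pi}{2}+\theta\big)$.

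Next I would separate the radial and angular parts. For every entry the radial factor is identical, namely $\tfrac{1}{2\pi}\int_0^\infty r^3 e^{-r^2/2}\,dr=\tfrac{1}{\pi}$, so that $M_2^{1,1}=\tfrac{1}{\pi}\int_{S_2}\cos^2\phi\,d\phi$, $M_2^{2,2}=\tfrac{1}{\pi}\int_{S_2}\sin^2\phi\,d\phi$, and $M_2^{1,2}=\tfrac{1}{\pi}\int_{S_2}\sin\phi\cos\phi\,d\phi$. A convenient simplification is that each of $\cos^2\phi$, $\sin^2\phi$, $\sin\phi\cos\phi$ is $\pi$-periodic and the second wedge is the first shifted by $\pi$; hence the two wedges contribute equally and it suffices to integrate over $\big(-\tfrac{\pi}{2},-\tfrac{\pi}{2}+\theta\big)$ and double. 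Applying the double-angle identities $\cos^2\phi=\tfrac{1+\cos2\phi}{2}$, $\sin^2\phi=\tfrac{1-\cos2\phi}{2}$, and $\sin\phi\cos\phi=\tfrac{\sin2\phi}{2}$ reduces everything to elementary antiderivatives and gives $\int_{S_2}\cos^2\phi\,d\phi=\theta-\tfrac{\sin2\theta}{2}$, $\int_{S_2}\sin^2\phi\,d\phi=\theta+\tfrac{\sin2\theta}{2}$, and $\int_{S_2}\sin\phi\cos\phi\,d\phi=-\sin^2\theta$, which yield the three claimed entries.

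The only real care needed is in the bookkeeping: correctly determining the wedge $S_2$ (an error here would propagate into every entry), and tracking the signs when evaluating $\cos2\phi$ and $\sin2\phi$ at the endpoints $\phi=-\tfrac{\pi}{2}+\theta$ and $\phi=\tfrac{\pi}{2}+\theta$, where the arguments $2\phi$ land near $\pm\pi$ and the naive identification of signs is easy to botch. Once the region and these boundary evaluations are fixed, the remaining computation is purely mechanical, so I do not expect any substantive obstacle beyond this sign bookkeeping.
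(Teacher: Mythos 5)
Your proposal is correct and is essentially the same computation as the paper's: polar coordinates for $(X_1,X_2)$, the common radial factor $\frac{1}{2\pi}\int_0^\infty r^3e^{-r^2/2}\,dr$, and the observation that the two wedges of $S_2$ contribute equally, which the paper exploits implicitly by integrating over the single wedge $\left(-\frac{\pi}{2},-\frac{\pi}{2}+\theta\right)$ with the doubled prefactor $\frac{1}{\pi}$. One remark: your angular integrals give $M_2^{1,1}=\frac{\theta}{\pi}-\frac{\sin(2\theta)}{2\pi}$, $M_2^{2,2}=\frac{\theta}{\pi}+\frac{\sin(2\theta)}{2\pi}$, and $M_2^{1,2}=-\frac{\sin^2(\theta)}{\pi}$, which is exactly what the paper's own proof derives and what is used in Lemma \ref{lem:norm_expected_matrix}, so the $4\pi$ denominators in the printed statement are a typo rather than an error on your part (and, conversely, the paper's proof drops the minus sign on $M_2^{1,2}$ in its final line, where your sign is the correct one; only $|M_2^{1,2}|$ matters downstream).
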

\begin{proof}
It is a simple calculation. 
\begin{align*}
M_2^{1,1}= & \frac{1}{\pi}\int_{r>0}\int_{\phi\in (-\frac{\pi}{2},-\frac{\pi}{2}+\theta)} r^2 \cos^2(\phi) \exp\left(-\frac{r^2}{2}\right)r dr d\phi \\
= & \frac{2}{\pi}\int_{\phi\in (-\frac{\pi}{2},-\frac{\pi}{2}+\theta)} \cos^2(\phi) d\phi = \frac{\theta}{\pi} - \frac{\sin(2\theta)}{2\pi}.
\end{align*}
Similarly,
\begin{align*}
M_2^{2,2} = & \frac{1}{\pi}\int_{r>0}\int_{\phi\in (-\frac{\pi}{2},-\frac{\pi}{2}+\theta)} r^2 \sin^2(\phi) \exp\left(-\frac{r^2}{2}\right)r dr d\phi  \\
= & \frac{2}{\pi}\int_{\phi\in (-\frac{\pi}{2},-\frac{\pi}{2}+\theta)} \sin^2(\phi) d\phi = \frac{\theta}{\pi} + \frac{\sin(2\theta)}{2\pi}.
\end{align*}
For the cross term,
\begin{align*}
M_2^{1,2} = & \frac{1}{\pi}\int_{r>0}\int_{\phi\in (-\frac{\pi}{2},-\frac{\pi}{2}+\theta)} r^2 \sin(\theta)\cos(\theta) \exp\left(-\frac{r^2}{2}\right)r dr d\phi\\
= & \frac{1}{\pi}\int_{\phi\in (-\frac{\pi}{2},-\frac{\pi}{2}+\theta)} \sin(2\phi) d\phi = \frac{\sin^2(\theta)}{\pi}.
\end{align*}

\end{proof}

\begin{lem}[Bound on Spectral norm of a $2\times 2$ matrix]
\label{lem:spectral_22matrix}
Let $M$ be a symmetric $2\times2$ matrix
\[M=\left[
\begin{matrix}
a & c \\
c & b
\end{matrix} \right].
\]
Suppose $a,b>0$, the spectral norm of $M$ is bounded by $\max(a,b)+|c|$.
\end{lem}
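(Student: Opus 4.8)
The plan is to bound the spectral norm via the Rayleigh quotient characterization, which for a symmetric matrix gives $\|M\| = \max_{\|v\|=1} |v^\top M v|$. Writing a unit vector as $v = (v_1, v_2)$ with $v_1^2 + v_2^2 = 1$, I would expand the quadratic form as $v^\top M v = a v_1^2 + b v_2^2 + 2c v_1 v_2$. Since $a, b > 0$, the first two terms are nonnegative, so the triangle inequality yields $|v^\top M v| \le (a v_1^2 + b v_2^2) + 2|c|\,|v_1|\,|v_2|$. Bounding $a v_1^2 + b v_2^2 \le \max(a,b)(v_1^2 + v_2^2) = \max(a,b)$ and using the elementary inequality $2|v_1|\,|v_2| \le v_1^2 + v_2^2 = 1$ to control the cross term gives $|v^\top M v| \le \max(a,b) + |c|$ uniformly over the unit sphere, which is exactly the claim.

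An equivalent route is to write the two eigenvalues explicitly as $\lambda_\pm = \frac{a+b}{2} \pm \sqrt{(\frac{a-b}{2})^2 + c^2}$ and bound $|\lambda_\pm| \le \frac{a+b}{2} + \sqrt{(\frac{a-b}{2})^2 + c^2}$, the first term being positive by hypothesis. The subadditivity of the square root, $\sqrt{x^2 + y^2} \le |x| + |y|$, then bounds the radical by $\frac{|a-b|}{2} + |c|$, and the identity $\frac{a+b}{2} + \frac{|a-b|}{2} = \max(a,b)$ finishes the argument.

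There is no substantive obstacle here; the statement is elementary. The only points requiring mild care are (i) using the hypothesis $a, b > 0$ to ensure the diagonal contribution $a v_1^2 + b v_2^2$ is nonnegative, so that it may be pulled out of the absolute value without a sign loss, and (ii) invoking the correct elementary inequality for the off-diagonal term ($2|v_1|\,|v_2| \le v_1^2 + v_2^2$ in the variational proof, or $\sqrt{x^2+y^2}\le |x|+|y|$ in the eigenvalue proof). I would present the variational argument, as it is the shortest and makes the role of the positivity assumption most transparent.
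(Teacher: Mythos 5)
Your proposal is correct, and your preferred (variational) argument takes a genuinely different route from the paper. The paper's proof computes the characteristic polynomial $p(x)=x^2-(a+b)x+ab-c^2$, writes the larger root as $\frac{a+b+\sqrt{(a-b)^2+4c^2}}{2}$, and bounds the radical by $|a-b|+2|c|$ --- i.e., exactly your second route. Your Rayleigh-quotient argument instead bounds $|v^\top M v| \le (av_1^2+bv_2^2) + 2|c|\,|v_1|\,|v_2| \le \max(a,b)+|c|$ uniformly over unit vectors, never touching the eigenvalues. What the variational route buys is twofold: it generalizes immediately to symmetric matrices of any dimension (giving $\|M\| \le \max_i a_{ii} + $ off-diagonal contributions, in the spirit of Gershgorin), and it handles both eigenvalues at once. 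The paper's proof, as written, only bounds the \emph{larger} root $x_1$; this suffices because $a+b>0$ forces $|x_2| < x_1$ (the smaller root, even if negative, satisfies $|x_2| = \frac{\sqrt{(a-b)^2+4c^2}-(a+b)}{2} < x_1$), but the paper leaves this step implicit, whereas your argument makes no such omission. Your eigenvalue route, with the explicit bound $|\lambda_\pm| \le \frac{a+b}{2}+\sqrt{(\frac{a-b}{2})^2+c^2}$, also closes this small gap cleanly.
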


\begin{proof}
The characteristic polynomial for the matrix is:
\[
p(x)=x^2-(a+b)x+ab-c^2.
\]
It has two roots: $x_1 = \frac{a+b+\sqrt{(a-b)^2+4c^2}}{2}$ and $x_2 = \frac{a+b-\sqrt{(a-b)^2+4c^2}}{2}$. When $a,b>0$, the larger root is upper bounded by $\frac{a+b+|a-b|+2|c|}{2}$, which is dominated by $\max(a,b)+|c|$.
\end{proof}

\section{Proofs for Auxiliary Results}
\label{appendix:sec_aux}

\subsection{Upper Bound for Norm}
\begin{restatable}[Bounded population EM iterates] {lem}{lemmabprime}
\label{lemma:bprime_bounded}
	For any $\bm{\beta} \in \mathbb{R}^d$, we have
	\begin{equation}
    	\|\bm{\beta}'\| \le 3\sqrt{\sigma^2 + \|\bm{\beta}^*\|^2}.
    \end{equation}
\end{restatable}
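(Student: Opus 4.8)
The plan is to bound $\|\bm{\beta}'\|$ directly from the defining expectation~\eqref{eq:em_update_full}, rather than routing through the explicit $S,R$ decomposition of Lemma~\ref{lem:em_update_summary}. The two facts that make this work are that the $\tanh$ factor is bounded by $1$ in absolute value, which decouples the estimate from the current iterate $\bm{\beta}$ entirely, and that the remaining integrand is a product of two quantities whose second moments are trivial to compute.

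Concretely, I would fix an arbitrary unit vector $\bm{u}\in\mathbb{R}^d$ and project the update onto $\bm{u}$:
\[
\langle \bm{\beta}', \bm{u}\rangle = \mathbb{E}\left[ Y\,\langle X, \bm{u}\rangle \tanh\left(\frac{\langle X, \bm{\beta}\rangle}{\sigma^2}\, Y\right)\right],
\]
where the expectation is taken over $X\sim\mathcal{N}(0,I)$ and $Y\mid X\sim\mathcal{N}(\langle X,\bm{\beta}^*\rangle,\sigma^2)$. Using $|\tanh(\cdot)|\le 1$ and then Cauchy--Schwarz, I would bound
\[
|\langle \bm{\beta}', \bm{u}\rangle| \;\le\; \mathbb{E}\bigl[|Y\,\langle X, \bm{u}\rangle|\bigr] \;\le\; \sqrt{\mathbb{E}[Y^2]}\,\sqrt{\mathbb{E}[\langle X, \bm{u}\rangle^2]}.
\]
The two moments are immediate: since $\bm{u}$ is a unit vector and $X\sim\mathcal{N}(0,I)$, we have $\mathbb{E}[\langle X,\bm{u}\rangle^2]=\|\bm{u}\|^2=1$; and by the law of total variance $\mathbb{E}[Y^2]=\sigma^2+\mathbb{E}[\langle X,\bm{\beta}^*\rangle^2]=\sigma^2+\|\bm{\beta}^*\|^2$. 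Hence $|\langle\bm{\beta}',\bm{u}\rangle|\le\sqrt{\sigma^2+\|\bm{\beta}^*\|^2}$ for every unit $\bm{u}$, and choosing $\bm{u}=\bm{\beta}'/\|\bm{\beta}'\|$ (the case $\bm{\beta}'=\bm{0}$ being trivial) gives $\|\bm{\beta}'\|\le\sqrt{\sigma^2+\|\bm{\beta}^*\|^2}$, comfortably inside the claimed bound $3\sqrt{\sigma^2+\|\bm{\beta}^*\|^2}$.

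There is no genuine obstacle here; the argument collapses to a single Cauchy--Schwarz estimate once the $|\tanh|\le 1$ bound is applied. If one instead prefers to argue through Lemma~\ref{lem:em_update_summary}, the same conclusion follows from $S\in[0,1]$ (Lemma~\ref{lemma:S_bounds}), the elementary bound $b_2^*\le\|\bm{\beta}^*\|$, and the explicit limit of $R$ from Lemma~\ref{lemma:b_1_prime}; but that route is strictly more work and sacrifices the clean constant. The wide slack in the stated constant $3$ suggests the bound is kept deliberately loose, presumably for uniformity with its finite-sample analogue, and either derivation comfortably yields it.
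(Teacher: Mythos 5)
Your proof is correct, and it takes a genuinely different route from the paper's. The paper works in the $(b_1', b_2')$ coordinates of Lemma~\ref{lem:em_update_summary}: it bounds $b_1'$ by combining the monotonicity of $b_1'$ in $b_1$ with the explicit limit $\lim_{b_1\to\infty} b_1' = \frac{2}{\pi}\bigl(b_1^*\tan^{-1}(b_1^*/\sigma_2)+\sigma_2\bigr)$ from Lemma~\ref{lemma:b_1_prime} (giving $b_1'\le b_1^* + \frac{2}{\pi}\sigma_2 \le 2\sqrt{\sigma^2+\|\bm{\beta}^*\|^2}$), and it bounds $b_2'$ by $b_2^*\le\|\bm{\beta}^*\|$ using $S\le 1$ from Lemma~\ref{lemma:S_bounds}, then combines the two coordinates to get the constant $3$ --- this is exactly the secondary route you sketch in your last paragraph. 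Your primary argument instead attacks the defining expectation~\eqref{eq:em_update_full} directly: $|\tanh|\le 1$ kills all dependence on $\bm{\beta}$, Cauchy--Schwarz splits the integrand, and the two second moments $\mathbb{E}[\langle X,\bm{u}\rangle^2]=1$ and $\mathbb{E}[Y^2]=\sigma^2+\|\bm{\beta}^*\|^2$ are immediate, yielding the sharper bound $\|\bm{\beta}'\|\le\sqrt{\sigma^2+\|\bm{\beta}^*\|^2}$ uniformly over unit directions. What your approach buys is self-containedness (no reliance on the $S,R$ machinery, Stein's lemma computations, or the limit calculation) and a constant that is a factor of $3$ better; what the paper's approach buys is essentially nothing extra for this particular lemma --- it simply recycles structural lemmas the authors need elsewhere (e.g., Lemma~\ref{lemma:b_1_prime} reappears in the proof of Theorem~\ref{theorem:conv_distance}), so the coordinate-wise proof comes nearly for free in context. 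One small point worth noting: the model's $Y$ given $X$ is a symmetric two-component mixture rather than the single Gaussian written in~\eqref{eq:em_update_full}, but your moment computation $\mathbb{E}[Y^2\mid X]=\langle X,\bm{\beta}^*\rangle^2+\sigma^2$ is identical under either description, so nothing in your argument is affected.
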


\begin{proof}
    From Lemma \ref{lemma:b_1_prime}, we know that $b_1' \le b_1^* + \frac{2}{\pi} \sqrt{\sigma^2 + {b_2^*}^2}$. On the other side, from lemma \ref{lemma:S_bounds} we have $b_2' \le b_2^*$. 
    Therefore, 
    \begin{align*}
        b_1' &\le b_1^* + \frac{2}{\pi}{\sqrt{\sigma^2 + {b_2^*}^2}} \\
        &\le \|\bm{\beta}^*\| + \frac{2}{\pi}{\sqrt{\sigma^2 + \|\bm{\beta}^*\|^2}} \\ 
        &\le 2{\sqrt{\sigma^2 + \|\bm{\beta}^*\|^2}}, \\
        b_2' &\le \|\bm{\beta}^*\|.
    \end{align*}
    Combining the bound for each, we get $\|\bm{\beta}'\| \le 3\sqrt{\sigma^2 + \|\bm{\beta}^*\|^2}$.
\end{proof}

\subsection{Lower Bound for Norm}
\label{appendix:subsec_aux_lb_norm}

\begin{restatable}{lem}{finitebplowerbound}
\label{lemma:finite_b1p_lb}
If $\|\bm{\beta}\| \ge \|\bm{\beta}^*\| / 10$, then after one finite-sample EM update with $n = O(\max(1, poly(\eta^{-2}))$ $(d/\epsilon^2))$ samples, $\|\tilde{\bm{\beta}}'\| \ge \|\bm{\beta}^*\| / 10$.
\end{restatable}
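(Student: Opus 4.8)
The plan is to prove the statement by coupling the finite-sample update $\tilde{\bm{\beta}}'$ to the population update $\bm{\beta}'$, after first establishing that the population update cannot shrink the norm below a constant fraction of $\|\bm{\beta}^*\|$ that sits comfortably above $\tfrac{1}{10}$. Concretely, I will show there is a constant $c>0$ with $\|\bm{\beta}'\| \ge \big(\tfrac{1}{10} + c\min(1,\eta^2)\big)\|\bm{\beta}^*\|$ whenever $\|\bm{\beta}\| \ge \|\bm{\beta}^*\|/10$, and then invoke the concentration bound $\|\tilde{\bm{\beta}}' - \bm{\beta}'\| \le \epsilon_f\|\bm{\beta}^*\|$ that holds with high probability for the stated sample size (as in \cite{balakrishnan_statistical_2017}; note $\sqrt{\sigma^2+\|\bm{\beta}^*\|^2}/\|\bm{\beta}^*\| = \sqrt{1+\eta^{-2}}$, which is absorbed into the $\max(1,\eta^{-2})$ factor of $n$). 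Since the paper's standing budget $\epsilon_f < O(\min(1,\eta^2))$ is in force, the triangle inequality $\|\tilde{\bm{\beta}}'\| \ge \|\bm{\beta}'\| - \|\tilde{\bm{\beta}}' - \bm{\beta}'\|$ then delivers $\|\tilde{\bm{\beta}}'\| \ge \|\bm{\beta}^*\|/10$.

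For the population bound I use $\|\bm{\beta}'\| \ge b_1' = f(b_1)$ from Lemma \ref{lem:em_update_summary}, where $f(b_1):=b_1^*S+R$ is, by Lemma \ref{lemma:b_1_prime}, increasing in $b_1=\|\bm{\beta}\|$, smooth, concave, with $f(0)=0$ and $f'(0)=1+\frac{\|\bm{\beta}^*\|^2(1+2\cos^2\theta)}{\sigma^2}>1$. By Lemma \ref{lem:dynamic_along}, $f$ has a unique positive fixed point $E(\bm{v}_1)$ toward which the update moves monotonically, so $b_1'\ge \min(b_1,E(\bm{v}_1))$. Hence the claim reduces to a uniform-over-directions lower bound on $E(\bm{v}_1)$ (for the base case $b_1\approx\|\bm{\beta}^*\|/10$) together with a quantitative rate of increase when $b_1<E(\bm{v}_1)$ (for the margin).

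The rate estimate I will split by angle. When $\cos\theta$ is bounded away from $0$, I use the exact identity $f\!\big(\tfrac{\sigma^2}{\sigma_2^2}b_1^*\big)=b_1^*$ from \eqref{eq:b_1_prime_fixed}: since $\tfrac{\sigma^2}{\sigma_2^2}b_1^*\le b_1^*$ and $f(x)/x$ is non-increasing (concavity plus $f(0)=0$), for $b_1\le \tfrac{\sigma^2}{\sigma_2^2}b_1^*$ this gives $f(b_1)\ge \tfrac{\sigma_2^2}{\sigma^2}b_1=(1+\eta^2\sin^2\theta)b_1$, while for $b_1>\tfrac{\sigma^2}{\sigma_2^2}b_1^*$ monotonicity gives $f(b_1)\ge b_1^*=\|\bm{\beta}^*\|\cos\theta$; either way the norm stays above $\|\bm{\beta}^*\|/10$ with a margin. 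The delicate regime is near-orthogonal directions ($\cos\theta$ small), where the signal term $b_1^*S$ degenerates ($S\to 0$ as $b_1^*\to 0$) and $b_1^*$ is itself tiny, so the bound must come entirely from the noise-driven term $R$. Here I will estimate $R$ directly (equivalently, lower-bound $f(b_1)$ at $b_1=\|\bm{\beta}^*\|/10$) through its Gaussian-integral form in \eqref{eq:s_r_expression}, using $\sigma_2\ge \|\bm{\beta}^*\|\sin\theta\approx\|\bm{\beta}^*\|$, the saturation limit $\lim_{b_1\to\infty}f=\tfrac{2}{\pi}(b_1^*\arctan(b_1^*/\sigma_2)+\sigma_2)$ from Lemma \ref{lemma:b_1_prime}, and the auxiliary Lemmas 1--2 of \cite{daskalakis2017ten} already invoked in the paper; in the linear (low-SNR) regime this yields $f(b_1)\ge(1+\eta^2)b_1$, and in the saturated (high-SNR) regime $f(b_1)\gtrsim \tfrac{2}{\pi}\sigma_2\gtrsim\|\bm{\beta}^*\|$.

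The main obstacle is precisely this uniform control of $E(\bm{v}_1)$ --- equivalently of $f(\|\bm{\beta}^*\|/10)$ --- across all directions and all SNR, since in the near-orthogonal directions there is no signal term to lean on and one must quantify the purely noise-driven growth $R$. The second point needing care is the bookkeeping of the margin: the population surplus over $\|\bm{\beta}^*\|/10$ is only $\Theta(\min(1,\eta^2))\|\bm{\beta}^*\|$ in the worst case (low SNR with $b_1$ close to $\|\bm{\beta}^*\|/10$), so the coupling must be run with $\epsilon_f$ smaller than this surplus. This is exactly the $\epsilon_f<O(\min(1,\eta^2))$ assumption already standing in the paper, and because sample-splitting keeps $\bm{\beta}$ fixed, the fixed-$\bm{\beta}$ concentration of \cite{balakrishnan_statistical_2017} applies with no union bound over directions.
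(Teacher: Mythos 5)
Your overall strategy is the same as the paper's: establish a quantitative population margin over $\|\bm{\beta}^*\|/10$, split cases by angle, then couple to the finite-sample update via the fixed-$\bm{\beta}$ concentration bound under the standing budget $\epsilon_f < O(\min(1,\eta^2))$; your treatment of nearly orthogonal directions (noise-driven term $R$, Taylor expansion at low SNR, saturation at high SNR) also matches the paper's case $\cos\theta \le 0.2$ in spirit. The genuine gap is in your ``good angle'' case, and it sits exactly in the regime the paper isolates as its third case ($\sin\theta \le 0.2$). Your margin there comes from the chord bound: for $b_1 \le x_0 := \frac{\sigma^2}{\sigma_2^2}b_1^*$, concavity plus the identity \eqref{eq:b_1_prime_fixed} give $f(b_1) \ge \frac{\sigma_2^2}{\sigma^2}b_1 = (1+\eta^2\sin^2\theta)\,b_1$. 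The surplus is $\eta^2\sin^2\theta\, b_1$, which vanishes as $\sin\theta \to 0$; when $\bm{\beta}$ is parallel to $\bm{\beta}^*$ one has $x_0 = b_1^*$, the chord through $(0,0)$ and $(x_0,b_1^*)$ has slope exactly $1$, and the bound yields no margin at all. Hence for any fixed $\epsilon_f = \Theta(\min(1,\eta^2))$ there is an open set of small angles (e.g.\ $\eta\le 1$, $\sin\theta$ arbitrarily small, $b_1 = \|\bm{\beta}^*\|/10 \le x_0$, which is a legal configuration) where your population surplus is strictly smaller than the statistical fluctuation $\epsilon_f\|\bm{\beta}^*\|$, and the triangle inequality no longer closes. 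Concavity cannot rescue this: it only uses the two data points $f(0)=0$ and $f(x_0)=b_1^*$, and the true function lies far above that chord precisely when $b_1 \ll x_0$.

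What is missing is the sharper aligned-regime estimate the paper uses, namely inequality \eqref{ineq:newbeta_smaller}: $b_1^* - b_1' \le \kappa^3\big(b_1^* - \frac{\sigma_2^2}{\sigma^2}b_1\big) \le \kappa^3(b_1^*-b_1)$ with $\kappa = \big(\sqrt{1+\min\big(\frac{\sigma_2^2}{\sigma^2}b_1,b_1^*\big)^2/\sigma_2^2}\big)^{-1}$, obtained from the mean value theorem together with Lemmas 1--2 of \cite{daskalakis2017ten}. This gives $b_1' \ge b_1 + (1-\kappa^3)(b_1^*-b_1)$, and since $b_1^*-b_1 \ge \|\bm{\beta}^*\|/2$ when $b_1 \approx \|\bm{\beta}^*\|/10$ and $\sin\theta\le 0.2$, the margin is at least $\frac{\eta^2}{100+\eta^2}\,\frac{\|\bm{\beta}^*\|}{2} = \Theta(\min(1,\eta^2))\|\bm{\beta}^*\|$ uniformly in the (small) angle --- which is what your bookkeeping requires. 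Two smaller points: (i) using the saturation limit $\lim_{b_1\to\infty}f$ as a lower bound is backwards, since $f$ is increasing that limit is an \emph{upper} bound; the paper instead combines monotonicity in $b_1$ with a numerical estimate of the integral at the finite scale $b_1/\sigma = 1/2$; (ii) your low-SNR claim $f(b_1)\ge(1+\eta^2)b_1$ should read $(1+c\eta^2)b_1$ for a constant $c<1$, since the cubic Taylor correction costs a constant factor --- harmless, but worth fixing.
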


\begin{proof}
    We divide the cases by varying $\theta$. Note that $n$ is now proportional to $poly(\eta^{-2})$, and we control the number of samples so that statistical error in norm is $\|\tilde{\bm{\beta}}' - \bm{\beta}'\| \le O(\epsilon) \min(1, \eta^2)$. We first show that population EM operator $\|\bm{\beta}'\|$ is larger enough than $\frac{\|\bm{\beta}^*\|}{10}$, therefore $\|\bm{\beta}'\| - \|\tilde{\bm{\beta}}' - \bm{\beta}'\|$ is greater than $\frac{\|\bm{\beta}^*\|}{10}$.
    
    \paragraph{\it $\cos \theta \ge 0.2, \sin \theta \ge 0.2$:} Suppose $\|\bm{\beta}\| \ge \frac{\|\bm{\beta}^*\|}{10}$. If $\cos \theta \ge 0.2$ or $b_1^* \ge \frac{\|\bm{\beta}^*\|}{5}$, then as shown in the proof of Corollary \ref{corollary:distance}, $\|\bm{\beta}'\| \ge \min(\frac{\sigma_2^2}{\sigma^2} b_1, b_1^*) \ge \min((1+ \eta^2 \sin^2 \theta ) \frac{\|\bm{\beta}^*\|}{10}, 0.2 \|\bm{\beta}^*\|)$. We take small enough $\epsilon$, we have $\|\tilde{\bm{\beta}}'\| \ge \|\bm{\beta}'\| - \epsilon \ge \frac{\|\bm{\beta}^*\|}{10}$.
    
    \paragraph{\it $\cos \theta \le 0.2$:}
    Recall that $\|\bm{\beta}'\| \ge b_1' = \mathbb{E} [\tanh( \frac{b_1 \alpha_1}{\sigma^2} (\alpha_1 b_1^* + y) ) (\alpha_1 b_1^* + y) \alpha_1]$, where $\alpha_1 \sim \mathcal{N}(0, 1)$, $y \sim \mathcal{N}(0, \sigma_2^2)$. We first claim that $b_1' \ge \mathbb{E} [\tanh \left( \frac{b_1}{\sigma^2} \alpha_1y \right) \alpha_1y]$, \textit{i.e.}, lower bounded by setting $b_1^* = 0$. In order to show that, we differentiate $b_1'$ with respect to $b_1^*$, which yields
    \begin{align*}
        \mathbb{E} [\alpha_1^2 \tanh(\frac{b_1 \alpha_1}{\sigma^2} (\alpha_1 b_1^* + y))] + \mathbb{E} [\frac{\alpha_1^3 b_1}{\sigma^2} (\alpha_1 b_1^* + y) \tanh'(\frac{b_1 \alpha_1}{\sigma^2} (\alpha_1 b_1^* + y))].
    \end{align*}
    However, 
    \begin{align*}
        \mathbb{E} [\alpha_1^2 \tanh(\frac{b_1 \alpha_1}{\sigma^2} & (\alpha_1 b_1^* + y))] = \\& \frac{1}{\pi \sigma_2} \int_{0}^{\infty} \alpha_1^2 e^{-\alpha_1^2 /2}  \int_{0}^{\infty} \tanh(\frac{b_1 \alpha_1}{\sigma^2} y) (e^{-\frac{(y-\alpha_1b_1^*)^2} {2\sigma_2^2}} - e^{-\frac{(y+\alpha_1b_1^*)^2} {2\sigma_2^2}}) dy d\alpha_1
        \ge 0.
    \end{align*}
    Simiarly, 
    \begin{align*}
        \mathbb{E} [\frac{\alpha_1^3 b_1}{\sigma^2} (\alpha_1 b_1^* + y) \tanh'(\frac{b_1 \alpha_1}{\sigma^2} (\alpha_1 b_1^* + y))] = & \\ 
        \frac{1}{\pi \sigma_2} \int_{0}^{\infty} \frac{\alpha_1^3 b_1}{\sigma^2} e^{-\alpha_1^2 /2} \int_{0}^{\infty} y \tanh'(\frac{b_1 \alpha_1}{\sigma^2} y) & (e^{-\frac{(y-\alpha_1b_1^*)^2} {2\sigma_2^2}} - e^{-\frac{(y+\alpha_1b_1^*)^2} {2\sigma_2^2}}) dy d\alpha_1 \ge 0.
    \end{align*}
    Now it becomes clear that $b_1'$ is increasing in $b_1^*$, thus the claim is verified.
    
    Next, we bound $\mathbb{E} [\tanh( \frac{b_1}{\sigma^2} \alpha_1y) \alpha_1y]$. 
    \begin{align*}
        \mathbb{E} [\tanh( \frac{b_1}{\sigma^2} \alpha_1y) \alpha_1y] &= \frac{2}{\pi \sigma_2} \int_{0}^{\infty} \int_{0}^{\infty} \alpha_1 y \tanh( \frac{b_1}{\sigma^2} \alpha_1y) e^{-\frac{y^2}{2\sigma_2^2}} e^{-\frac{\alpha_1^2}{2}} d\alpha_1 dy \\
        &= \frac{2}{\pi} \sigma_2 \int_{0}^{\infty} \int_{0}^{\infty} \alpha_1 y \tanh( \frac{b_1}{\sigma^2} \sigma_2 \alpha_1y) e^{-\frac{y^2}{2}} e^{-\frac{\alpha_1^2}{2}} d\alpha_1 dy \\
        &\ge \frac{2}{\pi} \sigma_2 \int_{0}^{\infty} \int_{0}^{\infty} \alpha_1 y \tanh( \frac{b_1}{\sigma} \alpha_1y) e^{-\frac{y^2}{2}} e^{-\frac{\alpha_1^2}{2}} d\alpha_1 dy.
    \end{align*}
    
    Now suppose if $\frac{b_1}{\sigma} \ge \frac{1}{2}$. We can get a numerical result for the integration 
    $$
    \int_{0}^{\infty} \int_{0}^{\infty} x y \tanh( \frac{1}{2} xy) e^{-\frac{y^2}{2}} e^{-\frac{x^2}{2}} dx dy,
    $$ 
    which is greater than 0.5. Thus we can conclude $b_1' \ge \frac{1}{\pi} \sigma_2 \ge \frac{1}{\pi} b_2^*$, which is much greater than $\|\bm{\beta}^*\|/10$ when $\sin \theta \ge \sqrt{1 - 0.2^2}$.
    
    If $\frac{b_1}{\sigma}$ is less than $1/2$, then we use the Taylor bound for $\tanh(x) \ge x - \frac{x^3}{3}$ to get
    \begin{align}
        \frac{2}{\pi} \sigma_2 & \int_{0}^{\infty} \int_{0}^{\infty} \alpha_1 y \tanh( \frac{b_1}{\sigma} \alpha_1y) e^{-\frac{y^2}{2}} e^{-\frac{\alpha_1^2}{2}} d\alpha_1 dy \nonumber \\
        &\ge \frac{2}{\pi} \sigma_2 \int_{0}^{\infty} \int_{0}^{\infty} \alpha_1 y ( \frac{b_1}{\sigma} \alpha_1y - \frac{1}{3} (\frac{b_1}{\sigma} \alpha_1y)^3 ) e^{-\frac{y^2}{2}} e^{-\frac{\alpha_1^2}{2}} d\alpha_1 dy \nonumber \\
        &= b_1 \frac{\sigma_2}{\sigma} (1 - 3 \frac{b_1^2}{\sigma^2}) \ge b_1 \sqrt{1 + \frac{24}{25} \eta^2} (1 - 3 \frac{b_1^2}{\sigma^2}).
    \end{align}
    
    If $\eta = \frac{\|\bm{\beta}^*\|}{\sigma} \ge 5$, then since we assumed $\frac{b_1}{\sigma} < 1/2$, we have $b_1 \sqrt{1 + \frac{24}{25} \eta^2} (1 - 3 \frac{b_1^2}{\sigma^2}) \ge \frac{5}{4} b_1$. Otherwise, suppose $b_1 = \|\bm{\beta}^*\|/10$, then we have $b_1' \ge b_1 \sqrt{1 + \frac{24}{25} \eta^2} (1 - \frac{3}{100} \eta^2)$. When $1 \le \eta \le 5$, we have $b_1' \ge \frac{5}{4} b_1$. When $0 \le \eta \le 1$, we have $b_1' \ge b_1(1 + 0.3\eta^2)$. Since by (\ref{eq:b_1_prime_increasing}) we know $b_1'$ is increasing as $b_1$ increases, and $\|\bm{\beta}'\| \ge b_1'$. Therefore, we conclude that sufficiently $\epsilon$ guarantees $\|\tilde{\bm{\beta}}'\| \ge \frac{\|\bm{\beta}^*\|}{10}$.
    
    \paragraph{\it $\sin \theta \le 0.2$:} Assume $b_1 = \frac{\|\bm{\beta}^*\|}{10} < \frac{\sigma^2}{\sigma_2^2} b_1^*$. Otherwise we can do as in the first case. From equation (\ref{ineq:newbeta_smaller}), we have 
    $$
        b_1' \ge b_1 + (1 - \kappa^3) (b_1^* - b_1),
    $$
    where $\kappa = \left({\sqrt{1 + \frac{\min(\newbeta, b_1^*)^2}{\sigma_2^2}}} \right)^{-1} \ge \sqrt{1 + \frac{b_1^2}{\sigma^2}}^{-1}$. Since $b_1^* - b_1 \ge \frac{\|\bm{\beta}^*\|}{2}$ in this case, we have
     $b_1' \ge b_1 + \frac{\eta^2}{100 + \eta^2} \frac{\|\bm{\beta}^*\|}{2}$. Similarly as in other cases, since $b_1'$ is increasing in $b_1 = \|\beta\|$, with sufficiently small $\epsilon$ we have $\|\tilde{\bm{\beta}}'\| \ge \frac{\|\bm{\beta}^*\|}{10}$ whenever $\|\beta\| \ge \frac{\|\bm{\beta}^*\|}{10}$.
\end{proof}

\subsection{Concentration Result in One Direction}
\label{appendix:aux_concentration_one_dir}

\begin{restatable}{theorem}{finitedotproduct}
\label{thm:finite_cos_update}
    Consider one iteration of sample-based EM algorithm. There exist absolute constants $c_1, c_2 > 0$, such that statistical error in a fixed direction $\bm{\beta}^*$ can be bounded with probability at least $1-\delta$, by
    \begin{equation}
        \label{eq:cont_fine_grained}
        |\langle \tilde{\bm{\beta}}' - \bm{\beta}', \bm{\beta}^* \rangle| \le \sqrt{\sigma^2+\|\bm{\beta}^*\|^2} \left( c_1 \sqrt{\frac{1}{n} \log(1/\delta)} + c_2 \frac{d}{n} \log(1/\delta) \right).
    \end{equation}
\end{restatable}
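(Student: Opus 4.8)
The plan is to reduce the claim to two scalar concentration statements after rewriting the error in a mean-zero form. Write the sample M-step numerator as $M(\bm{\beta}) := \frac{1}{n}\sum_{i=1}^n \tanh(\langle\bm{\beta},\bm{x}_i\rangle y_i/\sigma^2)\,y_i\bm{x}_i$ and the sample covariance as $\hat{\Sigma} := \frac{1}{n}\sum_i\bm{x}_i\bm{x}_i^{\top}$, so that $\tilde{\bm{\beta}}' = \hat{\Sigma}^{-1}M(\bm{\beta})$. By definition of the population update, $\bm{\beta}' = \mathbb{E}[M(\bm{\beta})]$, and $\mathbb{E}[\hat{\Sigma}]=I$. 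The key first step is to center $M(\bm{\beta})$ by $\hat{\Sigma}\bm{\beta}'$:
\[
\tilde{\bm{\beta}}' - \bm{\beta}' = \hat{\Sigma}^{-1}\bm{g},
\qquad
\bm{g} := \frac{1}{n}\sum_{i=1}^n w_i\bm{x}_i,
\qquad
w_i := \tanh\!\Big(\tfrac{\langle\bm{\beta},\bm{x}_i\rangle}{\sigma^2}y_i\Big)y_i - \langle\bm{x}_i,\bm{\beta}'\rangle ,
\]
where I used $M(\bm{\beta})-\hat{\Sigma}\bm{\beta}'=\bm{g}$ and, crucially, $\mathbb{E}[w_i\bm{x}_i]=\mathbb{E}[M(\bm{\beta})]-I\bm{\beta}'=\bm{0}$, so that $\bm{g}$ is a mean-zero average.

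Next I would project onto $\bm{\beta}^*$ and split via $\hat{\Sigma}^{-1}\bm{\beta}^* = \bm{\beta}^* + (\hat{\Sigma}^{-1}-I)\bm{\beta}^*$ (using symmetry of $\hat{\Sigma}^{-1}$):
\[
\langle\tilde{\bm{\beta}}'-\bm{\beta}',\bm{\beta}^*\rangle
= \underbrace{\frac{1}{n}\sum_{i=1}^n w_i\langle\bm{x}_i,\bm{\beta}^*\rangle}_{T_1}
+ \underbrace{\bm{\beta}^{*\top}(\hat{\Sigma}^{-1}-I)\bm{g}}_{T_2}.
\]
The term $T_1$ is an average of i.i.d.\ mean-zero variables $\xi_i := w_i\langle\bm{x}_i,\bm{\beta}^*\rangle$. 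Since $|\tanh|\le 1$, the scalar $w_i$ is sub-Gaussian with scale $O(\sqrt{\sigma^2+\|\bm{\beta}^*\|^2})$ (the contribution of $\langle\bm{x}_i,\bm{\beta}'\rangle$ is controlled by $\|\bm{\beta}'\|\le 3\sqrt{\sigma^2+\|\bm{\beta}^*\|^2}$ from Lemma~\ref{lemma:bprime_bounded}), and $\langle\bm{x}_i,\bm{\beta}^*\rangle$ is Gaussian, so $\xi_i$ is sub-exponential. Bernstein's inequality then yields $|T_1|\le \sqrt{\sigma^2+\|\bm{\beta}^*\|^2}\,O\!\big(\sqrt{\log(1/\delta)/n}+\log(1/\delta)/n\big)$ with probability $1-\delta/3$; the leading piece is the $c_1\sqrt{\log(1/\delta)/n}$ term, and the $\log(1/\delta)/n$ remainder is absorbed into the $c_2\frac{d}{n}\log(1/\delta)$ term since $d\ge 1$. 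This is precisely where projecting onto a fixed direction buys the $1/\sqrt n$ rate, in contrast to the $\sqrt{d/n}$ of the full $\ell_2$ coupling in \cite{balakrishnan_statistical_2017}.

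For $T_2$ I would apply Cauchy--Schwarz, $|T_2|\le\|\bm{\beta}^*\|\,\|\hat{\Sigma}^{-1}-I\|\,\|\bm{g}\|$, and bound the two random factors on a common high-probability event. Standard covariance concentration gives $\|\hat{\Sigma}-I\|\le\tfrac12$ once $n\gtrsim d+\log(1/\delta)$, hence $\|\hat{\Sigma}^{-1}\|\le 2$ and $\|\hat{\Sigma}^{-1}-I\|=\|\hat{\Sigma}^{-1}(I-\hat{\Sigma})\|=O\!\big(\sqrt{(d+\log(1/\delta))/n}\big)$ with probability $1-\delta/3$. Writing $\bm{g}=(M(\bm{\beta})-\mathbb{E}M(\bm{\beta}))+(I-\hat{\Sigma})\bm{\beta}'$, both summands are mean-zero sums of sub-exponential random vectors, each of norm $O\!\big(\sqrt{\sigma^2+\|\bm{\beta}^*\|^2}\,\sqrt{(d+\log(1/\delta))/n}\big)$ by an $\varepsilon$-net/covering argument over the sphere (the second summand again using the norm bound on $\bm{\beta}'$), so $\|\bm{g}\|$ obeys the same bound with probability $1-\delta/3$. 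Multiplying the two factors gives $|T_2|=O\!\big(\sqrt{\sigma^2+\|\bm{\beta}^*\|^2}\,\tfrac{d+\log(1/\delta)}{n}\big)$, i.e.\ the $c_2\frac{d}{n}\log(1/\delta)$ term. A union bound over the three events yields \eqref{eq:cont_fine_grained}.

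The main obstacle is getting the quadratically small $d/n$ dependence in $T_2$ rather than the naive $\sqrt{d/n}$. The decomposition must be arranged so that \emph{both} factors of the cross term carry a $\sqrt{d/n}$: this is exactly why I center $M(\bm{\beta})$ by $\hat{\Sigma}\bm{\beta}'$, forcing $\bm{g}$ to be mean-zero with $\|\bm{g}\|=O(\sqrt{d/n})$, instead of coupling $\tilde{\bm{\beta}}'$ and $\bm{\beta}'$ directly. A direct coupling would leave $\|M(\bm{\beta})\|=\Theta(1)$ multiplying $\|\hat{\Sigma}^{-1}-I\|=O(\sqrt{d/n})$ and thus only produce the weaker $\sqrt{d/n}$ rate. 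The remaining work is routine bookkeeping: verifying the sub-Gaussian/sub-exponential parameters (which produce the $\sqrt{\sigma^2+\|\bm{\beta}^*\|^2}$ scale from the sub-Gaussian norm of $y_i$ and the Gaussian projections) and assembling the constants and failure probabilities.
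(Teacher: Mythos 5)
Your proposal is correct, and it reaches the paper's bound through a genuinely different decomposition. The paper writes $\tilde{\bm{\beta}}'-\bm{\beta}' = (\hat{\mu}-\mu)+(\hat{\Sigma}^{-1}-I)\mu+(\hat{\Sigma}^{-1}-I)(\hat{\mu}-\mu)$ (with $\hat{\mu}=M(\bm{\beta})$, $\mu=\bm{\beta}'$) and bounds three projected terms: the first by scalar sub-exponential concentration at rate $1/\sqrt{n}$, the third as a product of two $O(\sqrt{d/n})$ factors, and the middle term via a Neumann-series expansion $\hat{\Sigma}^{-1}=I+(I-\hat{\Sigma})+(I-\hat{\Sigma})^2+\cdots$, which is needed precisely because $\|\mu\|\,\|\hat{\Sigma}^{-1}-I\|$ alone would only give $\sqrt{d/n}$; the first-order piece $\mu^{\top}(I-\hat{\Sigma})\bm{v}$ is then controlled by another scalar sub-exponential bound. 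Your centering of $M(\bm{\beta})$ by $\hat{\Sigma}\bm{\beta}'$ collapses this to two terms: your $T_1$ merges the paper's term $A$ and the first-order piece of its term $B$ into a single i.i.d.\ mean-zero Bernstein sum, and your $T_2$ merges the Neumann remainder and term $C$ into one Cauchy--Schwarz product in which \emph{both} factors are $O(\sqrt{d/n})$ by construction (since $\bm{g}$ is itself a mean-zero average, $\|\bm{g}\|=O(\sqrt{d/n})$ by the same covering argument the paper uses for $\|\hat{\mu}-\mu\|$). What your route buys is the elimination of the series expansion and a transparent explanation, via $\mathbb{E}[\bm{g}]=\bm{0}$, of why the cross term is automatically second order; what the paper's route buys is slightly more modular reuse of the off-the-shelf bound $\|\hat{\mu}-\mu\|=O(\sqrt{d/n})$ from \cite{balakrishnan_statistical_2017} without introducing the auxiliary variables $w_i$. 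One small bookkeeping point applies to both arguments equally: since the projection direction $\bm{\beta}^*$ is not a unit vector, an extra factor $\|\bm{\beta}^*\|$ appears in $T_1$ and $T_2$ unless one normalizes $\|\bm{\beta}^*\|=1$, which is exactly the normalization the paper invokes in its own proof; you should state it explicitly as well.
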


\begin{proof}
The error for which we are interested in giving a bound is
\begin{equation}
	\tilde{\bm{\beta}'} - \bm{\beta}' = \underbrace{(\frac{1}{n} \sum_{i=1}^n \bm{x}_i \bm{x}_i^{\top})^{-1}}_{\hat{\Sigma}^{-1}} \underbrace{(\frac{1}{n} \sum_{i=1}^n y_i \bm{x}_i \tanh(y_i \langle \bm{x}_i, \beta \rangle / \sigma^2))}_{\hat{\mu}} - \underbrace{\mathbb{E} [yX \tanh(y \langle X, \beta \rangle / \sigma^2)]}_{\mu}.
\end{equation}
Now we fix some $v \in R^d$ such that $\|v\|=1$, and give a bound for $|\langle \tilde{\bm{\beta}'} - \beta, v \rangle|$. First observe that,
\begin{align*}
	|\langle \tilde{\bm{\beta}'} - \bm{\beta}', v \rangle| &= |(\hat{\Sigma}^{-1} \hat{\mu} - \mu)^\top v| \\
	    &= |(\hat{\mu} - \mu)^\top v +  \mu^\top (\hat{\Sigma}^{-1} - I) v + (\hat{\mu} - \mu)^\top (\hat{\Sigma}^{-1} - I) v| \\
	    &\le |\underbrace{(\hat{\mu} - \mu)^\top v}_{A}| + \ |\underbrace{\mu^\top (\hat{\Sigma}^{-1} - I) v}_{B}| + |\underbrace{(\hat{\mu} - \mu)^\top (\hat{\Sigma}^{-1} - I) v}_{C}|.
\end{align*}
We will bound $A, B$ and $C$ separately. For simplicity, we will assume the problem is normalized, \textit{i.e.}, $\|\bm{\beta}^*\| = 1$.

\paragraph{\it Bounding A:} The product of two sub-Gaussian random variables is sub-exponential, which can be easily shown with the notion of sub-Gaussian norm and sub-exponential norm \cite{vershynin2010introduction}. 

The random variable $y_i \langle \bm{x}_i, v \rangle \tanh(y_i \langle \bm{x}_i, \beta \rangle / \sigma^2))$ is sub-exponential with parameter $C \sqrt{\sigma^2 + \|\beta^*\|^2}$ for some constant $C$, since $|\tanh(\cdot)| \le 1$, $\langle \bm{x}_i, v \rangle$ is sub-Gaussian with parameter 1, and $y_i$ is sub-Gaussian with parameter at most $\sqrt{\sigma^2 + \|\beta^*\|^2}$.

Applying the concentration inequality for sub-exponential random variable, we get
\begin{align*}
    \mathbb{P} \left( \left| \frac{1}{n} \sum_i y_i \langle \bm{x}_i, v \rangle \tanh \left(y_i \frac{\langle \bm{x}_i, \beta \rangle}{\sigma^2} \right) - \mathbb{E} \left[yX \tanh \left(y \frac{\langle X, \beta \rangle}{\sigma^2} \right) \right] \right| \ge t \right) &\le \exp\left(-\frac{nt^2}{K(\sigma^2 + \|\beta^*\|^2)}\right),
\end{align*}
for some absolute constant $K$. 

Equivalently, with probability at least $1-\delta$, we have $A \le c_1 \sqrt{\sigma^2 + \|\beta^*\|^2} \sqrt{\frac{1}{n} \log (1/\delta)}$ for some universal constant $c_1$.

\newcommand{\udotv}{\langle u, v \rangle}
\paragraph{\it Bounding B:} Standard results from random matrix theory imply that $\|\hat{\Sigma_p} - I\|_{op} \le c_2 \sqrt{\frac{d}{n} \log(1/\delta)}$ with high probability. We will consider events under this condition. 

Since inverse operator is hard to handle, we modify it using Taylor's expansion
\begin{align*}
     \hat{\Sigma}^{-1} &= (I - (I - \hat{\Sigma}))^{-1} \\
     &= I + (I - \hat{\Sigma}) + (I - \hat{\Sigma})^2 + ...,
\end{align*}
from where we can see $\mu^{\top} (\hat{\Sigma}^{-1} - I) v = \mu^{\top} (I - \hat{\Sigma}) v + \|\mu\| \tilde{O}(\frac{d}{n})$. 

For simplicity, let us define $u = \frac{\mu}{\|\mu\|}$ and derive a bound for $u^{\top} (I - \hat{\Sigma}) v$. Now we are left with bounding $u^{\top} (I - \hat{\Sigma}) v = u^{\top}v - \frac{1}{n} \sum_i (\bm{x}_i^{\top} u) (\bm{x}_i^{\top} v)$. Let two random variables $Z_1 = X^{\top} u$, $Z_2 = X^{\top} v$. Since $Z_1, Z_2$ are sub-Gaussian random variables with parameter 1, $Z_1 Z_2$ is sub-exponential with parameter at most 2. Thus, we get the sub-exponential concentration bound $u^{\top} (I - \hat{\Sigma}) v \le \tilde{O}(\sqrt{\frac{1}{n}})$. This yields $B \le c_2 \sqrt{\sigma^2 + \|\bm{\beta}^*\|^2} (\sqrt{ \frac{1}{n} \log(1/\delta)} + \frac{d}{n})$ since $\|\mu\| \le O(\sqrt{\sigma^2 + \|\bm{\beta}^*\|^2})$ due to Lemma \ref{lemma:bprime_bounded}. 

\paragraph{\it Bounding C:} We have $\|\hat{\mu} - \mu\| \le c_5 \sqrt{\sigma^2 + \|\bm{\beta}^*\|^2} \sqrt{\frac{d}{n} \log (1/\delta)}$ from \cite{balakrishnan_statistical_2017} with probability at least $1 - \delta$, as well as $\|\hat{\Sigma}^{-1} - I\|_{op} \le c_2 \sqrt{\frac{d}{n} \log(1/\delta)}$. Therefore, we get
$$
    |(\hat{\mu} - \mu)^{\top} (\hat{\Sigma}^{-1} - I) v| \le \\ \|\hat{\mu} - \mu\| \ \|\hat{\Sigma}^{-1} - I\|_{op} \ \|v\| \le \\ \tilde{O} (\sqrt{\sigma^2 + \|\bm{\beta}^*\|^2} \frac{d}{n}).
$$
This gives a bound for $C$.

Finally, combining the bounds on $A, B$ and $C$ with $v = \bm{\beta}^*$, we get the first part of the theorem.
\end{proof}

\end{appendices}

\end{document}